\newcounter{subroutine}
\newenvironment{subroutine}[1][htb]{%
  \let\c@algorithm\c@subroutine
  \renewcommand{\ALG@name}{Subroutine}%
  \begin{algorithm}[#1]%
  }{\end{algorithm}
}
\definecolor{darkpink}{rgb}{0.91, 0.33, 0.5}
\definecolor{puorange}{rgb}{0.80,0.20,0}
\definecolor{bluegray}{rgb}{0.04,0,0.7}
\definecolor{greengray}{rgb}{0.05,0.50,0.15}
\definecolor{darkbrown}{rgb}{0.40,0.2,0.05}
\definecolor{darkcyan}{rgb}{0,0.4,1}
\definecolor{black}{rgb}{0,0,0}
\definecolor{grey}{rgb}{0.93,0.93,0.93}
\definecolor{royalazure}{rgb}{0.0, 0.22, 0.66}
\definecolor{blueviolet}{RGB}{138,43,226}
\crefname{section}{Sec.}{Sections}
\crefname{appendix}{Appx.}{Appxs}
\crefname{theorem}{Thm.}{Thms.}
\crefname{lemma}{Lem.}{Lems.}
\crefname{lem}{Lem.}{Lems.}
\crefname{corollary}{Cor.}{Cors.}
\crefname{proposition}{Prop.}{Props.}
\crefname{prop}{Prop.}{Props.}
\crefname{assumption}{Asm.}{Asms.}
\crefname{asm}{Asm.}{Asms.}
\crefname{algorithm}{Alg.}{Algs.}
\Crefname{algorithm}{Algorithm}{Algorithms}
\Crefname{subroutine}{Subroutine}{Subroutines}
\crefname{figure}{Fig.}{Figs.}
\crefname{table}{Tab.}{Tabs.}
\newcommand\numberthis{\addtocounter{equation}{1}\tag{\theequation}}
\definecolor{lightblue}{RGB}{0,160,200}
\definecolor{gray}{rgb}{0.4,0.4,0.4}
\newcommand{\red}[1]{{\color{purple}#1}}
\newcommand{\green}[1]{{\color{greengray}#1}}
\newcommand{\blue}[1]{{\color{royalazure}#1}}
\newcommand{\yellow}[1]{{\color{orange}#1}}
\newcommand{\purple}[1]{{\color{violet}#1}}
\let\originalparagraph\paragraph
\renewcommand{\paragraph}[1]{\originalparagraph{#1.}}
\newcommand{\myparagraph}[1]{\vspace{-8pt}\paragraph{#1}\hspace{-0.8em}}  %
\newcommand \reals {\mathbb{R}}
\newcommand \inv {^{-1}} %
\newcommand \T {^{\top}}	%
\newcommand{\ones}{\mathbf{1}}
\renewcommand{\epsilon}{\varepsilon}
\newcommand \eps \epsilon
\newcommand{\pow}[1]{^{(#1)}}
\newcommand \expect {\mathbb{E}}
\DeclarePairedDelimiterX{\inp}[2]{\langle}{\rangle}{#1, #2} 
\newcommand{\norm}[1]{\left\Vert #1 \right\Vert} 
\newcommand{\normsq}[1]{\left\Vert #1 \right\Vert^2} 
\newcommand{\abs}[1]{\left\lvert #1 \right\rvert}
\newcommand \argmin {\operatorname*{arg\,min}} %
\newcommand \argmax {\operatorname*{arg\,max}} %
\newcommand \prox {\mathop{\mathrm{prox}}\nolimits}  %
\newcommand \grad {\nabla}
\newcommand \Tcal {\mathcal T}
\newcommand \Mcal {\mathcal M}
\newcommand \Pcal {\mathcal P}
\definecolor{cadmiumgreen}{rgb}{0.0, 0.42, 0.24}
\definecolor{harvardcrimson}{rgb}{0.79, 0.0, 0.09}
\newtheorem{theorem}{Theorem}
\newtheorem{proposition}[theorem]{Proposition}
\newtheorem{prop}[theorem]{Proposition}
\newtheorem{corollary}[theorem]{Corollary}
\newtheorem{lemma}[theorem]{Lemma}
\theoremstyle{definition}
\newcommand{\R}{\mathbb{R}}
\newcommand{\p}[1]{\ensuremath{\left(#1\right)}}
\newcommand{\br}[1]{\ensuremath{\left\{#1\right\}}}
\newcommand{\sbr}[1]{\ensuremath{\left[#1\right]}}
\newcommand{\sse}{\subseteq}
\renewcommand{\P}[2]{\ensuremath{{\mathbb P}_{#1}\left[#2\right]}}
\newcommand{\E}[2]{\ensuremath{{\mathbb E}_{#1}\left[#2\right]}}
\newcommand{\mc}[1]{\ensuremath{\mathcal{#1}}}
\newcommand{\msc}[1]{\ensuremath{\mathscr{#1}}}
\newcommand{\ip}[1]{\ensuremath{\left\langle #1 \right\rangle}}
\DeclareMathOperator*{\Unif}{Unif}
\newcommand{\argsort}{\operatorname{argsort}}
\newcommand{\convert}{\operatorname{Convert}}
\newcommand{\sol}{\operatorname{Sol}}
\newcommand{\pool}{\operatorname{Pool}}
\newcommand{\rank}{\operatorname{rank}}
\newcommand{\bubble}{\operatorname{Bubble}}
\newcommand{\lsvrg}{LSVRG\xspace}
\newcommand{\lsaga}{Prospect\xspace}
\newcommand{\saddlesaga}{SaddleSAGA\xspace}
\newcommand{\lsagaprox}{Prospect-Moreau\xspace}
\newcommand{\pav}{\operatorname{PAV}}
\newcommand{\yacht}{{\tt yacht}\xspace}
\newcommand{\energy}{{\tt energy}\xspace}
\newcommand{\concrete}{{\tt concrete}\xspace}
\newcommand{\kinnm}{{\tt kin8nm}\xspace}
\newcommand{\power}{{\tt power}\xspace}
\newcommand{\acsincome}{{\tt acsincome}\xspace}
\newcommand{\diabetes}{{\tt diabetes}\xspace}
\newcommand{\amazon}{{\tt amazon}\xspace}
\newcommand{\iwildcam}{{\tt iwildcam}\xspace}
\newcommand{\primobj}{F_\sigma}
\newcommand{\q}{q^{\text{opt}}}
\newcommand{\copt}{c^{\text{opt}}}
\renewcommand{\c}{c}
\newcommand{\f}{\ell}
\definecolor{problem}{cmyk}{0, 0.7808, 0.4429, 0.1412}
\newcommand{\risk}{\msc{R}}
\newcommand{\fchi}{\ensuremath{f_{\chi^2}}}
\newcommand{\fkl}{\ensuremath{f_{\text{KL}}}}
\newcommand{\Ex}{\mathbb{E}}
\newcommand{\env}{\operatorname{env}}
\title{Distributionally Robust Optimization \\with Bias and Variance Reduction}
\author{Ronak Mehta$^1$ \qquad Vincent Roulet$^2$ \qquad Krishna Pillutla$^{3}$ \qquad Zaid Harchaoui$^1$ \vspace{0.3cm} \\
{
\small $^1$University of Washington 
\qquad
$^2$Google DeepMind
\qquad
$^3$Google Research
}
}
\date{\vspace{-1em}}
\begin{document}

\maketitle

\begin{abstract}
We consider the distributionally robust optimization (DRO) problem with spectral risk-based uncertainty set and $f$-divergence penalty. This formulation includes common risk-sensitive learning objectives such as regularized condition value-at-risk (CVaR) and average top-$k$ loss. We present Prospect, a stochastic gradient-based algorithm that only requires tuning a single learning rate hyperparameter, and prove that it enjoys linear convergence for smooth regularized losses. This contrasts with previous algorithms that either require tuning multiple hyperparameters or potentially fail to converge due to biased gradient estimates or inadequate regularization. Empirically, we show that Prospect can converge 2-3$\times$ faster than baselines such as stochastic gradient and stochastic saddle-point methods on distribution shift and fairness benchmarks spanning tabular, vision, and language domains.

\end{abstract}

\section{Introduction}\label{sec:intro}
The ingredients of empirical risk minimization (ERM) are generally considered to be: a model with parameters $w \in \R^d$ (e.g. a neural network), a loss $\ell: \R^d \rightarrow \R^n$ where $\ell_i(w)$ is the error of $w$ on training example $i$, 
and an optimizer that returns a sequence $(w\pow{t})_{t \geq 1}$ converging to the solution of
\begin{align}
    \min_{w \in \R^d} \frac{1}{n} \sum_{i=1}^n \ell_i(w).
    \label{eq:erm}
\end{align}
The fourth ingredient--often taken for granted--is the choice of \emph{risk functional}, which aggregates individual training losses $\ell(w) \in \R^n$ into a univariate summary to be optimized. While~\eqref{eq:erm} uses the simple average (meant to estimate the expected loss under the training distribution), a deployed model often observes data from different distributions. Motivated by this phenomenon, we consider an objective that explicitly captures sensitivity to such distribution shifts:
\begin{align}
    \min_{w \in \R^d} \risk_\Pcal(\ell(w)) \quad \text{ where } \quad \risk_\Pcal(l) := \max_{q \in \Pcal}\br{\sum_{i=1}^n q_i l_i  - \nu D(q \Vert \ones_n/n)},
    \label{eq:risk-sensitive}
\end{align}
in which $\Pcal \sse \Delta^n = \br{\text{probability distributions on $n$ atoms}}$ is an \emph{uncertainty set} of distributions, $\nu \geq 0$ is a hyperparameter, and $D(q \Vert \ones_n/n)$ is a penalty that represents the divergence of $q$ from the original uniform weights $\ones_n / n = (1/n, \ldots, 1/n)$ (e.g.~the $\chi^2$ or Kullback Leibler divergence). The risk value $\risk_\Pcal(\ell(w))$~emulates a game in which nature pays a price of $\nu$ per unit $D(q \Vert \ones_n/n)$ to replace the expected loss under the given training distribution $\ones_n / n$ with the expected loss under $q$. The distribution $q$ is a reweighting of the training data that is chosen to be maximally unfavorable for the current model performance $\ell(w)$. Accordingly, we refer to $\nu$ as the \emph{shift cost}. 

Objectives of the form~\eqref{eq:risk-sensitive}, known as distributionally robust (DR) optimization problems, have seen a wave of recent interest in machine learning theory and practice. Examples range throughout diverse contexts such as reinforcement \citep{Liu2022Distributionally, kallus2022doubly, Liu2022DistributionallyRobustQ, Xu2023Group, Wang2023AFinite, Lotidis2023Wasserstein, kallus2022doubly, ren2022distributionally, clement2021first}, continual \citep{Wang2022Improving},  interactive \citep{Yang2023Distributionally, mu2022factored, inatsu2021active, sinha2020formula}, Bayesian \citep{Tay2022Efficient, Inatsu2022Bayesian}, and federated \citep{deng2020distributionally} learning, along with dimension reduction \citep{vu2022distributionally}, computer vision \citep{samuel2021distributional, sapkota2021distributionally}, and structured prediction \citep{Li2022Moment, fathony2018distributionally}. Historically used in quantitative finance, a popular such objective is the conditional value-at-risk (a.k.a.~superquantile/expected shortfall/average top-$k$ loss), or CVaR. In terms of methods, the CVaR has been used as a canonical DR objective \citep{Fan2017Learningwith, kawaguchi2020ordered, Rahimian2022Frameworks}, as well as in unsupervised \citep{Maurer2021Robust}, reinforcement \citep{singh2020improving}, and federated learning \citep{pillutla2023federated}. In applications, it has also been employed for robust language modeling \citep{liu2021just} and robotics \citep{Sharma2020Risk}. The CVaR falls into the broader category of \emph{spectral risk measures (SRMs)}, a class of DR objectives that includes the extremile and exponential spectral risk measure (ESRM) \citep{acerbi2002coherence, cotter2006extreme, daouia2019extremiles}. Motivated by 1) the success of the CVaR in numerous applications and 2) the importance of stochastic optimization in ML, \emph{the principal goal of this paper is to develop stochastic\footnote{We use \emph{stochastic} interchangeably with \emph{incremental}, meaning algorithms that make $O(1)$ calls per iteration to a fixed set of oracles $\br{(\ell_i, \grad \ell_i)}_{i=1}^n$, and {\bf not} \emph{streaming} algorithms that sample fresh data at each iteration.} optimization algorithms for spectral risk measures}. 

\myparagraph{Contributions}
In this paper we propose \lsaga, a stochastic algorithm for optimizing spectral risk measures with only one tunable hyperparameter: a constant learning rate. Theoretically, \lsaga converges linearly for \emph{any} positive shift cost on regularized convex losses. This contrasts with previous stochastic methods that may fail to converge due to bias \citep{Levy2020Large-Scale, kawaguchi2020ordered}, may not converge for small shift costs \citep{Mehta2022Stochastic}, or require the tuning of multiple hyperparameters \citep{palaniappan2016stochastic}.
Experimentally, \lsaga demonstrates equal or faster convergence than competitors on the training objective on nearly all problems and datasets considered, and exhibits higher stability with respect to external metrics on fairness and distribution shift benchmarks.

\myparagraph{Related Work}
Besides spectral risk measures (SRMs), other DR objectives can be recovered by changing the uncertainty set $\Pcal$. Examples include those based on $f$-divergences \citep{dommel2021Convex, Levy2020Large-Scale, Ben-Tal2013Robust}, the Wasserstein metric \citep{blanchet2019Robust, esfahani2018Data, Kuhn2019Wasserstein, bui2022a, abadeh2018wasserstein, nguyen2020distributionally, chen2019selecting, zhu2022distributionally, phan2023global},
maximum mean discrepancy \citep{kirschner2020distributionally, staib2019distributionally, nemmour2021approximate}, or more generally integral probability metrics \citep{husain2020distributional}. This work focuses on optimizing SRM objectives.

We compare against stochastic algorithms that either are single-hyperparameter ``out-of-the-box'' methods such as stochastic gradient descent and stochastic regularized dual averaging \citep{Xiao2009Dual}, or multi-hyperparameter methods that converge linearly on strongly convex SRM-based objectives, such as LSVRG \citep{Mehta2022Stochastic} and saddle-point SAGA \citep{palaniappan2016stochastic}.
Note that LSVRG may not converge for small shift costs.
Other methods may only achieve sublinear convergence rates, even in the strongly convex regime \citep{yu2022fast, ghosh2021efficient, carmon2022distributionally, li2019afirst, shen2022TowardsScalable, Hamedani2023AStochastic}. Non-convex settings have also been studied \citep{jin2021nonconvex, jiao2022distributed, Sagawa2020Distributionally, luo2020stochastic, ho2023adversarial}, as well as statistical aspects \citep{liu2022distributionallyrobust, blanchet2019multivariate, zeng2022generalization, Maurer2021Robust, Lee2020LearningBounds, Khim2020uniformConvergence, zhou2023sample, zhou2021finite, cranko2021generalized, LA2022AWasserstein, Pandey2019EstimationOS}. Our goal is to achieve unconditional linear convergence for smooth, strongly convex (regularized) losses with a single hyperparameter.

Objectives of the form~\eqref{eq:risk-sensitive} yield connections to other areas in modern machine learning.
They are a special case of \emph{subpopulation shift}, wherein the data-generating distribution 
is modeled as a mixture 
of subpopulations, 
and the distribution shift stems from changes in the mixture.
In our case, the subpopulations are point masses at the 
observed data points. In the context of \emph{algorithmic fairness}, the 
subpopulations
may represent data conditioned on 
some protected attribute (e.g.~race, gender, age range), and common notations of fairness such as \emph{demographic/statistical parity} \citep{Agarwal2018AReductions, Agarwal2019FairRegression} impose (informally) that model performance with respect to each 
subpopulation
should be roughly equal. As such, robustness to reweighting and algorithmic fairness are often aligned notions \citep{williamson2019fairness}, with recent research arguing that distributionally robust models are more fair \citep{hashimoto2018fairness, vu2022distributionally} and that fair models are more distributionally robust \citep{mukherjee2022domain}. In supervised learning, the data distribution is modeled as $P = P_{X, Y}$ for a feature-label pair $(X, Y)$ and related settings of \emph{covariate shift} (changes in $P_{X}$ and not $P_{Y|X}$) \citep{sugiyama2007direct} as well as \emph{label shift} (changes in $P_Y$ and not $P_{X|Y}$) \citep{lipton2018detecting} may also modeled with distributional robustness \citep{zhang2021coping}
as illustrated in \Cref{fig:subpopulations}.

\section{Minimizing Spectral Risk with Bias and Variance Reduction}\label{sec:problem}
This section describes the key technical challenges in constructing a stochastic optimizer for spectral risk measures and how \lsaga tackles them. In order to build a convergent stochastic algorithm, 
we will construct an estimate $v_i$ for the gradient of~\eqref{eq:risk-sensitive} based on a single data index $i$, such that $v_i \rightarrow \grad \risk_\Pcal(\ell(w))$ as the iteration counter approaches infinity. Precisely, we require that for $i \sim \Unif[n]$,
\begin{align}
    \expect\|\grad \risk_\Pcal(\ell(w)) - v_i\|_2^2 = \underbrace{\|\grad \risk_\Pcal(\ell(w)) - \expect[v_i]\|_2^2}_{\text{bias}} + \underbrace{\expect\|\expect[v_i] - v_i\|_2^2}_{\text{variance}}
    \label{eq:decomposition}
\end{align}
decreases to zero asymptotically. In the remainder of this section, we first identify concretely our target estimand (i.e.~$\grad \risk_\Pcal(\ell(w))$ for the spectral risk uncertainty set), construct an estimate, and then describe individual procedures to ensure that the bias and variance terms in~\eqref{eq:decomposition} vanish.

\begin{figure}[t]
    \centering
    \includegraphics[width=\linewidth]{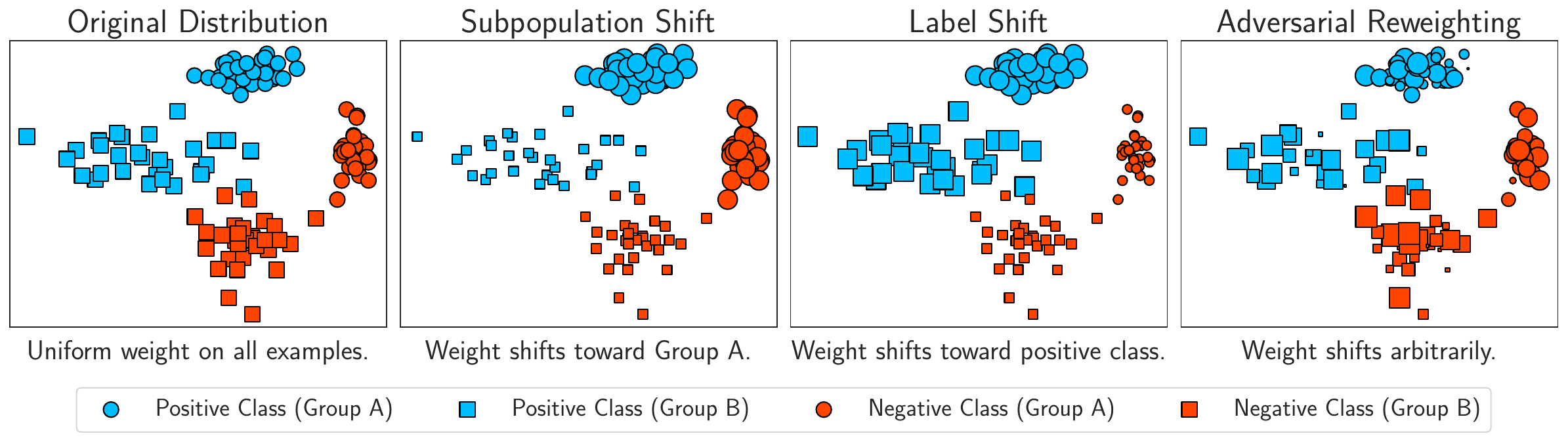}
    \caption{\textbf{Notions of Distribution Shift}. Illustration of various forms of distribution shift that are characterized by maintaining the same training data but changing the weight of each example.} 
    \label{fig:subpopulations}
\end{figure}

\myparagraph{The Gradient of a Spectral Risk Measure}
Each SRM is parameterized by a vector $\sigma = (\sigma_1, \ldots, \sigma_n)$ of non-negative weights satisfying $\sigma_1 \leq \cdots \leq \sigma_n$ and $\sum_{i=1}^n \sigma_i = 1$, called the \emph{spectrum}. The uncertainty set $\Pcal = \Pcal(\sigma)$ is the polytope
$
        \Pcal(\sigma) = \operatorname{ConvexHull}\br{ \text{permutations of } (\sigma_1, \ldots, \sigma_n) }.
$
See \Cref{fig:permutahedron}, \Cref{sec:a:objective}, for a visualization of $\Pcal(\sigma)$ for the CVaR \citep{Rockafellar2013Superquantiles,kawaguchi2020ordered, Laguel2022Superquantiles}, extremile~\citep{daouia2019extremiles}, and ESRM~\citep{cotter2006extreme}. The respective formulae for their spectra $\sigma$ and additional background on SRMs are also given in \Cref{sec:a:objective}. Define $\risk_{\sigma} := \risk_{\Pcal(\sigma)}$.
When $\nu > 0$ and the map $q \mapsto D(q \Vert \ones_n/n)$ is strongly convex over $\Pcal(\sigma)$, we have that (\Cref{lem:danskin}, \Cref{sec:a:objective}) $\risk_\sigma$ is differentiable with gradient given by
\begin{align}
    \grad \risk_\sigma(l) = \argmax_{q \in \Pcal(\sigma)} \br{q^\top l - \nu D(q \Vert \ones_n/n)} \in \R^n.
    \label{eq:gradient}
\end{align}
This means the full-batch gradient $w \mapsto \grad \risk_{\sigma}(\ell(w)) \in \R^d$ can be computed by solving the inner maximization to retrieve $l \mapsto \grad \risk_\sigma(l) \in \R^n$, calling the oracles to retrieve $w \mapsto \grad \ell(w) \in \R^{n\times d}$, and multiplying them by the chain rule. To solve for the maximizer, we prove by standard convex duality arguments (\Cref{prop:isotonic}, \Cref{sec:a:objective}) that when $D = D_f$ is an $f$-divergence, the maximum over $q$ can be expressed as a minimization problem that reduces to isotonic regression problem involving $f^*$, the convex conjugate of $f$.
Isotonic regression can be solved \emph{exactly} by the Pool Adjacent Violators algorithm~\citep{best2000minimizing}, which runs in
$O(n)$ time when the losses are sorted; see~\cref{sec:a:efficient}.

\myparagraph{Bias Reduction via Loss Estimation}
We now have a formula for the gradient and proceed to estimation. Denote by $q^l := \grad \risk_\sigma(l)$ from~\eqref{eq:gradient}, and observe that by the chain rule, $\grad \risk_\sigma(\ell(w)) = \sum_{i=1}^n q^{\ell(w)}_i \grad \ell(w)$. 
In words, we 
compute the ``most adversarial'' distribution $q^{\ell(w)} \in \Pcal(\sigma)$ for a given set of losses $\ell(w)$, and then take a convex combination of the gradients $\grad \ell_1(w), \ldots, \grad \ell_n(w)$
weighted by the probability mass function $q^{\ell(w)}$.
While the gradient is computable, however, accessing $\ell(w)$ and $\grad \ell(w)$ requires $n$ calls to the function/gradient oracles $\{\ell_i, \grad \ell_i\}_{i=1}^n$, which can be prohibitive. While using a plugin estimate with a mini-batch of size $m < n$ is a natural choice in ERM (making the first term in~\eqref{eq:decomposition} zero), this will be biased for our objective due to the maximization over $q$. For example, for $m = 1$, we have that $\risk_\sigma(\ell_i(w)) = \ell_i(w) \text{ and } \grad_w\risk_\sigma(\ell_i(w)) = \grad_w \ell_i(w)$, which are unbiased estimates of the ERM objective and gradient, respectively (not the SRM objective).
\begin{figure}[t]
    \centering
    \includegraphics[width=0.95\linewidth]{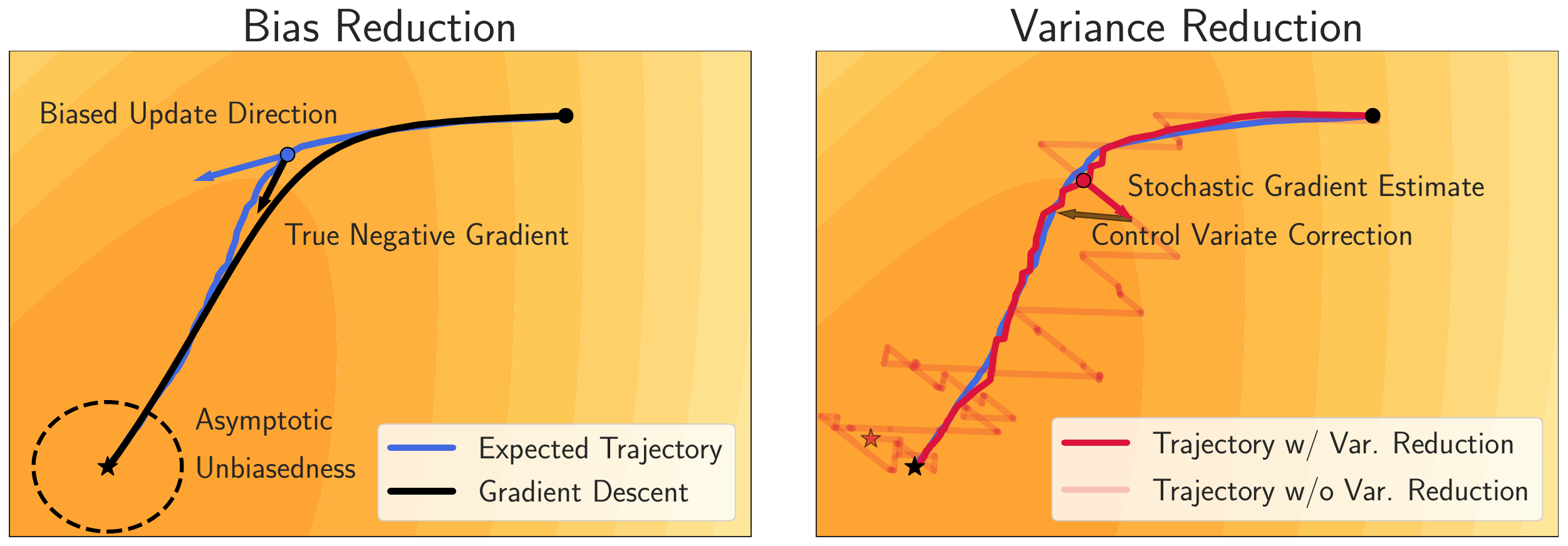}
    \caption{\small 
    \textbf{Prediction Error Reduction}.
    Optimization trajectories on the DR objective. Darker shade indicates lower objective value.
    {\bf Left:} Average trajectory of \lsaga over 20 seeds compared to full-batch gradient descent. {\bf Right:} Single trajectory of \lsaga with/without adding a control variate.}
    \label{fig:gametheory}
\end{figure}
However, note that if the optimal weights $q^{\ell(w)}$ were known, then for $i$ sampled from the uniform distribution on $[n]$, that $nq_i^{\ell(w)} \grad \ell_i(w)$ is an unbiased estimate for $\sum_{i=1}^n q^{\ell(w)}_i \grad \ell(w) = \grad \risk_\sigma(\ell(w))$.
While computing $q^{\ell(w)}$ again requires computing $\ell(w)$, the key ingredient of bias reduction in \lsaga is maintaining a table $l \in \R^n$ of losses such that $l \approx \ell(w)$ for the current iterate $w \in \R^d$, and using $q^l$ as a running estimate of $q^\ell(w)$. This is justified as when $q \mapsto D(q\Vert \ones_n)$ is strongly convex, we have by \Cref{prop:primobjgradient} the map $l \mapsto q^l$ is Lipschitz continuous in $l$ with respect to $\norm{\cdot}_2$.
Thus,
\begin{align*}
     l \approx \ell(w)  \implies q^l \approx q^{\ell(w)} \implies \E{i \sim \Unif[n]}{nq^l_i \grad \ell_i(w)} \approx \grad \risk_\sigma(\ell(w)).
\end{align*}
We prove in \Cref{sec:lsaga_algo} that $l - \ell(w) \rightarrow 0$ as the iterate counter goes to infinity for our particular choice of $l$, yielding an asymptotically unbiased gradient estimate as illustrated in \Cref{fig:gametheory} (left). 

\myparagraph{Variance Reduction via Control Variates}
The final ingredient of our stochastic gradient estimate is a variance reduction scheme. Given any estimator $\hat{\theta}$ of $\theta \in \R^d$, a \emph{control variate} is another estimator $\hat{\psi}$ over the same probability space with a known expectation $\Ex[\hat{\psi}] = \psi \in \R^d$, such that $\expect[(\hat{\theta} - \theta)^\top(\hat{\psi} - \psi)] > 0$. We can exploit this positive correlation to construct an estimator with strictly smaller variance that $\hat{\theta}$. 
Indeed, for $\alpha > 0$, we have that
\begin{align}
    \Ex\|\hat{\theta} - \alpha(\hat{\psi} - \psi) - \theta\|_2^2 &= \Ex\|\hat{\theta} - \theta\|_2^2 - 2\alpha \Ex[(\hat{\theta} - \theta)^\top(\hat{\psi} - \psi)] + o(\alpha) < \Ex\|\hat{\theta}{-}\theta\|_2^2,
    \label{eq:control}
\end{align}
demonstrating the improvement of $\hat{\theta} - \alpha(\hat{\psi} - \psi)$ over $\hat{\theta}$ for small $\alpha$. Note that $\hat{\theta}$ need not be unbiased. In our case, we have $\hat{\theta} = nq^l_i \grad \ell_i(w)$, where $l$ is the table of losses approximating $\ell(w)$. We also keep approximations $g \in \R^{n \times d}$ of $\grad \ell(w)$ and $\rho$ of $q^{\ell(w)}$, and define
\begin{align*}\textstyle
    \hat{\psi} = n\rho_i g_i \text{ and } \psi = \E{i \sim \Unif[n]}{n\rho_i g_i} = \textstyle\sum_{j=1}^n \rho_j g_j.
\end{align*}
In the unrealistic case in which $\hat{\psi} = \hat{\theta}$, the optimal multiplier is $\alpha = 1$, trivially achieving zero variance. Similar to $l$, we prove in \Cref{sec:lsaga_algo} that $g - \grad \ell(w) \rightarrow 0$ and $\rho - q^{\ell(w)} \rightarrow 0$, so we have in the notation of~\eqref{eq:control} that $\hat{\psi} - \hat{\theta} \rightarrow 0$. Thus, by using $\alpha = 1$, our final stochastic gradient estimate is
\begin{align}\textstyle
    \hat{\theta} - \alpha(\hat{\psi} - \psi) = nq^l_i \grad \ell_i(w) - n\rho_i g_i + \textstyle\sum_{j=1}^n \rho_j g_j,
    \label{eqg:grad_estimate}
\end{align}
which has asymptotically vanishing variance \emph{without} decreasing the learning rate, as illustrated in \Cref{fig:gametheory} (right). This variance reduction scheme generalizes (and is inspired by) the one employed in the SAGA optimizer \citep{defazio2014saga} for ERM, in which $\rho$ is set to $\ones_n/n$. 
Finally, while ignored in this section for ease of presentation, 
each $g_i$ will actually approximate the gradients of the regularized losses $\ell_i + \mu \norm{\cdot}_2^2$ for $\mu > 0$.

\section{The \lsaga Algorithm}\label{sec:lsaga_algo}
By combining the bias reduction and variance reduction schemes from the previous section, we build an algorithm that achieves overall \emph{prediction error reduction}. Thus, we now present the {\bf P}rediction Error-{\bf R}educed {\bf O}ptimizer for {\bf Spect}ral Risk Measures (\lsaga) algorithm to solve 
\begin{align}
    \min_{w \in \R^d} \sbr{\primobj(w) := \risk_\sigma(\ell(w)) + \frac{\mu}{2}\norm{w}_2^2},
    \label{eq:lsaga_obj}
\end{align}
where $\mu > 0$ is a regularization constant. The full algorithm is given in \Cref{algo:lsaga}. 
We offer in this section an intuitive explanation of the algorithm, discussion of computational complexity, theoretical convergence guarantees, and extensions to non-smooth settings.

\myparagraph{Instantiating Bias and Variance Reduction}
Consider a current iterate $w \in \R^d$. As mentioned in \Cref{sec:problem}, bias and variance reduction relies on the three approximations: the losses $l$ for $\ell(w) \in \R^n$, each gradient $g_i$ for $\grad \ell_i(w) + \mu w \in \R^{d}$, and the weights $\rho$ for $q^{\ell(w)} \in \Pcal$. Given initial point $w_0 \in \R^d$, we initialize $l = \ell(w_0)$, $g = \grad \ell(w_0)$, and $\rho = q^{\ell(w_0)}$ (including $\bar{g} := g^\top \rho$). 

At each iterate, we sample indices $i, j \sim \Unif[n]$ independently. The index $i$ is used to compute the stochastic gradient estimate~\eqref{eqg:grad_estimate}, yielding the update direction $v$ in \cref{alg:lsaga:update} at the cost of a call to a $(\ell_i, \grad \ell_i)$ oracle. Then, $l$ is updated by replacing $l_j$ with $\ell_j(w)$ costing another call to $(\ell_j, \grad \ell_j)$, and we reset $q$ (the variable that stores $q^l$). Next, we use $i$ again to make the replacements of $g_i$ with $\grad \ell_i(w) + \mu w$ and $\rho_i$ with $q_i = q_i^l$. In summary, each approximation is updated every iteration by changing one component based on the current iterate $w$. The indices $i, j$ are ``decoupled'' for theoretical convenience, but in practice using only $i$ works similarly, which we use in \Cref{sec:experiments}.

\begin{algorithm}[t]\caption{\lsaga \label{algo:lsaga}}
\begin{algorithmic}[1]
    \Statex {\bf Inputs:}
    Initial $w_0$, spectrum $\sigma$, number of iterations $T$, regularization $\mu > 0$, shift cost $\nu > 0$. 
    \Statex {\bf Hyperparameter:} Stepsize $\eta > 0$.
    \State \textbf{Initialize} $l \gets \ell(w_0)$ and $g_i \gets \grad \ell_i(w_0) + \mu w_0$ for $i=1,\ldots, n$.
    \State Set $q \gets \argmax_{\bar{q} \in \mc{P}(\sigma)} \bar{q}^\top l - \nu D(q \Vert \ones_n/n)$ and $\rho \gets q$.
    \State Set $\bar g \gets \sum_{i=1}^n \rho_i g_i \in \R^d$.
    \State Set $w \gets w_0$.
    \For{$T$ iterations}
        \State Sample $i, j \sim \Unif[n]$ independently.
        \State $v \gets nq_{i}(\grad \ell_{i}(w) + \mu w) - n\rho_{i} g_{i_t} + \bar g$. \Comment{\textbf{Iterate Update}}
        \label{alg:lsaga:update}
        \State $w \gets w - \eta v$.
        \label{alg:lsaga:param}
        \State $l_{j} \gets \ell_{j}(w)$. \Comment{\textbf{Bias Reducer Update}}
        \State $q \gets \argmax_{\bar{q} \in \mc{P}(\sigma)} \bar{q}^\top l - \nu D(\bar{q} \Vert \ones_n/n)$.
        \label{alg:lsaga:dual}
        \State $\bar g \gets \bar g  - \rho_{i} g_{i} + q_{i} \p{\grad \ell_{i}(w) + \mu w}$. \Comment{\textbf{Variance Reducer Update}}
        \State $g_{i} \gets \grad \ell_{i}(w) + \mu w$.
        \State $\rho_{i} \gets q_{i}$. 
        \label{alg:lsaga:cv}
    \EndFor
    \Statex {\bf Output:} Final point $w$.
\end{algorithmic}
\end{algorithm}

\myparagraph{Computational Aspects}
The weight update in \Cref{alg:lsaga:dual} is solved exactly by (i) sorting the vector of losses in $O(n \log n)$, (ii) plugging the sorted loss table $l$ into the {Pool Adjacent Violators (PAV)} algorithm running in $O(n)$ time, as mentioned in \Cref{sec:problem}. Because only one element of $l$ changes every iterate, we may simply bubble sort $l$ starting from the index that was changed. While in the worst case, this cost is $O(n)$, it is exactly $O(s)$ where $s$ is the number of swaps needed to resort $l$. We find in experiments that the sorted order of $l$ stabilizes quickly.
The storage of the gradient table $g$ requires $O(nd)$ space in general, but it can be reduced to $O(n)$ for generalized linear models and nonlinear additive models. For losses of the form $\ell_i(w) = h(x_i^\top w, y_i)$, for a differentiable loss $h$ and scalar output $y_i$,
we have $\grad \ell_i(w) = x_i \, h'(x_i\T w, y_i)$. We only need to store the scalar $h'(x_i\T w, y_i)$, so \lsaga requires $O(n + d)$ memory.
In terms of computational complexity, Lines~\ref{alg:lsaga:param} and~\ref{alg:lsaga:cv} require $O(d)$ operations and Line~\ref{alg:lsaga:dual} requires at most $O(n)$ operations, so that in total the iteration complexity is $O(n+d)$. In comparison, a full batch gradient descent requires $O(nd)$ operations so \lsaga decouples efficiently the cost of computing the losses, gradients, and weights. 

\myparagraph{Convergence Analysis}
We assume throughout that each $\ell_i$ is convex, $G$-Lipschitz, and $L$-smooth. We also assume that the $D = D_f$ is an $f$-divergence with the generator $f$ being $\alpha_n$-strongly convex on the interval $[0, n]$ (e.g.~$\alpha_n = 2n$ for the $\chi^2$-divergence and $\alpha_n = 1$ for the KL-divergence).

The convergence guarantees depend on the condition numbers $\kappa = 1 + L / \mu$ of the individual regularized losses, as well as a measure $\kappa_\sigma = n\sigma_n$ of the skewness of the spectrum. Note that both $\kappa$ and $\kappa_\sigma$ are necessarily larger than or equal to one. Define $w^\star := \argmin_{w} \primobj(w)$, which exists and is unique due to the strong convexity of $\primobj$. The proof is given in \Cref{sec:a:main_result}.
\begin{restatable}{theorem}{lsagamain}
\label{thm:lsaga:main}
\lsaga with a small enough step size is guaranteed to converge linearly for all $\nu > 0$.
If, in addition, the shift cost is 
$\nu \ge \Omega(G^2 / \mu \alpha_n)$, then 
the sequence of iterates $(w\pow{t})_{t\geq 1}$ generated by \lsaga and 
learning rate $\eta = (12 \mu (1+\kappa) \kappa_\sigma)^{-1} $ converges linearly at a rate
$\tau = 2\max\left\{n, 24 \kappa_\sigma(\kappa + 1)\right\}$, i.e., 
\[
\expect\|w\pow{t} - w^\star\|_2^2 \le (1 + \sigma_n^{-1} + \sigma_n^{-2}) \exp(-t/\tau) \|w\pow{0} - w^\star\|_2^2 \,.
\]
\end{restatable}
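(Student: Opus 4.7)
The plan is to construct a Lyapunov function combining the iterate distance to the optimum with the approximation errors of the three tables $l$, $g$, and $\rho$, and show that it contracts in expectation by a factor $(1 - 1/\tau)$ per iteration. Let $w^\star$ denote the unique minimizer of $\primobj$, and set $l_i^\star := \ell_i(w^\star)$, $g_i^\star := \grad\ell_i(w^\star) + \mu w^\star$, and $\rho^\star := q^{\ell(w^\star)}$. Define
\begin{equation*}
\Phi_t := \|w_t - w^\star\|_2^2 + \frac{c_g}{n}\sum_{i=1}^n \|g_i^{(t)} - g_i^\star\|_2^2 + \frac{c_l}{n}\sum_{i=1}^n (l_i^{(t)} - l_i^\star)^2 + \frac{c_\rho}{n}\sum_{i=1}^n (\rho_i^{(t)} - \rho_i^\star)^2
\end{equation*}
for positive constants $c_g, c_l, c_\rho$ to be tuned. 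The goal is $\mathbb{E}[\Phi_{t+1}\mid\mathcal{F}_t] \le (1-1/\tau)\,\Phi_t$; iterating and comparing $\Phi_0$ to $\|w_0-w^\star\|_2^2$ via $G$-Lipschitzness and $L$-smoothness of the $\ell_i$'s yields the stated prefactor $1 + \sigma_n^{-1} + \sigma_n^{-2}$.

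First, I would expand $\|w_{t+1}-w^\star\|_2^2 = \|w_t-w^\star\|_2^2 - 2\eta\langle v_t, w_t-w^\star\rangle + \eta^2\|v_t\|_2^2$ and take conditional expectations. The key is that $\mathbb{E}[v_t\mid\mathcal{F}_t] = \sum_j q_j^{l_t}(\grad\ell_j(w_t)+\mu w_t)$, since the variance-reduction terms $-n\rho_i^{(t)} g_i^{(t)} + \bar g^{(t)}$ cancel in expectation. This differs from $\grad\primobj(w_t)$ by a \emph{bias} that, by the Lipschitz property of $l\mapsto q^l$ with constant $1/(\nu\alpha_n)$ (\Cref{prop:primobjgradient}) and $G$-Lipschitzness of $\ell$, is bounded up to constants by $(1/(\nu\alpha_n))\|l_t-\ell(w_t)\|_2$ times $\sqrt{\sum_j\|\grad\ell_j(w_t)+\mu w_t\|_2^2}$, in turn split into residuals proportional to $\|l_t-l^\star\|_2$ and $\|w_t-w^\star\|_2$. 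After Young's inequality, the progress term contributes $-2\eta\mu\|w_t-w^\star\|_2^2$ from $\mu$-strong convexity of $\primobj$, plus absorbable bias.

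Next, I would bound $\mathbb{E}[\|v_t\|_2^2\mid\mathcal{F}_t]$. Adding and subtracting $n\rho_i^\star g_i^\star$ (which sums in expectation to $\grad\primobj(w^\star)=0$), the control-variate structure collapses the variance into three pieces: $\mathbb{E}\|n(q_i^{l_t}-\rho_i^\star)(\grad\ell_i(w_t)+\mu w_t)\|_2^2$, $\mathbb{E}\|n\rho_i^\star((\grad\ell_i(w_t)+\mu w_t)-g_i^\star)\|_2^2$, and $\mathbb{E}\|n(\rho_i^{(t)}-\rho_i^\star)g_i^{(t)}+n\rho_i^\star(g_i^{(t)}-g_i^\star)\|_2^2$. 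Using $n\rho_i^\star\le n\sigma_n = \kappa_\sigma$, $L$-smoothness of each $\ell_i$, and the Lipschitz property of $q^l$, each piece is dominated by $\kappa_\sigma$ times a linear combination of the Lyapunov summands. Then, for each table, the sampling of $j$ or $i$ induces the geometric contraction
\begin{equation*}
\mathbb{E}\!\left[\tfrac{1}{n}\sum_i (l_i^{(t+1)}-l_i^\star)^2 \,\Big|\, \mathcal{F}_t\right] = \left(1-\tfrac{1}{n}\right)\tfrac{1}{n}\sum_i (l_i^{(t)}-l_i^\star)^2 + \tfrac{1}{n^2}\sum_i (\ell_i(w_t)-\ell_i(w^\star))^2,
\end{equation*}
with residual at most $G^2\|w_t-w^\star\|_2^2/n$; analogously, the $g$-table contracts with residual bounded by $(L+\mu)^2\|w_t-w^\star\|_2^2/n$ and the $\rho$-table with residual bounded by $(1/(\nu\alpha_n))^2\|l_t-l^\star\|_2^2/n$.

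Finally, assembling $\mathbb{E}[\Phi_{t+1}\mid\mathcal{F}_t]$ and choosing $c_g = \Theta(\eta/(\mu\kappa_\sigma))$ together with $c_l, c_\rho = \Theta(\eta n/\mu)$ (up to absolute constants), the step size $\eta = 1/(12\mu(1+\kappa)\kappa_\sigma)$ drives the coefficient of $\|w_t-w^\star\|_2^2$ below $1-2\eta\mu/3$ and the coefficients of the three table terms below $1-1/(2n)$, giving the unified contraction rate $\tau = 2\max\{n,\,24\kappa_\sigma(\kappa+1)\}$. The main obstacle is precisely the bias: unlike SAGA, $\mathbb{E}[v_t\mid\mathcal{F}_t] \ne \grad\primobj(w_t)$, so strong convexity must absorb a residual proportional to $\|l_t-l^\star\|_2/(\nu\alpha_n)$. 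Young's inequality absorbs it cleanly only when $\nu$ exceeds the threshold $\Omega(G^2/(\mu\alpha_n))$, giving the explicit rate; for smaller $\nu$, a more delicate coupling between the $l$-table contraction and the iterate progress still produces a positive (though potentially much slower) linear contraction, yielding the unconditional first claim of the theorem.
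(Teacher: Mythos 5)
Your overall architecture (a SAGA-style Lyapunov function coupling the iterate error with table errors, geometric $1/n$ contraction of the tables, and a bias term controlled via the $(n\alpha_n\nu)^{-1}$-Lipschitzness of $l\mapsto q^l$) is the right shape and matches the paper in spirit, and your accounting of where the threshold $\nu\gtrsim G^2/(\mu\alpha_n)$ comes from is essentially correct. However, there is a genuine gap in how you absorb the second-moment term $\eta^2\expect\|v_t\|_2^2$. You propose to cancel it against the $-2\eta\mu\|w_t-w^\star\|_2^2$ gain from plain strong convexity. The dominant variance piece is $\frac{1}{n}\sum_i\|n\rho_i^\star(\grad r_i(w_t)-\grad r_i(w^\star))\|_2^2\le \kappa_\sigma M^2\|w_t-w^\star\|_2^2$ with $M=L+\mu$, so absorbing $\eta^2\kappa_\sigma M^2$ into $\eta\mu$ forces $\eta\lesssim \mu/(\kappa_\sigma M^2)$ and hence a rate $\tau\sim\kappa_\sigma\kappa^2$, not the claimed $\eta=(12\mu(1+\kappa)\kappa_\sigma)^{-1}$ and $\tau\sim\kappa_\sigma\kappa$. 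The paper's proof avoids this by not relying on strong convexity alone: \Cref{lem:descent_generic} applies the interpolation inequality of \Cref{lem:smooth_strg_cvx} to each $(q_iM)$-smooth, $(q_i\mu)$-strongly convex summand $q_ir_i$ and extracts an \emph{additional negative term} $-\frac{\eta\beta_1}{2\sigma_n(M+\mu)}\sum_i\|q_i^{(t)}\grad r_i(w^{(t)})-q_i^{(t)}\grad r_i(w^\star)\|_2^2$ from the descent inequality; it is this term (the $\purple{Q\pow{t}}$ term) that cancels the leading variance contribution and permits the larger step size. Without it, your argument as written cannot reach the stated rate.

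A second, smaller issue: the unconditional first claim (linear convergence for every $\nu>0$) is asserted via "a more delicate coupling" but not actually argued, and your Lyapunov function does not obviously support it. In the small-$\nu$ regime the paper needs two further potential terms — the staleness term $\blue{U\pow{t}}=\frac1n\sum_j\|w\pow{t}-\zeta_j\pow{t}\|_2^2$ and the inner-product term $\yellow{R\pow{t}}=2\eta n(q\pow{t}-q^\star)^\top(l\pow{t}-l^\star)$ — together with the genuinely negative cross term $-2\eta(q\pow{t}-q^\star)^\top(\ell(w\pow{t})-\ell(w^\star))$ produced by invoking \Cref{lem:descent_generic} with $\bar q=q^\star$, and a downweighting parameter $\beta_1\ll1$ to keep the net coefficient of $\yellow{R\pow{t}}$ negative (\Cref{lem:Rt,lem:Ut,thm:small}). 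Tracking only $\|l^{(t)}-l^\star\|^2$ and $\|\rho^{(t)}-\rho^\star\|^2$ gives no handle on these cancellations, so the $\nu\to0$ regime remains unproven in your outline.
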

The number of iterations $t$ required by \lsaga to achieve $\expect\|w\pow{t} - w^\star\|_2^2 \le \epsilon$ (provided that $\nu$ is large enough) is
$t = O((n + \kappa \kappa_\sigma) \ln(1/\eps))$. This exactly matches the rate of the \lsvrg \citep{Mehta2022Stochastic}, the only primal stochastic optimizer that converges linearly for spectral risk measures. However, unlike \lsvrg, \lsaga is guaranteed to converge linearly for any shift cost and has a single hyperparameter, the stepsize $\eta$. Similarly, compared to primal-dual stochastic saddle-point methods, our algorithm requires only one learning rate, streamlining its implementation. 

\myparagraph{\lsaga Variants for Non-Smooth Objectives}
We may wonder about the convergence behavior of \lsaga when either the shift cost $\nu = 0$, or the underlying losses $\ell_i$ are non-smooth. While the smoothness of the objective is then lost, \lsaga still converges to the minimizer $w_0^\star$ as we prove below. 
The first setting is relevant as historically, SRMs such as the conditional value-at-risk have been employed as coherent risk measures for loss distributions \citep{acerbi2002coherence} in the form of an $L$-estimator $\sum_{i=1}^n \sigma_i l_{(i)}$ (as seen in \Cref{sec:problem}). If these losses are separated at the optimum, however, we may achieve linear convergence with \lsaga even with $\nu = 0$. 
This behavior can be explained as ``hidden smoothness'' in the objective \eqref{eq:lsaga_obj}; the objective is indeed differentiable at points satisfying $\ell_{(1)}(w) < \cdots < \ell_{(n)}(w)$, where $\ell_{(i)}(w)$ denotes the $i$-th smallest loss at $w$. Assume convex losses $\ell_1, \ldots, \ell_n$ and $\mu > 0$.
\begin{proposition}\label{prop:no_shift}
Let $w^\star_\nu$ be the unique minimizer of \eqref{eq:lsaga_obj} with shift cost $\nu \geq 0$. Assume that the values $\ell_1(w^\star_0), \ldots, \ell_n(w^\star_0)$ are all distinct. Then, there exists a constant $\nu_0 > 0$ such that $w^\star_0 = w^\star_\nu$ exactly for all $\nu \leq \nu_0$. Thus, running \lsaga with $\nu \in (0, \nu_0]$ converges to the minimizer $w_0^\star$.
\end{proposition}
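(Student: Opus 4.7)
The plan is to prove that $w^\star_0$ remains a critical point (hence the unique minimizer, by strong convexity from $\mu > 0$) of the $\nu$-regularized objective~\eqref{eq:lsaga_obj} for all sufficiently small $\nu > 0$. Given this, the convergence claim for \lsaga follows immediately from the first clause of \Cref{thm:lsaga:main}, which guarantees linear convergence to $w^\star_\nu = w^\star_0$ for all $\nu > 0$. The central sub-claim is that the inner maximizer in~\eqref{eq:gradient} at $l = \ell(w^\star_0)$ is independent of $\nu$ on a neighborhood of zero, so that the full gradient of the $\nu$-regularized objective evaluated at $w^\star_0$ is the same for all small $\nu \geq 0$.

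The distinctness of $\ell_1(w^\star_0), \ldots, \ell_n(w^\star_0)$ yields a unique permutation $\pi$ ordering these values, and hence a unique vertex $q^\star$ of $\Pcal(\sigma)$ (obtained by permuting $\sigma$ via $\pi$) that strictly maximizes $q \mapsto q^\top \ell(w^\star_0)$ over $\Pcal(\sigma)$. At $\nu = 0$, the unpenalized SRM $\risk_\sigma$ is therefore differentiable at $\ell(w^\star_0)$ with gradient $q^\star$, and the first-order condition for $w^\star_0$ reads $\sum_{i=1}^n q^\star_i \grad \ell_i(w^\star_0) + \mu w^\star_0 = 0$. Strict optimality also furnishes a constant $c > 0$ such that $\ell(w^\star_0)^\top (q^\star - q) \geq c \norm{q^\star - q}_2$ for every $q \in \Pcal(\sigma)$, which is equivalent to $\ell(w^\star_0)$ lying strictly inside the polyhedral normal cone $N_{\Pcal(\sigma)}(q^\star)$.

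The key step is then a normal-cone argument applied to the concave problem~\eqref{eq:gradient}: $q^\star$ is its unique maximizer if and only if $\ell(w^\star_0) - \nu \grad_q D(q^\star\Vert \ones_n/n) \in N_{\Pcal(\sigma)}(q^\star)$. Combining the gap $c$ with the (bounded) norm of $\grad_q D(q^\star\Vert \ones_n/n)$, this inclusion persists for all $\nu \leq \nu_0 := c / \norm{\grad_q D(q^\star\Vert\ones_n/n)}_2$. Hence the argmax in~\eqref{eq:gradient} remains $q^\star$ on $[0, \nu_0]$, so $w^\star_0$ satisfies the first-order condition of the $\nu$-problem for every such $\nu$, and strong convexity yields $w^\star_\nu = w^\star_0$ as claimed.

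The main technical obstacle is verifying boundedness of $\grad_q D(q^\star\Vert \ones_n/n)$, which underwrites the existence of a positive $\nu_0$. This holds cleanly for smooth $f$-divergences such as the $\chi^2$-divergence (whose generator is differentiable everywhere on $\R_+$), but fails for the KL-divergence when $q^\star$ has a zero entry (as with the CVaR spectrum where $\sigma_1 = 0$), in which case the maximizer of~\eqref{eq:gradient} is forced into the relative interior of $\Pcal(\sigma)$ for every $\nu > 0$. The proposition should therefore be read as implicitly restricting to configurations where $\grad_q D$ is finite at the aligned vertex -- e.g., spectra with $\sigma_1 > 0$ (the extremile and ESRM) or the $\chi^2$-divergence.
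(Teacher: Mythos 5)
Your proof is correct and reaches the same conclusion by the same overall skeleton (show the inner argmax at $\ell(w^\star_0)$ is unchanged for small $\nu$, then transfer the first-order optimality condition from the $\nu=0$ problem to the $\nu$-problem and invoke strong convexity), but the key step is argued differently. The paper (\Cref{prop:no-shift-appendix}) specializes to the $\chi^2$ penalty, rewrites the objective so that the inner problem becomes a quadratically regularized linear maximization over the permutahedron, and then cites Proposition~5 of \citet{blondel2020fast} to conclude that the regularized argmax coincides with the vertex $q^\star$ whenever $n\nu(\sigma_{i+1}-\sigma_i) < \ell_{(i+1)}(w^\star_0) - \ell_{(i)}(w^\star_0)$; this yields the explicit, essentially tight threshold $\nu_0$ quoted after the proposition in the main text. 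You instead give a self-contained normal-cone perturbation argument: the unique linear maximizer $q^\star$ admits a margin $c>0$ (a standard sharp-maximum property of linear programs over polytopes, which indeed holds here), so $\ell(w^\star_0) - \nu\grad_q D(q^\star\Vert\ones_n/n)$ stays in $N_{\Pcal(\sigma)}(q^\star)$ for $\nu \le c/\norm{\grad_q D(q^\star\Vert\ones_n/n)}_2$. Your route is more general (it covers any divergence whose gradient is finite at the aligned vertex) at the cost of a less explicit $\nu_0$; the paper's route buys the concrete gap condition but is tied to the quadratic penalty. Your closing caveat about the KL divergence with $\sigma_1 = 0$ is well taken and is precisely why the appendix statement restricts to the $\chi^2$ penalty, so it reflects a real limitation of the main-text phrasing rather than a flaw in your argument. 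The final step—invoking the first clause of \Cref{thm:lsaga:main} to get convergence of \lsaga to $w^\star_\nu = w^\star_0$—matches the paper exactly.
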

In particular, $\nu_0$ is chosen so that $\nu_0 \p{\sigma_{i+1} - \sigma_i} < \ell_{(i+1)}(w^\star_0) - \ell_{(i)}(w^\star_0)$ for each $i$, or as the multiplicative factor that relates gaps in the spectrum to the gaps in the loss at optimality (see \Cref{sec:a:objective}).

For the setting in which any $\ell_i$ may be non-smooth, we generalize \lsaga by applying it to the Moreau envelope of each loss $\ell_i$ and their gradients~\citep{bauschke2011convex, rockafellar1976monotone}, allowing for accelerated performance and non-smooth losses (such as those containing an $\ell_1$ penalty).
Specifically, we consider oracles $\nabla \env(\ell_i)(w)$ where $\env(\ell_i) = \inf_{v \in \R^d} \ell_i(v) + \|w-v\|_2^2$; the updates can be expressed in terms of the proximal operators of the losses~\citep{bauschke2011convex}. Such an approach has been considered for ERM by \citep{defazio2016simple} to accelerate the SAGA algorithm. The oracles can be accessed either in closed form or by efficient subroutines in common machine learning settings~\citep{defazio2016simple,frerix2018proximal,roulet2022differentiable}, we can adapt \cref{algo:lsaga} to leverage such oracles.
The resulting algorithm enjoys a linear convergence guarantee similar to \Cref{thm:lsaga:main} with a more liberal condition on the shift cost $\nu$ while providing competitive performance in practice. 
We refer to \Cref{sec:a:prox_saga} for details.

\section{Experiments}\label{sec:experiments}
We compare \lsaga against competitors in a variety of learning tasks. While we focus attention on its performance as an optimizer with respect to its training objective, we also highlight metrics of interest on the test set in fairness and distribution shift benchmarks. The code containing the algorithm implementation and data preparation is made publicly available online: \href{https://github.com/ronakdm/prospect}{https://github.com/ronakdm/prospect}.

\myparagraph{Setting, Baselines, Evaluation} 
We consider supervised learning tasks where data points $z_i = (x_i, y_i)$ are input-label pairs. Losses are of the form $\f_i(w) := h(y_i, w^\top \phi(x_i))$, with $\phi$ a fixed feature embedding,
and $h$ measuring prediction error.
The spectrums considered are:
$0.5$-CVaR, $2$-extremile,
and $1$-ESRM. 

We compare against four baselines: minibatch stochastic gradient descent (SGD), stochastic regularized dual averaging (SRDA) \citep{Xiao2009Dual}, Saddle-SAGA \citep{palaniappan2016stochastic}, and LSVRG \citep{Mehta2022Stochastic}.  For SGD/SRDA, we use a batch size of 64 and for LSVRG we use an epoch length of $n$. For Saddle-SAGA, we show that allowing different primal and dual learning rates provides theoretically and experimentally improvement (\Cref{sec:a:saddle_saga}) and use this improved heuristic (setting the dual stepsize $10n$ times smaller than the primal one).
We plot
\begin{align}
    \operatorname{Suboptimality}(w) = (\primobj(w) - \primobj(w^\star)) \, /\, (\primobj(w\pow{0}) - \primobj(w^\star)) \,,
    \label{eqn:subopt}
\end{align}
where $w^\star$ is approximated by running LBFGS~\citep{nocedal1999numerical} on the objective until convergence. The $x$-axis displays the number of calls to any first-order oracle $w \mapsto (\f_i(w), \grad \f_i(w))$ divided by $n$, i.e. the number of passes through the training set. We fix the shift cost $\nu = 1$ and regularization parameter $\mu = 1/n$.
Further details of the setup such as hyperparameter tuning, and additional results are given in \Cref{sec:a:experiments,sec:a:additional} respectively.

\subsection{Tabular Least-Squares Regression}

\begin{figure*}[t]
    \centering
    \includegraphics[width=\linewidth]{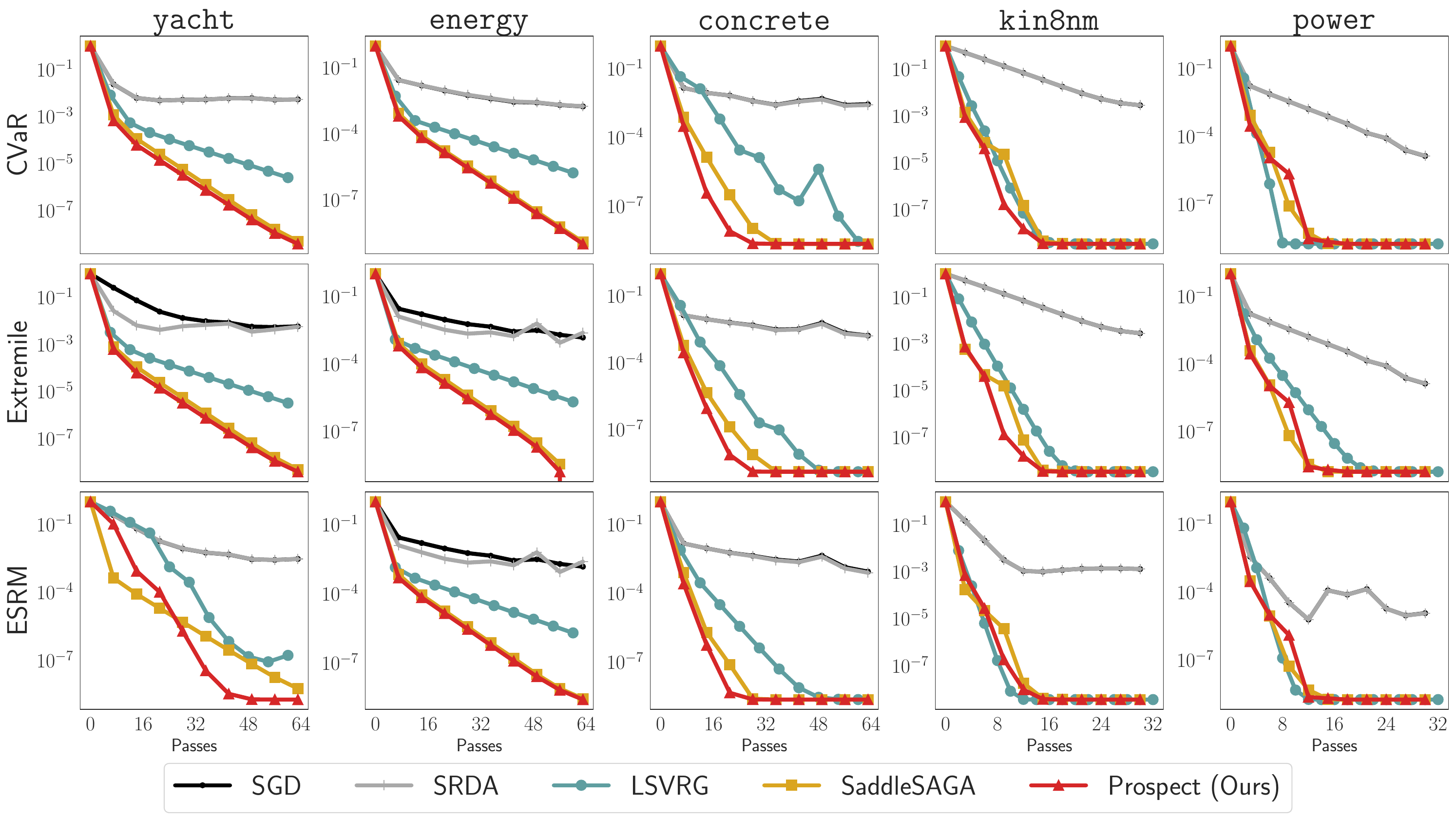}
    \caption{{\bf Regression benchmarks}. The $y$-axis measures the suboptimality as given by \eqref{eqn:subopt}, while the $x$-axis measures the number of calls to the function value/gradient oracle divided by $n$. Rows indicate different spectral risk objectives and columns indicate datasets.}
    \label{fig:uci}
\end{figure*}

We consider five regression benchmarks under square loss. The datasets are 
\texttt{yacht} 
($n=244$) 
\citep{Tsanas2012AccurateQE}, 
\texttt{energy} 
($n=614$) 
\citep{Segota2020Artificial},  
\texttt{concrete} 
($n=824$) 
\citep{Yeh2006Analysis}, 
\texttt{kin8nm} 
($n=6553$) 
\citep{Akujuobi2017Delve}, 
and \texttt{power} 
($n=7654$) 
\citep{Tufekci2014Prediction}.
The training curves are shown in \Cref{fig:uci}.

\myparagraph{Results}
Across datasets and objectives, we find that \lsaga exhibits linear convergence at a rate no worse than \saddlesaga and \lsvrg but often much better.
For example, \lsaga converges to precision $10^{-8}$ for the CVaR on {\tt concrete} and the extremile on {\tt power} within half the number of passes that LSVRG takes for the same suboptimality. Similarly, for the ESRM on {\tt yacht}, \saddlesaga requires 64 epochs to reach the same precision as \lsaga at 40 epochs. The direct stochastic methods, SGD/SRDA, are biased and fail to converge for any learning rate.

\subsection{Fair Classification and Regression}

Inspired by \cite{williamson2019fairness}, 
we explore the relationship between robust learning and group fairness on 2 common tabular benchmarks.
{\bf Diabetes 130-Hospitals} ({\tt diabetes}) is a binary classification task of predicting readmission for diabetes patients based on 10 years worth of clinical data from 130 US hospitals \citep{Rizvi2014Impact}.
{\bf Adult Census} ({\tt acsincome}) is a regression task of predicting income of US adults given features compiled from the American Community Survey \citep{Ding2021Retiring}. 

\myparagraph{Evaluation}
We evaluate fairness with the \emph{statistical parity score}, which compares predictive distributions of a model given different values of a particular protected attribute \citep{Agarwal2018AReductions, Agarwal2019FairRegression}. Letting $Z = (X, Y, A)$ denote a random (input, label, metadata attribute) triplet, a model $g$ is said to satisfy statistical parity (SP) if the conditional distribution of $g(X)$ over predictions given $A = a$ is equal for any value $a$. Intuitively, statistical parity scores measure the maximum deviation between these distributions for any over $a$, so values close to zero indicate SP-fairness.
In {\tt diabetes}, we use gender as the protected attribute $A$, whereas in {\tt acsincome} we use race as the protected attribute. Note that the protected attributes are not supplied to the models. The results are given in \Cref{fig:fairness}. 

\myparagraph{Results}
Firstly, we note that \lsaga converges rapidly on both datasets while  LSVRG fails to converge on {\tt diabetes} and \saddlesaga fails to converge on {\tt acsincome}. 
Secondly, LSVRG does not stabilize with respect to classification SP, showing a mean/std SP score of $1.38  \pm 0.25\%$ within the final ten passes on the {\tt diabetes} CVaR, whereas \lsaga gives $0.82 \pm 0.00\%$, i.e., a $40\%$ relative improvement with greater stability. 
While \saddlesaga does stabilize in SP on {\tt diabetes}, it fails to qualitatively decrease at all on the {\tt acsincome}. Interestingly, while suboptimality and SP-fairness are correlated for \lsaga, SGD (reaching only $10^{-1}$ suboptimality with respect to the CVaR objectives on {\tt acsincome}) achieves a lower fairness score. 
Again, across both suboptimality and fairness, \lsaga is either the best or close to the best.

\begin{figure*}[t]
    \centering
    \includegraphics[width=\linewidth]{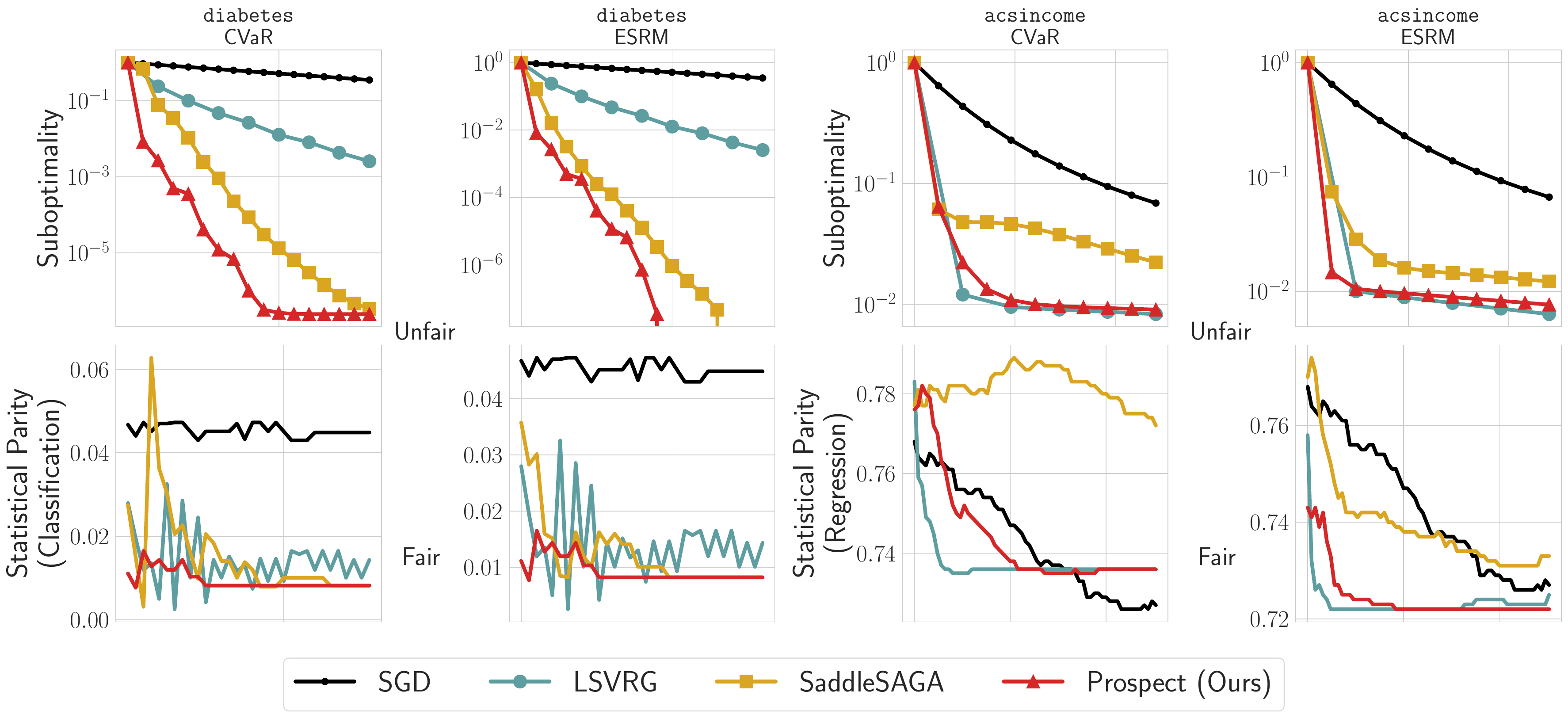}
    \caption{{\bf Fairness benchmarks}. {\bf Top:} Training curves on the CVaR and extremile for {\tt diabetes} ({\bf left}) and CVaR and extremile for {\tt acsincome} ({\bf right}). {\bf Bottom:} Statistical parity scores for the two classification objectives on {\tt diabetes} and regression objectives on {\tt acsincome}. Values closer to zero indicate better SP-fairness.
    }
    \label{fig:fairness}
\end{figure*}

\subsection{Image and Text Classification under Distribution Shift}

We consider two tasks from the WILDS distribution shift benchmark \citep{koh2021wilds}. 
The {\bf Amazon Reviews} ({\tt amazon}) task \citep{ni2019justifying} consists of classifying text reviews of products to a rating of 1-5, with disjoint train and test reviewers. 
The {\bf iWildCam} ({\tt iwildcam}) image classification challenge \citep{beery2020iwildcam} contains labeled images of animals, flora, and backgrounds from cameras placed in wilderness sites. Shifts are due to changes in camera angles, locations, lighting...
We use 
$n=10000$ and $n=20000$ examples respectively.
For both datasets, we train a \emph{linear probe classifier}, i.e., a linear model over a frozen deep 
representation.
For {\tt amazon}, we use a pretrained BERT model~\citep{Devlin2019BERTPO} fine-tuned on a held-out subset of the Amazon Reviews training set for 2 epochs. For {\tt iwildcam}, we use a ResNet50 pretrained on ImageNet (without fine-tuning).

\myparagraph{Evaluation}
Apart from the training suboptimality, we evaluate the spectral risk objectives on their robustness to subpopulation shifts. 
We define each subpopulation group based on the true label.
For {\tt amazon}, we use the \emph{worst group misclassification error} on the test set
\citep{Sagawa2020Distributionally}.
For \texttt{iwildcam}, we use the \emph{median group error} owing to its larger number of classes.

\myparagraph{Results}
For both {\tt amazon} and {\tt iwildcam}, \lsaga and \saddlesaga (with our heuristic) outperform \lsvrg in training suboptimality. We hypothesize that this phenomenon is due to checkpoints of \lsvrg getting stale over the $n$-length epochs for these datasets with large $n$ (leading to a slow reduction of bias). In contrast, \lsaga and \saddlesaga avoid this issue by dynamically updating the running estimates of the importance weights.
For the worst group error for {\tt amazon}, \lsaga and \saddlesaga outperform \lsvrg. \lsaga has a mean/std worst group error of $77.38 {\pm} 0.00 \%$ over the last ten passes on the extremile, whereas \saddlesaga has a slightly worse $77.53 {\pm} 1.57 \%$.
Interestingly, on {\tt iwildcam}, \lsvrg and \lsaga give stronger generalization performance, nearly $1$pp better, than \saddlesaga  in terms of median group misclassification rate. 
In summary, across tasks and objectives, \lsaga demonstrates  best or close to best performance.

\begin{figure*}[t]
    \centering
\includegraphics[width=\linewidth]{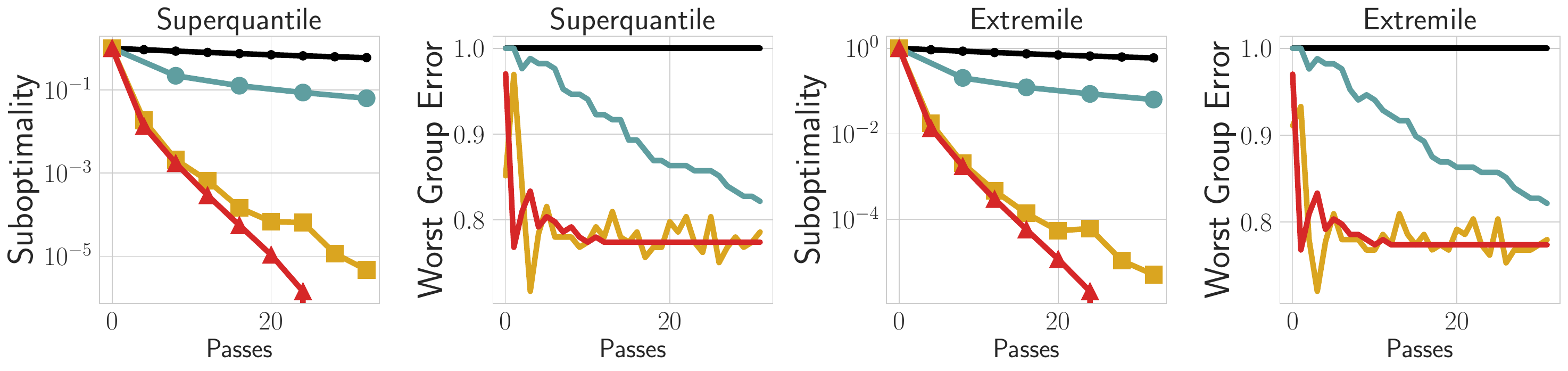}
\includegraphics[width=\linewidth]{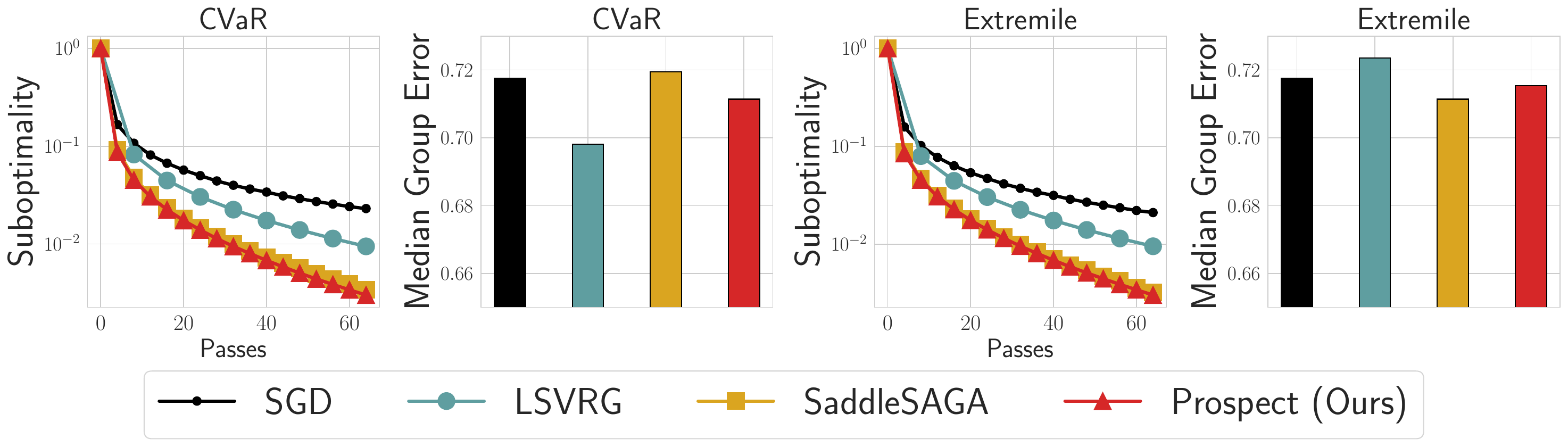}
    \caption{{\bf Distribution shift results}. {\bf Top row:} Training curves and worst group misclassification error on {\tt amazon} test. {\bf Bottom row:} Training curves and median group misclassification error on the {\tt iwildcam} test set.
    Smaller values indicate better performance for all metrics.}
    \label{fig:wilds}
\end{figure*}

\section{Discussion}\label{sec:discussion}
In this paper, we introduced \lsaga, a distributionally robust optimization algorithm for minimizing spectral risk measures with a linear convergence guarantee. Prospect demonstrates rapid linear convergence on benchmark examples 
and has the practical benefits of converging for any shift cost while only
having a single hyperparameter.
Promising avenues for future work include extensions to the non-convex setting
by considering the regular subdifferential, 
variations using other uncertainty sets, and further exploring connections to algorithmic fairness.

\subsubsection*{Acknowledgements}
This work was supported by NSF DMS-2023166, CCF-2019844, DMS-2052239, DMS-2134012, 
DMS-2133244, NIH, CIFAR-LMB, and faculty research awards. Part of this work was done while Zaid Harchaoui was visiting the Simons Institute for the Theory of Computing.

\clearpage
\bibliographystyle{abbrvnat}
\bibliography{main_arxiv.bib}

\clearpage
\appendix
\begingroup
\let\clearpage\relax 
\onecolumn 
\endgroup

\clearpage
\appendix
\addcontentsline{toc}{section}{Appendix} %
\part{Appendix} %
In the Appendix sections, we give summarize notation in \Cref{sec:a:notation} and provide intuition and results regarding the primal/dual objective function in \Cref{sec:a:objective}. We describe in detail efficient implementations of the proposed algorithm in \Cref{sec:a:efficient}. In \Cref{sec:a:any_shift}, we describe the convergence analyses of the main algorithm. In \Cref{sec:a:prox_saga} and \Cref{sec:a:saddle_saga}, we describe an Moreau envelope-based variant of our method and an improved version of an existing saddle point method, respectively. \Cref{sec:a:convex} contains technical results shared to multiple proofs. We then describe the experimental setup in detail in \Cref{sec:a:experiments} and give additional results in \Cref{sec:a:additional}. 

\parttoc %
\clearpage

\section{Summary of Notation}\label{sec:a:notation}

We summarize the notation used throughout in \Cref{tab:notation}.

\begin{table}[h!]
    \centering

    \begin{adjustbox}{max width=\linewidth}
    \renewcommand{\arraystretch}{1.4}
    \begin{tabular}{cc}
    \toprule
        {\bf Symbol} & {\bf Description}\\
        \midrule
        $\mu \geq 0$ & Standard regularization constant.\\

        $\nu \geq 0$ & Shift cost.\\

        $\alpha_n$ & Strong convexity constant for any $f$ generating an $f$-divergence.\\

        $\bar \nu$ & Shorthand $\bar \nu  = n\alpha_n  \nu$ (used in the convergence proofs). \\
        
        \midrule
        
        $\f_1(w), \ldots, \f_n(w)$ & Loss functions $\f_i: \R^d \rightarrow \R$.\\
        $\f(w)$ & Vector of losses $\f(w) = (\f_1(w), \ldots, \f_n(w))$ for $w \in \R^d$.\\
        $r_i(w)$ & Regularized loss $r_i(w) = \f_i(w) + \tfrac{\mu}{2}\|w\|_2^2$. \\
        $r(w)$ & Vector of regularized losses $ r(w) = (r_1(w), \ldots, r_n(w))$.\\

        $\grad \f(w)$ & Jacobian matrix of $\f: \R^d \rightarrow \R^n$ at $w$ (shape = $n \times d$).\\

        \midrule

        $\sigma$ & The vector $\sigma = (\sigma_1, \ldots, \sigma_n) \in [0, 1]^n$ where each $\sigma_1 \leq \ldots \leq \sigma_n$ and they sum to 1.\\
        $\mc{P}(\sigma)$ & The set $\{\Pi \sigma \, :\, \Pi\in [0,1]^{n\times n}, \Pi\ones_n=\ones_n, \Pi^\top \ones_n = \ones_n \}$, known as the permutahedron.\\

        \midrule

        $f$ & Convex function $f: [0, \infty) \to \reals \cup \{+\infty\}$ generating an $f$-divergence. \\
        $f^*$ & Convex conjugate $f^*(y) := \sup_{x \in \R} \br{xy - f(x)}$. \\
        $\Omega_f$ or $\Omega$ & 
        \begin{tabular}{c} 
        Shift penalty function $\Omega_f: \mc{P}(\sigma) \mapsto [0, \infty)$. \\ We consider $f$-divergence penalties $\Omega_f(q) = D_f(q \Vert \ones_n/n)$. 
        \end{tabular}\\
        
        \midrule

        $\primobj$ &  Main objective
        $\primobj(w) = \max_{q \in \Pcal(\sigma)} \br{q^\top \ell(w) - \nu D_f(q \Vert\ones_n/n)} + \frac{\mu}{2}\norm{w}_2^2$.
        \\
        \midrule
        $\q(l)$ or $q^{l}$ & 
        \begin{tabular}{c}
        Most unfavorable distribution for a given vector $l$ of losses, i.e., \\ $\q(l) = \argmax_{q \in \Pcal(\sigma)} q^\top l - \nu D(q\Vert \ones_n / n)$. \\
        $q^l$ used only in main text for readability.
        \end{tabular}
        \\
        $w^\star$ & Optimal weights $\argmin_{w \in \R^d} \max_{q \in \Pcal(\sigma)} q^\top l - \nu D(q\Vert \ones_n / n) + (\mu/2)\norm{w}_2^2$. \\
        $q^\star$ & 
        Most unfavorable distribution of $\ell(w^\star)$, i.e., 
        $q^\star = \q(\ell(w^\star))$ \\
        \midrule
        
        $G$ & Lipschitz constant of each $\f_i$ w.r.t. $\norm{\cdot}_2$.\\
        $L$ & Lipschitz constant of each $\grad \f_i$ w.r.t. $\norm{\cdot}_2$.\\
        $M$ & $M = L + \mu$, the Lipschitz constant of each $\grad r_i$ w.r.t. $\norm{\cdot}_2$.\\
        \midrule
        $\E{t}{\cdot}$ & \begin{tabular}{c} Shorthand for $\E{}{\, \cdot \, | \, w\pow{t}}$, i.e., expectation conditioned on $w\pow{t}$. \end{tabular}\\
        \bottomrule
    \end{tabular}
    \end{adjustbox}
    \vspace{6pt}
    \caption{Notation used throughout the paper.}
    \label{tab:notation}
\end{table}

\clearpage

\section{Properties of the Primal and Dual Objectives}\label{sec:a:objective}

\begin{figure}[t]
    \centering
    \includegraphics[width=0.3\linewidth]{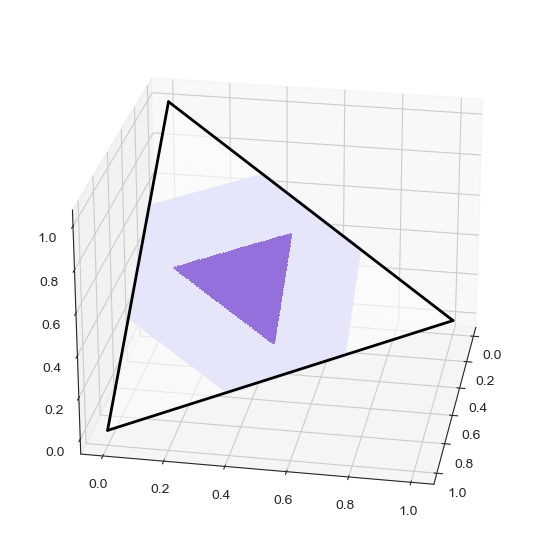}
    \includegraphics[width=0.3\linewidth]{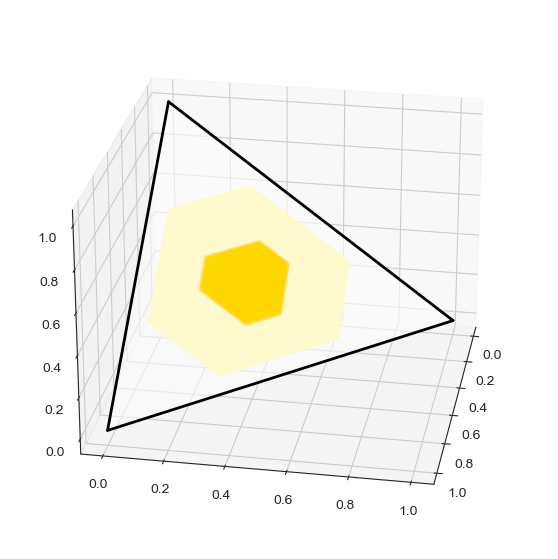}
    \includegraphics[width=0.3\linewidth]{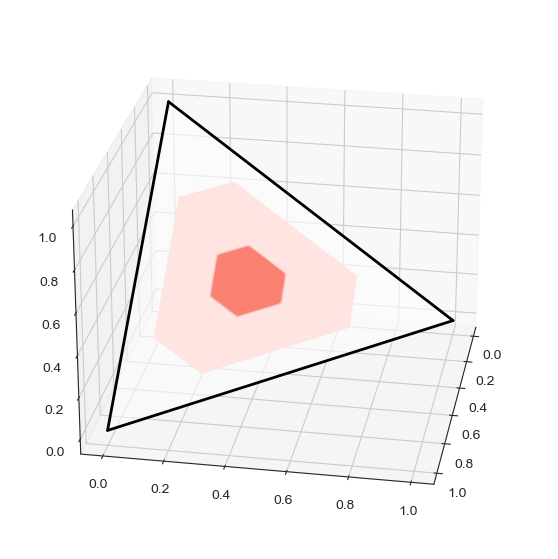}
    \\\vspace{-9pt}
    \includegraphics[width=0.925\linewidth]{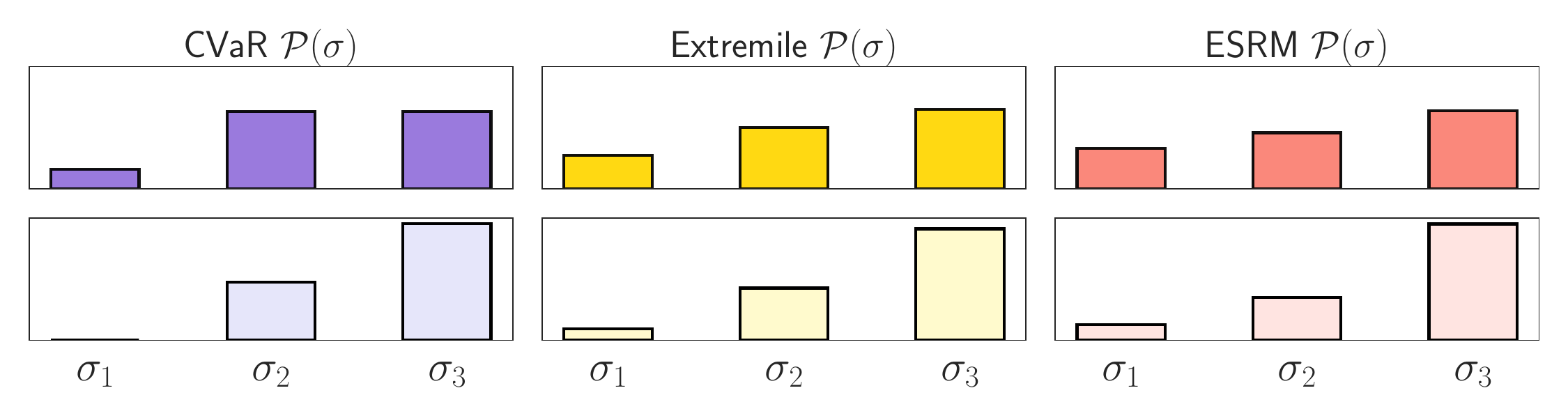}
    \caption{\textbf{Geometry of ambiguity sets}. Illustration of the permutahedra $\mc{P}(\sigma)$ within the three-dimensional probability simplex for the $0.25$ and $0.5$-CVaR ({\bf left}), $1.5$ and $2.5$-extremile ({\bf center}), and $1$ and $3$-ESRM ({\bf right}). The size of $\mc{P}(\sigma)$ increases for more non-uniform spectra $\sigma$.} 
    \label{fig:permutahedron}
\end{figure}

In this section, we state and prove the properties of the objectives we consider. 
Recall that we are interested in the optimization problem
\begin{align}
    \min_{w \in \R^d}\sbr{\primobj(w) := \max_{q \in \mc{P}(\sigma)} q^\top \ell(w) - \nu D_f(q \Vert \ones_n/n) + \frac{\mu}{2} \norm{w}_2^2},
    \label{eqn:primobj2}
\end{align}
where $D_f(q \Vert \ones_n/n)$ denotes an $f$-divergence between the distribution given by $q$ and the discrete uniform distribution $\ones_n/n = (1/n, \ldots, 1/n)$. 

Our goal for this section will be to derive properties of the function $\primobj(w)$, or the \emph{primal objective}, as well as the inner maximization problem, which we refer to as the \emph{dual objective}. Both will be useful in motivating and analyzing \lsaga (used for the primal minimization) and various subroutines used to compute the maximally unfavorable distribution (i.e., the maximizer over $q$ in the inner maximization).

\myparagraph{Uncertainty Set}
Recall that for a spectral risk measure with spectrum $\sigma = (\sigma_1, \ldots, \sigma_n)$, the uncertainty set $\Pcal(\sigma)$ is given by:
\begin{align}
        \Pcal(\sigma) = \operatorname{ConvexHull}\br{\p{\sigma_{\pi(1)}, \ldots, \sigma_{\pi(n)}}\, :\, 
    \pi \text{ is a permutation on } [n]
    }.
    \label{eq:permutahedron}
\end{align}
As for particular examples: for $p \in [0, 1]$, the $p$-\emph{CVaR} (a.k.a.~superquantile) \citep{Rockafellar2013Superquantiles,kawaguchi2020ordered, Laguel2022Superquantiles} requires that $k=np$ elements of $\sigma$ be non-zero with equal probability and that the remaining $n-k$ are zero. The $b$-\emph{extremile}~\citep{daouia2019extremiles} and $\gamma$-\emph{exponential spectral risk measure}~\citep{cotter2006extreme} define their spectra by $\sigma_i = (i / n)^b - ((i-1)/n)^b$ for $b \geq 1$ and 
$\sigma_i = e^{\gamma}/(1-e^{-\gamma})[e^{\gamma i/n} - e^{\gamma (i-1)/n}]$
for $\gamma > 0$, respectively. notice that when $\nu = 0$, we have that for any $l \in \R^n$,
\begin{align*}
    \risk_{\sigma}(l) = \max_{q \in \Pcal(\sigma)} q^\top l = \sum_{i=1}^n \sigma_i l_{(i)},
\end{align*}
where $l_{(1)} \leq \ldots \leq l_{(n)}$ are the order elements of $l$. For this reason, SRMs may also be called $L$-risks \citep{Maurer2021Robust}, based on classical $L$-estimators (linear combinations of order statistics) from the statistics literature \citep{Shorack2017Probability}. For a visualization of the feasible set $\Pcal(\sigma)$ for the CVaR, extremile, and ESRM, see \Cref{fig:permutahedron}.

\myparagraph{Review of $f$-Divergences}
Let $q$ and $p$ be any two probability mass functions defined on atoms $\{1, \ldots, n\}$.
Consider a convex function $f: [0, \infty) \mapsto \R \cup \{+\infty\}$ such that $f(1) = 0$, $f(x)$ is finite for $x > 0$, and $\lim_{x \rightarrow 0^+} f(x) = 0$. 
 The \emph{$f$-divergence} from $q$ to $p$ generated by this function $f$ is given by
\begin{align*}
    D_f(q \Vert p) := \sum_{i=1}^n f\p{\frac{q_i}{p_i}} p_i,
\end{align*}
where we define $0 f\p{0 / 0} := 0$ in the formula above. If there is an $i$ such that $p_i = 0$ but $q_i > 0$, we say $D_f(q \Vert p) = \infty$.
The $\chi^2$-divergence is generated by $f_{\chi^2}(x) = x^2 - 1$ and the KL divergence is generated by $f_\text{KL}(x) = x \ln x + \iota_{+}(x)$ where $\iota_{+}$ denotes the indicator that is zero for $x \geq 0$ and $+\infty$ otherwise, and we define $x\ln x = 0$ for all $x < 0$.

\myparagraph{The Dual Problem}
We describe the inner maximization first, that is
\begin{align}
    \max_{q \in \mc{P}(\sigma)} \br{q^\top l - \nu D_f(q \Vert \ones_n)} \,.
    \label{eqn:dualprobf}
\end{align}
Its properties will inform the algorithmic implementation for the minimization over $w$ in \eqref{eqn:primobj2}.
In our specific case, since we care about the $f$-divergence between $q$ and the uniform distribution $\ones_n/n$, we have
\begin{align}
\label{eq:fdiv-to-unif}
    D_f(q \Vert \ones_n/n) := \frac{1}{n}\sum_{i=1}^n f\p{n q_i}. 
\end{align}

We now derive the dual problem to \Cref{eqn:dualprobf}. This will lead to an algorithm to solve the optimization problem efficiently.
Throughout, we denote $f^*(y) := \sup_{x \in \R} \br{xy - f(x)}$ as the convex conjugate of $f$.

\begin{prop}
    \label{prop:isotonic}
    Let $l \in \R^n$ be a vector and $\pi$ be a permutation that sorts its entries in non-decreasing order, i.e., $l_{\pi(1)} \leq \ldots \leq l_{\pi(n)}$. Assume that $f^*$ is defined over $\R$ and $\abs{f^*(y)} < \infty$ for all $y \in \R$.
    Then, the maximization over the permutahedron subject to the shift penalty can be expressed as
    \begin{align}
        \max_{q \in \mc{P}(\sigma)} \br{q^\top l - \nu D_f(q \Vert \ones_n/n)} = \min_{\substack{\c \in \R^n \\ \c_1 \leq \ldots \leq \c_n}} \sum_{i=1}^n g_i(\c_i\,; l),
        \label{eqn:isotonic}
    \end{align}
    where we define
    \begin{align*}
        g_i(\c_i \,; \, l) := \sigma_i \c_i + \frac{\nu}{n}\,\, f^*\p{\frac{l_{\pi(i)} - \c_i}{\nu}}.
    \end{align*}
\end{prop}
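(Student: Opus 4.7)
The plan is to establish the equality as a strong-duality relation, proving ``$\le$'' via the Fenchel--Young inequality together with the permutahedron support function, and the reverse direction via an explicit Lagrangian computation. First I would reduce to the case $l_1 \le \cdots \le l_n$: since $\mc{P}(\sigma)$ is invariant under coordinate permutations (being the convex hull of permutations of $\sigma$) and $D_f(q \Vert \ones_n/n) = \tfrac{1}{n}\sum_i f(n q_i)$ is a symmetric function of $q$, permuting both $l$ and $q$ by $\pi^{-1}$ leaves the primal value and feasible set unchanged, so WLOG $\pi$ is the identity and $l_{\pi(i)} = l_i$.

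For the ``$\le$'' direction, fix any $c_1 \le \cdots \le c_n$ and apply the Fenchel--Young inequality coordinate-wise to the pair $(n q_i, (l_i - c_i)/\nu)$: this gives $q_i(l_i - c_i) - \tfrac{\nu}{n} f(n q_i) \le \tfrac{\nu}{n} f^*((l_i - c_i)/\nu)$. Summing over $i$ and adding $q^\top c$ to both sides bounds $q^\top l - \nu D_f(q \Vert \ones_n/n)$ above by $q^\top c + \sum_i \tfrac{\nu}{n} f^*((l_i - c_i)/\nu)$. Because $c$ and $\sigma$ are both non-decreasing, the rearrangement inequality (equivalently, the support-function identity $\max_{q \in \mc{P}(\sigma)} q^\top c = \sigma^\top c$) yields $q^\top c \le \sigma^\top c$, so the right-hand side is at most $\sum_i g_i(c_i; l)$. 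Taking sup on the left and inf on the right gives the weak direction.

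For the reverse inequality, I would construct a Lagrangian dual. By rearrangement and permutation-invariance of $\mc{P}(\sigma)$, an optimal $q^\star$ may be taken sorted the same way as $l$; under this sorting, Hardy--Littlewood--Polya characterizes $q \in \mc{P}(\sigma)$ by the majorization inequalities $\sum_{i=1}^k q_i \ge \sum_{i=1}^k \sigma_i$ for $k = 1, \ldots, n-1$ together with $\sum_i q_i = 1$. Introducing multipliers $c'_k \ge 0$ and $\lambda \in \R$, an Abel summation-by-parts groups the $c'_k$ into cumulative sums $C_i := \sum_{k=i}^{n-1} c'_k$, so the inner maximization over $q$ separates coordinate-wise and evaluates to $\tfrac{\nu}{n} f^*((l_i + C_i - \lambda)/\nu)$ by definition of the conjugate. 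Reparameterizing $c_i := \lambda - C_i$ turns $c'_k = c_{k+1} - c_k \ge 0$ into the monotonicity $c_1 \le \cdots \le c_n$, and the remaining constant $-\sigma^\top C + \lambda$ collapses to $\sigma^\top c$ using $\sum_i \sigma_i = 1$ (with $c_n = \lambda$ absorbing the equality multiplier). Strong duality holds because the primal is a concave maximization over a polytope with only affine constraints.

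The main technical obstacle is the bookkeeping around the substitution from cumulative multipliers $C_i$ to pointwise dual variables $c_i$: this change of variables is precisely what converts the non-negativity $c'_k \ge 0$ of the majorization multipliers into the ordering constraint $c_1 \le \cdots \le c_n$ in the statement, and identifies the scalar $\lambda$ with the free endpoint $c_n$ via $\sum_i \sigma_i = 1$. The assumption that $f^*$ is finite everywhere ensures the separable maximization over $q$ is attained and the Fenchel--Young bound is tight along a primal-dual optimum, closing the duality gap.
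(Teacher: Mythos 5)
Your ``$\le$'' direction is correct: it is weak duality done by hand (coordinate-wise Fenchel--Young plus the support-function bound $q^\top c \le \sigma^\top c$ for non-decreasing $c$), and the WLOG reduction to sorted $l$ is fine. Note that the paper takes a different route for the whole identity: it writes the left side as $(\iota_{\mc{P}(\sigma)} + \Omega_f)^*(l)$, uses that the conjugate of a sum is the infimal convolution of the conjugates to get $\inf_y \{\iota_{\mc{P}(\sigma)}^*(y) + \Omega_f^*(l-y)\}$ exactly, and then only needs an exchange argument (\Cref{lem:cvx_ordering}) to show the unconstrained minimizer $y$ may be taken ordered like $l$. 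That identity is two-sided from the start, so no separate ``$\ge$'' step is needed.

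The gap is in your ``$\ge$'' direction. The Lagrangian you form dualizes only the prefix constraints $\sum_{i\le k} q_i \ge \sum_{i\le k}\sigma_i$ for $k<n$ together with $\sum_i q_i = 1$, and then maximizes over all $q$. For \emph{unsorted} $q$ these prefix constraints cut out a polytope $K$ strictly containing $\mc{P}(\sigma)$: membership in the permutahedron requires $\sum_{i\in S} q_i \ge \sum_{i\le |S|}\sigma_i$ for \emph{every} subset $S$, not only prefixes. Strong duality for the problem over $K$ therefore yields $\min_{c_1\le\cdots\le c_n}\sum_i g_i(c_i;l) = \max_{q\in K}\, (q^\top l - \nu D_f(q\Vert\ones_n/n)) \ge \max_{q\in\mc{P}(\sigma)}\, (q^\top l - \nu D_f(q\Vert\ones_n/n))$, which is the same direction as the inequality you already proved, not its reverse. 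Restricting the primal to sorted $q$ (where the prefix constraints do characterize $\mc{P}(\sigma)$) does not rescue the computation, because your inner maximization over $q$ drops the sortedness constraint. To close the argument you must prove the relaxation is tight, i.e., that some maximizer over $K$ is non-decreasing and hence lies in $\mc{P}(\sigma)$; this is true but is a genuine extra step (a naive swap of out-of-order coordinates can violate the prefix constraints, so one has to argue via complementary slackness and the pool structure of the isotonic solution). The appeal to ``strong duality over a polytope with affine constraints'' does not supply this, because the polytope being dualized is $K$, not $\mc{P}(\sigma)$.
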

\begin{proof}

Let $\iota_{\mathcal{P}(\sigma)}$ denote the indicator function of the permutahedron $\mathcal{P}(\sigma)$, which is $0$ inside $\mathcal{P}(\sigma)$ and $+\infty$ outside of $\mathcal{P}(\sigma)$. Its convex conjugate is the support function of the permutahedron, i.e., 
\[
\iota_{\mathcal{P}(\sigma)}^*(l) = \max_{q \in \mathcal{P}(\sigma)} q^\top l.
\]
For two closed convex functions $h_1$ and $h_2$ that are bounded from below, the convex conjugate of their sum is the infimal convolution of their conjugate~\citep[Proposition 6.3.1]{hiriart2004fundamentals}: 
\begin{align*}
    (h_1+h_2)^*(x) = \inf_y \left\{ h_1^*(y) + h_2^*(x-y)  \right\} \,.
\end{align*}
In our context, taking $h_1(q) = \iota_{\mathcal{P}(\sigma)}(q)$ and $h_2(q) = \Omega_f(q) := \nu D_f(q \Vert \ones_n/n)$, we have
\begin{align*}
    \sup_{q\in \mathcal{P}(\sigma)} \br{q^\top l  - \Omega_f(q)}
    & = \sup_{q\in \R^n} \br{ q^\top l - (\iota_{\mathcal{P}(\sigma)}(q) + \Omega_f(q)) } \\
    & = (\iota_{\mathcal{P}(\sigma)} + \Omega_f)^*(l) \\
    & = \inf_{y \in \R^n} \br{\iota_{\mathcal{P}(\sigma)}^*(y) + \Omega_f^*(l-y)} \\
    & = \inf_{y \in \R^n} \br{\max_{q\in \mathcal{P}(\sigma)} q^\top y + \Omega_f^*(l-y) }\\
    & = \inf_{y \in \R^n} \br{ \sum_{i=1}^n \sigma_i y_{(i)} + \Omega_f^*(l-y) },
    \numberthis 
    \label{eq:dual-of-dual}
\end{align*}
where $y_{(1)} \leq \ldots \leq y_{(n)}$ are the ordered values of $y \in \R^n$. 

Since for any $x \in \R^n$, $\Omega_f$ is decomposable into a sum of identical functions evaluated at the coordinates $(x_1, \ldots, x_n)$, that is, $\Omega_f(x) = \sum_{i=1}^n \omega(x_i)$, its convex conjugate is $\Omega_f^*(y) = \sum_{i=1}^n \omega^*(y_i)$. In our case, $\omega(x_i) = \frac{\nu}{n} f(n x_i)$ from \Cref{eq:fdiv-to-unif}, so $\omega^*(y_i) = (\nu/n)f^*(y_i/\nu)$.

Next, by convexity of $\omega^*$, and that $\omega^*$ is never $\pm\infty$ and defined over $\R$ by assumption, we have that if for scalars $l_i, l_j, y_i, y_j$ such that $l_i \leq l_j$ and $y_i \geq y_j$, then using \cref{lem:cvx_ordering}, we have that
\begin{align*}
    \omega^*(l_i-y_i) + \omega^*(l_j-y_j) \geq \omega^*(l_i-y_j) + \omega^*(l_j-y_i).
\end{align*}
Hence for $y$ to minimize  $\Omega_f^*(l-y) = \sum_{i=1}^n \omega^*(l_i -y_i)$, the coordinates of $y$ must be ordered as $l$. That is, if $\pi$ is an argsort for $l$, s.t. $l_{\pi(1)} \leq \ldots \leq l_{\pi(n)}$, then $y_{\pi(1)} \leq \ldots \leq y_{\pi(n)}$. Since  $\iota_{\mathcal{P}(\sigma)}^*(y) =  \sum_{i=1}^n \sigma_i y_{(i)}$ does not depend on the ordering of $y$, the solution of \eqref{eq:dual-of-dual} must also be ordered as $l$ such that  the dual problem \eqref{eq:dual-of-dual} can be written as 
\begin{align*}
    \inf_{\substack{y \in \R^n \\ y_{\pi(1)} \leq \ldots \leq y_{\pi(n)}}}
    \sum_{i=1}^n \sigma_i y_{\pi(i)} + \frac{\nu}{n} \,\, f^*\p{\frac{l_{\pi(i)} - y_{\pi(i)}}{\nu}} 
    &= \inf_{\substack{\c \in \R^n \\ \c_1 \leq \ldots \leq \c_n}} \sum_{i=1}^n \sigma_i \c_i + \frac{\nu}{n} \,\, f^*\p{\frac{l_{\pi(i)} - \c_i}{\nu}}\\
    &= \min_{\substack{\c \in \R^n \\ \c_1 \leq \ldots \leq \c_n}} \sum_{i=1}^n g_i(\c_i ; l).
\end{align*}
\end{proof}
The $f$-divergences we consider as running examples are:
\begin{align*}
    \fchi(x) &= x^2 - 1 \text{ and } \fchi^*(y) = y^2/4 +1 &\text{($\chi^2$-divergence)}\\
    \fkl(x) &= x\ln x + \iota_+(x) \text{ and } \fkl^*(y) = \exp\p{y-1}. &\text{(KL-divergence)}
\end{align*}
In both cases, the conjugates $f^*$ are well-behaved with regards to the conditions of \Cref{prop:isotonic}.

Because we are interested in computing the maximizer of \eqref{eqn:dualprobf}, we denote it as
\begin{align*}
    \q(l) &= \argmax_{q \in \mc{P}(\sigma)} \br{q^\top l - \nu D_f(q \Vert \ones_n/n)}.
\end{align*}
We establish conditions on $f$ under which the above is well-defined.
\begin{prop}
    Assume that $f: \R \rightarrow \R$ is $\alpha_n$-strongly convex on $[0, n]$. Then, $q \mapsto \nu D_f(q \Vert \ones_n/n)$ is $(\nu n\alpha_n)$-strongly convex with respect to $\norm{\cdot}_2$, and $\q(l)$ is well-defined.
    \label{prop:strong_cvx_fdiv}
\end{prop}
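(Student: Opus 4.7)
The plan is to exploit the separable, coordinatewise structure of $D_f(q \Vert \ones_n/n)$ and transfer strong convexity of $f$ on $[0,n]$ to strong convexity of the whole divergence on $\mc{P}(\sigma)$, then invoke standard existence/uniqueness facts for strongly concave maximization over a compact convex set.

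First, I would rewrite the penalty via \Cref{eq:fdiv-to-unif} as
\[
\nu D_f(q \Vert \ones_n/n) = \frac{\nu}{n}\sum_{i=1}^n f(nq_i),
\]
and observe that since $\mc{P}(\sigma) \subseteq \Delta^n$, every coordinate of any $q \in \mc{P}(\sigma)$ lies in $[0,1]$, so $n q_i \in [0,n]$, placing us exactly in the region where $f$ is assumed $\alpha_n$-strongly convex. Next, I would apply the standard affine change-of-variables rule for strong convexity: if $f$ is $\alpha_n$-strongly convex on $[0,n]$, then $x \mapsto f(nx)$ is $n^2 \alpha_n$-strongly convex on $[0,1]$, so each summand $q_i \mapsto (\nu/n)f(nq_i)$ is $\nu n \alpha_n$-strongly convex in the scalar variable $q_i$. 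For twice-differentiable $f$ this is the elementary Hessian computation; in general it follows from the quadratic lower-bound characterization of strong convexity.

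The second step is to aggregate coordinatewise strong convexity into a joint strong-convexity estimate with respect to $\|\cdot\|_2$. Because the penalty is a separable sum across coordinates, for any $q, q' \in \mc{P}(\sigma)$ and $\lambda \in [0,1]$ summing the coordinatewise strong-convexity inequalities and telescoping yields
\[
\nu D_f(\lambda q + (1-\lambda)q' \Vert \ones_n/n) \leq \lambda \nu D_f(q \Vert \ones_n/n) + (1-\lambda)\nu D_f(q' \Vert \ones_n/n) - \tfrac{\nu n \alpha_n}{2}\lambda(1-\lambda)\sum_{i=1}^n (q_i - q_i')^2,
\]
which is precisely $(\nu n \alpha_n)$-strong convexity with respect to $\|\cdot\|_2$ since $\sum_i (q_i-q_i')^2 = \|q-q'\|_2^2$. (Equivalently, the Hessian, where it exists, is diagonal with entries $\geq \nu n \alpha_n$.)

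Finally, for well-definedness of $\q(l)$, the objective $q \mapsto q^\top l - \nu D_f(q \Vert \ones_n/n)$ is $(\nu n \alpha_n)$-strongly concave by the previous step, while $\mc{P}(\sigma)$ is compact (as the convex hull of finitely many points) and nonempty. A strongly concave upper semicontinuous function attains its supremum uniquely on a nonempty compact convex set, so the argmax exists and is unique. The only subtlety to be careful about is that the strong convexity of $f$ is hypothesized only on $[0,n]$, not on all of $\R$; this is not an obstacle because $\mc{P}(\sigma)$ confines $nq_i$ to $[0,n]$, so the chain-rule step and the aggregation step are valid on $\mc{P}(\sigma)$, which is all that is needed for both conclusions.
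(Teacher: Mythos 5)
Your proposal is correct and follows essentially the same route as the paper: both exploit the separable form $\nu D_f(q\Vert\ones_n/n)=\tfrac{\nu}{n}\sum_i f(nq_i)$, transfer the $\alpha_n$-strong convexity of $f$ on $[0,n]$ coordinatewise (picking up the factor $n^2/n = n$), and conclude existence and uniqueness of the maximizer from strong concavity over the closed convex set $\mc{P}(\sigma)$. No gaps.
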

\begin{proof}
    Due to the $\alpha_n$-strong convexity of $f$, for any $q, \rho \in [0, 1]^n$ and any $\theta \in (0, 1)$ and any $i \in [n]$,
    \begin{align*}
        f\p{\theta nq_i + (1-\theta)n\rho_i} \leq \theta f(nq_i) + (1-\theta)f(n\rho_i) - \frac{\alpha_n}{2} \theta(1-\theta)(nq_i - n\rho_i)^2.
    \end{align*}
    We average this inequality over $i$, yielding
    \begin{align*}
        \frac{1}{n} \sum_{i=1}^n f\p{n(\theta q_i + (1-\theta)\rho_i)} \leq \theta \frac{1}{n} \sum_{i=1}^n  f(nq_i) + (1-\theta)\frac{1}{n} \sum_{i=1}^n f(n\rho_i) - \frac{\alpha_n}{2} \theta(1-\theta)\|nq_i - n\rho_i\|^2.
    \end{align*}
    Defining $\Omega_f(q) := D_f(q\Vert \ones_n/n)$, the statement above can be succinctly written as 
    \begin{align*}
        \Omega_f(\theta q + (1-\theta)\rho) \leq \theta \Omega_f(q) + (1-\theta)\Omega_f(\rho) -\frac{\alpha_n n}{2} \theta(1-\theta)\|q_i - \rho_i\|^2 \,.
    \end{align*}
    Therefore, $\Omega_f$ is $(\alpha_n n)$-strongly convex with respect to $\norm{\cdot}_2$ on $[0, 1]^n$, so $q \mapsto \nu D_f(q \Vert \ones_n/n)$ is $(\nu n \alpha_n)$-strongly convex. Because $\Pcal(\sigma)$ is closed and convex, and the maximization is of a strongly concave function, there is a unique maximizer.
\end{proof}
The next result allows use to use a minimizer of \eqref{eqn:isotonic} to compute the maximizer of \eqref{eqn:dualprobf}.
\begin{prop}
    In the setting of \Cref{prop:isotonic}, if 
    \begin{align*}
        \copt(l) \in \argmin_{\substack{\c \in \R^n \\ \c_1 \leq \ldots \leq \c_n}} \sum_{i=1}^n g_i(\c_i; l),
    \end{align*}
    then
    \begin{align}
        \q_i(l) = \frac{1}{n} [f^*]'\p{\tfrac{1}{\nu}(l_i - \copt_{\pi^{-1}(i)}(l))}.
        \label{eqn:mintomax}
    \end{align}
\end{prop}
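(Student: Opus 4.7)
The plan is to extract the formula from the Fenchel-Young equality associated with the infimal convolution duality already derived in the proof of \Cref{prop:isotonic}, and then undo the reindexing that was performed to arrive at \eqref{eqn:isotonic}. The single delicate point is justifying the Fenchel-Young inversion, but it follows cleanly from the compactness of $\Pcal(\sigma)$ and the smoothness of $\Omega_f^*$.

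First, I would compute the conjugate of $\Omega_f(q) := \nu D_f(q \Vert \ones_n/n) = \tfrac{\nu}{n}\sum_{i=1}^n f(n q_i)$. By separability in the coordinates of $q$ and the change of variables $u_i = n q_i$,
\[
\Omega_f^*(y) = \tfrac{\nu}{n}\sum_{i=1}^n f^*(y_i/\nu),
\qquad
[\nabla \Omega_f^*(y)]_i = \tfrac{1}{n}[f^*]'(y_i/\nu),
\]
where differentiability of $f^*$ is inherited from the strong convexity of $f$ (and is directly satisfied for the running examples $f_{\chi^2}$ and $f_{\mathrm{KL}}$).

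Next, I would invoke the infimal convolution identity from the proof of \Cref{prop:isotonic}, which rewrites the primal maximum as $\inf_{y \in \R^n}\{\iota_{\Pcal(\sigma)}^*(y) + \Omega_f^*(l-y)\}$. Since $\Pcal(\sigma)$ is a nonempty compact polytope (so $\iota_{\Pcal(\sigma)}^*$ is finite-valued and continuous) and $\Omega_f^*$ is finite and smooth on $\R^n$, strong duality and primal-dual attainment hold, and the Fenchel-Young optimality condition linking the primal maximizer $\q(l)$ and a dual minimizer $y^*$ reads
\[
\q(l) = \nabla \Omega_f^*(l - y^*),
\qquad\text{i.e.,}\qquad
\q_i(l) = \tfrac{1}{n}[f^*]'\!\left(\tfrac{l_i - y^*_i}{\nu}\right).
\]

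Finally, I would translate back through the reparametrization $y \mapsto \c$ used to derive \eqref{eqn:isotonic}. The argument in the proof of \Cref{prop:isotonic} forces $y^*$ to be ordered compatibly with $l$ under the sorting permutation $\pi$, and $\c$ was defined coordinatewise by $\c_i = y^*_{\pi(i)}$; equivalently, $y^*_i = \copt_{\pi^{-1}(i)}(l)$. Substituting this into the coordinatewise display above yields exactly \eqref{eqn:mintomax}, completing the derivation.
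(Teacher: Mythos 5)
Your proof is correct. Note that the paper itself states this proposition without proof, so there is no argument to compare against; your derivation is the natural completion of the machinery already set up in the proof of \Cref{prop:isotonic}. The three ingredients are all in order: the coordinatewise conjugate $\omega^*(y_i) = (\nu/n) f^*(y_i/\nu)$ matches what the paper computes, the Fenchel--Young splitting at an exact infimal convolution correctly yields $\q(l) \in \partial \Omega_f^*(l - y^*)$ (which is a singleton equal to $\nabla\Omega_f^*(l-y^*)$ because $\Omega_f$ is strongly convex by \Cref{prop:strong_cvx_fdiv}), and the reindexing $y^*_i = \copt_{\pi^{-1}(i)}(l)$ is exactly the substitution $\c_i = y_{\pi(i)}$ used to pass from \eqref{eq:dual-of-dual} to \eqref{eqn:isotonic}. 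Two small points worth making explicit if you write this up: (i) the splitting argument requires the infimum over $y$ to be attained, which you assert and which indeed follows from the coercivity of $y \mapsto \iota_{\Pcal(\sigma)}^*(y) + \Omega_f^*(l-y)$ (or from the exactness clause of the cited inf-convolution theorem), since both Fenchel--Young inequalities must then hold with equality; and (ii) differentiability of $f^*$ is not a consequence of strong convexity of $f$ merely on $[0,n]$, but it is implicitly presupposed by the statement itself (which writes $[f^*]'$) and holds for the two divergences the paper uses.
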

The Pool Adjacent Violators (PAV) algorithm is designed exactly for the minimization~\eqref{eqn:isotonic}. The algorithm is described for the the $\chi^2$-divergence with implementation steps in \Cref{sec:a:efficient}. Both the argsort $\pi$ and the inverse argort $\pi^{-1}$ are mappings from $[n] = \{1, \ldots, n\}$ onto itself, but the interpretation of these indices are different for the input and output spaces $[n]$. The argsort $\pi$ can be thought of as an \emph{index finder}, in the sense that for a vector $l \in \R^n$, because $l_{\pi(1)} \leq \ldots \leq l_{\pi(n)}$, $\pi(i)$ can be interpreted as the index of an element of $l$ which achieves the rank $i$ in the sorted vector. On the other hand, $\pi^{-1}(i)$ can be thought of as a \emph{rank finder}, in that $\pi^{-1}(i) = \rank(i)$ is the position that $l_i$ takes in the sorted form of $l$. To summarize
\begin{align*}
    \pi: \underbrace{[n]}_{\text{ranks of losses}} \rightarrow \underbrace{[n]}_{\text{indices of training examples}} \ \text{ while } \ \pi^{-1}: \underbrace{[n]}_{\text{indices of training examples}} \rightarrow \underbrace{[n]}_{\text{ranks of losses}}\\
\end{align*}
We may equivalently write \eqref{eqn:mintomax} as
\begin{align}
    \q_i(l) = \frac{1}{n} [f^*]'\p{\tfrac{1}{\nu}(l_i - \copt_{\rank(i)}(l))}.
\end{align}
Finally, as seen in \Cref{sec:a:efficient}, it will be helpful to compute $\q$ in sorted order.  Because the $f$-divergence is agnostic to the ordering of the $q$ vector (as it is being compared to the uniform distribution), $q$ can also be sorted by $\pi$. Thus, we may also write
\begin{align}
    \q_{(i)}(l) = \frac{1}{n} [f^*]'\p{\tfrac{1}{\nu}(l_{(i)} - \copt_{i}(l))}.
\end{align}

\myparagraph{The Primal Function}
When divergence generator $f$ is strongly convex and the loss function $\ell: \R^d \rightarrow \R^n$ is convex and differentiable, 
we have that \Cref{eqn:primobj2} is differentiable, as we show next.

\begin{lemma}\label{lem:danskin}
    When the map $q \mapsto \nu D(q\Vert \ones_n/n)$ is strongly convex over $\Pcal(\sigma)$, then $\risk_\sigma$ is continuously differentiable with gradient given by
    \begin{align*}
        \grad \risk_\sigma(l) = \argmax_{q \in \Pcal(\sigma)} \br{q^\top l - \nu D(q \Vert \ones_n/n)} \in \R^n.
    \end{align*}
\end{lemma}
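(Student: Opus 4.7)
The plan is to invoke Danskin's theorem (the envelope theorem for pointwise maxima), whose hypotheses are satisfied here because of compactness of $\Pcal(\sigma)$, strong concavity of the inner objective, and linearity of the dependence on $l$.

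First I would verify that the maximizer in the definition of $\risk_\sigma(l)$ is well-defined and unique. The set $\Pcal(\sigma)$ is the convex hull of the permutations of $\sigma$, hence a nonempty compact convex polytope in $\R^n$. For each fixed $l \in \R^n$, the map $q \mapsto q^\top l - \nu D(q \Vert \ones_n/n)$ is continuous, concave, and strongly concave on $\Pcal(\sigma)$ by the assumption on the divergence term (as formalized in \Cref{prop:strong_cvx_fdiv}). Consequently, there exists a unique maximizer $q^\star(l) \in \Pcal(\sigma)$, and $\risk_\sigma$ is well-defined and finite everywhere on $\R^n$.

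Next, I would establish differentiability with the claimed gradient. Since $\risk_\sigma$ is the pointwise maximum over $q \in \Pcal(\sigma)$ of the affine-in-$l$ functions $l \mapsto q^\top l - \nu D(q \Vert \ones_n/n)$, it is convex in $l$. For convex functions defined as such a maximum over a compact set, the subdifferential at any point $l$ equals the convex hull of $\{\nabla_l [q^\top l - \nu D(q \Vert \ones_n/n)] : q \in \arg\max\}$, i.e., the convex hull of the set of maximizers (since $\nabla_l q^\top l = q$). By uniqueness of $q^\star(l)$, this subdifferential reduces to the singleton $\{q^\star(l)\}$, so $\risk_\sigma$ is differentiable at $l$ with $\grad \risk_\sigma(l) = q^\star(l)$.

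Finally, I would establish continuity of $\grad \risk_\sigma$, which amounts to showing that $l \mapsto q^\star(l)$ is continuous. The cleanest route is a strong-concavity argument: for two points $l, l'$, writing down the first-order optimality conditions for $q^\star(l)$ and $q^\star(l')$ and exploiting the $(\nu n \alpha_n)$-strong concavity of the inner objective yields a Lipschitz bound of the form $\|q^\star(l) - q^\star(l')\|_2 \le (\nu n \alpha_n)^{-1}\|l - l'\|_2$, which implies Lipschitz continuity of $\grad \risk_\sigma$ and a fortiori continuous differentiability. (Alternatively, Berge's maximum theorem combined with uniqueness already delivers continuity.) The only mildly delicate step is the subdifferential identification in the second paragraph, since in principle one must verify the exchange of $\mathrm{conv}$ and the maximum set; this is standard for maxima over compact sets of affine functions and follows from the classical statement of Danskin's theorem, so it is not a real obstacle.
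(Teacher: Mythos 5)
Your proposal is correct and follows essentially the same route as the paper: the paper's proof likewise notes that $\Pcal(\sigma)$ is compact (as a closed subset of $\Delta^n$) and convex, that strong concavity gives a unique maximizer, and then invokes Danskin's theorem to conclude continuous differentiability with the stated gradient. You simply unpack the subdifferential identification and the continuity of $l \mapsto q^\star(l)$ that Danskin's theorem packages together, which the paper leaves to the citation.
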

\begin{proof}
    Because $\Pcal(\sigma)$ as defined in~\eqref{eq:permutahedron} is closed and convex, the strongly concave function $q \mapsto q^\top l - \nu D(q \Vert \ones_n/n)$ has a unique maximizer. Because $\Pcal(\sigma)$ is closed subset of the compact set $\Delta^n$, it is also compact. By Danskin's theorem~\citep[Proposition B.25]{bertsekas1997nonlinear}, we have that $\primobj$ is continuously differentiable with the given gradient formula. 
\end{proof}

\begin{lemma}\label{prop:primobjgradient}
    Let $\ell: \R^d \rightarrow \R^n$ be differentiable with Jacobian $w \mapsto \grad \ell(w) \in \R^{n \times d}$. Let each $\ell_i: \R^d \rightarrow \R$ be convex. Assume $\nu > 0$. Let $f$ be $\alpha_n$-strongly convex on the interval $[0, n]$. Then, the function $\primobj$ from \Cref{eqn:primobj2}
    is differentiable with its gradient equal to
    \begin{align*}
        \grad \primobj(w) = \p{\grad \ell(w)}^\top \q(\ell(w)) + \mu w.
    \end{align*}
    Furthermore $l \mapsto \q(l)$ is $(\alpha_n n \nu)^{-1}$-Lipschitz continuous w.r.t. $\norm{\cdot}_2$.
\end{lemma}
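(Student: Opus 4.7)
The statement has two parts: (i) a gradient formula for $\primobj$, and (ii) a Lipschitz estimate for $l \mapsto \q(l)$. Both are essentially corollaries of machinery already set up in the paper, so the plan is to invoke and combine existing results rather than start from scratch.

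For part (i), I would write $\primobj(w) = \risk_\sigma(\ell(w)) + \tfrac{\mu}{2}\|w\|_2^2$ and apply the chain rule. By \Cref{prop:strong_cvx_fdiv}, the assumption that $f$ is $\alpha_n$-strongly convex on $[0,n]$ together with $\nu > 0$ ensures that $q \mapsto \nu D_f(q \Vert \ones_n/n)$ is strongly convex, so \Cref{lem:danskin} applies and yields that $\risk_\sigma$ is continuously differentiable with $\grad \risk_\sigma(l) = \q(l)$. Composing with the differentiable map $\ell$ via the chain rule, and adding the gradient $\mu w$ of the regularizer, gives the claimed formula. There is no real obstacle here beyond checking that each hypothesis of \Cref{lem:danskin} is met.

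For part (ii), the key idea is that the maximizer of a strongly concave quadratic-like program depends Lipschitz-continuously on any linear perturbation of the objective. Concretely, fix $l_1, l_2 \in \R^n$ and let $q_k := \q(l_k) \in \Pcal(\sigma)$ for $k=1,2$. Writing $\Omega(q) := D_f(q \Vert \ones_n/n)$, the first-order optimality conditions for the constrained concave maximization give
\begin{align*}
    \langle l_1 - \nu \grad \Omega(q_1),\, q_2 - q_1 \rangle &\le 0, \\
    \langle l_2 - \nu \grad \Omega(q_2),\, q_1 - q_2 \rangle &\le 0.
\end{align*}
Adding these and rearranging yields
\begin{align*}
    \nu \langle \grad \Omega(q_2) - \grad \Omega(q_1),\, q_2 - q_1 \rangle \;\le\; \langle l_2 - l_1,\, q_2 - q_1 \rangle.
\end{align*}
By \Cref{prop:strong_cvx_fdiv}, $\nu \Omega$ is $(\nu n \alpha_n)$-strongly convex, so the left-hand side is at least $\nu n \alpha_n \|q_2 - q_1\|_2^2$; Cauchy--Schwarz on the right-hand side then gives $\nu n \alpha_n \|q_2 - q_1\|_2^2 \le \|l_2 - l_1\|_2 \|q_2 - q_1\|_2$, and dividing produces the stated constant $1/(\alpha_n n \nu)$.

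The only subtlety is justifying the differential inequality $\langle \grad \Omega(q_2) - \grad \Omega(q_1), q_2 - q_1 \rangle \ge n \alpha_n \|q_2 - q_1\|_2^2$ from strong convexity. If $\Omega$ fails to be differentiable at a boundary point (e.g.\ for the KL generator when some coordinate of $q$ is $0$), one should replace $\grad \Omega$ by a subgradient and use the subgradient inequality for strongly convex functions; this is the standard argument and the only step where I would be careful, but the resulting Lipschitz bound is unchanged. Everything else is routine manipulation.
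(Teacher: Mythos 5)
Your proposal is correct. Part (i) is essentially identical to the paper's proof: both combine \Cref{prop:strong_cvx_fdiv}, \Cref{lem:danskin} (Danskin's theorem), and the chain rule to obtain $\grad \primobj(w) = \grad\ell(w)^\top \q(\ell(w)) + \mu w$. For part (ii) you diverge slightly in presentation: the paper simply cites \citet[Theorem 1]{nesterov2005smooth}, which states that the argmax of a strongly concave perturbed linear objective (equivalently, the gradient of the smoothed support function) is Lipschitz with constant equal to the inverse of the strong convexity modulus. You instead give the standard self-contained derivation of that fact — summing the two variational inequalities at $q_1 = \q(l_1)$ and $q_2 = \q(l_2)$, invoking strong monotonicity of $\grad\Omega$ with modulus $n\alpha_n$, and finishing with Cauchy--Schwarz — which lands on exactly the same constant $(\nu n \alpha_n)^{-1}$. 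This is the proof hiding inside the paper's citation, so the two routes are mathematically the same; yours has the advantage of being verifiable without consulting the reference, and your remark about replacing $\grad\Omega$ by a subgradient at boundary points (e.g.\ the KL generator at $q_i = 0$) is a legitimate and correctly handled technicality that the cited theorem also implicitly absorbs.
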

\begin{proof}
    First, by \Cref{prop:strong_cvx_fdiv}, we have that $q \mapsto \nu D_f(q \Vert \ones_n)$ is $(\alpha_n n)$-strongly convex with respect to $\norm{\cdot}_2$ on $[0, 1]^n$. 
    Next, due to the convexity of each $\ell_i$ and the non-negativity of any $q \in \mc{P}(\sigma)$, we have that
    \begin{align*}
        w \mapsto  \max_{q \in \Pcal(\sigma)} 
        \br{q^\top \ell(w) - \nu \Omega_f(q) }
    \end{align*}
    is convex, as is its pointwise maximum (over $q$) of a family of convex functions $q\T \ell(w)$. 
    We have by \Cref{lem:danskin} that 
    $\primobj$ is continuously differentiable with
    \[ 
        \grad \primobj(w) = \grad \ell(w)^\top \q(\ell(w)) + \mu w \,.
    \]
    Moreover, by  \citet[Theorem 1]{nesterov2005smooth}, we have that $l \mapsto \q(l)$ is Lipschitz continuous with Lipschitz constant equal to the inverse of the strong convexity constant of $\nu\Omega_f$, which is $\nu \alpha_n n$.
\end{proof}

Returning to our canonical examples, we have that for the $\chi^2$, $\fchi(x) = x^2 - 1$ is $2$-strongly convex on $\R$ and that $\fkl(x) = x \ln x$ is $(1/n)$-strongly convex on $[0, n]$. Thus, the function $l \mapsto \q(l)$ will have Lipschitz constant $2n\nu$ and $\nu$, respectively.

\myparagraph{Smoothness Properties}
By applying \Cref{prop:primobjgradient} to Lipschitz continuous losses, we may achieve the following guarantee regarding the changes in $\q$ with respect to $w$.
\begin{lemma} \label{eq:q-smoothing}
    Let $f$ be $\alpha_n$-strongly convex on the interval $[0, n]$.
    For any $w_1, \ldots, w_n, w'_1, \ldots, w'_n \in \R^d$ construct $\bar{\ell}(w_1, \ldots, w_n) = \big(\ell_i(w_i)\big)_{i=1}^n \in \reals^n$, as well as $\bar{\ell}(w'_1, \ldots, w'_n)$ where each $\f_i$ is $G$-Lipschitz w.r.t. $\norm{\cdot}_2$. Then, we have
    \[
    \norm{\q(\bar{\ell}(w_1, \ldots, w_n)) - \q(\bar{\ell}(w'_1, \ldots, w'_n))}_2^2 = \frac{G^2}{n^2\alpha_n^2\nu^2} \sum_{i=1}^n \norm{w_i - w'_i}_2^2 \,.
    \]
\end{lemma}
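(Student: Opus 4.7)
\begin{proofsketch}
The plan is to chain two Lipschitz estimates: first for the map $l \mapsto \q(l)$, then for the componentwise loss map $(w_1, \ldots, w_n) \mapsto \bar\ell(w_1,\ldots,w_n)$. Note that although the statement is written with an equality, the intended inequality is $\leq$, and this is what the argument below establishes.

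First I would invoke \Cref{prop:primobjgradient}, which states that under the assumption that $f$ is $\alpha_n$-strongly convex on $[0,n]$, the map $l \mapsto \q(l)$ is Lipschitz continuous with respect to $\|\cdot\|_2$ with constant $(\alpha_n n \nu)^{-1}$. Applying this to $l = \bar\ell(w_1,\ldots,w_n)$ and $l' = \bar\ell(w_1',\ldots,w_n')$ immediately yields
\[
\|\q(\bar\ell(w_1,\ldots,w_n)) - \q(\bar\ell(w_1',\ldots,w_n'))\|_2^2 \;\leq\; \frac{1}{\alpha_n^2 n^2 \nu^2}\, \|\bar\ell(w_1,\ldots,w_n) - \bar\ell(w_1',\ldots,w_n')\|_2^2.
\]

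Next I would expand the right-hand side coordinatewise. By definition, $\bar\ell(w_1,\ldots,w_n)_i = \ell_i(w_i)$, so
\[
\|\bar\ell(w_1,\ldots,w_n) - \bar\ell(w_1',\ldots,w_n')\|_2^2 \;=\; \sum_{i=1}^n \bigl(\ell_i(w_i) - \ell_i(w_i')\bigr)^2.
\]
The $G$-Lipschitz continuity of each $\ell_i$ with respect to $\|\cdot\|_2$ gives $|\ell_i(w_i) - \ell_i(w_i')| \leq G\|w_i - w_i'\|_2$, hence the right-hand side is bounded by $G^2 \sum_{i=1}^n \|w_i - w_i'\|_2^2$. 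Combining the two displays yields the claimed bound.

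There is no real obstacle: the lemma is a direct composition of two Lipschitz estimates that have already been established earlier in the excerpt. The only subtlety worth flagging is the decoupling of arguments $w_i$ across coordinates (as opposed to evaluating all $\ell_i$ at a common point $w$), which is precisely what makes the componentwise Lipschitz bound apply cleanly rather than requiring a global Lipschitz constant on the joint map.
\end{proofsketch}
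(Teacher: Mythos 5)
Your proposal is correct and follows exactly the same route as the paper's proof: apply the $(\alpha_n n\nu)^{-1}$-Lipschitz continuity of $l \mapsto \q(l)$ from \Cref{prop:primobjgradient}, expand the loss difference coordinatewise, and bound each coordinate by $G$-Lipschitzness of $\ell_i$. Your observation that the stated equality should be an inequality $\leq$ is also consistent with what the paper actually proves.
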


\begin{proof}
    By the Lipschitz property of $\q$ (\Cref{prop:primobjgradient}), we have,
    \begin{align*}
        \norm{\q(\bar{\ell}(w_1, \ldots, w_n)) - \q(\bar{\ell}(w'_1, \ldots, w'_n))}_2^2
        &\le \frac{1}{n^2\alpha_n^2 \nu^2} \norm{\bar{\ell}(w_1, \ldots, w_n) - \bar{\ell}(w'_1, \ldots, w'_n)}_2^2\\
        &\le \frac{1}{n^2\alpha_n^2 \nu^2} \sum_{i=1}^n \p{\ell_i(w_i) - \ell_i(w'_i)}_2^2 \\
        &\le \frac{G^2}{n^2\alpha_n^2 \nu^2} \sum_{i=1}^n \norm{w_i - w'_i}_2^2 \,.
    \end{align*}
\end{proof}
As a special case of \Cref{eq:q-smoothing}, we may consider $w_1 = \cdots = w_n = w \in \R^d$ and $w'_1 = \cdots = w'_n = w' \in \R^d$, in which case the result reads
\[
    \norm{\q(\ell(w)) - \q(\ell(w'))}_2^2 = \frac{G^2}{n\alpha_n^2\nu^2} \norm{w - w'}_2^2 \,.
    \]

\myparagraph{Properties under No Shift Penalty}
Next, we use the smoothness properties above to prove \Cref{prop:no_shift} by virtue of the following proposition, which states the equivalence of the minimizers of ``no-cost'' and ``low-cost'' objectives.

\begin{proposition} \label{prop:no-shift-appendix}
Let $w^\star_\nu$ be the unique minimizer of \eqref{eq:lsaga_obj} with shift cost $\nu \geq 0$ and $\chi^2$-divergence penalty. Define $\ell_{(1)}(w^\star_0) < \ldots, < \ell_{(n)}(w^\star_0)$ to be the order statistics of $\ell_1(w^\star_0), \ldots, \ell_n(w^\star_0)$, which are assumed to be distinct. Consider $\nu_0$ such that
\begin{align}
    n\nu_0 \p{\sigma_{i+1} - \sigma_i} < \ell_{(i+1)}(w^\star_0) - \ell_{(i)}(w^\star_0) \text{ for } i = 1, \ldots, n.
    \label{eqn:hidden_smoothness}
\end{align}
We have that $w^\star_0 = w^\star_{\nu}$ for all $\nu \leq \nu_0$.
\end{proposition}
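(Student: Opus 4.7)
The plan is to exploit the distinctness of the losses at $w^\star_0$ to construct a neighborhood on which both the unregularized ($\nu = 0$) and the $\chi^2$-regularized ($\nu > 0$) objectives are smooth and have identical gradients at $w^\star_0$. Since both problems are strongly convex in $w$ by the $\frac{\mu}{2}\|w\|_2^2$ term, equality of gradients will identify $w^\star_0$ as the unique minimizer in both settings.

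First, by continuity of $\ell$ and the distinctness assumption, there is a permutation $\pi$ and an open neighborhood $U$ of $w^\star_0$ on which $\ell_{\pi(1)}(w) < \cdots < \ell_{\pi(n)}(w)$ for all $w \in U$. Let $\hat{q} \in \Pcal(\sigma)$ denote the vertex with $\hat{q}_{\pi(i)} = \sigma_i$. On $U$ with $\nu = 0$, the objective reduces to the smooth strongly convex function $w \mapsto \sum_{i=1}^n \sigma_i \ell_{\pi(i)}(w) + \frac{\mu}{2}\|w\|_2^2$, whose first-order optimality condition at $w^\star_0$ gives
\begin{align*}
    \grad \ell(w^\star_0)^\top \hat{q} + \mu w^\star_0 = 0.
\end{align*}

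Second, I will show that for any $\nu \leq \nu_0$, the inner maximizer at $w^\star_0$ is exactly $\hat{q}$, i.e., $\q(\ell(w^\star_0)) = \hat{q}$. Using \Cref{prop:isotonic} with $f = f_{\chi^2}$ and $[f_{\chi^2}^*]'(y) = y/2$, the first-order condition of the \emph{unconstrained} isotonic regression problem yields the candidate $\c_i = \ell_{(i)}(w^\star_0) - 2n\nu\sigma_i$. The monotonicity constraint $\c_1 \leq \cdots \leq \c_n$ then rearranges to $2n\nu(\sigma_{i+1} - \sigma_i) \leq \ell_{(i+1)}(w^\star_0) - \ell_{(i)}(w^\star_0)$, which is precisely the gap condition imposed on $\nu_0$ (up to a constant absorbed into the normalization of $f_{\chi^2}$). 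Hence $\c$ is feasible, and by strict convexity it is the unique minimizer of the constrained isotonic regression problem. Converting back via $\q_{(i)}(\ell(w^\star_0)) = \frac{1}{n}[f_{\chi^2}^*]'\bigl(\frac{\ell_{(i)}(w^\star_0) - \c_i}{\nu}\bigr) = \sigma_i$ yields $\q(\ell(w^\star_0)) = \hat{q}$.

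Combining the two steps, for $\nu \leq \nu_0$, $\primobj$ is differentiable at $w^\star_0$ by \Cref{prop:primobjgradient} with
\begin{align*}
    \grad \primobj(w^\star_0) = \grad \ell(w^\star_0)^\top \q(\ell(w^\star_0)) + \mu w^\star_0 = \grad \ell(w^\star_0)^\top \hat{q} + \mu w^\star_0 = 0,
\end{align*}
and strong convexity of $\primobj$ implies $w^\star_\nu = w^\star_0$. The main obstacle is the careful bookkeeping of the sorting permutation $\pi$ in order to pass between rank-indexed quantities in \Cref{prop:isotonic} and the original loss indices; a secondary subtlety is handling the factor-of-two in the monotonicity condition, which only affects the constant multiplying $\nu_0$ and not the existence of such a $\nu_0 > 0$.
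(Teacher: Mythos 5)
Your proof is correct and reaches the same conclusion, but its middle step takes a genuinely different route from the paper's. The paper completes the square to rewrite the $\chi^2$-penalized inner maximization as a Euclidean-projection-type problem $\argmax_{q\in\Pcal(\sigma)}\, q^\top l - \nu n\norm{q}_2^2$ and then invokes Proposition 5 of Blondel et al.\ as a black box to conclude that, under the gap condition, the regularized and unregularized maximizers coincide at $l=\ell(w^\star_0)$. You instead derive this coincidence from the paper's own isotonic-regression duality (\Cref{prop:isotonic}): the unconstrained stationary point $\c_i=\ell_{(i)}(w^\star_0)-2n\nu\sigma_i$ is monotone precisely under the gap condition, hence optimal for the constrained problem, and converting back gives $\q_{(i)}(\ell(w^\star_0))=\sigma_i$, i.e.\ $\q(\ell(w^\star_0))=\hat q$. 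This buys a self-contained argument that needs no external projection lemma, at the cost of a little index bookkeeping; the first and last steps (differentiability of the $\nu=0$ objective at $w^\star_0$ thanks to the distinct losses, then matching first-order conditions and invoking strong convexity) are essentially identical to the paper's. The factor of two you flag is real: your computation shows the condition actually needed is $2n\nu\p{\sigma_{i+1}-\sigma_i}\le \ell_{(i+1)}(w^\star_0)-\ell_{(i)}(w^\star_0)$, while the proposition states it without the $2$; as you say this only rescales the admissible $\nu_0$ and does not affect the qualitative claim, and the same normalization ambiguity is hidden inside the paper's citation of the projection result.
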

\begin{proof}
    For a vector $l \in \R^n$ and $\nu \geq 0$, consider
    \begin{align*}
        h_\nu(l) &\coloneqq \max_{q \in \Pcal(\sigma)} q^\top l - \nu n \norm{q - \ones_n/ n}_2^2  \\
        & = \max_{q \in \Pcal(\sigma)} q^\top \p{l + 2\nu\ones_n} - \nu n \norm{q}_2^2 - (\nu / n) \norm{\ones_n}_2^2\\
        & = \max_{q \in \Pcal(\sigma)} q^\top l - \nu n \norm{q}_2^2 + \nu \\
        & \coloneqq g_\nu(l) + \nu,
    \end{align*}
    where we used that $q^\top \ones = 1$ for all $q \in \mathcal{P}(\sigma).$
    For $\nu > 0$, by Danskin's theorem~\citep[Proposition B.25]{bertsekas1997nonlinear},
    \begin{align*}
        \grad h_\nu(l) =\grad g_\nu(l)=  \argmax_{q \in \Pcal(\sigma)} q^\top l - \nu n \norm{q}_2^2.
    \end{align*}
    By applying Proposition 5 of \cite{blondel2020fast}, we have that if
    \begin{align}
        n\nu_0 \p{\sigma_{i+1} - \sigma_i} < \ell_{(i+1)}(w^\star_0) - \ell_{(i)}(w^\star_0) \text{ for } i = 1, \ldots, n,
        \label{eqn:cost_condition}
    \end{align}
    for some $\nu_0>0$, then for any $\nu \leq \nu_0$,
    \begin{align*}
        \grad g_\nu(\ell(w^\star_0)) = \grad g_0(\ell(w^\star_0)).
    \end{align*}
    Denote our objective as
    \begin{align*}
        \mc{L}_{\sigma, \nu}(w) = h_\nu(\ell(w)) + \frac{\mu}{2}\norm{w}_2^2,
    \end{align*}
    where we explicitly show the dependence on the shift cost $\nu \geq 0$. For $\nu=0$, since the losses are differentiable and $\ell(w^\star_0)$ is composed of distinct coordinates,  $\mc{L}_{\sigma, 0}$ is differentiable at $w^\star_0$ with gradient $\grad \ell(w^\star_0)^\top \grad h_0(\ell(w_0^\star)) + \mu w^\star_0$~\citep[Proposition 2]{Mehta2022Stochastic}, where $\grad \ell(w^\star_0) \in \R^{n \times d}$ denotes the Jacobian of $\ell$ at $w_0^\star$. 
    Using the chain rule, we successively deduce
    \begin{align*}
        \grad \mc{L}_{\sigma, 0}(w^\star_0) = 0 &\iff \grad \ell(w^\star_0)^\top \grad h_0(\ell(w_0^\star)) + \mu w^\star_0 = 0\\
        &\iff \grad \ell(w^\star_0)^\top \grad g_0(\ell(w_0^\star)) + \mu w^\star_0 = 0\\
        &\iff \grad \ell(w^\star_0)^\top \grad g_\nu(\ell(w_0^\star)) + \mu w^\star_0 = 0\\
        &\iff \grad \ell(w^\star_0)^\top \grad h_{\nu}(\ell(w_0^\star)) + \mu w^\star_0 = 0\\
        &\iff \grad \mc{L}_{\sigma, \nu}(w^\star_0) = 0.
    \end{align*}
    Applying the first-order optimality conditions of $\mc{L}_{\sigma, 0}$ and $ \mc{L}_{\sigma, \nu}$, as well as the uniqueness of $w^\star_0$ completes the proof. 
\end{proof}

\Cref{prop:no_shift} of the main paper then follows by combining \Cref{prop:no-shift-appendix} above with the convergence guarantee \Cref{thm:lsaga:main} of \lsaga. 
Indeed, \Cref{thm:lsaga:main} shows that \lsaga is able to converge linearly for arbitrarily small $\nu > 0$ and as long as $\nu \leq \nu_0$. Under \Cref{prop:no-shift-appendix}, the minimizer will be equal to $w^\star_0$.

We interpret this phenomenon as the ``hidden smoothness'' of $\primobj$, in that the non-differentiable points of the map $w \mapsto \max_{q \in \Pcal(\sigma)} q^\top \ell(w)$ are precisely the points at which $\ell_i(w) = \ell_j(w)$ for some $i \neq j$, as the subdifferential may contain multiple elements \citep[Proposition 2]{Mehta2022Stochastic}. Thus, if the losses are well-separated enough (in comparison to the spectrum $\sigma$) at the minimizer $w^\star_0$, the objective for the non-smooth setting $\nu = 0$ and regularized setting $\nu > 0$ result in the same minimizer.

\section{Efficient Implementation of \lsaga}\label{sec:a:efficient}

\setcounter{algorithm}{0}
\begin{algorithm}[t]\caption{\lsaga (restated from \Cref{sec:lsaga_algo}) \label{algo:lsaga_restated}}
\begin{algorithmic}[1]
    \Statex {\bf Inputs:}
    Initial $w_0$, spectrum $\sigma$, number of iterations $T$, regularization $\mu > 0$, shift cost $\nu > 0$. 
    \Statex {\bf Hyperparameter:} Stepsize $\eta > 0$.
    \State \textbf{Initialize} $l \gets \ell(w_0)$ and $g_i \gets \grad \ell_i(w_0) + \mu w_0$ for $i=1,\ldots, n$.
    \State Set $q \gets \argmax_{\bar{q} \in \mc{P}(\sigma)} \bar{q}^\top l - \nu D(q \Vert \ones_n/n)$ and $\rho \gets q$.
    \State Set $\bar g \gets \sum_{i=1}^n \rho_i g_i \in \R^d$.
    \State Set $w \gets w_0$.
    \For{$T$ iterations}
        \State Sample $i, j \sim \Unif[n]$ independently.
        \State $v \gets nq_{i}(\grad \ell_{i}(w) + \mu w) - n\rho_{i} g_{i_t} + \bar g$. \Comment{\textbf{Iterate Update}}
        \State $w \gets w - \eta v$.
        \State $l_{j} \gets \ell_{j}(w)$. \Comment{\textbf{Bias Reducer Update}}
        \State $q \gets \argmax_{\bar{q} \in \mc{P}(\sigma)} \bar{q}^\top l - \nu D(\bar{q} \Vert \ones_n/n)$.
        \State $\bar g \gets \bar g  - \rho_{i} g_{i} + q_{i} \p{\grad \ell_{i}(w) + \mu w}$. \Comment{\textbf{Variance Reducer Update}}
        \State $g_{i} \gets \grad \ell_{i}(w) + \mu w$.
        \State $\rho_{i} \gets q_{i}$. 
    \EndFor
    \Statex {\bf Output:} Final point $w$.
\end{algorithmic}
\end{algorithm}
In this section, we describe \lsaga including computational details, in a way that is amenable to implementation. For convenience, the conceptual description of the algorithm from \Cref{sec:lsaga_algo} is restated in \Cref{algo:lsaga_restated}. 

\myparagraph{Efficient Implementation}
We exactly solve the maximization problem 
\begin{align}
    q = \q\p{l} = \argmax_{q \in \Pcal(\sigma)} \br{q^\top l - (\nu/n)\sum_{i=1}^n f(nq_i)}.
    \label{eqn:qt}
\end{align}
by a sequence of three steps:
\begin{itemize}
    \item {\bf Sorting:} Find $\pi$ such that $l_{\pi(1)} \leq \ldots \leq l_{\pi(n)}$.
    \item {\bf Isotonic regression:} Apply Pool Adjacent Violators (PAV) (Subroutine \ref{algo:pavchi2}) to solve the isotonic regression minimization problem \eqref{eqn:isotonic}, yielding solution $\c = \copt(l)$.
    \item {\bf Conversion:} Use \eqref{eqn:mintomax} to convert $\c$ back to $q = \q(l)$.
\end{itemize}
The sorting step runs in $O(n \ln n)$ elementary operations whereas the isotonic regression and conversion steps run in $O(n)$ operations. Crucially, retrieving $q$ from the output $\c = \copt(l)$ in the third step can be done by a single $O(n)$-time pass by setting
\begin{align*}
    q_{\pi(i)} = \frac{1}{n} [f^*]'\p{\tfrac{1}{\nu}(l_{\pi(i)} - \c_i)}
    \label{eqn:convert}
\end{align*}
for $i = 1, \ldots, n$, as opposed to computing the inverse $\pi^{-1}$ and use \eqref{eqn:mintomax} directly, which in fact requires another sorting operation and can be avoided. Because only one element of $l$ changes on every iteration, we may sort it by simply bubbling the value of the index that changed into its correct position to generate the newly sorted version. The full algorithm is given \Cref{algo:lsaga_efficient}. We give a brief explanation on the PAV algorithm for general $f$-divergences below.

\myparagraph{Pool Adjacent Violators (PAV) Algorithm}
First, recall the optimization problem we wish to solve:
\begin{align}
        \min_{\substack{\c \in \R^n \\ \c_1 \leq \ldots \leq \c_n}} \sum_{i=1}^n g_i(\c_i ; l), \quad\text{where}\quad g_i(\c_i ; l) := \sigma_i \c_i + \frac{\nu}{n} f^*\p{\frac{l_{\pi(i)} - \c_i}{\nu}}\,.
\end{align}
The objective can be thought of as fitting a real-valued monotonic function to the points $(1, l_{\pi(1)}), \ldots, (n, l_{\pi(n)})$, which would require specifying its values $(c_1, \ldots, c_n)$ on $(1, \ldots, n)$ and defining the function as any $x \in [c_j, c_{j+1}]$ on $(j, j+1)$. Because $l_{\pi(1)} \leq \ldots \leq l_{\pi(n)}$, if we evaluated our function $(c_1, \ldots, c_n)$ on a loss such as $\sum_{i=1}^n (l_{\pi(i)} - \c_i)^2$, we may easily solve the problem by returning $c_1 = \ell_{\pi(1)}, \ldots, c_n = l_{\pi(n)}$. However, by specifying functions $g_1, \ldots, g_n$ we allow our loss function to change in different regions of the inputs space $\{1, \ldots, n\}$. In such cases, the monotonicity constraint $c_1 \leq \ldots \leq c_n$ is often violated because individually minimizing $g_i(\c_i)$ for each $\c_i$ has no guarantee of yielding a function that is monotonic. 

The idea behind the PAV algorithm is to attempt a pass at minimizing each $g_i$ individually, and correcting \emph{violations} as they appear. To provide intuition, define $c^*_i \in \argmin_{\c_i \in \R} g_i(\c_i)$, and consider $i < j$ such that $c^*_i > c^*_j$. If $f^*$ is strictly convex, then $g_i(x) > g_i(c^*_i)$ for any $x < c^*_i$ and similarly $g_j(x) > g_j(c^*_j)$ for any $x > c^*_j$. Thus, to correct the violation, we decrease $c^*_i$ to $\bar{c}_i$ and increase $c^*_j$ to $\bar{c}_j$ until $\bar{c}_i = \bar{c}_j$. We determine this midpoint precisely by
\begin{align*}
    \bar{c}_i = \bar{c}_j = \argmin_{x \in \R} g_i(x) + g_j(x)
\end{align*}
as these are exactly the contributions made by these terms in the overall objective. The computation above is called \emph{pooling} the indices $i$ and $j$. We may generalize this viewpoint to \emph{violating chains}, that is collections of contiguous indices $(i, i+1, \ldots, i+m)$ such that $c_{j}^* < c^*_i$ for all $j < i$ and $c_{j}^* > c^*_{i+m}$ for all $j > i + m$, but $c_{i}^* > c^*_{i+m}$. One approach is use dynamic programming to identify such chains and then compute the pooled quantities
\begin{align*}
    \bar{c}_i = \argmin_{x \in \R} \sum_{k=1}^m g_{i+k}(x).
\end{align*}
This requires two passes through the vector: one for identifying violators and the other for pooling. The Pool Adjacent Violators algorithm, on the other hand, is able to perform both operations in one pass by greedily pooling violators as they appear. This can be viewed as a meta-algorithm, as it hinges on the notion that the solution of ``larger'' pooling problems can be easily computed from solutions of ``smaller'' pooling problems. Precisely, for indices $S \sse [n] = \{1, \ldots, n\}$ define
\begin{align*}
    \sol(S) = \argmin_{x \in \R} \sum_{i \in S} g_i(x).
\end{align*}
We rely on the existence of an operation $\pool$, such that for any $S, T \sse [n]$ such that $S \cap T = \emptyset$, we have that
\begin{align}
    \sol(S \cup T) = \pool\p{\sol(S), m(S), \sol(T), m(T)},
    \label{eqn:pool}
\end{align}
where $m(S)$ denotes ``metadata'' associated to $S$, and that the number of elementary operations in the $\pool$ function is $O(1)$ with respect to $\abs{S} + \abs{T}$. We review our running examples.

For the $\chi^2$-divergence, we have that $\fchi(x) = x^2 - 1$ and $\fchi^*(y) = y^2/4 +1$, so
\begin{align*}
    \sol(S) &= \argmin_{x \in \R} \br{x\p{\sum_{i \in S} \sigma_i} + \abs{S} + \frac{1}{4n\nu} \sum_{i\in S} (l_{\pi(i)} - x)^2}\\
    &= \frac{1}{\abs{S}}\sbr{\sum_{i \in S} l_{\pi(i)} - 2n\nu\sum_{i \in S} \sigma_i}\\
    \sol(S \cup T) &= \frac{1}{\abs{S} + \abs{T}} \sbr{\sum_{i \in S \cup T} l_{\pi(i)} - 2n\nu\sum_{i \in S \cup T} \sigma_i}\\
    &= \frac{\abs{S} \sol(S) + \abs{T} \sol(T)}{\abs{S} + \abs{T}}.
\end{align*}
Thus, the metadata $m(S) = \abs{S}$ used in the pooling step \cref{eqn:pool} is the size of each subset.

For the KL divergence, $\fkl(x) = x\ln x$ and $\fkl^*(y) = e^{-1}\exp\p{y}$, so
so
\begin{align*}
    \sol(S) &= \argmin_{x \in \R} \br{x\p{\sum_{i \in S} \sigma_i} + \frac{\nu}{ne} \sum_{i \in S} \exp\p{l_{\pi(i)}/ \nu} \exp\p{-x/\nu}}\\
    &= \nu\sbr{\ln\sum_{i\in S} \exp\p{l_{\pi(i)} /\nu} - \ln \sum_{i \in S} \sigma_i - \ln n - 1}\\
    \sol(S \cup T) &= \nu\sbr{\ln\sum_{i\in S \cup T} \exp\p{l_{\pi(i)} /\nu} - \ln \sum_{i \in S \cup T} \sigma_i - \ln n - 1}\\
    &= \nu\sbr{\ln\p{\sum_{i\in S} \exp\p{l_{\pi(i)} /\nu} + \sum_{i\in T} \exp\p{l_{\pi(i)} /\nu}} - \ln \p{\sum_{i \in S} \sigma_i + \sum_{i \in T} \sigma_i} - \ln n - 1}.
\end{align*}
Here, we carry the metadata $m(S) = (\ln \sum_{i\in S} \exp\p{l_{\pi(i)} /\nu}, \ln \sum_{i \in S} \sigma_i)$, which can easily be combined and plugged into the function
\begin{align}
    (m_1, m_2), (m'_1, m'_2) \mapsto \nu \sbr{\ln\p{\exp m_1 + \exp m'_1} - \ln\p{\exp m_2 + \exp m'_2} - \ln n - 1}.
    \label{eqn:klpool}
\end{align}
for two instances of metadata $(m_1, m_2)$ and $(m'_1, m'_2)$. We carry the ``logsumexp'' instead of just the sum of exponential quantities for numerical stability, and \Cref{eqn:klpool} applies this operation as well. It might be that $\sum_{i \in S} \sigma_i = 0$, e.g. for the superquantile. In this case, we may interpret $\sol(S) = -\infty$ and evaluate $\exp\p{-\infty} = 0$ in the conversion formula \eqref{eqn:convert}. Two examples of the PAV algorithm are given in Subroutine \ref{algo:pavchi2} and Subroutine \ref{algo:pavkl}, respectively. These operate by selecting the unique values of the optimizer and partitions of indices that achieve that value.

\myparagraph{Hardware Acceleration}
Finally, note that all of the subroutines in \Cref{algo:lsaga_efficient} (Subroutine \ref{algo:pavchi2}/Subroutine \ref{algo:pavkl}, Subroutine \ref{algo:convert}, and Subroutine \ref{algo:bubbling}) all require primitive operations such as control flow and linear scans through vectors. Because these steps are outside of the purview of oracle calls or matrix multiplications, they benefit from just-in-time compilation on the CPU. We accelerate these subroutines using the Numba package in Python and are able to achieve an approximate 50\%-60\% decrease in runtime across benchmarks.

\begin{algorithm}[t]\caption{\lsaga (with exact implementation details)\label{algo:lsaga_efficient}}
\begin{algorithmic}[1]
    \Statex {\bf Inputs:}
    Initial points $w_0$, spectrum $\sigma$, stepsize $\eta > 0$, number of iterations $T$, regularization parameter $\mu > 0$, shift cost $\nu > 0$, loss/gradient oracles $\ell_1, \ldots, \ell_n$ and $\grad \ell_1, \ldots, \grad \ell_n$.
    \State $l \gets \ell(w_0) \R^n$.
    \State $g \gets (\grad \ell_i(w_0) + \mu w_0)_{i=1}^n \in \R^{n \times d}$.
    \State $\pi \gets \argsort(l)$.
    \State $c \gets \pav(l, \pi, \sigma)$. \label{alg:lsaga:line:pav} \Comment{Subroutine \ref{algo:pavchi2} or Subroutine \ref{algo:pavkl}}
    \State $q \gets \convert(c, l, \pi, \nu, \mathbf{0}_n)$. \Comment{Subroutine \ref{algo:convert}}
    \State $\rho \gets q$.
    \State $\bar g \gets \sum_{i=1}^n \rho_i g_i \in \R^d$.
    \For{$T$ iterations}
        \State Sample $i, j \sim \Unif[n]$.
        \State $v \gets nq_{i}(\grad \ell_{i}(w) + \mu w) - n\rho_{i} g_{i} - \bar g$. \Comment{{\bf Update iterate}}
        \State $w \gets w - \eta v \pow t$.
        \State $l_{j} \gets \ell_{j}(w)$. \Comment{{\bf Update bias reducer}}
        \State $\pi \gets \bubble(\pi, l)$.\label{alg:lsaga:line:bubble}
        \Comment{Subroutine \ref{algo:bubbling}}
        \State $c \gets \pav(l, \pi, \sigma)$.
        \State $q \gets \convert(c, l, \pi, \nu, q)$.
        \State $\bar g \gets  \bar g - \rho_i g_{i} + q_i (\grad \ell_{i}(w) + \mu w)$. \Comment{{\bf Update variance reducer}}
        \State $g_{i} \gets \grad \ell_{i}(w) + \mu w$.
        \State $\rho_i \gets q_i$. 
    \EndFor
    \Statex {\bf Output:} Final point $w$.
\end{algorithmic}
\end{algorithm} 

\begin{subroutine}\caption{Pool Adjacent Violators (PAV) Algorithm for $\chi^2$ divergence\label{algo:pavchi2}}
\begin{algorithmic}[1]
    \Statex {\bf Inputs:}
    Losses $(\ell_{i})_{i \in [n]}$, argsort $\pi$, and spectrum $(\sigma_i)_{i \in [n]}$.
    \State Initialize partition endpoints $(b_0, b_1) = (0, 1)$, partition value $v_1 = l_{\pi(1)} - 2n\nu \sigma_1$, number of parts $k = 1$.
    \For{$i=2, \ldots, n$}
        \State Add part $k = k + 1$.
        \State Compute $v_{k} =  l_{\pi(i)} - 2n\nu \sigma_i$.
        \While{$k \geq 2$ and $v_{k-1} \geq v_{k}$}
            \State $v_{k-1} = \frac{(b_k - b_{k-1}) v_{k-1} + (i - b_k) v_k}{i - b_{k-1}}$.
            \State Set $k = k - 1$.
        \EndWhile
        \State $b_k = i$.
    \EndFor
    \Statex {\bf Output:} Vector $c$ containing $\c_i = v_k$ for $b_{k-1} < i \leq b_k$.
\end{algorithmic}
\end{subroutine}

\begin{subroutine}\caption{Pool Adjacent Violators (PAV) Algorithm for KL divergence \label{algo:pavkl}}
\begin{algorithmic}[1]
    \Statex {\bf Inputs:}
    Losses $(\ell_{i})_{i \in [n]}$, argsort $\pi$, and spectrum $(\sigma_i)_{i \in [n]}$.
    \State Initialize partition endpoints $(b_0, b_1) = (0, 1)$, number of parts $k = 1$.
    \State Initialize partition value $v_1 = \nu\p{l_{\pi(1)}/\nu - \ln \sigma_1 - \ln n - 1}$.
    \State Initialize metadata $m_1 = \ell_{\pi(1)}/\nu$ and $t_1 = \ln \sigma_1$.
    \For{$i=2, \ldots, n$}
        \State Add part $k = k + 1$.
        \State Compute $v_{k} =  \nu\p{l_{\pi(i)}/\nu - \ln \sigma_i - \ln n - 1}$.
        \State Compute $m_{k} = \ell_{\pi(i)}/\nu$ and $t_{k} = \ln \sigma_i$
        \While{$k \geq 2$ and $v_{k-1} \geq v_{k}$}
            \State $m_{k-1} = \operatorname{logsumexp}(m_{k-1}, m_{k})$ and $t_{k-1} = \operatorname{logsumexp}(t_{k-1}, t_{k})$.
            \State $v_{k-1} = \nu\p{m_{k-1} - t_{k-1} - \ln n - 1}$.
            \State Set $k = k - 1$.
        \EndWhile
        \State $b_k = i$.
    \EndFor
    \Statex {\bf Output:} Vector $c$ containing $\c_i = v_k$ for $b_{k-1} < i \leq b_k$.
\end{algorithmic}
\end{subroutine}

\begin{subroutine}[ht]
    \caption{Convert
    }\label{algo:convert}
    \begin{algorithmic}[1]
    \Require Sorted vector $c \in \R$, vector $l \in \R^n$, argsort $\pi$ of $l$, shift cost $\nu \geq 0$, vector $q \in \R^n$.
    \For{$i = 1, \ldots, n$}
        \State Set $q_{\pi(i)} = (1/n) [f^*]'\p{(l_{\pi(i)} - \c_i) / \nu}$.
    \EndFor
    \State \Return $q$.
    \end{algorithmic}
\end{subroutine}

\begin{subroutine}[ht]
    \caption{Bubble
    }\label{algo:bubbling}
    \begin{algorithmic}[1]
    \Require Index $j_{\text{init}}$, sorting permutation $\pi$, loss table $l$.
    \State $j = j_{\text{init}}$. \Comment{If $l_{\pi(j_{\text{init}})}$ too small, bubble left.}
    \While{$j > 1$ and $l_{\pi(j)} < l_{\pi(j-1)}$}
        \State Swap $\pi(j)$ and $\pi(j-1)$.
    \EndWhile
    \State $j = j_{\text{init}}$. \Comment{If $l_{\pi(j_{\text{init}})}$ too large, bubble right.}
    \While{$j < n$ and $l_{\pi(j)} > l_{\pi(j+1)}$}
        \State Swap $\pi(j)$ and $\pi(j+1)$.
    \EndWhile
    \State \Return $\pi$
    \end{algorithmic}
\end{subroutine}

\clearpage

\section{Convergence Analysis of \lsaga}\label{sec:a:any_shift}
This section provides the main convergence analysis for \lsaga. For readability of the proof, we reference the version of the algorithm presented in~\cref{algo:lsaga_conceptual}, which explicitly write the values of the iterates that fill the loss and gradient tables. Note that when implementing the algorithm, storing the iterates $\{z\pow{t}_i\}_{i=1}^n$ and $\{\zeta_i\pow t\}_{i=1}^n$ is not necessary. For simplicity, we use the shorthand
\begin{align*}
    \Omega(q) := \frac{1}{n \alpha_n} D_f(q \Vert \ones_n/n)
\end{align*}
for any $f$-divergence $D_f$, where $\alpha_n$ is the strong convexity constant of the generator $f$ over the interval $[0, n]$. By \Cref{prop:strong_cvx_fdiv}, this gives that $\Omega$ a $1$-strongly convex function over the probability simplex. 

\begin{algorithm}[t]
\caption{\lsaga (with iteration counters specified to accompany convergence analysis) \label{algo:lsaga_conceptual}}
    \begin{algorithmic}[1]
        \Statex {\bf Inputs:}
         Initial points $w \pow 0$, stepsize $\eta>0$, number of iterations $T$.
        \State Set  $z_i \pow 0 = \zeta_i \pow 0=  w \pow 0$ for all $i \in [n]$.
        \State 
        $q \pow 0 = 
        \argmax_{q  \in \mathcal{P}(\sigma)} q^\top \ell(w\pow 0) 
        - \bar\nu \Omega(q)$,
        $\rho \pow 0 = q \pow 0$.
        \State Set $l \pow 0 = (\ell_i(\zeta_i \pow 0))_{i=1}^n \in \R^n$,  
        $g \pow 0 = (\nabla r_i(z_i \pow 0))_{i=1}^n \in \R^{d\times n}$,  
        $\bar g \pow 0 = \sum_{i=1}^n \rho_i\pow 0 g_i \pow 0 \in \R^d$.
        \For{$t=0, \ldots, T-1$}
            \State $i_t \sim \Unif([n])$, $j_t \sim \Unif([n])$.
            \State
            \State  
            $v\pow t = n q_{i_t} \pow t \nabla r_{i_t}(w\pow t) 
            - (n \rho_{i_t} \pow t \nabla r_{i_t}(z_{i_t} \pow t)   - \bar g \pow t)$. \Comment{{\bf Iterate update.}}
            \State $w \pow {t+1} = w\pow t - \eta v \pow t$.
            \State
            \State $\zeta_{j_t} \pow {t+1} = w\pow t$ and $\zeta_{j} \pow {t+1} = \zeta_{j} \pow t$ for $j \neq j_t$. \Comment{{\bf Update bias reducer.}}
            \State $l \pow {t+1} = \ell(\zeta \pow {t+1})$.
            \State  $q \pow {t+1} 
            = \argmax_{q  \in \mathcal{P}(\sigma)} q^\top l \pow {t+1} - \bar\nu \Omega(q)$. 
            \State
            \State $z_{i_t} \pow {t+1} = w\pow t$ and $z_{i} \pow {t+1} = z_{i} \pow t$ for $i \neq i_t$. \Comment{{\bf Update variance reducer.}}
            \State $g \pow {t+1} = (\nabla r_i(z\pow {t+1}))_{i=1}^n$.
            \State $\rho_{i_t} \pow {t+1} = q_{i_t} \pow t$ and $\rho_i \pow {t+1} = \rho_i \pow t$ for $i \neq i_t$.
            \State $\bar g \pow {t+1} 
            = \sum_{i=1}^{n} \rho_i \pow {t+1} g_i \pow {t+1}$. 
        \EndFor
        \Statex {\bf Output:} Final point $w\pow T$
    \end{algorithmic}
\end{algorithm}

In the following, we denote $M = L + \mu$ the smoothness constant of the
regularized losses $r_i$. We denote $\mathbb{E}_{t}$ the expectation w.r.t to the randomness induced by picking $i_t, j_t$ given $w\pow t$, i.e. the conditional expectation given $w\pow{t}$. 
The optimum of~\eqref{eq:lsaga_obj} is denoted $w^\star$ and satisfies
\begin{equation}
  \nabla ({q^\star}^\top r(w^\star)) = 0, \ 
  \mbox{for} \ q^\star = \argmax_{q \in \mathcal{P}(\sigma)} q^\top \ell(w^\star) - \bar\nu \Omega(q).
\end{equation}
Denote in addition $l^\star = \ell(w^\star)$. All forthcoming statements will reference the setting of \Cref{algo:lsaga_conceptual}.

We first define the Lyapunov function $V\pow{t}$ that will be tracked in the proof, with $\|w\pow t-w^\star\|_2^2$ being called the ``main term''.
\begin{align*}
        V \pow t 
        & = \|w\pow t-w^\star\|_2^2 + c_1 \green{S \pow t} + c_2 \red{T \pow t} + c_3 \blue{U \pow t} + c_4 \yellow{R\pow t}
\end{align*}
where $c_1$, $c_2$, $c_3$, and $c_4$ are constants to be determined later, and
\begin{align*}
    &\green{S \pow t =  \frac{1}{n}\sum_{i=1}^n{\|n\rho_{i}\pow t \nabla r_{i}(z_{i}\pow t) - nq_{i}^*\nabla r_{i}(w^\star)\|_2^2}}, \quad
    \red{T \pow t =  \sum_{i=1}^n \|\zeta_i\pow t - w^\star\|_2^2}, \\
    & \blue{U \pow t = \frac{1}{n}\sum_{j=1}^n  \|w\pow t -\zeta_j \pow t\|_2^2}, \quad
    \yellow{R \pow t = 2\eta n (q \pow t - q^\star)^\top (l \pow t - l^\star)}.
\end{align*}
Though not included in the Lyapunov function, we will also introduce
\begin{align*}
    \purple{Q\pow t = \frac{1}{n}\sum_{i=1}^n \|n q_{i}\pow t \nabla r_{i}(w\pow t) - nq_{i}^\star\nabla r_{i}(w^\star)\|_2^2}.
\end{align*}
When the shift cost $\nu$ is large, we will be able to simplify the analysis by excluding the terms $U\pow{t}$ and $R\pow{t}$. The outline of the proof is as follows.
\begin{enumerate}
    \item We bound the evolution of the Lyapunov terms that are not the main term. For the large shift cost setting, only $\green{S\pow{t}}$ and $\red{T\pow{t}}$ are needed, while $\blue{U\pow{t}}$ and $\yellow{R\pow{t}}$ can be ignored.
    \item We expand the main term and identify ``descent'' and ``noise'' terms, as in a standard analysis of stochastic gradient methods. We bound the noise and establish a technical lemma that will be used to bound the descent terms.
    \item We split the proof into two subsections, one for the large shift cost and one for any shift cost. The descent lemma from the previous step will be materialized, and then we tune all the constants to give the final rate.
\end{enumerate}

\subsection{Step 1: Bound the evolution of the Lyapunov terms.}

We describe the evolution of the terms $S\pow{t}$, $T\pow{t}$, $U\pow{t}$, $R\pow{t}$ from iterate $t$ to iterate $t + 1$.

The first two terms are simply the closeness of the iterates $\{z_{i_t}\pow t\}_{i=1}^n$ and $\{\zeta_{i_t}\pow t\}_{i=1}^n$ within the table to the optimum $w^\star$, measured either in closeness in weighted gradients ($\green{S \pow t =  \frac{1}{n}\sum_{i=1}^n {\|n\rho_{i_t}\pow t \nabla r_{i_t}(z_{i_t}\pow t) - nq_{i_t}^\star\nabla r_{i_t}(w^\star)\|_2^2}}$) or directly ($\red{T \pow t =  \sum_{i=1}^n \|\zeta_i\pow t - w^\star\|_2^2}$).
Recall that $\purple{Q\pow t = \frac{1}{n}\sum_{i=1}^n \|n q_{i}\pow t \nabla r_{i}(w\pow t) - nq_{i}^\star\nabla r_{i}(w^\star)\|_2^2}$.
\begin{lemma}\label{lem:St_Tt}
    The following hold.
    \begin{align*}
        \E{t}{\green{S\pow{t+1}}} &= \frac{1}{n} \purple{Q\pow{t}} + \p{1 - \frac{1}{n}} \green{S\pow{t}},\\
        \E{t}{\red{T\pow{t+1}}} &= \|w\pow{t} - w^\star\|_2^2 + \p{1 - \frac{1}{n}} \red{T\pow{t}}.
    \end{align*}
\end{lemma}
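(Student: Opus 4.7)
The plan is to exploit the fact that in each iteration exactly one coordinate of the gradient/iterate tables is updated (the $i_t$-th in the variance-reducer update, the $j_t$-th in the bias-reducer update), so both $S^{(t+1)}$ and $T^{(t+1)}$ are ``leave-one-out'' modifications of their predecessors. Combined with $i_t, j_t \sim \mathrm{Unif}([n])$, this turns the expectation over the sampled index into the average over all indices, which is precisely what produces the $(1 - 1/n)$ and $1/n$ weights.

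For $T^{(t+1)}$, I would start from the identity $\zeta_j^{(t+1)} = w^{(t)}$ if $j = j_t$ and $\zeta_j^{(t+1)} = \zeta_j^{(t)}$ otherwise, and simply write
\[
\red{T^{(t+1)}} \;=\; \red{T^{(t)}} \;-\; \|\zeta_{j_t}^{(t)} - w^\star\|_2^2 \;+\; \|w^{(t)} - w^\star\|_2^2.
\]
Since $j_t$ is independent of everything in the Lyapunov terms that are $w^{(t)}$-measurable, $\mathbb{E}_t[\|\zeta_{j_t}^{(t)} - w^\star\|_2^2] = \frac{1}{n} \sum_j \|\zeta_j^{(t)} - w^\star\|_2^2 = \frac{1}{n} \red{T^{(t)}}$. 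Substituting gives the claimed recursion.

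For $S^{(t+1)}$, let $a_i^{(t)} := \|n\rho_i^{(t)} \nabla r_i(z_i^{(t)}) - n q_i^\star \nabla r_i(w^\star)\|_2^2$ and $b_i^{(t)} := \|n q_i^{(t)} \nabla r_i(w^{(t)}) - n q_i^\star \nabla r_i(w^\star)\|_2^2$, so that $\green{S^{(t)}} = \frac{1}{n}\sum_i a_i^{(t)}$ and $\purple{Q^{(t)}} = \frac{1}{n}\sum_i b_i^{(t)}$. Because $z_{i_t}^{(t+1)} = w^{(t)}$ and (reading the update literally) $\rho_{i_t}^{(t+1)} = q_{i_t}^{(t)}$, the new $i_t$-th summand is exactly $b_{i_t}^{(t)}$, and all other summands are unchanged. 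Thus
\[
\green{S^{(t+1)}} \;=\; \green{S^{(t)}} \;+\; \tfrac{1}{n}\bigl(b_{i_t}^{(t)} - a_{i_t}^{(t)}\bigr).
\]
Taking $\mathbb{E}_t$ and using $\mathbb{E}_t[a_{i_t}^{(t)}] = \green{S^{(t)}}$ and $\mathbb{E}_t[b_{i_t}^{(t)}] = \purple{Q^{(t)}}$ yields $\mathbb{E}_t[\green{S^{(t+1)}}] = \frac{1}{n}\purple{Q^{(t)}} + (1 - \tfrac{1}{n})\green{S^{(t)}}$.

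There is no real obstacle here; the lemma is a direct ``SAGA-style'' leave-one-out calculation. The only subtle point worth double-checking is which $q$ populates $\rho_{i_t}^{(t+1)}$ in the conceptual \Cref{algo:lsaga_conceptual} (the current $q^{(t)}$ versus the freshly updated $q^{(t+1)}$); the $S^{(t)}$ recursion as stated is consistent with $\rho_{i_t}^{(t+1)} = q_{i_t}^{(t)}$, so the new summand evaluates to $b_{i_t}^{(t)}$ as written, matching the claim exactly.
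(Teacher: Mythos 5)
Your proof is correct and is essentially the same argument as the paper's: both exploit that exactly one table entry is replaced per iteration and that the sampled index is uniform, so the conditional expectation becomes the $\tfrac{1}{n}$-weighted mixture of the new summand ($\purple{Q\pow{t}}$ or $\|w\pow{t}-w^\star\|_2^2$) and the old one. The paper writes this coordinate-wise (each fixed $i$ is refreshed with probability $1/n$) whereas you write it as a leave-one-out difference, but the computation is identical, and your reading of the update $\rho_{i_t}\pow{t+1} = q_{i_t}\pow{t}$, $z_{i_t}\pow{t+1} = w\pow{t}$ matches \Cref{algo:lsaga_conceptual}.
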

\begin{proof}
    Write
    \begin{align*}
        &\E{t}{\green{S\pow{t+1}}} \\
        &= \frac{1}{n}\sum_{i=1}^n \E{t}{\|n\rho_{i}\pow {t+1} \nabla r_{i}(z_{i}\pow {t+1}) - nq_{i}^*\nabla r_{i}(w^\star)\|_2^2}\\
        &= \frac{1}{n}\sum_{i=1}^n \sbr{\frac{1}{n} \|n q_{i}\pow t \nabla r_{i}(w\pow t) - q_{i}^\star\nabla r_{i}(w^\star)\|_2^2 + \p{1 - \frac{1}{n}} \|n\rho_{i}\pow {t} \nabla r_{i_t}(z_{i}\pow {t}) - nq_{i}^*\nabla r_{i}(w^\star)\|_2^2}\\
        &= \frac{1}{n} \purple{Q\pow{t}} + \p{1 - \frac{1}{n}} \green{S\pow{t}}.
    \end{align*}
    Similarly,
    \begin{align*}
        \E{t}{\red{T\pow{t+1}}}
        &= \sum_{i=1}^n \E{t}{\|\zeta_i\pow{t+1} - w^\star\|_2^2}\\
        &= \sum_{i=1}^n \sbr{\frac{1}{n} \|w\pow{t} - w^\star\|_2^2 + \p{1 - \frac{1}{n}} \|\zeta_i\pow{t} - w^\star\|_2^2}\\
        &= \|w\pow{t} - w^\star\|_2^2 + \p{1 - \frac{1}{n}} \red{T\pow{t}}.
    \end{align*}
\end{proof}

Next, we handle $\blue{U \pow t = \frac{1}{n}\sum_{j=1}^n  \|w\pow t -\zeta_j \pow t\|_2^2}$, which can be ignored if we assume a particular lower bound on $\bar\nu$.
\begin{lemma}\label{lem:Ut}
    For any value of $\beta_2 > 0$, we have that
    \begin{align*}
        \E{t}{\blue{U\pow{t+1}}} &\leq \eta^2 (1 + \beta_2) \purple{Q\pow t} + \eta^2(1 + \beta_2^{-1})\green{S\pow t}\\
        &+ \frac{\eta M^2}{\mu n} \p{1 - \frac{1}{n}}\red{T\pow t} + \p{1 - \frac{1}{n}}\frac{G^2}{2\bar{\nu} \mu n} \yellow{R\pow t} + \p{1 - \frac{1}{n}} \blue{U\pow{t}}.
    \end{align*}
\end{lemma}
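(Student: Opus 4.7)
The plan is to expand $\mathbb{E}_t[U^{(t+1)}]$ by conditioning on the two independent random indices. For each $j$, with probability $1/n$ the event $\{j_t = j\}$ sets $\zeta_j^{(t+1)} = w^{(t)}$ so $w^{(t+1)} - \zeta_j^{(t+1)} = -\eta v^{(t)}$, and otherwise $\zeta_j^{(t+1)} = \zeta_j^{(t)}$ so $w^{(t+1)} - \zeta_j^{(t+1)} = (w^{(t)} - \zeta_j^{(t)}) - \eta v^{(t)}$. Expanding the squared norm, taking expectation over $i_t$, and averaging over $j$ yields
\begin{align*}
\mathbb{E}_t[U^{(t+1)}] = (1-1/n)\, U^{(t)} + \eta^2\, \mathbb{E}_{i_t}[\|v^{(t)}\|_2^2] - 2\eta(1-1/n)\,\langle w^{(t)} - \bar\zeta^{(t)},\, \mathbb{E}_{i_t}[v^{(t)}]\rangle,
\end{align*}
where $\bar\zeta^{(t)} := \tfrac{1}{n}\sum_j \zeta_j^{(t)}$. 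This already produces the $(1-1/n)\, U^{(t)}$ contribution.

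Next, I handle the second-moment term using the SAGA-style decomposition $v^{(t)} = A_{i_t}^{(t)} - (B_{i_t}^{(t)} - \bar g^{(t)})$, where $A_i^{(t)} := nq_i^{(t)}\nabla r_i(w^{(t)}) - nq_i^\star\nabla r_i(w^\star)$ and $B_i^{(t)} := n\rho_i^{(t)}\nabla r_i(z_i^{(t)}) - nq_i^\star\nabla r_i(w^\star)$, which is valid since $\sum_i q_i^\star \nabla r_i(w^\star) = 0$ at the optimum. Then Young's inequality with parameter $\beta_2$, combined with the variance bound $\mathrm{Var}(B_{i_t}) \le \mathbb{E}_{i_t}[\|B_{i_t}\|^2]$, gives $\mathbb{E}_{i_t}[\|v^{(t)}\|_2^2] \le (1+\beta_2)\, Q^{(t)} + (1+\beta_2^{-1})\, S^{(t)}$, producing the first two terms of the target.

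For the cross term, I decompose $\mathbb{E}_{i_t}[v^{(t)}] = \sum_i q_i^{(t)}\nabla r_i(w^{(t)}) = P + Y$, where $P := \sum_i q_i^{(t)} [\nabla r_i(w^{(t)}) - \nabla r_i(w^\star)]$ captures smoothness and $Y := \sum_i (q_i^{(t)} - q_i^\star) \nabla \ell_i(w^\star)$ captures the $q$-difference (using that the $\mu w^\star$ contributions cancel since $\sum_i (q_i^{(t)} - q_i^\star) = 0$). I split $w^{(t)} - \bar\zeta^{(t)} = (w^{(t)} - w^\star) - (\bar\zeta^{(t)} - w^\star)$; the inner product $\langle w^{(t)} - w^\star, P\rangle$ is non-negative by convexity of each $r_i$ together with $q_i^{(t)} \ge 0$, so that piece drops. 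The remaining pieces are bounded via Young's inequality with parameter $M^2/\mu$ combined with Jensen's inequality $\|\bar\zeta^{(t)} - w^\star\|_2^2 \le T^{(t)}/n$, producing the $\frac{\eta M^2}{\mu n}(1-1/n) T^{(t)}$ term. For the $Y$ piece, I use the bound $\|Y\|_2^2 \le nG^2\|q^{(t)} - q^\star\|_2^2$ (via Cauchy--Schwarz and $G$-Lipschitzness), together with the $\bar\nu$-strong convexity of $q \mapsto \bar\nu\, \Omega(q)$ over $\mathcal{P}(\sigma)$, which gives $R^{(t)} = 2\eta n (q^{(t)} - q^\star)^\top(l^{(t)} - l^\star) \ge 2\eta n \bar\nu \|q^{(t)} - q^\star\|_2^2$ and hence $\|Y\|_2^2 \le G^2 R^{(t)}/(2\eta\bar\nu)$. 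Young's inequality then routes this to the $\frac{G^2}{2\bar\nu\mu n}(1-1/n) R^{(t)}$ term.

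The main obstacle is the cross-term analysis: naive applications of Young's inequality introduce spurious $U^{(t)}$ or $\|w^{(t)} - w^\star\|_2^2$ contributions that are absent from the target bound. Overcoming this requires simultaneously exploiting the non-negativity of the convexity-derived inner product $\langle w^{(t)} - w^\star, P\rangle \ge 0$ to eliminate the problematic direction, and carefully tuning the Young's parameters (specifically $M^2/\mu$) so that the smoothness contribution is routed exclusively into $T^{(t)}$ via $\|\bar\zeta^{(t)} - w^\star\|_2^2$, while the $q$-difference contribution is routed exclusively into $R^{(t)}$ via the strong convexity of $\bar\nu\, \Omega$.
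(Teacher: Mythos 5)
Your opening expansion (conditioning on $j_t$ to get the $(1-1/n)U^{(t)}$ term and the quadratic $\eta^2\mathbb{E}_t\|v^{(t)}\|_2^2$), your SAGA-style bound $\mathbb{E}_t\|v^{(t)}\|_2^2 \le (1+\beta_2)Q^{(t)}+(1+\beta_2^{-1})S^{(t)}$, and the conversion $\|q^{(t)}-q^\star\|_2^2 \le \bar\nu^{-1}(q^{(t)}-q^\star)^\top(l^{(t)}-l^\star)$ all coincide with the paper. The gap is in the cross term. The paper treats $-2\sum_j \langle \nabla(q^{(t)\top}r)(w^{(t)}),\, w^{(t)}-\zeta_j^{(t)}\rangle$ \emph{per index $j$}, inserting $\nabla(q^{(t)\top}r)(\zeta_j^{(t)})$ and $\nabla(q^{(t)\top}r)(w^\star)$: strong convexity of $q^{(t)\top}r$ between $w^{(t)}$ and $\zeta_j^{(t)}$ produces $-2\mu\|w^{(t)}-\zeta_j^{(t)}\|_2^2$ for every $j$, a total absorbing budget of $-2\mu n U^{(t)}$, and choosing both Young parameters equal to $\mu^{-1}$ makes the conjugate halves $(\mu+\mu)\|w^{(t)}-\zeta_j^{(t)}\|_2^2$ cancel this budget exactly, leaving only the $T^{(t)}$ and $R^{(t)}$ contributions. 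Your averaged decomposition throws away the only negative quantity available: you invoke $\langle w^{(t)}-w^\star, P\rangle\ge 0$ and drop it, so the conjugate halves of your Young inequalities have nothing to absorb them.

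Concretely: (i) to land the stated coefficient $\tfrac{\eta M^2}{\mu n}(1-\tfrac1n)$ on $T^{(t)}$, your Young step on $\langle \bar\zeta^{(t)}-w^\star, P\rangle$ leaves $\tfrac{\mu}{M^2}\|P\|_2^2 \le \mu\|w^{(t)}-w^\star\|_2^2$, a term absent from the lemma; (ii) to land the stated coefficient $(1-\tfrac1n)\tfrac{G^2}{2\bar\nu\mu n}$ on $R^{(t)}$, the parameter multiplying $\|Y\|_2^2$ must be $\tfrac{1}{\mu n}$, whose conjugate $\mu n$ multiplies $\|w^{(t)}-\bar\zeta^{(t)}\|_2^2$ (or $\|w^{(t)}-w^\star\|_2^2$ after splitting), contributing an extra $\eta\mu n(1-\tfrac1n)U^{(t)}$ that the bound does not tolerate — the $U^{(t)}$ coefficient must remain $(1-\tfrac1n)$ for the downstream Lyapunov contraction, and for the step sizes used later $\eta\mu n$ is of order one. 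Upgrading the dropped term to strong convexity at $(w^{(t)},w^\star)$ buys only a single $-2\mu\|w^{(t)}-w^\star\|_2^2$, which can fix (i) but cannot supply the factor-$n$ budget needed for (ii); weakening the Young parameter there inflates the $R^{(t)}$ coefficient by a factor of $n$. The per-$j$ strong-convexity cancellation against $\|w^{(t)}-\zeta_j^{(t)}\|_2^2$ is the missing ingredient.
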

\begin{proof}
    First, note that a separate index $j_t$ (independent of $i_t$) is used to update $\{\zeta\pow{t}_j\}_{j=1}^n$, so we may first take the expected value with respect to $j_t$ conditioned on $i_t$:
    \begin{align*}
        \E{t}{U\pow{t+1}} &= \E{t}{\frac{1}{n}\sum_{j=1}^n  \|w\pow {t+1} -\zeta_j \pow {t+1}\|_2^2} \\
        &= \frac{1}{n} \E{t}{\frac{1}{n}\sum_{j=1}^n \norm{w\pow{t+1} -\zeta_j \pow {t+1}}_2^2 \mid j_t = j} + \p{1 - \frac{1}{n}} \E{t}{\frac{1}{n}\sum_{j=1}^n  \|w\pow {t+1} -\zeta_j \pow {t+1}\|_2^2 \mid j_t \neq j}\\
        &= \frac{1}{n} \E{t}{\norm{w\pow{t+1} - w\pow{t}}_2^2} + \p{1 - \frac{1}{n}} \E{t}{\frac{1}{n}\sum_{j=1}^n  \|w\pow {t+1} -\zeta_j \pow{t}\|_2^2}\\
        &= \frac{\eta^2}{n} \E{t}{\norm{v\pow{t}}_2^2} + \p{1 - \frac{1}{n}} \E{t}{\frac{1}{n}\sum_{j=1}^n  \|w\pow {t+1} -\zeta_j \pow{t}\|_2^2}.
    \end{align*}
    Next, we expand the second term. 
    \begin{align*}
        & \frac{1}{n}\E{t}{\sum_{j=1}^n\|w\pow {t+1} - \zeta_j \pow t\|_2^2}  \\
        & = \frac{1}{n}\E{t}{\sum_{j=1}^n \|w \pow {t+1} - w\pow t\|_2^2} 
         + \frac{2}{n}\E{t}{ \sum_{j=1}^n (w \pow {t+1} - w \pow t)^\top(w\pow t - \zeta_j \pow t)} 
         + \frac{1}{n}\E{t}{\sum_{j=1}^{n}\|\zeta_j \pow t- w\pow t\|_2^2} \\
        & = \eta^2 \E{t}{\|v\pow t\|_2^2} 
         - \frac{2\eta}{n} \sum_{j=1}^{n} \nabla ({q \pow t}^\top r)(w\pow t)^\top(w\pow t - \zeta_j \pow t)
          + \frac{1}{n}\sum_{j=1}^{n}\|\zeta_j \pow t - w\pow t\|_2^2.
    \end{align*}
    The first term is simply the noise term that appears in \Cref{lem:noise_bound_generic}, whereas the last term is $U\pow{t}$.
    Next, we have
    \begin{align*}
        -2\nabla ({q \pow t}^\top r)(w\pow t)^\top(w \pow t - \zeta_j\pow t) 
        & = -2(\nabla ({q \pow t}^\top r)(w\pow t) - \nabla (q {\pow t}^\top r)(\zeta_j \pow t))^\top(w\pow t - \zeta_j\pow t) \\
        & \quad - 2(\nabla (q {\pow t}^\top r)(\zeta_j \pow t) - \nabla ({q \pow t}^\top r)(w^\star))^\top (w\pow t - \zeta_j\pow t) \\
        & \quad - 2(\nabla ({q \pow t}^\top r)(w^\star) - \nabla ({q^\star}^\top r)(w^\star))^\top (w\pow t - \zeta_j\pow t),
    \end{align*}
    where the last term is introduced because $\nabla ({q^\star}^\top r)(w^\star) = 0$. We bound each of the three terms. First,
    \begin{align*}
        -2(\nabla ({q \pow t}^\top r)(w\pow t) - \nabla (q {\pow t}^\top r)(\zeta_j \pow t))^\top(w\pow t - \zeta_j\pow t) \leq -2\mu \blue{\norm{w\pow t - \zeta_j\pow t}_2^2}
    \end{align*}
    because ${q \pow t}^\top r$ is $\mu$-strongly convex \citep[Theorem 2.1.9]{Nesterov2018Lectures}. Second,
    \begin{align*}
        - 2(\nabla (q {\pow t}^\top r)(\zeta_j \pow t) - \nabla ({q \pow t}^\top r)(w^\star))^\top (w\pow t - \zeta_j\pow t)
        &\leq \beta_4 \|\nabla (q {\pow t}^\top r)(\zeta_j\pow t) - \nabla ({q \pow t}^\top r)(w^\star)\|_2^2  + \beta_4^{-1} \|\zeta_j\pow t - w\pow t\|_2^2\\
        &\leq \beta_4 M^2\red{\|\zeta_j\pow t - w^\star\|_2^2} + \beta_4^{-1} \blue{\|\zeta_j\pow t - w\pow t\|_2^2} 
    \end{align*}
    by Young's inequality with parameter $\beta_4$ and the $M$-Lipschitz continuity of $\nabla (q {\pow t}^\top r)$. Third,
    \begin{align*}
        - 2(\nabla ({q \pow t}^\top r)(w^\star) - \nabla ({q^\star}^\top r)(w^\star))^\top (w\pow t - \zeta_j\pow t) &= - 2(\nabla ((q {\pow t} - q^\star)^\top \ell)(w^\star))^\top (w\pow t - \zeta_j\pow t)\\
        &\leq \beta_5 \|\nabla ((q {\pow t} - q^\star)^\top \ell)(w^\star)\|_2^2 
        + \beta_5^{-1}\|\zeta_j\pow t - w\pow t\|_2^2 \\
        &\leq \beta_5 G^2 \yellow{\|q\pow t -q\|_2^2 }
        + \beta_5^{-1}\blue{\|\zeta_j\pow t - w\pow t\|_2^2},
    \end{align*}
    by Young's inequality with parameter $\beta_5$ and the $G$-Lipschitz continuity of each $\ell_i$. Combining with the above, we have
    \begin{align*}
        -2\sum_{j=1}^n \nabla ({q \pow t}^\top r)(w\pow t)^\top(w \pow t - \zeta_j\pow t) &\leq \beta_4 M^2\red{T\pow{t}} 
        + (\beta_4^{-1}+ \beta_5^{-1} -2\mu)\blue{U\pow{t}} 
         + \beta_5 G^2\yellow{n\|q\pow t-q^\star\|_2^2}\\
         &\leq \mu^{-1} M^2\red{T\pow{t}} + \mu^{-1} G^2\yellow{n\|q\pow t-q^\star\|_2^2}
    \end{align*}
    when we set $\beta_4 = \beta_5 = \mu^{-1}$.
    Hence, we get 
    \begin{align*}
        \E{t}{U \pow {t+1}} &= \frac{\eta^2}{n} \E{t}{\norm{v\pow{t}}_2^2} + \p{1 - \frac{1}{n}} \E{t}{\frac{1}{n}\sum_{j=1}^n  \|w\pow {t+1} -\zeta_j \pow{t}\|_2^2}\\
        &\leq \eta^2 \E{t}{\norm{v\pow{t}}_2^2} - \frac{\eta}{n}\p{1 - \frac{1}{n}} 2 \sum_{j=1}^{n} \nabla ({q \pow t}^\top r)(w\pow t)^\top(w\pow t - \zeta_j \pow t) + \p{1 - \frac{1}{n}} \blue{U\pow{t}}\\
        &\leq \eta^2 \E{t}{\norm{v\pow{t}}_2^2} + \frac{\eta}{n} \p{1 - \frac{1}{n}} \sbr{\mu^{-1} M^2\red{T\pow{t}} + \mu^{-1} G^2\yellow{n\|q\pow t-q^\star\|_2^2}} + \p{1 - \frac{1}{n}} \blue{U\pow{t}}\\
        &= \eta^2 \E{t}{\norm{v\pow{t}}_2^2} + \p{1 - \frac{1}{n}} \frac{\eta M^2}{\mu n}\red{T\pow{t}} + \p{1 - \frac{1}{n}}\frac{G^2}{2n\mu}\yellow{2n\eta\|q\pow t-q^\star\|_2^2} + \p{1 - \frac{1}{n}} \blue{U\pow{t}}\\
        &= \eta^2 \E{t}{\norm{v\pow{t}}_2^2} + \p{1 - \frac{1}{n}} \frac{\eta M^2}{\mu n}\red{T\pow{t}} + \p{1 - \frac{1}{n}}\frac{G^2}{2n\mu \bar{\nu}}\yellow{R\pow{t}} + \p{1 - \frac{1}{n}} \blue{U\pow{t}}\\
        &\leq \eta^2 (1 + \beta_2) \purple{Q\pow t} + \eta^2(1 + \beta_2^{-1})\green{S\pow t}\\
        &+ \frac{\eta M^2}{\mu n} \p{1 - \frac{1}{n}}\red{T\pow t} + \p{1 - \frac{1}{n}}\frac{G^2}{2\bar{\nu} \mu n} \yellow{R\pow t} + \p{1 - \frac{1}{n}} \blue{U\pow{t}},
    \end{align*}
    where the two last steps follow from \cref{lem:noise_bound_generic} and \cref{lem:smooth_weights} to claim 
    $\|q\pow t - q^\star\|_2^2 \leq \frac{1}{\bar \nu} (q\pow t-q^\star) (l\pow t-l^\star)$.
\end{proof}

Finally, we consider $\yellow{R \pow t = 2\eta n (q \pow t - q^\star)^\top (l \pow t - l^\star)}$. This can be viewed as a measurement of orthogonality between the vectors $q \pow t - q^\star$ and $l \pow t - l^\star$, which in turn can be viewed as the directions to the optimal gradient and optimal solution of a constrained optimization problem. Indeed, we may define
\begin{align*}
    l^\star = \argmin_{l \in \mc{L}} \sbr{h(l) := \max_{q \in \Pcal(\sigma)} q^\top l - \bar{\nu}\Omega(q)},
\end{align*}
and
\begin{align*}
    \mc{L} = \br{l \in \R^n: l \geq \ell(w) \text{ for some } w \in \R^d},
\end{align*}
where the inequality is taken element-wise. The set $\mc{L}$ is a convexification of the set $\ell(\R^d)$ which shares a minimizer and has the same minimum value. Indeed, letting $\bar{l}$ be any minimizer of $h$, select $\bar{w}$ such that $\bar{l} = \ell(\bar{w})$. Define $\bar{q} = \grad h(\bar{l}) = \argmax_{q \in \Pcal(\sigma)} q^\top \bar{l} - \bar{\nu}\Omega(\bar{q})$, and due to the non-negativity of $\bar{q}$, we have that
\begin{align*}
    \min_{l \in \mc{L}} h(l) = h(\bar{l}) = \bar{q}^\top \bar{l} - \bar\nu\Omega(\bar{q}) \geq \bar{q}^\top \ell(\bar{w}) - \bar\nu\Omega(\bar{q}).
\end{align*}
Taking the maximum over $\bar{q}$ shows that $\min_{l \in \mc{L}} h(l) = h(\ell(\bar{w}))$. For convexity, for any $l_1, l_2 \in \mc{L}$ satisfying $l_1 \geq \ell(w_1)$ and $l_2 \geq \ell(w_2)$, and any $\theta \in (0, 1)$, apply the following inequalities element-wise:
\begin{align*}
    \theta l_1 + (1- \theta) l_2 \geq \theta \ell(w_1) + (1-\theta) \ell(w_2) \geq \ell(\theta w_1 + (1-\theta) w_2),
\end{align*}
with $\theta w_1 + (1-\theta) w_2 \in \R^d$. By convexity, $(q \pow t - q^\star)^\top (l \pow t - l^\star) \geq 0$. Finally, this term is of particular importance because the term $-(q - q^\star)^\top (\ell(w) - \ell(w^\star))$ that appears in the original descent lemma \Cref{lem:descent_generic} can likely be used for cancellation in this case. The next result describes its evolution.
\begin{lemma}\label{lem:Rt}
    For any $\beta_3 > 0$, it holds that
    \begin{align*}
        \E{t}{\yellow{R\pow{t+1}}} &\leq 2\eta(q\pow t - q^\star)^\top(\ell(w\pow{t}) - l^\star) + \p{1 - \frac{1}{n}} \yellow{R\pow{t}}\\
        &+\frac{\eta G^2 n}{2\bar{\nu}} \beta_3^{-1} \red{T\pow{t}} + \frac{2\eta G^2n}{\bar{\nu}} (1 + \beta_3) \blue{U\pow{t}}.
    \end{align*}
\end{lemma}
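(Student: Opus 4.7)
\textbf{Proof proposal for \Cref{lem:Rt}.}
The plan is to first isolate the leading terms of the bound by decomposing $q^{(t+1)} - q^\star = (q^{(t)} - q^\star) + (q^{(t+1)} - q^{(t)})$ inside the defining expression of $R^{(t+1)}/(2\eta n) = (q^{(t+1)} - q^\star)^\top (l^{(t+1)} - l^\star)$. The $(q^{(t)} - q^\star)$-piece depends on $j_t$ only through $l^{(t+1)}$. Since $l^{(t+1)}$ agrees with $l^{(t)}$ except in its $j_t$-th coordinate, which is replaced by $\ell_{j_t}(w^{(t)})$, taking the conditional expectation gives $\mathbb{E}_{j_t}[l^{(t+1)}] = \tfrac{1}{n}\ell(w^{(t)}) + (1-\tfrac{1}{n})l^{(t)}$. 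This immediately yields
\begin{align*}
  2\eta n\, \mathbb{E}_t[(q^{(t)} - q^\star)^\top(l^{(t+1)} - l^\star)] = 2\eta (q^{(t)} - q^\star)^\top(\ell(w^{(t)}) - l^\star) + \bigl(1 - \tfrac{1}{n}\bigr) R^{(t)},
\end{align*}
which accounts for the first two terms of the claimed bound.

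The remainder $2\eta n\, \mathbb{E}_t[(q^{(t+1)} - q^{(t)})^\top(l^{(t+1)} - l^\star)]$ is the part that must be absorbed into the table terms $T^{(t)}$ and $U^{(t)}$. The key observation is that $l^{(t+1)} - l^{(t)}$ is nonzero only at coordinate $j_t$, with value $\ell_{j_t}(w^{(t)}) - \ell_{j_t}(\zeta^{(t)}_{j_t})$, which is controlled by $G\|w^{(t)} - \zeta^{(t)}_{j_t}\|_2$ through the $G$-Lipschitzness of $\ell_{j_t}$. Combined with the fact that $q^{(t)} \mapsto q$ is $1/\bar\nu$-Lipschitz in $l$ (since $\bar\nu\Omega$ is $\bar\nu$-strongly convex; see \Cref{prop:primobjgradient}), this gives the crucial control $\|q^{(t+1)} - q^{(t)}\|_2 \le \bar\nu^{-1} G\|w^{(t)} - \zeta^{(t)}_{j_t}\|_2$. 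I would then split $l^{(t+1)} - l^\star = (l^{(t+1)} - l^{(t)}) + (l^{(t)} - l^\star)$ and bound the two pieces separately via Cauchy--Schwarz: the first piece produces only $U^{(t)}$-type contributions ($\le \bar\nu^{-1}G^2\|w^{(t)} - \zeta^{(t)}_{j_t}\|_2^2$), while for the second piece I use the $G$-Lipschitz bound $\|l^{(t)} - l^\star\|_2 \le G\sqrt{T^{(t)}}$ together with Young's inequality (with parameter $c = 1/(2\beta_3)$, tuned to match the target coefficients). Taking $\mathbb{E}_{j_t}$ turns $\|w^{(t)} - \zeta^{(t)}_{j_t}\|_2^2$ into $U^{(t)}$, and assembling the pieces with the factor $2\eta n$ yields exactly $\tfrac{2\eta G^2 n}{\bar\nu}(1+\beta_3)U^{(t)} + \tfrac{\eta G^2 n}{2\bar\nu}\beta_3^{-1} T^{(t)}$.

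The main subtlety—what I expect to be the step one would get wrong on a first try—is the choice of decomposition $l^{(t+1)} - l^\star = (l^{(t+1)} - l^{(t)}) + (l^{(t)} - l^\star)$. A naive direct bound on $\|l^{(t+1)} - l^\star\|_2^2$ would expose a $\|w^{(t)} - w^\star\|_2^2$ term (through the $j_t$-th coordinate $\ell_{j_t}(w^{(t)}) - \ell_{j_t}(w^\star)$), which does not appear on the right-hand side of the target bound and would spoil the Lyapunov analysis. Routing through $l^{(t)} - l^\star$ instead keeps the error in lagged (table) quantities $T^{(t)}$ and $U^{(t)}$, which is essential for the overall contraction argument. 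The remainder is routine: matching Young's constants to the stated $\beta_3$, and collecting factors of $n$ from the $2\eta n$ prefactor and from $\mathbb{E}_{j_t}$.
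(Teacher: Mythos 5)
Your proposal is correct and follows essentially the same route as the paper's proof: the same three-way decomposition of $(q^{(t+1)}-q^\star)^\top(l^{(t+1)}-l^\star)$, the same conditional-expectation identity $\E{t}{l^{(t+1)}}=\tfrac1n\ell(w^{(t)})+(1-\tfrac1n)l^{(t)}$, the $1/\bar\nu$-Lipschitzness of $\q$ to control $q^{(t+1)}-q^{(t)}$ by the single changed loss coordinate, and Young's inequality on the cross term with $l^{(t)}-l^\star$. The only cosmetic point is that your stated Young parameter is off by a factor of $\bar\nu$ (the paper effectively substitutes $\beta_3\mapsto 2\bar\nu\beta_3$ at the end), which your "tuned to match the target coefficients" remark already covers.
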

\begin{proof}
    First decompose
    \begin{align}
        (q \pow {t+1} - q^\star)^\top (l \pow {t+1} - l^\star) & = 
        (q \pow t - q^\star)^\top (l \pow {t+1} - l^\star) 
        + (q \pow {t+1} - q \pow t)^\top (l \pow {t+1} - l \pow t) \\
        & \quad  
        + (q \pow {t+1} - q \pow t)^\top (l \pow t - l^\star).
        \label{eq:decompose}
    \end{align}
    Because $q\pow{t} = \q(l\pow{t})$ for all $t$, and $\q(\cdot)$ is the gradient of a convex and $(1/\bar{\nu})$-smooth function, we have for the second term of~\eqref{eq:decompose} that
    \[
        (q \pow {t+1} - q \pow t)^\top (l \pow {t+1} - l \pow t) 
        \leq \frac{1}{\bar \nu} \|l \pow {t+1} - l \pow t\|_2^2.
    \] 
    Next, using Young's inequality, that is, $a^\top b \leq
    \frac{\beta_3}{2} \|a\|_2^2 + \frac{\beta_3^{-1}}{2}\|b\|_2^2$ for any
    $\beta_3 >0$, we have for the third term term of~\eqref{eq:decompose} that
    \begin{align*}
        (q \pow {t+1} - q \pow t)^\top (l \pow t - l^\star) \leq
        \frac{\beta_3}{2} \|q \pow {t+1} - q \pow t\|_2^2 
        + \frac{\beta_3^{-1}}{2} \|l\pow t - l^\star\|_2^2 \\
        \leq \frac{\beta_3}{2 \bar \nu^2} \|l \pow {t+1} - l \pow t\|_2^2 
        + \frac{\beta_3^{-1}}{2} \|l\pow t - l^\star\|_2^2.
    \end{align*}
    Note that we have
    \[
      \E{t}{l\pow {t+1}} = \frac{1}{n} \ell(w \pow t) 
      + \left(1-\frac{1}{n}\right) l \pow t.
    \]
    Hence, we get, 
    \begin{align*}
        \frac{1}{2\eta n}\E{t}{\yellow{R \pow {t+1}}} 
        &=
        \frac{1}{n}(q \pow t - q^\star)^\top (l(w\pow{t}) - l^\star) + \p{1 - \frac{1}{n}} \yellow{(q \pow t - q^\star)^\top (l\pow{t} - l^\star)} \\
        &\quad + \E{t}{(q \pow {t+1} - q \pow t)^\top (l \pow {t+1} - l \pow t)} + \E{t}{(q \pow {t+1} - q \pow t)^\top (l \pow t - l^\star)}\\
        &\leq \frac{1}{n}(q \pow t - q^\star)^\top (l(w\pow{t}) - l^\star) + \p{1 - \frac{1}{n}} \yellow{(q \pow t - q^\star)^\top (l\pow{t} - l^\star)} \\
        &\quad + \p{\frac{1}{\bar\nu} + \frac{\beta_3}{2 \bar \nu^2}}\E{t}{\norm{l \pow {t+1} - l \pow t}_2^2}
        + \frac{\beta_3^{-1}}{2} \|l\pow t - l^\star\|_2^2\\
        &= \frac{1}{n}(q \pow t - q^\star)^\top (l(w\pow{t}) - l^\star) + \p{1 - \frac{1}{n}} \yellow{(q \pow t - q^\star)^\top (l\pow{t} - l^\star)} \\
        &\quad + \frac{1}{n\bar \nu}\p{1 + \frac{\beta_3}{2 \bar \nu}}\cdot 0 + \p{1 - \frac{1}{n}}\p{1 + \frac{\beta_3}{2 \bar \nu}}\norm{l \pow {t} - l \pow t}_2^2
        + \frac{\beta_3^{-1}}{2} \|l\pow t - l^\star\|_2^2\\
        & = \frac{1}{n}(q\pow t - q^\star)(\ell(w\pow t) - l^\star)
        + \left(1- \frac{1}{n}\right)\yellow{(q\pow t - q^\star)^\top(l \pow t - l^\star)} \\
        & \quad + \frac{1}{n\bar \nu}\left(1 + \frac{\beta_3}{2\bar \nu}\right)
        \sum_{j=1}^n(\ell_j(w \pow t) - \ell_j(\zeta_j))^2
        \\
        & \quad + \frac{\beta_3^{-1}}{2}\sum_{j=1}^n (\ell_j(\zeta_j) - \ell_j(w^\star))^2 \\
        & \leq \frac{1}{n}(q\pow t - q^\star)(\ell(w\pow t) - l^\star)
        + \left(1- \frac{1}{n}\right)\yellow{(q\pow t - q^\star)^\top(l \pow t - l^\star)} \\
        & \quad + \frac{G^2}{n\bar \nu}\left(1 + \frac{\beta_3}{2\bar \nu}\right)
        \blue{\sum_{j=1}^n\|w \pow t - \zeta_j\pow t\|_2^2}
        \\
        & \quad + \frac{G^2\beta_3^{-1}}{2}\red{\sum_{j=1}^{n}\|\zeta_j \pow t- w^\star\|_2^2}.
    \end{align*}
    Replacing $\beta_3$ by $2\bar \nu\beta_3$ gives the claim. 
\end{proof}

\subsection{Step 2: Bound the distance between the iterate and minimizer.}

Now that we have bounds on the evolution of the Lyapunov terms, we move to the main term. First, expand
\begin{align}
    \expect_t\|w\pow{t+1}-w^\star\|_2^2 = \|w\pow{t}-w^\star\|_2^2 \underbrace{- 2\eta \ip{\expect_{t}[v\pow{t}], w\pow{t} - w^\star}}_{\text{descent term}} + \underbrace{\eta^2\expect_t\|v \pow{t}\|_2^2}_{\text{noise term}}.
    \label{eq:expansion}
\end{align}

We use the following as a generic technical lemma to bound the descent term in~\eqref{eq:expansion}.
\begin{lemma}[Analysis of 1st order term]\label{lem:descent_generic} 
    Consider any $w \in \R^d$, $l \in \R^n$, and $\bar{q} \in \Pcal(\sigma)$. Define 
    \begin{align*}
        q &:= \q(l) = \argmax_{p \in \mc{P}(\sigma)} p^\top l - \bar\nu \Omega(p).
    \end{align*}
    For any $\beta_1 \in [0, 1]$,
    \begin{align*}
        -(\grad r(w)^\top q - \grad r(w^\star)^\top \bar{q})^\top (w - w^\star)
        &\leq -(q - \bar{q})^\top (\ell(w) - \ell(w^\star)) - \frac{\mu}{2} \norm{w - w^\star}_2^2 \\
        &- \frac{\beta_1}{4(M + \mu) \kappa_\sigma} \purple{\frac{1}{n}\sum_{i=1}^n \|nq_i \grad r_i(w) - nq^\star_i \grad r_i(w^\star)\|_2^2} + \frac{2\beta_1 G^2}{\bar\nu (M+\mu)\kappa_\sigma} \yellow{n(q - q^\star)^\top(l - l^\star)}.
    \end{align*}
\end{lemma}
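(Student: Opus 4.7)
The plan is to rewrite the LHS (call it $-\Delta$) as a regularization-driven strong-convexity piece plus a cross-coupling piece, use convexity of each $\ell_i$ to extract the $-(q-\bar{q})^\top(\ell(w)-\ell(w^\star))$ term, then strengthen via smooth-convex co-coercivity to produce the weighted-gradient term $Q$, and finally absorb residuals into the $n(q-q^\star)^\top(l-l^\star)$ term using the Lipschitz continuity of $l \mapsto \q(l)$ from \Cref{prop:primobjgradient}.

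First I would use $\grad r_i = \grad \ell_i + \mu w$ together with $\sum_i(q_i - \bar{q}_i) = 0$ to obtain the clean decomposition
\[
-\Delta \;=\; \mu\|w-w^\star\|_2^2 + \sum_{i=1}^n \bigl[q_i \grad \ell_i(w) - \bar{q}_i \grad \ell_i(w^\star)\bigr]^\top(w-w^\star).
\]
Applying the convexity inequalities $\grad \ell_i(w)^\top(w-w^\star) \geq \ell_i(w) - \ell_i(w^\star)$ weighted by $q_i \geq 0$ and $\grad \ell_i(w^\star)^\top(w-w^\star) \leq \ell_i(w) - \ell_i(w^\star)$ weighted by $\bar{q}_i \geq 0$ in the cross sum yields the baseline bound
\[
-\Delta \;\geq\; \mu\|w-w^\star\|_2^2 + (q-\bar{q})^\top(\ell(w) - \ell(w^\star)),
\]
which exceeds the lemma's claim by a slack of $(\mu/2)\|w-w^\star\|_2^2$ that will be spent on generating the $Q$-term.

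To do so, I would replace bare convexity by the smooth-convex strengthening $\grad\ell_i(w)^\top(w-w^\star) \geq \ell_i(w) - \ell_i(w^\star) + (1/(2L))\|\grad \ell_i(w) - \grad \ell_i(w^\star)\|_2^2$ (and its dual for $\grad \ell_i(w^\star)$), keeping a fraction $\beta_1\in[0,1]$ of the improvement. This produces an extra $(\beta_1/(2L))\sum_i q_i \|\grad \ell_i(w) - \grad \ell_i(w^\star)\|_2^2$. Three conversions then transform this into the $Q$-term: (i) pass from $\ell_i$ to $r_i$ gradients via $\|\grad \ell_i(w) - \grad \ell_i(w^\star)\|_2^2 \geq \tfrac{1}{2}\|\grad r_i(w) - \grad r_i(w^\star)\|_2^2 - \mu^2\|w-w^\star\|_2^2$; (ii) use $q_i \leq \sigma_n = \kappa_\sigma/n$ to replace $\sum_i q_i \|\cdot\|_2^2$ by $(n/\kappa_\sigma)\sum_i q_i^2 \|\cdot\|_2^2$; (iii) decompose $q_i \grad r_i(w) - q^\star_i \grad r_i(w^\star) = q_i[\grad r_i(w) - \grad r_i(w^\star)] + (q_i - q^\star_i)\grad r_i(w^\star)$ and use $\|a+b\|_2^2 \leq 2\|a\|_2^2 + 2\|b\|_2^2$ to bound $q_i^2 \|\grad r_i(w) - \grad r_i(w^\star)\|_2^2$ from below by $\tfrac{1}{2}\|q_i \grad r_i(w) - q^\star_i \grad r_i(w^\star)\|_2^2 - (q_i - q^\star_i)^2 \|\grad r_i(w^\star)\|_2^2$.

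The remaining residual $\sum_i(q_i-q^\star_i)^2\|\grad r_i(w^\star)\|_2^2$ is at most $O(G^2)\|q-q^\star\|_2^2$ by combining the Lipschitz bound $\|\grad \ell_i\|_2 \leq G$ with the optimality condition $\grad r(w^\star)^\top q^\star = 0$, which forces $\mu\|w^\star\|_2 \leq G$. The term $\|q-q^\star\|_2^2$ is in turn bounded by $(1/\bar{\nu})(q-q^\star)^\top(l-l^\star)$ from the co-coercive form of Lipschitz continuity of $l\mapsto \q(l)$ in \Cref{prop:primobjgradient}, delivering the final $R$-term on the RHS. The $\mu^2\|w-w^\star\|_2^2$ correction from step (i) is absorbed into the $(\mu/2)\|w-w^\star\|_2^2$ slack identified above (under the mild condition $\mu \leq L$). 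The main obstacle is nailing the exact coefficient $1/(4(M+\mu)\kappa_\sigma)$ in front of $Q$: the cleanest route replaces the $\ell_i$/smooth-convex step by Nesterov's combined strongly-convex-plus-smooth inequality $(\grad r_i(w) - \grad r_i(w^\star))^\top(w-w^\star) \geq \tfrac{\mu M}{\mu+M}\|w-w^\star\|_2^2 + \tfrac{1}{\mu+M}\|\grad r_i(w) - \grad r_i(w^\star)\|_2^2$ applied directly to $T_1 := \sum_i q_i(\grad r_i(w)-\grad r_i(w^\star))^\top(w-w^\star)$, which avoids the $\mu^2\|w-w^\star\|_2^2$ cross-term and produces exactly the $M+\mu$ denominator in the statement.
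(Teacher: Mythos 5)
Your overall route is the same as the paper's: extract the function-value difference by convexity, generate the weighted-gradient term by a smoothness/strong-convexity strengthening, pass from $q_i\|\cdot\|_2^2$ to $q_i^2\|\cdot\|_2^2$ via $q_i\le\sigma_n$, decompose $q_i\grad r_i(w)-q_i^\star\grad r_i(w^\star)$, bound $\|\grad r_i(w^\star)\|_2\le 2G$ using $\grad r(w^\star)^\top q^\star=0$, and finish with $\|q-q^\star\|_2^2\le\bar\nu^{-1}(q-q^\star)^\top(l-l^\star)$; your conversions (ii)--(iii) and the residual and co-coercivity bounds are exactly the paper's. (Set aside the sign slip: your ``clean decomposition'' and ``baseline bound'' should read $\Delta=\dots$ and $\Delta\ge\dots$, equivalently $-\Delta\le-\mu\|w-w^\star\|_2^2-(q-\bar q)^\top(\ell(w)-\ell(w^\star))$; your ``slack'' remark shows you intend the correct direction.)

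The genuine gap is the last step, which you flag yourself. Working with the $L$-smoothness of $\ell_i$ and converting to $\grad r_i$ afterward yields the coefficient $\beta_1/(8L\kappa_\sigma)$ on the $Q$-term rather than $\beta_1/(4(M+\mu)\kappa_\sigma)=\beta_1/(4(L+2\mu)\kappa_\sigma)$, which is strictly weaker whenever $\mu<L/2$, and it needs the extra hypothesis $\mu\le L$ to absorb the $\mu^2\|w-w^\star\|_2^2$ residue. Your proposed repair does not close this: Nesterov's inequality $(\grad r_i(w)-\grad r_i(w^\star))^\top(w-w^\star)\ge\tfrac{\mu M}{\mu+M}\|w-w^\star\|_2^2+\tfrac{1}{\mu+M}\|\grad r_i(w)-\grad r_i(w^\star)\|_2^2$ carries no function values, so once you peel off $T_1$ the remainder is $\sum_i(q_i-\bar q_i)\grad r_i(w^\star)^\top(w-w^\star)$, and since $q_i-\bar q_i$ has arbitrary sign you cannot convert it into $(q-\bar q)^\top(\ell(w)-\ell(w^\star))$ by convexity. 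The paper's device is the function-value version of that inequality (\Cref{lem:smooth_strg_cvx}): for a $\mu$-strongly convex, $M$-smooth $f$, one has $f(v)\ge f(w)+\grad f(w)^\top(v-w)+\tfrac{1}{2(M+\mu)}\|\grad f(w)-\grad f(v)\|_2^2+\tfrac{\mu}{4}\|w-v\|_2^2$. Applying it to $q_ir_i$ (which is $q_i\mu$-strongly convex and $q_iM$-smooth, so the quadratic coefficient is $\tfrac{1}{2q_i(M+\mu)}\ge\tfrac{1}{2\sigma_n(M+\mu)}$) at $(w,w^\star)$, to $\bar q_ir_i$ at $(w^\star,w)$, and summing delivers the function values, the $\tfrac{\mu}{2}\|w-w^\star\|_2^2$, and the exact $(M+\mu)\kappa_\sigma$ denominator in one stroke; this is the one ingredient your plan is missing.
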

\begin{proof}
    First, for any $q_i > 0$, we have that $w \mapsto q_i r_i(w)$ is $(q_i M)$-smooth and $(q_i \mu)$-strongly convex, so by applying standard convex inequalities (\cref{lem:smooth_strg_cvx}) we have that
    \begin{align*}
        q_ir_i(w^\star) &\geq q_i r_i(w) + q_i \grad r_i(w)^\top(w^\star - w)\\
        &+ \frac{1}{2\red{q_i}(M+\mu)} \norm{q_i \grad r_i(w) - q_i \grad r_i(w^\star)}_2^2 + \frac{q_i \mu}{4}\norm{w - w^\star}_2^2\\
        &\geq q_i r_i(w) + q_i \grad r_i(w)^\top(w^\star - w) \\
        &+ \frac{1}{2 \red{\sigma_n} (M+\mu)} \norm{q_i \grad r_i(w) - q_i \grad r_i(w^\star)}_2^2 + \frac{q_i \mu}{4}\norm{w - w^\star}_2^2
    \end{align*}
    as $q_i \leq \sigma_n$. The second inequality holds for $q_i = 0$ as well, so by summing the inequality over $i$ and using that $\sum_i q_i = 1$, we have that
    \begin{align*}
        q^\top r(w^\star) &\geq q^\top r(w) + q^\top \grad r(w) (w^\star - w) \\
        &+ \frac{1}{2 \sigma_n (M+\mu)} \sum_{i=1}^n \norm{q_i \grad r_i(w) - q_i \grad r_i(w^\star)}_2^2 + \frac{\mu}{4}\norm{w - w^\star}_2^2.
    \end{align*}
    Applying the same argument replacing $q$ by $\bar{q}$ and swapping $w$ and $w^\star$ yields
    \begin{align*}
        \bar{q}^\top r(w) &\geq \bar{q}^\top r(w^\star) + \bar{q}^\top \grad r(w^\star) (w - w^\star) \\
        &+ \frac{1}{2 \sigma_n (M+\mu)} \sum_{i=1}^n \norm{\bar{q}_i \grad r_i(w) - \bar{q}_i \grad r_i(w^\star)}_2^2 + \frac{\mu}{4}\norm{w - w^\star}_2^2.
    \end{align*}
    Summing the two inequalities yields
    \begin{align*}
        &-(q - \bar{q})^\top(r(w) - r(w^\star))\\ 
        &\geq -\p{\grad r(w)^\top q - \grad r(w^\star)^\top \bar{q}}^\top (w - w^\star) + \frac{\mu}{2}\norm{w - w^\star}_2^2\\
        &+ \frac{1}{2 \sigma_n (M+\mu)} \sbr{\sum_{i=1}^n \norm{q_i \grad r_i(w) - q_i \grad r_i(w^\star)}_2^2 + \sum_{i=1}^n \norm{\bar{q}_i \grad r_i(w) - \bar{q}_i \grad r_i(w^\star)}_2^2}.
    \end{align*}
    Dropping the $\sum_{i=1}^n \norm{\bar{q}_i \grad r_i(w) - \bar{q}_i \grad r_i(w^\star)}_2^2$ term and applying a weight of $\beta_1 \in [0, 1]$ to $\sum_{i=1}^n \norm{q_i \grad r_i(w) - q_i \grad r_i(w^\star)}_2^2$ still satisfies the inequality, which can equivalently be written as
    \begin{align*}
        -\p{\grad r(w)^\top q - \grad r(w^\star)^\top \bar{q}}^\top (w - w^\star) &\leq -(q - \bar{q})^\top(r(w) - r(w^\star)) - \frac{\mu}{2}\norm{w - w^\star}_2^2\\
        &- \frac{\beta_1}{2 \sigma_n (M+\mu)} \sum_{i=1}^n \norm{q_i \grad r_i(w) - q_i \grad r_i(w^\star)}_2^2. \numberthis \label{eq:descent:g1}
    \end{align*}
    Next, because
    \begin{align*}
        \norm{q_i \grad r_i(w) - q^\star_i \grad r_i(w^\star)}_2^2 &\leq 2 \norm{q_i \grad r_i(w) - q_i \grad r_i(w^\star)}_2^2 + 2(q_i - q^\star_i)^2 \norm{\grad r_i(w^\star)}_2^2,
    \end{align*}
    we have that (by summing over $i$) that
    \begin{align}
        -\sum_{i=1}^n \norm{q_i \grad r_i(w) - q_i \grad r_i(w^\star)}_2^2 \leq -\frac{1}{2} \sum_{i=1}^n \norm{q_i \grad r_i(w) - q^\star_i \grad r_i(w^\star)}_2^2 + 4G^2 \norm{q - q^\star}_2^2, \label{eq:descent:2}
    \end{align}
    where we used that each $\norm{\grad r_i(w^\star)}_2 \leq 2G$. To see this, use that $\grad r(w^\star)^\top q^\star = 0$ and $\grad r(w^\star) = \grad \ell(w^\star) + \mu w^\star$, so 
    \begin{align*}
        \norm{\grad r_i(w^\star)}_2 = \norm{\grad \ell_i(w^\star) + \mu w^\star}_2 = \norm{\textstyle \grad \ell_i(w^\star) - \sum_{j=1}^n q^\star_i \grad \ell_j(w^\star)}_2 \leq 2G.
    \end{align*}
    Because the map $\q$ is the gradient of a convex and $(1/\bar{\nu})$-smooth map, we also have that 
    \begin{align}
        \norm{q - q^\star}_2^2 = \norm{\q(l) - \q(\ell(w^\star))}_2^2 \leq \frac{1}{\bar\nu} (q - q^\star)^\top(l - \ell(w^\star)),
        \label{eqn:descent:nonnegative}
    \end{align}
    so we apply the above to \eqref{eq:descent:2} to achieve
    \begin{align*}
        &-\sum_{i=1}^n \norm{q_i \grad r_i(w) - q_i \grad r_i(w^\star)}_2^2\\
        &\leq -\frac{1}{2} \sum_{i=1}^n \norm{q_i \grad r_i(w) - q^\star_i \grad r_i(w^\star)}_2^2 + \frac{4G^2}{\bar\nu} (q - q^\star)^\top(l - \ell(w^\star)), \numberthis\label{eq:descent:3}
    \end{align*}
    We also use~\eqref{eqn:descent:nonnegative} to claim non-negativity of $(q - q^\star)^\top(l - \ell(w^\star))$.
    Finally, because $\sum_i q_i = \sum_i q^\star_i = 1$, we have that
    \begin{align*}
        (q - \bar{q})^\top (r(w) - r(w^\star)) &= (q - \bar{q})^\top \p{\ell(w) + \frac{\mu}{2}\norm{w}_2^2 \ones_n - \ell(w^\star) - \frac{\mu}{2}\norm{w^\star}_2^2\ones_n}\\
        &= (q - \bar{q})^\top \p{\ell(w) - \ell(w^\star)} + (q - \bar{q})^\top \ones_n \p{\norm{w}_2^2 - \norm{w^\star}_2^2}\\
        &= (q - \bar{q})^\top \p{\ell(w) - \ell(w^\star)}. \numberthis \label{eq:descent:4}
    \end{align*}
    Combine \eqref{eq:descent:g1}, \eqref{eq:descent:3}, and \eqref{eq:descent:4} along with $\kappa_\sigma = n\sigma_n$ to achieve the claim.
\end{proof}

Now, we move to analyzing the noise term term in~\eqref{eq:expansion}.
\begin{lemma}[Analysis of 2nd order term]\label{lem:noise_bound_generic}
    Consider the notations of~\cref{algo:lsaga_conceptual}, we have for any $\beta >0$,
    \begin{align*}
        \expect_{t}{\|v \pow t\|_2^2} 
        & \leq  (1+\beta)\expect_{t}{\|nq_{i_t} \pow t \nabla r_{i_t}(w \pow t) - nq_{i_t}^\star \nabla r_{i_t}(w^\star)\|_2^2} \\
        & \quad + (1+\beta^{-1})\expect_{t}{\|n\rho_{i_t}\pow t \nabla r_{i_t}(z_{i_t}\pow t) - nq_{i_t}^\star\nabla r_{i_t}(w^\star)\|_2^2}.
    \end{align*}
\end{lemma}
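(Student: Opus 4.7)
The strategy is the classical ``variance reduction via centering'' analysis: introduce a suitable anchor quantity $c_t := n q_{i_t}^\star \nabla r_{i_t}(w^\star)$ whose expectation over $i_t$ vanishes by first-order optimality, use it to decompose $v\pow{t}$ into a biased term and a mean-zero correction, then apply Young's inequality with parameter $\beta$. The key bookkeeping observation is that the control variate $\bar g\pow t = \sum_{i=1}^n \rho_i\pow t \nabla r_i(z_i\pow t)$ is, by construction, exactly $\expect_t[n\rho_{i_t}\pow t \nabla r_{i_t}(z_{i_t}\pow t)]$, so the ``variance reducer'' term $n\rho_{i_t}\pow t \nabla r_{i_t}(z_{i_t}\pow t) - \bar g\pow t$ is a mean-zero random vector conditional on $w\pow t$.

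First I would verify that $\expect_t[c_t] = \sum_{i=1}^n q_i^\star \nabla r_i(w^\star) = 0$, which follows from the optimality condition $\nabla ({q^\star}^\top r)(w^\star) = 0$ stated at the start of \Cref{sec:a:any_shift}. Next, I would write
\begin{align*}
v\pow t = \underbrace{\bigl(n q_{i_t}\pow t \nabla r_{i_t}(w\pow t) - c_t\bigr)}_{A_t} \;+\; \underbrace{\bigl(c_t - n\rho_{i_t}\pow t \nabla r_{i_t}(z_{i_t}\pow t) + \bar g\pow t\bigr)}_{B_t},
\end{align*}
which is merely an algebraic identity. By Young's inequality $\|u+v\|_2^2 \le (1+\beta)\|u\|_2^2 + (1+\beta^{-1})\|v\|_2^2$, we get $\expect_t\|v\pow t\|_2^2 \le (1+\beta)\expect_t\|A_t\|_2^2 + (1+\beta^{-1})\expect_t\|B_t\|_2^2$, and the $A_t$ contribution already matches the first term of the claim.

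For the $B_t$ term, note that $B_t = -[Z_t - \expect_t Z_t]$ where $Z_t := n\rho_{i_t}\pow t \nabla r_{i_t}(z_{i_t}\pow t) - c_t$ and $\expect_t Z_t = \bar g\pow t - 0 = \bar g\pow t$. Hence $B_t$ is the deviation of $Z_t$ from its conditional mean, and the standard variance inequality $\expect_t\|Z_t - \expect_t Z_t\|_2^2 \le \expect_t\|Z_t\|_2^2$ yields $\expect_t\|B_t\|_2^2 \le \expect_t\|n\rho_{i_t}\pow t \nabla r_{i_t}(z_{i_t}\pow t) - c_t\|_2^2$, which is exactly the second term of the claim.

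The main obstacle (really the only subtlety) is to notice that the centering quantity $c_t$ also depends on the random index $i_t$, so one cannot simply pull it outside the expectation; the argument works precisely because $\expect_t[c_t]$ vanishes by optimality, which is what lets us drop the $\|\expect_t[Z_t]\|_2^2$ cross-term without loss. Everything else is routine algebra and Young's inequality.
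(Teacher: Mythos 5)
Your proof is correct and follows essentially the same route as the paper's: both insert the anchor $nq_{i_t}^\star\nabla r_{i_t}(w^\star)$ (mean zero by optimality of $w^\star$), apply Young's inequality with parameter $\beta$, and use $\expect_t\|Z_t-\expect_t Z_t\|_2^2\le\expect_t\|Z_t\|_2^2$ on the control-variate piece. The paper merely carries a bit of extra bookkeeping (it keeps the negative terms $-\beta\|\nabla({q\pow t}^\top r)(w\pow t)\|_2^2$ and $-(1+\beta^{-1})\|\bar g\pow t\|_2^2$ before discarding them), which your version skips without loss.
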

\begin{proof}
    In the following, we use the identity $\Ex\|X- \Ex[X]\|_2^2 = \Ex\|X\|_2^2 - \|\Ex[X]\|_2^2$ in equations denoted with $(\star)$. We denote by $\beta$ an arbitrary  positive number stemming from using Young's inequality $\|a+ b\|_2^2 \leq (1+ \beta) \|a\|_2^2 + (1+\beta^{-1}) \|b\|_2^2$ in equation $(\circ)$.
    Noting that $\sum_{i=1}^n q_i^\star \grad r_i(w^\star) = 0$, we get,
\begin{align*}
    & \E{t}{\|v\pow t -  \nabla ({q ^*}^\top r)(w^\star) \|_2^2} \\
    & = \expect_t\Big[
    \|nq_{i_t} \pow t \nabla r_{i_t}(w \pow t) - nq_{i_t}^\star \nabla r_{i_t}(w^\star) \\
    & \hspace{30pt}
    + nq_{i_t}^\star\nabla r_{i_t}(w^\star) - n\rho_{i_t}\pow t \nabla r_{i_t}(z_{i_t}\pow t) 
    -(\nabla ({q^\star}^\top r)(w^\star) - \bar g\pow t)\|_2^2 \Big] \\
    & \stackrel{(\star)}{=}  \|\nabla ({q \pow t}^\top r)(w\pow t) - \nabla ({q^\star}^\top r)(w^\star)\|_2^2 \\
    & \quad + \expect_t\Big[
    \|nq_{i_t} \pow t \nabla r_{i_t}(w \pow t) - nq_{i_t}^\star \nabla r_{i_t}(w^\star) 
    - (\nabla ({q \pow t}^\top r)(w\pow t) - \nabla ({q^\star}^\top r)(w^\star)) \\
    & \hspace{40pt}
    + nq_{i_t}^\star\nabla r_{i_t}(w^\star) - n\rho_{i_t}\pow t \nabla r_{i_t}(z_{i_t}\pow t) 
    -(\nabla ({q^\star}^\top r)(w^\star) - \bar g\pow t)\|_2^2 \Big] \\
    & \stackrel{(\circ)}{\leq}  \|\nabla ({q \pow t}^\top r)(w\pow t) - \nabla ({q^\star}^\top r)(w^\star)\|_2^2 \\
    & \quad + (1+\beta)\E{t}{
    \|nq_{i_t} \pow t \nabla r_{i_t}(w \pow t) - nq_{i_t}^\star \nabla r_{i_t}(w^\star) 
    - (\nabla ({q \pow t}^\top r)(w\pow t) - \nabla ({q^\star}^\top r)(w^\star))\|_2^2} \\
    & \quad
    + (1+\beta^{-1})\E{t}{
    \|nq_{i_t}^\star\nabla r_{i_t}(w^\star) - n\rho_{i_t}\pow t \nabla r_{i_t}(z_{i_t}\pow t) 
    -(\nabla ({q^\star}^\top r)(w^\star) - \bar g\pow t)\|_2^2} \\
    & \stackrel{(\star)}{=} -\beta \|\nabla ({q \pow t}^\top r)(w\pow t) - \nabla ({q^\star}^\top r)(w^\star)\|_2^2 \\
    & \quad 
    + (1+\beta)\E{t}{\|nq_{i_t} \pow t \nabla r_{i_t}(w \pow t) - nq_{i_t}^\star \nabla r_{i_t}(w^\star)\|_2^2} \\
    & \quad + (1+\beta^{-1})\E{t}{[\|nq_{i_t}^\star\nabla r_{i_t}(w^\star) - n\rho_{i_t}\pow t \nabla r_{i_t}(z_{i_t}\pow t)\|_2^2} \\
    & \quad -(1+\beta^{-1}) \|\nabla ({q^\star}^\top r)(w^\star) - \bar g\pow t\|_2^2.
\end{align*}
\end{proof}

We then combine the analyses of the first and second order terms to yield the main result of this subsection.

\begin{lemma}[Analysis of main term]\label{lem:main_term}
    For any constants $\beta_1 \in [0, 1]$ and $\beta_2 > 0$, and any $\bar{q} \in \Pcal(\sigma)$, we have that
    \begin{align*}
        \expect_t\|w\pow{t+1}-w^\star\|_2^2 &\leq (1- \eta \mu)\|w\pow{t}-w^\star\|_2^2 \\
        &-2\eta(w\pow{t} - w^\star)^\top \grad r(w^\star)\bar{q}\\
        &-\eta \p{\frac{\beta_1}{2(M + \mu) \kappa_\sigma} - \eta(1 +\beta_2)} \purple{Q\pow{t}} + \eta^2(1+\beta_2^{-1})\green{S \pow{t}}\\
        &+ \frac{2\beta_1 G^2}{\bar\nu (M+\mu)\kappa_\sigma} \yellow{R\pow{t}} -2\eta(q\pow{t} - \bar{q})^\top (\ell(w) - \ell(w^\star)).\\
    \end{align*}
\end{lemma}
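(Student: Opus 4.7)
The plan is to start from the elementary expansion
\begin{align*}
\expect_t\|w\pow{t+1} - w^\star\|_2^2 = \|w\pow{t} - w^\star\|_2^2 - 2\eta\,\ip{\expect_t[v\pow{t}]}{w\pow{t} - w^\star} + \eta^2 \expect_t\|v\pow{t}\|_2^2,
\end{align*}
already written as (4) in the paper, and then separately control the first- and second-order stochastic terms using the two technical lemmas already established (\cref{lem:descent_generic} and \cref{lem:noise_bound_generic}).

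For the first-order term, I will first compute $\expect_t[v\pow{t}]$. By the construction of the control variate, $\bar g\pow{t} = \sum_{i=1}^n \rho_i\pow{t}\nabla r_i(z_i\pow{t})$, so taking expectation over $i_t \sim \Unif[n]$ yields $\expect_t[v\pow{t}] = \nabla r(w\pow{t})^\top q\pow{t}$, i.e., the stochastic estimator is conditionally unbiased for the true gradient-weight product at $w\pow{t}$. I then introduce the free vector $\bar q \in \Pcal(\sigma)$ by writing
\begin{align*}
-2\eta\,\ip{\nabla r(w\pow{t})^\top q\pow{t}}{w\pow{t} - w^\star}
&= -2\eta\,\ip{\nabla r(w\pow{t})^\top q\pow{t} - \nabla r(w^\star)^\top \bar q}{w\pow{t} - w^\star} \\
&\quad - 2\eta\,(w\pow{t} - w^\star)^\top \nabla r(w^\star)\bar q,
\end{align*}
and apply \cref{lem:descent_generic} with $w = w\pow{t}$, $l = l\pow{t}$, $q = q\pow{t}$, the same $\bar q$, and parameter $\beta_1 \in [0,1]$ to bound the first piece. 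Multiplying the resulting inequality by $2\eta$ produces four contributions: the strong-convexity term $-\eta\mu\|w\pow{t}-w^\star\|_2^2$ (which merges with the leading $\|w\pow{t}-w^\star\|_2^2$ to give the factor $(1-\eta\mu)$); the negative gradient-difference term $-\eta\beta_1/(2(M+\mu)\kappa_\sigma)\cdot \purple{Q\pow{t}}$; the cross term $-2\eta(q\pow{t}-\bar q)^\top(\ell(w\pow{t})-\ell(w^\star))$; and a positive contribution $4\eta\beta_1 G^2/(\bar\nu(M+\mu)\kappa_\sigma)\cdot n(q\pow{t}-q^\star)^\top(l\pow{t}-l^\star)$. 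Since $\yellow{R\pow{t}} = 2\eta n(q\pow{t}-q^\star)^\top(l\pow{t}-l^\star)$ by definition, this last term is exactly $2\beta_1 G^2/(\bar\nu(M+\mu)\kappa_\sigma)\cdot \yellow{R\pow{t}}$, which matches the coefficient in the claimed bound.

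For the second-order term, I invoke \cref{lem:noise_bound_generic} with Young parameter $\beta_2 > 0$, giving
\begin{align*}
\eta^2\,\expect_t\|v\pow{t}\|_2^2 \;\le\; \eta^2(1+\beta_2)\, \purple{Q\pow{t}} \;+\; \eta^2(1+\beta_2^{-1})\, \green{S\pow{t}},
\end{align*}
after identifying the two expectations in that lemma with $\purple{Q\pow{t}}$ and $\green{S\pow{t}}$ (using that $i_t$ is uniform on $[n]$ so expectation over $i_t$ reproduces the empirical average defining these quantities). Summing the bounds on the first- and second-order terms collapses the two $\purple{Q\pow{t}}$ contributions into the single coefficient $-\eta\bigl(\beta_1/(2(M+\mu)\kappa_\sigma) - \eta(1+\beta_2)\bigr)$, and the rest of the terms line up verbatim with the statement.

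The only substantive step is recognizing that the control variate renders $\expect_t[v\pow{t}] = \nabla r(w\pow{t})^\top q\pow{t}$, which is what lets me invoke \cref{lem:descent_generic} directly on a deterministic inner product; everything else is bookkeeping of constants and rewriting the inner product $n(q\pow{t}-q^\star)^\top(l\pow{t}-l^\star)$ in terms of $\yellow{R\pow{t}}$. I do not anticipate any real obstacle beyond careful tracking of these $\eta$-factors.
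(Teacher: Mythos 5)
Your proposal is correct and follows essentially the same route as the paper's proof: expand the squared distance, use unbiasedness of $v\pow{t}$ (the control variate cancels in expectation) to reduce the descent term to \cref{lem:descent_generic} with $l = l\pow{t}$, $q = q\pow{t}$, $w = w\pow{t}$, and bound the noise term with \cref{lem:noise_bound_generic}; your accounting of the factor converting $n(q\pow{t}-q^\star)^\top(l\pow{t}-l^\star)$ into $\yellow{R\pow{t}}$ is also right. No gaps.
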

\begin{proof}
    Recall the expansion given in~\eqref{eq:expansion}, which is:
    \begin{align}
        \expect_t\|w\pow{t+1}-w^\star\|_2^2 = \|w\pow{t}-w^\star\|_2^2 - 2\eta \ip{\expect_{t}[v\pow{t}], w\pow{t} - w^\star} + \eta^2\expect_t\|v \pow{t}\|_2^2.
    \end{align}
    Observe that
    \begin{align*}
        \expect_{t}[v\pow{t}] = \sum_{i=1}^n q\pow{t}_i \grad r(w\pow{t}) = \grad r(w\pow{t})^\top q\pow{t}
    \end{align*}
    By \Cref{lem:descent_generic} with $l = l\pow{t}$, $q = q\pow{t}$, $w = w\pow{t}$, and multiplying by $2\eta$, we have that
    \begin{align*}
    -2\eta(w\pow{t} - w^\star)^\top \grad r(w\pow{t})^\top q\pow{t} &\leq -2\eta(w\pow{t} - w^\star)^\top \grad r(w^\star)\bar{q} -2\eta(q\pow{t} - \bar{q})^\top (\ell(w\pow{t}) - \ell(w^\star))\\
    &- \mu \eta \norm{w\pow{t} - w^\star}_2^2 - \frac{\eta\beta_1}{2(M + \mu) \kappa_\sigma} Q\pow{t}\\
    &+ \frac{2\beta_1 G^2}{\bar\nu (M+\mu)\kappa_\sigma} \yellow{R\pow{t}}.
    \end{align*}
    The noise term is bounded by applying \cref{lem:noise_bound_generic}, so that for some $\beta_2 > 0$,
    \begin{align*}
        \eta^2 \expect_{t}{\|v \pow t\|_2^2}
        & \leq  \eta^2(1+\beta_2)\purple{Q\pow{t}} + \eta^2(1+\beta_2^{-1})\green{S \pow{t}}.
    \end{align*}
    Combine the two displays above to get the desired result.
\end{proof}

\subsection{Step 3: Tune constants and achieve final rate.}

Recall that our Lyapunov function is given by 
\begin{align*}
    V \pow t 
    & = \|w\pow t-w^\star\|_2^2 +  c_1 \green{S \pow t} + c_2 \red{T \pow t} + c_3 \blue{U \pow t} + c_4\yellow{R\pow t}.
\end{align*}
Recall in addition the definitions
\begin{align*}
    &\green{S \pow t =  \frac{1}{n}\sum_{i=1}^n{\|n\rho_{i}\pow t \nabla r_{i}(z_{i}\pow t) - nq_{i}^*\nabla r_{i}(w^\star)\|_2^2}}, \quad
    \red{T \pow t =  \sum_{i=1}^n \|\zeta_i\pow t - w^\star\|_2^2}, \\
    & \blue{U \pow t = \frac{1}{n}\sum_{j=1}^n  \|w\pow t -\zeta_j \pow t\|_2^2}, \quad
    \yellow{R \pow t = 2\eta n (q \pow t - q^\star)^\top (l \pow t - l^\star)}.
\end{align*}
We will derive a value $\tau > 0$ such that for all $t \geq 0$,
\[
    \E{t}{V\pow {t+1}} \leq (1-\tau^{-1})V\pow t.
\]
In order to describe our rates, we define the condition number $\kappa := M/\mu$ and recall that $\kappa_\sigma = n \sigma_n$.

\subsubsection{Step 3a: Analyze \emph{large} shift cost setting.}

The following gives the convergence rate for large shift cost.
\begin{theorem}
    Suppose the shift cost satisfies
    \begin{align*}
        \bar{\nu} \geq 8nG^2 / \mu.
    \end{align*}
    Then, the sequence of iterates produced by \Cref{algo:lsaga_conceptual} with $\eta = 1 / (12(\mu+M)\kappa_\sigma)$ achieves
    \[
        \expect\|w\pow{t} - w^\star\|_2^2 \le (1 + \sigma_n^{-1} + \sigma_n^{-2}) \exp(-t/\tau) \|w\pow{0} - w^\star\|_2^2 \,.
    \]
    with
    \begin{align*}
        \tau = 2\max\{n, 24\kappa_\sigma (\kappa + 1)\}.
    \end{align*}
    \label{thm:large}
\end{theorem}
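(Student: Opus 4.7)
The plan is to prove a one-step contraction for the simplified Lyapunov function
\[
V\pow t \;=\; \|w\pow t - w^\star\|_2^2 + c_1 S\pow t + c_2 T\pow t,
\]
i.e.\ to set $c_3 = c_4 = 0$, show $\E{t}{V\pow{t+1}} \le (1-\tau^{-1}) V\pow t$, iterate, and bound $V\pow 0$. Large shift cost is exactly what makes the $U\pow t$ and $R\pow t$ potentials unnecessary: both will be absorbed directly into the $T\pow t$ bookkeeping rather than tracked separately.

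The first step is to apply \Cref{lem:main_term} with $\bar q = q^\star$. The first-order correction vanishes by optimality ($\grad r(w^\star) q^\star = 0$), leaving the descent term $-\eta\Theta_1 Q\pow t$ with $\Theta_1 = \beta_1/(2(M+\mu)\kappa_\sigma) - \eta(1+\beta_2)$, a variance injection $\eta^2(1+\beta_2^{-1}) S\pow t$, and two residuals: $-2\eta(q\pow t - q^\star)^\top(\ell(w\pow t)-\ell(w^\star))$ and $\tfrac{2\beta_1 G^2}{\bar\nu(M+\mu)\kappa_\sigma} R\pow t$. The Lipschitz estimate $\|q\pow t - q^\star\|_2 \le (G/\bar\nu)\sqrt{T\pow t}$ from \Cref{eq:q-smoothing}, combined with the $G$-Lipschitzness of each $\ell_i$, absorbs both residuals into the main term and $T\pow t$: Cauchy--Schwarz plus a Young split with parameter $\gamma$ converts the first into $(\eta\mu/4)\|w\pow t - w^\star\|_2^2 + O(\eta G^4 n/(\bar\nu^2\mu))\,T\pow t$, and the crude bound $R\pow t \le (2\eta n G^2/\bar\nu)\,T\pow t$ sends the second into an analogous $T\pow t$ term. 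Under $\bar\nu \gtrsim nG^2/\mu$, the total $T\pow t$-residual $A$ is of order $\eta\mu/n$.

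Combining with the closed-form expressions of $\E{t}{S\pow{t+1}}$ and $\E{t}{T\pow{t+1}}$ from \Cref{lem:St_Tt}, the contraction reduces to four inequalities: (i) $c_1/n \le \eta\Theta_1$ kills $Q\pow t$; (ii) $\eta^2(1+\beta_2^{-1}) \le c_1(n^{-1} - \tau^{-1})$ contracts $S\pow t$; (iii) $A \le c_2(n^{-1} - \tau^{-1})$ contracts $T\pow t$; (iv) $c_2 + \tau^{-1} \le \eta\mu/2$ contracts the main term. Taking $\beta_1 = \beta_2 = 1$ and $\eta = 1/(12(M+\mu)\kappa_\sigma)$ makes $c_1 = 4n\eta^2$ tight for (i)--(ii), and $\tau = 2\max\{n,24\kappa_\sigma(\kappa+1)\}$ forces $\tau^{-1} \le \min\{1/(2n), \eta\mu/4\}$, hence $n^{-1}-\tau^{-1} \ge 1/(2n)$. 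The Young parameter $\gamma$ is then chosen so that the $\|w\pow t-w^\star\|_2^2$-residual uses exactly half the budget in (iv), leaving room for $c_2$ of order $\eta\mu$; feasibility of (iii) is precisely the hypothesis $\bar\nu \gtrsim nG^2/\mu$.

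Iterating gives $\E V\pow t \le e^{-t/\tau} V\pow 0$, so it remains to bound $V\pow 0$ in terms of $\|w\pow 0-w^\star\|_2^2$. Immediately $T\pow 0 = n\|w\pow 0-w^\star\|_2^2$ contributes $c_2 n = O(\sigma_n^{-1})$ to the prefactor. For $S\pow 0$, the split
\[
\|nq_i\pow 0 \grad r_i(w\pow 0) - nq_i^\star \grad r_i(w^\star)\|_2^2 \le 2(nq_i\pow 0)^2 M^2 \|w\pow 0 - w^\star\|_2^2 + 8n^2(q_i\pow 0 - q_i^\star)^2 G^2,
\]
combined with $nq_i\pow 0 \le \kappa_\sigma$, $\|\grad r_i(w^\star)\|_2 \le 2G$, and $\|q\pow 0 - q^\star\|_2^2 \le (nG^2/\bar\nu^2)\|w\pow 0-w^\star\|_2^2$, gives $S\pow 0 = O((\kappa_\sigma^2 M^2 + n^2 G^4/\bar\nu^2)\|w\pow 0 - w^\star\|_2^2)$; multiplying by $c_1 = O(n/(\kappa_\sigma^2 (M+\mu)^2))$ cancels $\kappa_\sigma^2 M^2$ and produces contributions of order $\sigma_n^{-1}$ and $\sigma_n^{-2}$, matching the claimed prefactor. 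The main obstacle is the joint tuning in the third paragraph: (i)--(iv) must all be simultaneously feasible, and it is the delicate balance between the Young parameter, the coefficient $c_2$, and $\tau^{-1}$ that converts the hypothesis on $\bar\nu$ into the exact contraction rate while keeping the $V\pow 0$ prefactor at the stated order.
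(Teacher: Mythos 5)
Your proposal follows essentially the same route as the paper's proof of \Cref{thm:large}: drop $U\pow t$ and $R\pow t$ from the Lyapunov function, bound $\|q\pow t - q^\star\|_2$ through the loss table via \Cref{eq:q-smoothing} so the leftover residuals are absorbed into $\red{T\pow t}$ and the main term by a Young split, and then tune $c_1, c_2, \eta, \tau$ exactly as you describe. The only structural difference is the comparator in \Cref{lem:main_term}: the paper takes $\bar q = q\pow t$ (killing the $(q\pow t - \bar q)^\top(\ell(w)-\ell(w^\star))$ term and Young-splitting $(w\pow t - w^\star)^\top \grad r(w^\star)(q\pow t - q^\star)$), whereas you take $\bar q = q^\star$ (killing the gradient term by optimality and Cauchy--Schwarz/Young-splitting the loss-difference term). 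Both residuals are controlled by the same quantities $\frac{G}{\bar\nu}\sqrt{T\pow t}$ and $\sqrt{n}G\|w\pow t - w^\star\|_2$, so the two choices are interchangeable here and yield the same condition on $\bar\nu$ and the same rate.

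One bookkeeping slip: in bounding $S\pow 0$ you use the coordinatewise bound $nq_i\pow 0 \le \kappa_\sigma$, which gives $\frac{2}{n}\sum_i (nq_i\pow 0)^2 M^2 \le 2\kappa_\sigma^2 M^2$ and hence $c_1 S\pow 0 = O(n)\,\|w\pow 0 - w^\star\|_2^2$ after multiplying by $c_1 = O\bigl(n/(\kappa_\sigma^2(M+\mu)^2)\bigr)$. Since $\sigma_n^{-1} \le n$ (with a gap whenever $\kappa_\sigma > 1$), an $O(n)$ contribution is not dominated by $\sigma_n^{-1} + \sigma_n^{-2}$ for skewed spectra, so the claimed prefactor does not follow from your accounting as written. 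The fix is the one the paper uses: bound $\sum_i (q_i\pow 0)^2 \le \|\sigma\|_2^2 \le 1$ (or $\le \sigma_n$), which gives $\frac{2}{n}\sum_i(nq_i\pow 0)^2 M^2 \le 2nM^2$ and hence $c_1 S\pow 0 = O(n^2/\kappa_\sigma^2) = O(\sigma_n^{-2})$, recovering the stated prefactor. Everything else, including the feasibility of your conditions (i)--(iv) under $\bar\nu \ge 8nG^2/\mu$, checks out.
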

\begin{proof}
    First, invoke \Cref{lem:main_term} with $q' = q\pow{t}$ and $\beta_1 = 1$ to obtain
    \begin{align}
        \expect_t\|w\pow{t+1}-w^\star\|_2^2 
        &\leq (1- \eta \mu)\|w\pow{t}-w^\star\|_2^2 \\
        &-2\eta(w\pow{t} - w^\star)^\top \grad r(w^\star)q\pow{t} + \frac{2 G^2}{\bar\nu (M+\mu)\kappa_\sigma} \yellow{R\pow{t}}\label{line:youngs}\\
        &-\eta \p{\frac{1}{2(M + \mu) \kappa_\sigma} - \eta(1 +\beta_2)} \purple{Q\pow{t}} + \eta^2(1+\beta_2^{-1})\green{S \pow{t}}.
    \end{align}
    We will first bound~\eqref{line:youngs}, by using that $\grad r(w^\star)q^\star = 0$ and Young's inequality with parameter $a > 0$ to write
    \begin{align*}
        \abs{(w\pow{t} - w^\star)^\top \grad r(w^\star)q\pow{t}} &= \abs{(w\pow{t} - w^\star)^\top \grad r(w^\star)(q\pow{t} - q^\star)}\\
        &\leq \frac{a}{2} \norm{\grad r(w^\star)^\top (q\pow{t} - q^\star)}_2^2 + \frac{1}{2a} \norm{w\pow{t} - w^\star}_2^2\\
        &\leq \frac{aG^2\gamma_*^2}{2\bar{\nu}^2} \red{T\pow{t}} + \frac{1}{2a} \norm{w\pow{t} - w^\star}_2^2,
    \end{align*}
    where we used in the second inequality that:
    \begin{align*}
        \norm{\grad r(w^\star)^\top (q\pow{t} - q^\star)}_2^2 &= \norm{\grad \ell(w^\star)^\top (q\pow{t} - q^\star)}_2^2 \leq \gamma_*^2 \norm{q\pow{t} - q^\star}_2^2 \leq \frac{\gamma_*^2}{\bar{\nu}^2} \norm{l\pow{t} - l^\star}_2^2 \\
        &\leq \frac{G^2\gamma_*^2}{\bar{\nu}^2} \sum_{i=1}^n \|\zeta_i\pow t - w^\star\|_2^2 = \frac{G^2\gamma_*^2}{\bar{\nu}^2} \red{T\pow{t}}.
    \end{align*}
    We also have by Cauchy-Schwartz and Lipschitz continuity that
    \begin{align*}
        \yellow{R\pow{t}} = 2\eta n (q \pow t - q^\star)^\top (l \pow t - l^\star) \leq \frac{2\eta n}{\bar{\nu}} \norm{l \pow t - l^\star}_2^2 \leq \frac{2\eta n G^2}{\bar{\nu}} \red{T\pow{t}}.
    \end{align*}
    Combining the above displays yields
    \begin{align*}
        &-2\eta(w\pow{t} - w^\star)^\top \grad r(w^\star)q\pow{t} + \frac{2 G^2}{\bar\nu (M+\mu)\kappa_\sigma} \yellow{R\pow{t}} \\
        &\leq \frac{\eta G^2}{\bar{\nu}^2}\sbr{a\gamma_*^2 + \frac{4 n G^2}{(M+\mu)\kappa_\sigma}} \red{T\pow{t}} + \frac{\eta}{a} \norm{w\pow{t} - w^\star}_2^2.
    \end{align*}
    We take $\beta_2 = 2$, $c_3 = c_4 = 0$, and apply \Cref{lem:St_Tt} to achieve
    \begin{align*}
        \E{t}{V\pow{t + 1}} - (1 - \tau^{-1}) V\pow{t}\leq &\sbr{\tau^{-1} -\eta\mu + \eta a^{-1} + c_2}\|w\pow{t}-w^\star\|_2^2 \\
        &+ \sbr{\tau^{-1} +\frac{3\eta^2}{2c_1} - \frac{1}{n}}c_1\green{S \pow{t}}\\
        &+\sbr{\tau^{-1} + \frac{\eta G^2}{\bar{\nu}^2c_2}\p{a\gamma_*^2 + \frac{4 n G^2}{(M+\mu)\kappa_\sigma}} - \frac{1}{n}}c_2\red{T\pow{t}}\\
        &+\sbr{-\frac{\eta}{2(M + \mu) \kappa_\sigma} + 3\eta^2 + \frac{c_1}{n}} \purple{Q\pow{t}},
    \end{align*}
    where $\tau > 0$ is a to-be-specified rate constant. We now need to set the various free parameters $a$, $c_1$, $c_2$, and $\eta$ to make each of the squared bracketed terms be non-positive. We enforce $\tau \geq 2n$ throughout. By setting
    \begin{align*}
        \eta = \frac{1}{12(\mu + M)\kappa_\sigma} \text{ and } c_1 = \frac{n\eta}{4(\mu + M)\kappa_\sigma},
    \end{align*}
    we have that the bracketed constants before $c_1\green{S \pow{t}}$ and $\purple{Q\pow{t}}$ vanish. Then, setting
    \begin{align*}
        a^{-1} = \frac{\mu}{2} \text{ and } c_2 = \frac{1}{48(\kappa + 1)\kappa_\sigma}
    \end{align*}
    make the bracketed constant before $\|w\pow{t}-w^\star\|_2^2$, assuming that we enforce
    \begin{align*}
        \tau \geq 48(\kappa+1)\kappa_\sigma.
    \end{align*}
    We turn to the final constant after substituting the values of $a$, $c_2$, and $\eta$. We need that
    \begin{align*}
        \frac{\eta G^2}{\bar{\nu}^2c_2}\p{a\gamma_*^2 + \frac{8 n G^2}{(M+\mu)\kappa_\sigma}} &= \frac{8G^2}{\bar{\nu}^2\mu^2}\p{\gamma_*^2 + \frac{2 n G^2}{(\kappa+1)\kappa_\sigma}} \leq \frac{1}{2n},
    \end{align*}
    which occurs when
    \begin{align*}
        \bar{\nu}^2 \geq \frac{16nG^2}{\mu^2}\sbr{\gamma_*^2 + \frac{2nG^2}{(\kappa+1)\kappa_\sigma}}.
    \end{align*}
    Because $\gamma_*^2 \leq nG^2 \leq 2nG^2$, this is achieved when
    \begin{align*}
        \nu \geq \frac{8nG^2}{\mu},
    \end{align*}
    completing the proof of the claim 
    \begin{align*}
        \E{t}{V\pow{t+1}} \leq (1 - \tau^{-1})V\pow{t}.
    \end{align*}
    To complete the proof, we bound the initial terms. Because $c_3 = c_4 = 0$, we need only to bound $S\pow{0}$ and $T\pow{0}$.
    \begin{align*}
        S \pow 0 &=  \frac{1}{n}\sum_{i=1}^n{\|n\rho_{i}\pow{0} \nabla r_{i}(z_{i}\pow 0) - nq_{i}^\star\nabla r_{i}(w^\star)\|_2^2} \\
        &=  \frac{1}{n}\sum_{i=1}^n{\|nq_{i}\pow{0} \nabla r_{i}(w\pow 0) - nq_{i}^*\nabla r_{i}(w^\star)\|_2^2} \\
        &\leq \frac{2}{n} \sum_{i=1}^n \|nq_{i}\pow{0} \nabla (r_{i}(w\pow 0) - \nabla r_{i}(w^\star))\|_2^2 + \frac{2}{n} \sum_{i=1}^n \|n(q_{i}\pow{0} - q_i^\star) \nabla r_{i}(w^\star)\|_2^2\\
        &\leq 2n \sum_{i=1}^n (q_{i}\pow{0})^2 M^2\|w\pow 0 - w^\star\|_2^2 + 8nG^2 \|q\pow{0} - q^\star\|_2^2\\
        &\leq \sbr{2n \norm{\sigma}_2^2 M^2 + \frac{8n^2G^4}{\bar{\nu}^2}} \|w\pow 0 - w^\star\|_2^2 \\
        &\leq \sbr{2n\norm{\sigma}_2^2 M^2 + \mu^2 / 8} \|w\pow 0 - w^\star\|_2^2 \leq 3nM^2\|w\pow 0 - w^\star\|_2^2.
    \end{align*}
    This means ultimately that
    \begin{align*}
        c_1 S \pow 0 \leq \frac{n^2}{16(1+\kappa^{-1})^2\kappa_\sigma^2} \|w\pow 0 - w^\star\|_2^2.
    \end{align*}
    Next, we have
    \begin{align*}
        c_2 T\pow{0} = \frac{n}{48(\kappa+1)\kappa_\sigma} \norm{w\pow{0} - w^\star}_2^2.
    \end{align*}
    Thus, we can write
    \begin{align*}
        V\pow 0 &\leq \sbr{1 + \frac{n^2}{16(1+\kappa^{-1})^2\kappa_\sigma^2} + \frac{n}{48(\kappa+1)\kappa_\sigma}} \norm{w\pow{0} - w^\star}_2^2\\
        &\leq (1 + \sigma_n^{-1} + \sigma_n^{-2}) \norm{w\pow{0} - w^\star}_2^2,
    \end{align*}
    completing the proof.
\end{proof}

\subsubsection{Step 3b: Analyze \emph{small} shift cost setting.}

To describe the rate, define $\delta := nG^2/(\mu \bar{\nu})$. The quantity $\delta$ captures the effect of the primal regularizer $\mu$ and dual regularizer $\bar{\nu}$ as compared to the inherent continuity properties of the underlying losses. 
\begin{theorem}\label{thm:lyapunov2} 
    Assume that $n \geq 2$ and that the shift cost $\bar{\nu} \leq 8nG^2/\mu$. 
    The sequence of iterates produced by \Cref{algo:lsaga_conceptual} with 
    \begin{align*}
        \eta &= \frac{1}{16n\mu}\min\br{\frac{1}{6[8\delta + (\kappa+1)\kappa_\sigma]}, \frac{1}{4\delta^2\max\br{2n\kappa^2, \delta}}}
    \end{align*}
    we have 
    \[
    \E{t}{V\pow {t+1}} \leq (1-\tau^{-1})V\pow t,  
    \]
    \begin{align*}
        \expect_t \norm{w\pow{t} - w^\star}_2^2 \leq \p{5 + 16\delta + \frac{6\kappa^2}{\sigma_n}}\exp\p{-t/\tau} \norm{w\pow{0} - w^\star}_2^2
    \end{align*}
    for
    \begin{align*}
        \tau = 32n \max\br{6[8\delta + (\kappa+1)\kappa_\sigma], 4\delta^2\max\br{2n\kappa^2, \delta}, 1/16}.
    \end{align*}
    \label{thm:small}
\end{theorem}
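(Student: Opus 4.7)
The plan is to retain all four components of the Lyapunov function $V\pow{t}$, unlike the proof of \Cref{thm:large} which set $c_3 = c_4 = 0$. The central trick is to invoke \Cref{lem:main_term} with the reference $\bar q = q^\star$ (rather than $q\pow{t}$): this both kills the inner product $(w\pow{t}-w^\star)^\top \grad r(w^\star)q^\star = 0$ and produces the residual $-2\eta (q\pow{t}-q^\star)^\top (\ell(w\pow{t})-\ell(w^\star))$, which is precisely the leading positive term in the $\yellow{R\pow{t+1}}$-recursion of \Cref{lem:Rt}. Setting $c_4 = 1$ exactly cancels this unsigned quantity, which would otherwise be delicate to control since $q\pow{t}=\q(l\pow{t})$ rather than $\q(\ell(w\pow{t}))$, so it is not naturally signed against $\ell(w\pow{t})-\ell(w^\star)$.

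\textbf{Tuning.} Combining \Cref{lem:main_term,lem:St_Tt,lem:Ut,lem:Rt} weighted by $1, c_1, c_2, c_3, 1$ produces a master inequality of the form
\[
\E{t}{V\pow{t+1}} - (1-\tau^{-1})V\pow{t} \le A \|w\pow{t}-w^\star\|_2^2 + B\green{S\pow{t}} + C\red{T\pow{t}} + D\blue{U\pow{t}} + E\yellow{R\pow{t}} + F\purple{Q\pow{t}},
\]
where $A,\ldots,F$ are closed-form functions of $(c_1, c_2, c_3, \eta, \beta_1, \beta_2, \beta_3, \tau)$. I would fix $\beta_1=1$ and $\beta_2=2$ as in \Cref{thm:large}, require $\tau \ge n$ (so the inherent $(1{-}1/n)$ self-contractions have slack), and tune the remaining constants from the outside in: the $\blue{U}$-constraint, driven by the $\tfrac{2\eta G^2 n}{\bar\nu}(1+\beta_3)\blue{U\pow{t}}$ contribution inside \Cref{lem:Rt}, forces $c_3 \asymp \eta n\delta/\mu$ after choosing $\beta_3 = \Theta(1)$ to balance the $\red{T}$- and $\blue{U}$-terms of that same lemma; the $\red{T}$-constraint, fed by both $U$- and $R$-evolutions, then pins $c_2$; $c_1 \asymp \eta/(M\kappa_\sigma)$ comes from the $\green{S}$-constraint; the $\yellow{R}$- and $\purple{Q}$-constraints dictate $\eta$, the latter being binding because $c_3$ inflates the noise amplification of \Cref{lem:Ut}; and matching $\eta\mu$ against $c_2 + \tau^{-1}$ in the $A$-equation determines the rate $\tau$.

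\textbf{Obstacle and closing.} The main difficulty is arithmetic bookkeeping: all six inequalities couple through the shift-cost scaling $\delta = nG^2/(\mu\bar\nu)$, which is large in the small-shift regime. The binding tension is that $c_3$ appears both as a self-decay on the $\blue{U}$-track and as an amplifier of the noise term on the $\purple{Q}$-track, so $\eta$ must be simultaneously small against $\delta^2$ (coming from the iterated $U$--$R$ coupling that inflates $c_3$) and against $(\kappa+1)\kappa_\sigma$ (the usual conditioning factor). Verifying that the stated stepsize meets all six non-positivity constraints reduces to tracking how $\delta$ and $(\kappa+1)\kappa_\sigma$ enter the two branches of the $\min$ separately. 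Once the contraction $\E{t}{V\pow{t+1}} \le (1-\tau^{-1})V\pow{t}$ is established, unrolling gives $\expect V\pow{t} \le \exp(-t/\tau)\, V\pow{0}$, and the prefactor follows by bounding $V\pow{0}$ using $\blue{U\pow{0}} = 0$ (since $\zeta_i\pow{0} = w\pow{0}$), the same estimates for $c_1\green{S\pow{0}}$ and $c_2\red{T\pow{0}}$ as in the proof of \Cref{thm:large}, and Cauchy--Schwarz plus the Lipschitz continuity of $\q$ from \Cref{prop:primobjgradient} applied to $\yellow{R\pow{0}}$; these assemble into the stated $5 + 16\delta + 6\kappa^2/\sigma_n$.
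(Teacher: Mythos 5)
Your overall architecture is exactly the paper's: keep all four Lyapunov components, invoke \Cref{lem:main_term} with $\bar q = q^\star$ so that $\grad r(w^\star)^\top q^\star = 0$ and the residual $-2\eta(q\pow{t}-q^\star)^\top(\ell(w\pow{t})-\ell(w^\star))$ is cancelled by the leading term of \Cref{lem:Rt} once $c_4=1$, then tune constants and bound $V\pow{0}$ using $U\pow{0}=0$. However, two of the parameter choices you commit to would break the argument in precisely the regime this theorem covers. First, fixing $\beta_1 = 1$: the coefficient that \Cref{lem:main_term} contributes to $\yellow{R\pow{t}}$ is $\frac{2\beta_1 G^2}{\bar\nu(M+\mu)\kappa_\sigma} = \frac{2\beta_1\delta}{n(\kappa+1)\kappa_\sigma}$, and for the $R$-line to contract this must be at most roughly $\tfrac{1}{4n}$. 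With $\beta_1=1$ this fails as soon as $\delta \gtrsim (\kappa+1)\kappa_\sigma$, which is exactly the small-$\bar\nu$ situation. The paper instead takes $\beta_1 = (M+\mu)\kappa_\sigma/\bigl(8nG^2/\bar\nu + (M+\mu)\kappa_\sigma\bigr)$; this is not cosmetic, since it is where the $8\delta$ inside the first branch of the stated stepsize comes from via the requirement $\eta \le \beta_1/(12(1+c_3)(M+\mu)\kappa_\sigma)$.

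Second, $\beta_3 = \Theta(1)$ cannot work: after normalizing by $c_2 = \eta\mu/2$, the $\red{T\pow{t}}$-line carries the term $\delta/\beta_3$ from \Cref{lem:Rt}, which must also be at most about $\tfrac{1}{4n}$, forcing $\beta_3 \ge 4n\delta$ (the paper sets $\beta_3 = 4n\delta$ and uses $\delta \ge 1/8$, $n\ge 2$). Relatedly, your $c_3 \asymp \eta n\delta/\mu$ is the lower bound coming from the $U$-contraction, whereas the binding constraints on $c_3$ are the upper bounds $c_3 \lesssim \min\{\kappa^{-2}, n/\delta\}$ imposed by the $T$- and $R$-lines; the paper fixes $c_3$ at that ceiling and converts the $U$-constraint into the second branch $\eta \lesssim (n\mu\delta^2\max\{n\kappa^2,\delta\})^{-1}$ of the stepsize. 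With these corrections the rest of your plan, including the $V\pow{0}$ accounting yielding $5 + 16\delta + 6\kappa^2/\sigma_n$, goes through as in the paper.
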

\begin{proof}
    First, we apply \Cref{lem:main_term} with $q' = q^\star$, as well as \Cref{lem:Rt}, \Cref{lem:St_Tt}, and \Cref{lem:Ut}, set $c_4 = 1$, and consolidate all constants to write 
    \begin{align}
        \E{t}{V\pow {t+1}} - (1-\tau^{-1})V\pow t &\leq (\tau^{-1} - \eta \mu + c_2) \norm{w\pow{t} - w^\star}_2^2\label{line:main}\\
        &+\sbr{\tau^{-1} - \frac{1}{n} + \frac{2\beta_1 G^2}{\bar{\nu} (M+\mu)\kappa_\sigma}+  \p{1 - \frac{1}{n}}\frac{G^2c_3}{2\bar{\nu} \mu n}}\yellow{R\pow{t}}\label{line:Rt}\\
        &+\sbr{\tau^{-1} + \frac{1 + c_3}{c_1}\eta^2(1 + \beta_2^{-1}) - \frac{1}{n}}c_1\green{S \pow t}\\
        &+\sbr{\tau^{-1} + \frac{\eta G^2 n}{2c_2\bar{\nu}} \beta_3^{-1} + \frac{c_3\eta M^2}{c_2\mu n} \p{1 - \frac{1}{n}} - \frac{1}{n}}c_2\red{T \pow t}\\
        &+\sbr{\tau^{-1} + \frac{2\eta G^2n}{c_3\bar{\nu}} (1 + \beta_3) - \frac{1}{n}}c_3\blue{U \pow t}\\
        &+\sbr{-\frac{\eta\beta_1}{2(M + \mu) \kappa_\sigma} + \eta^2(1+c_3) (1 + \beta_2) + \frac{c_1}{n}}\purple{Q \pow t}. \label{line:Qt}
    \end{align}
    We first set $c_1 = \frac{n\eta\beta_1}{4(M + \mu) \kappa_\sigma}$ and $c_2 = \eta\mu/2$ to clean up~\eqref{line:main} and~\eqref{line:Qt}.
    We also drop the terms $(1 - 1/n) \leq 1$. Then, we notice in~\eqref{line:Rt} that to achieve
    \begin{align*}
        \frac{2\beta_1 G^2}{\bar{\nu} (M+\mu)\kappa_\sigma} \leq \frac{1}{4n},
    \end{align*}
    we need that $\beta_1 \leq ((M+\mu)\kappa_\sigma)/(8nG^2/\bar{\nu})$. Combined with the requirement that $\beta_1 \in [0, 1]$, 
    we set $\beta_1 = ((M+\mu)\kappa_\sigma)/(8nG^2/\bar{\nu} + (M+\mu)\kappa_\sigma)$. We set $\beta_2 = 2$, and can rewrite the expression above.
    \begin{align*}
        \E{t}{V\pow {t+1}} - (1-\tau^{-1})V\pow t &\leq \p{\tau^{-1} - \frac{\eta \mu}{2}} \norm{w\pow{t} - w^\star}_2^2\\
        &+\sbr{\tau^{-1} - \frac{3}{4n} + \frac{G^2c_3}{2\bar{\nu} \mu n}}\yellow{R\pow{t}}\\
        &+\sbr{\tau^{-1} + \frac{6(1 + c_3)(M+\mu)\kappa_\sigma}{n\beta_1}\eta - \frac{1}{n}}c_1\green{S \pow t}\\
        &+\sbr{\tau^{-1} + \frac{G^2 n}{\mu\bar{\nu}} \beta_3^{-1} + \frac{c_3 M^2}{\mu^2 n} - \frac{1}{n}}c_2\red{T \pow t}\\
        &+\sbr{\tau^{-1} + \frac{2\eta G^2n}{c_3\bar{\nu}} (1 + \beta_3) - \frac{1}{n}}c_3\blue{U \pow t}\\
        &+\sbr{-\frac{\eta\beta_1}{4(M + \mu) \kappa_\sigma} + 3\eta^2(1+c_3) }\purple{Q \pow t}.
    \end{align*}
    Next, set the learning rate to be 
    \begin{align}
        \eta \leq \frac{\beta_1}{12(1+c_3)(M+\mu)\kappa_\sigma}
        \label{eq:learning_rate}
    \end{align}
    to cancel out $\purple{Q \pow t}$ and achieve
    \begin{align*}
        \E{t}{V\pow {t+1}} - (1-\tau^{-1})V\pow t &\leq \p{\tau^{-1} - \frac{\eta \mu}{2}} \norm{w\pow{t} - w^\star}_2^2\\
        &+\sbr{\tau^{-1} - \frac{3}{4n} +  \frac{G^2c_3}{2\bar{\nu} \mu n}}\yellow{R\pow{t}}\\
        &+\sbr{\tau^{-1} - \frac{1}{2n}}c_1\green{S \pow t}\\
        &+\sbr{\tau^{-1} + \frac{G^2 n}{\mu\bar{\nu}} \beta_3^{-1} + \frac{c_3 M^2}{\mu^2 n}  - \frac{1}{n}}c_2\red{T \pow t}\\
        &+\sbr{\tau^{-1} + \frac{2\eta G^2n}{c_3\bar{\nu}} (1 + \beta_3) - \frac{1}{n}}c_3\blue{U \pow t}.
    \end{align*}
    Requiring now that $\tau \geq 2n$, we may also cancel the $\green{S \pow t}$ term. 
    We substitute $\delta = nG^2/(\mu \bar{\nu})$ to achieve
    \begin{align*}
        \E{t}{V\pow {t+1}} - (1-\tau^{-1})V\pow t &\leq \p{\tau^{-1} - \frac{\eta \mu}{2}} \norm{w\pow{t} - w^\star}_2^2\\
        &+\sbr{-\frac{1}{4n} + \frac{c_3\delta}{2n^2}}\yellow{R\pow{t}}\\
        &+\sbr{-\frac{1}{2n} + \frac{\delta}{\beta_3} + \frac{c_3 M^2}{\mu^2 n}}c_2\red{T \pow t}\\
        &+\sbr{-\frac{1}{2n} + \frac{2\mu\eta\delta}{c_3} (1 + \beta_3)}c_3\blue{U \pow t}.
    \end{align*}
    It remains to select $c_3$ and $\beta_3$. As such, we set $\beta_3 = 4n\delta$ and use that $1 + 4n\delta \leq 8n\delta$ when $n \geq 2$ and $\delta \geq 1/8$ as assumed, and so 
    \begin{align*}
        \E{t}{V\pow {t+1}} - (1-\tau^{-1})V\pow t &\leq \p{\tau^{-1} - \frac{\eta \mu}{2}} \norm{w\pow{t} - w^\star}_2^2\\
        &+\sbr{-\frac{1}{4n} +  \frac{c_3\delta}{2n^2}}\yellow{R\pow{t}}\\
        &+\sbr{-\frac{1}{4n} + \frac{c_3 \kappa^2}{n}}c_2\red{T \pow t}\\
        &+\sbr{-\frac{1}{2n} + \frac{16n\mu\eta\delta^2}{c_3}}c_3\blue{U \pow t}.
    \end{align*}
    We require now that
    \begin{align*}
        c_3 = \frac{1}{2}\min\br{\frac{1}{2\kappa^2}, \frac{n}{\delta}},
    \end{align*}
    which cancels $\red{T \pow t}$ and $\yellow{R\pow{t}}$, leaving
    \begin{align*}
        \E{t}{V\pow {t+1}} - (1-\tau^{-1})V\pow t &\leq \p{\tau^{-1} - \frac{\eta \mu}{2}} \norm{w\pow{t} - w^\star}_2^2\\
        &+\sbr{-\frac{1}{2n} + 32\mu \eta \delta^2\max\br{2n\kappa^2, \delta}}c_3\blue{U \pow t}.
    \end{align*}
    From the above, we retrieve the requirement that
    \begin{align}
        \eta \leq \frac{1}{64n\mu\delta^2\max\br{2n\kappa^2, \delta}}.
        \label{eq:lr_case2}
    \end{align}
    It now remains to set $\eta$. By substituting in the values for $\beta_1$ and $c_3$ into~\eqref{eq:learning_rate}, we have that
    \begin{align*}
        \eta \overset{\text{want}}{\leq} \frac{\beta_1}{12(1+c_3)(M+\mu)\kappa_\sigma} &= \frac{1}{12(1+c_3)[8\mu\delta + (M+\mu)\kappa_\sigma]}\\
        &\geq \frac{1}{(12+6n/\delta)[8\mu\delta + (M+\mu)\kappa_\sigma]}\\
        &\geq \frac{1}{(12+48n)[8\mu\delta + (M+\mu)\kappa_\sigma]}\\
        &\geq \frac{1}{96n[8\mu\delta + (M+\mu)\kappa_\sigma]}.
    \end{align*}
    The combination of~\eqref{eq:lr_case2} and the above display yields
    \begin{align*}
        \eta &= \min\br{\frac{1}{96n[8\mu\delta + (M+\mu)\kappa_\sigma]}, \frac{1}{64n\mu\delta^2\max\br{2n\kappa^2, \delta}}}\\
        &= \frac{1}{16n\mu}\min\br{\frac{1}{6[8\delta + (\kappa+1)\kappa_\sigma]}, \frac{1}{4\delta^2\max\br{2n\kappa^2, \delta}}}.
    \end{align*}
    We need finally that $\tau \geq 2/(\mu \eta)$, resulting in the requirement
    \begin{align*}
        \tau \geq 32n \max\br{6[8\delta + (\kappa+1)\kappa_\sigma], 4\delta^2\max\br{2n\kappa^2, \delta}}.
    \end{align*}
    This is achieved by setting
    \begin{align*}
        \tau = 32n \max\br{6[8\delta + (\kappa+1)\kappa_\sigma], 4\delta^2\max\br{2n\kappa^2, \delta}, 1/16}.
    \end{align*}
    completing the proof of the claim 
    \begin{align*}
        \E{t}{V\pow{t+1}} \leq (1 - \tau^{-1})V\pow{t}.
    \end{align*}
    Next, we bound the initial terms to achieve the final rate. First, we bound $\eta$ which is used in all of the terms. Because $\delta \geq 1/8$,
    \begin{align}
        \eta \leq\frac{1}{16n\mu} \cdot \frac{1}{4\delta^2\max\br{2n\kappa^2, \delta}} \leq \frac{1}{64n\mu \delta^3} \leq \frac{8}{n\mu}.
        \label{eq:lr_bound}
    \end{align}
    Then,
    \begin{align*}
        S \pow 0 &=  \frac{1}{n}\sum_{i=1}^n{\|n\rho_{i}\pow{0} \nabla r_{i}(z_{i}\pow 0) - nq_{i}^\star\nabla r_{i}(w^\star)\|_2^2} \\
        &=  \frac{1}{n}\sum_{i=1}^n{\|nq_{i}\pow{0} \nabla r_{i}(w\pow 0) - nq_{i}^*\nabla r_{i}(w^\star)\|_2^2} \\
        &\leq \frac{2}{n} \sum_{i=1}^n \|nq_{i}\pow{0} \nabla (r_{i}(w\pow 0) - \nabla r_{i}(w^\star))\|_2^2 + \frac{2}{n} \sum_{i=1}^n \|n(q_{i}\pow{0} - q_i^\star) \nabla r_{i}(w^\star)\|_2^2\\
        &\leq 2n \sum_{i=1}^n (q_{i}\pow{0})^2 M^2\|w\pow 0 - w^\star\|_2^2 + 8nG^2 \|q\pow{0} - q^\star\|_2^2\\
        &\leq \sbr{2n \norm{\sigma}_2^2 M^2 + \frac{8n^2G^2}{\bar{\nu}^2}} \|w\pow 0 - w^\star\|_2^2 \\
        &\leq \sbr{2n\norm{\sigma}_2^2 M^2 + \mu^2 / 8} \|w\pow 0 - w^\star\|_2^2 \leq 3nM^2\|w\pow 0 - w^\star\|_2^2.
    \end{align*}
    Continuing with $\beta_1 \leq 1$ and \eqref{eq:lr_bound},
    \begin{align*}
        c_1 S\pow{0} &= \frac{n\eta \beta_1}{4(M+\mu)\kappa_\sigma} S\pow{0}\\
        &\leq \frac{2}{\mu(M+\mu)\kappa_\sigma} \cdot 3nM^2\|w\pow 0 - w^\star\|_2^2\\
        &\leq \frac{6n\kappa^2}{(1+\kappa)\kappa_\sigma} \|w\pow 0 - w^\star\|_2^2\\
        &\leq \frac{6\kappa^2}{\sigma_n} \|w\pow 0 - w^\star\|_2^2.
    \end{align*}
    Next, we have $T\pow{0} = n \norm{w\pow{0} - w^\star}_2^2$ and by \eqref{eq:lr_bound},
    \begin{align*}
        c_2 T\pow{0} &= \frac{\eta \mu}{2} \cdot n \norm{w\pow{0} - w^\star}_2^2\\
        &\leq 4 \norm{w\pow{0} - w^\star}_2^2.
    \end{align*}
    Because $U\pow{0} = 0$, it is bounded trivially. For $R\pow{0}$, with $c_4 = 1$ we have
    \begin{align*}
        R\pow{0} &= 2n\eta (\q(\ell(w\pow{0})) - \q(\ell(w^\star)))^\top (\ell(w\pow{0}) - \ell(w^\star))\\
        &\leq \frac{2n\eta}{\bar{\nu}} \norm{\ell(w\pow{0}) - \ell(w^\star)}_2^2\\
        &\leq \frac{2n^2\eta G^2}{\bar{\nu}} \norm{w\pow{0} - w^\star}_2^2\\
        &\leq \frac{16n G^2}{\mu\bar{\nu}} \norm{w\pow{0} - w^\star}_2^2\\
        &= 16\delta \norm{w\pow{0} - w^\star}_2^2.
    \end{align*}
    Combining each of these terms together, we have that
    \begin{align*}
        V\pow{0} \leq \p{5 + 16\delta + \frac{6\kappa^2}{\sigma_n}} \norm{w\pow{0} - w^\star}_2^2,
    \end{align*}
    completing the proof.
\end{proof}

\subsection{Proof of Main Result}
The objective is once again 
\begin{align*}
    \primobj(w) &= \max_{q \in \Pcal(\sigma)} q^\top \ell(w) - \nu D_f(q \Vert \ones_n/n) + \frac{\mu}{2}\norm{w}_2^2\\
    &= \max_{q \in \Pcal(\sigma)} q^\top \ell(w) - n\alpha_n\nu \frac{1}{n\alpha_n}D_f(q \Vert \ones_n/n) + \frac{\mu}{2}\norm{w}_2^2\\
    &= \max_{q \in \Pcal(\sigma)} q^\top \ell(w) - n\alpha_n\nu \Omega(q) + \frac{\mu}{2}\norm{w}_2^2\\
    &= \max_{q \in \Pcal(\sigma)} q^\top \ell(w) - \bar{\nu} \Omega(q) + \frac{\mu}{2}\norm{w}_2^2,
\end{align*}
where $\Omega(q) = D_f(q \Vert \ones_n/n) / n\alpha_n$ is the penalty scaled to be $1$-strongly convex and we simply notate $\bar{\nu} = n\alpha_n\nu$. The previous subsections give the convergence analysis in the cases of large and small values of $\bar{\nu}$.
They are combined below.
\label{sec:a:main_result}
\lsagamain*
\begin{proof}
    Combine \Cref{thm:large} (the analysis for $\bar{\nu} \geq 8nG^2/\mu$) and \Cref{thm:small} (the analysis for $\bar{\nu} \leq 8nG^2/\mu$) to achieve convergence for any value of $\bar{\nu}$. Substitute $\bar{\nu} = n\alpha_n \nu$ so that the condition $\bar{\nu} \geq 8nG^2/\mu$ reads as $\nu \geq G^2/(\mu \alpha_n)$. 
\end{proof}

\clearpage

\section{Improving \lsaga with Moreau Envelopes}\label{sec:a:prox_saga}
As mentioned in \Cref{sec:lsaga_algo}, we may want to generalize \lsaga to non-smooth settings which arise either when the shift cost $\nu = 0$ or when the underlying losses $(\ell_i)$ are non-smooth. The former case is already addressed by \Cref{prop:no-shift-appendix} in \Cref{sec:a:objective}. The latter case can be handled by considering a variant of \lsaga applied to the \emph{Moreau envelope} of the losses, as defined below. Not only does this extend the algorithm to non-smooth losses, it also allows even in the smooth setting for a less stringent lower bound on $\nu$ required for the $O((n + \kappa_\sigma \kappa)\ln(1/\epsilon))$ rate. The rest of this section contains necessary background, implementation details, and the adjustments to the convergence analysis.

\paragraph{Notation}
The Moreau envelope and the proximal (prox) operator of a convex function $f: \reals^d \to \reals$ are respectively defined for a constant $\eta > 0$ as 
\begin{align}
    \Mcal_\eta[f](w) &= \min_{z \in \reals^d} \left\{ f(z) + \frac{1}{2\eta} \norm{w-z}_2^2  \right\}\,, \\
    \prox_{\eta f}(w) &= \argmin_{z \in \reals^d} \left\{ f(z) + \frac{1}{2\eta} \norm{w-z}_2^2  \right\} \,.
\end{align}
A fundamental property is that the gradient of the Moreau envelope is related to the prox operator:
\begin{align}
    \grad \Mcal_{\eta}[f](w) = \frac{1}{\eta}(w - \prox_{\eta f}(w)) \,.
\end{align}
For simplicity, we denote $\bar \nu = 2 n \nu$.

\paragraph{Algorithm Description}
The algorithm is nearly equivalent to \Cref{algo:lsaga_conceptual}, but makes the following changes. We sample $i_t \sim q\pow{t}$ non-uniformly in the sense that $\P{}{i_t = i} = q\pow{t}_i$. We do not store an additional vector of weights $\rho\pow{t}$, and use $q\pow{t}$ in all associated steps. This does not change the expectation of the update direction or the control variate, but creates minor changes in the analysis of the variance term. In the iterate update, we replace the gradient descent-like update with
\begin{align*}
    u\pow t &= w\pow{t} + \eta(g_{i_t} \pow t - \bar g \pow t)\\
    w \pow {t+1} &= \prox_{\eta r_{i_t}}(u\pow{t}).
\end{align*}
The vector $u\pow{t}$ adds the control variate to $w\pow{t}$ before passing it to the proximal operator. The second change is that the elements of the gradient table are updated using $j_t$ and the gradients of the Moreau envelope. That is,
\begin{align*}
    g_{j_t} \pow {t+1} &= \grad \Mcal_\eta[r_{j_t}]\big(w\pow{t} + \eta(g_{j_t}\pow{t} - \bar g\pow{t})\big),\\
    g_{j} \pow {t+1} &= g_j \pow {t} \text{ for }j \neq j_t.
\end{align*}
Plugging these changes into \Cref{algo:lsaga_conceptual} produces the \lsagaprox variant.

\paragraph{Implementation Details} \label{sec:a:prox-impl}
The proximal operators can be computed in closed form or algorithmically for common losses.
We list here the implementations for some losses of interest. The proximal operators for the binary or multiclass logistic losses cannot be obtained in closed form, we approximate them by one Newton step.

{\it Squared loss.}
For the squared loss, defined as $\ell(w) = \frac{1}{2}(w\T x - y)^2$ for $x \in \reals^d, y \in \reals$, then 
\[
    \prox_{\eta \ell}(w) = w - \frac{\eta x}{1 + \eta \norm{x}^2}\left(x\T  w - y\right) \,.
\]

{\it Binary logistic loss.}
For the binary logistic loss defined for $x \in \R^d$, $y \in \{0, 1\}$, $w \in \R^d$ as
$  \ell(w) = -y \ln(\sigma(x^\top w)) - (1-y)\ln(1-\sigma(x^\top w)) 
  = - y x^\top w + \ln(1+ e^{x^\top w}),
$
we approximate the proximal operator by one Newton step, whose formulation reduces to
\[
\prox_{\eta \ell}(w) \approx w - \frac{\eta g }{1 + \eta q\|x\|_2^2} x
\]

{\it Multinomial logistic loss.}
For the multinomial logistic loss of a linear model
defined by $W$ on a sample $(x, y)$ as
$
\ell(W) = - y^\top W x + \ln(\exp(Wx)^\top \ones).
$
for $x \in \R^d$, $y \in \{0, 1\}^k$, $y^\top \ones = 1$,
$W \in \R^{k \times d}$, we consider approximating the proximal operator by one Newton-step,  whose formulation reduces to
\begin{align*}
    \prox_{\eta \ell}(W) & \approx W - \eta z^* x^\top \\
    z^* & = z_1 - \lambda^* z_2, \\
    z_1 &  = - y\oslash z_3 + z_2, \
    z_2 = \sigma(Wx) \oslash z_3, \
    z_3 = (\ones + \eta \|x\|_2^2 \sigma(Wx)), \
    \lambda^* = \frac{z_1^\top \ones}{z_2^\top \ones}. 
\end{align*}

{\it Regularized losses.}
For a convex $\ell: \reals^d \to \reals$, define $r(w) = \ell(w) + (\mu/2)\norm{w}^2$.
Then, we have,
\[
    \prox_{\eta r}(w) = \prox_{\frac{\eta \ell}{1+\eta \mu}}\left(\frac{w}{1 + \eta \mu}\right) \,.
\]

\subsection{Convergence Analysis}
\lsagaprox satisfies the following convergence bound. Recall that $\gamma_\star = \norm{\grad \ell(w^\star)}_2$.

\begin{theorem} \label{thm:acc:lsaga}
    Suppose the smoothing parameter $\bar \nu$ is set large enough as 
    \[
        \bar \nu \ge \frac{\gamma_* G}{M} \min \left\{ \sqrt{\frac{2n\kappa}{4\kappa_\sigma^* - 1}}, 2 \kappa \right\} \,,
    \]
    and define a constant
    \[
        \tau = 2 + \max\{2(n-1),  \,\, \kappa(4 \kappa_\sigma^* - 1) \} \,,   
    \]
    for $\kappa_\sigma^* = \sigma_n/\sigma_1$.
    Then, the sequence of iterates $(w\pow{t})$ generated by \lsagaprox
    with learning rate $\eta = M^{-1} \min\left\{1/(4 \kappa_\sigma^* - 1), \kappa/(n-1)   \right\}$ satisfies 
    \[
    \expect\norm{w\pow{t} - w^\star}^2_2
    \le (n + 3/2) \exp(-t/\tau) \norm{w\pow{0} - w^\star}^2_2 \,.
    \]
\end{theorem}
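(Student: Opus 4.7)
The plan is to mimic the Lyapunov argument of Theorem~\ref{thm:lsaga:main} but with three adjustments tailored to the \lsagaprox variant: (i) the descent identity comes from firm nonexpansivity of $\prox_{\eta r_{i_t}}$ rather than from $L$-smoothness of $r_{i_t}$, (ii) the gradient table now stores values of $\nabla \mathcal{M}_\eta[r_{j_t}]$ at perturbed points rather than $\nabla r_{j_t}$ at past iterates, and (iii) the sampling $i_t\sim q\pow{t}$ is non-uniform, which changes the form of the variance of $v\pow{t}$ and gives a cleaner cancellation with the first-order term. I would define a Lyapunov function of the form $V\pow{t} = \|w\pow{t}-w^\star\|_2^2 + c_1 S\pow{t} + c_2 R\pow{t}$ where $S\pow{t}$ aggregates the gradient-table error $\tfrac{1}{n}\sum_i \|n q_i\pow{t}(g_i\pow{t} - \nabla r_i(w^\star))\|_2^2$ evaluated at the Moreau-envelope gradients, and $R\pow{t} = 2\eta n (q\pow{t}-q^\star)^\top(l\pow{t}-l^\star)$ tracks the dual progress exactly as in Section~\ref{sec:a:any_shift}.

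For the main term, I would write $w\pow{t+1} - w^\star = \prox_{\eta r_{i_t}}(u\pow{t}) - \prox_{\eta r_{i_t}}(u^\star_{i_t})$, where $u^\star_{i_t} = w^\star + \eta \nabla r_{i_t}(w^\star)$ is the ``fixed point'' that recovers $w^\star$ under the prox of $r_{i_t}$. Firm nonexpansivity gives $\|w\pow{t+1}-w^\star\|_2^2 \le \|u\pow{t}-u^\star_{i_t}\|_2^2 - \|(u\pow{t}-u^\star_{i_t}) - (w\pow{t+1}-w^\star)\|_2^2$, and after taking conditional expectation over $i_t\sim q\pow{t}$ this is the proximal analogue of the descent lemma, bypassing any $1/L$ factor in the step size. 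The mean of $u\pow{t}-u^\star_{i_t}$ decomposes into the contraction $(1-\eta\mu)(w\pow{t}-w^\star)$ plus cross terms $\eta(g_{i_t}\pow{t}-\bar g\pow{t}) - \eta(\nabla r_{i_t}(w^\star) - 0)$ whose second moment is bounded by $S\pow{t}$ and by the $q\pow{t}$-weighted table error, exactly as in Lemma~\ref{lem:noise_bound_generic}. The non-uniform sampling $i_t\sim q\pow{t}$ is precisely what makes $\expect_t[g_{i_t}\pow{t}-\bar g\pow{t}]=0$ without needing a separate $\rho$ vector, which is why the algorithm drops it.

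For the gradient-table term $S\pow{t+1}$, because the stored gradients are $\nabla \mathcal{M}_\eta[r_{j_t}]$ at a perturbed argument, I would use the identity $\nabla \mathcal{M}_\eta[r_{j_t}](w+\eta(g_j-\bar g)) = \eta^{-1}(w+\eta(g_j-\bar g) - \prox_{\eta r_{j_t}}(w+\eta(g_j-\bar g)))$ together with nonexpansivity of the prox to show that $S\pow{t+1}$ contracts as $(1-1/n)S\pow{t} + (1/n)(\text{something bounded by }\|w\pow{t}-w^\star\|_2^2 + S\pow{t})$, analogous to Lemma~\ref{lem:St_Tt}. The condition on $\bar\nu$ enters exactly as in the proof of Theorem~\ref{thm:large}: the cross term involving $(q\pow{t}-q^\star)^\top\nabla r(w^\star)$ is controlled via Young's inequality against $\|q\pow{t}-q^\star\|_2^2 \le \bar\nu^{-1}(q\pow{t}-q^\star)^\top(l\pow{t}-l^\star)$, and the bound on $\bar\nu$ in the statement is precisely what is needed to absorb the resulting $\bar\nu^{-1}$ factor into the contraction of $R\pow{t}$ obtained from Lemma~\ref{lem:Rt}.

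The final step is to tune the constants $c_1,c_2$ and the step size $\eta$ by requiring each bracketed coefficient in $\expect_t V\pow{t+1} - (1-\tau^{-1})V\pow{t}$ to be non-positive. The choice $\eta = M^{-1}\min\{1/(4\kappa_\sigma^*-1),\,\kappa/(n-1)\}$ arises from balancing the two constraints: the $(4\kappa_\sigma^*-1)$ factor comes from the skewness of the spectrum appearing in the descent inequality (through $\sigma_n/\sigma_1$ when the sampling is $q\pow{t}$-weighted rather than uniform, which is the reason $\kappa_\sigma^*$ replaces $\kappa_\sigma$), and the $\kappa/(n-1)$ factor comes from enforcing $\tau \ge 2n$ in the $S\pow{t}$ recursion. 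The hardest step will be the $S\pow{t}$ recursion: the Moreau-envelope gradient is evaluated at an argument depending on $g\pow{t}$ and $\bar g\pow{t}$ rather than on $w\pow{t}$, so obtaining a clean contraction requires carefully exploiting firm nonexpansivity of $\prox_{\eta r_{j_t}}$ together with strong convexity of $r_{j_t}$ to control the error relative to $\nabla r_{j_t}(w^\star)$; the initial-value bound $(n+3/2)\|w\pow{0}-w^\star\|_2^2$ then follows from the same Lipschitz-continuity argument as in Theorem~\ref{thm:large}.
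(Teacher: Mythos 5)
Your overall architecture (Lyapunov function with a gradient-table term, a prox-based descent step replacing the smoothness-based one, and a final constant-tuning phase) matches the paper's proof, and your explanation of why $\kappa_\sigma^*=\sigma_n/\sigma_1$ appears (the $q\pow{t}$-weighted second moment is bounded above via $q_i\le\sigma_n$ while the negative prox-gradient term is bounded below via $q_i\ge\sigma_1$) is exactly right. But there are two genuine gaps.

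First, your descent step does not actually produce a contraction. You write $w\pow{t+1}-w^\star=\prox_{\eta r_{i_t}}(u\pow{t})-\prox_{\eta r_{i_t}}(u^\star_{i_t})$ with $u^\star_{i_t}=w^\star+\eta\grad r_{i_t}(w^\star)$, invoke plain firm nonexpansivity, and then claim that ``the mean of $u\pow{t}-u^\star_{i_t}$ decomposes into the contraction $(1-\eta\mu)(w\pow{t}-w^\star)$ plus cross terms.'' That mean is $\expect_t[u\pow{t}-u^\star_{i_t}]=(w\pow{t}-w^\star)-\eta\,\expect_t[\grad r_{i_t}(w^\star)]$: there is no $(1-\eta\mu)$ factor, because the input points to the prox carry no strong-convexity information. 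The contraction must come out of the prox itself. The paper uses the strengthened co-coercivity $\inp{x-y}{\prox_{\eta r_i}(x)-\prox_{\eta r_i}(y)}\ge(1+\eta\mu)\norm{\prox_{\eta r_i}(x)-\prox_{\eta r_i}(y)}^2$ (\Cref{thm:prox:cocoercive}), keeps the inner product $\inp{u\pow{t}-w^\star_{i_t}}{w\pow{t+1}-w^\star}$, and extracts the negative term $-\eta^2(1+\tfrac{1}{M\eta})\norm{\grad\Mcal_\eta[r_{i_t}](u\pow{t})-\grad r_{i_t}(w^\star)}^2$ via co-coercivity of the prox-gradient (\Cref{thm:prox-grad:cocoercive}); this negative term, with its crucial $1/(M\eta)$ factor, is what cancels the fresh entries written into the gradient table. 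Your version yields only $-\eta^2\norm{\cdot}^2$ and no $(1+\eta\mu)$ factor, so the main term neither contracts nor dominates the table-refresh noise at the claimed step size.

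Second, your treatment of the weight-staleness bias takes a different and under-specified route. You propose to control $(q\pow{t}-q^\star)^\top\grad r(w^\star)$ through $\yellow{R\pow{t}}$ and \Cref{lem:Rt}, but the recursion in \Cref{lem:Rt} feeds $\red{T\pow{t}}$ and $\blue{U\pow{t}}$ back into the bound, so your Lyapunov function $\norm{w\pow{t}-w^\star}^2+c_1 S\pow{t}+c_2 R\pow{t}$ cannot close. The paper avoids $R\pow{t}$ entirely here: it bounds $\norm{\expect_t[\grad r_{i_t}(w^\star)]}^2\le\gamma_*^2\norm{q\pow{t}-q^\star}^2\le\tfrac{\gamma_*^2G^2}{\bar\nu^2}\sum_{i=1}^n\norm{z_i\pow{t}-w^\star}^2$ and includes the term $c_1\sum_{i=1}^n\norm{z_i\pow{t}-w^\star}^2$ (whose one-step evolution is the trivial \Cref{prop:acc:lsaga:prop2}) in the Lyapunov function; the lower bound on $\bar\nu$ is exactly what lets this contribution be absorbed. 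Adopting that simpler bookkeeping, together with the correct prox co-coercivity inequalities, is what you need to make the plan go through.
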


We now prove \Cref{thm:acc:lsaga}.

\myparagraph{Notation for the Proof}
We denote $\expect_t[\cdot]$ denote the expectation conditioned on the randomness until time $t$;
more precisely, on the sigma-algebra generated by $w\pow{t}$.
Further, we define $w_i^* = w^* + \eta \grad r_i(w^*)$. By analyzing the first-order conditions of the prox,
it is easy to see that 
\begin{align}
    \prox_{\eta r_i}(w_i^*) = w^* \,.
\end{align}

We will use the Lyapunov function
\begin{align}
    V\pow{t} = \norm{w\pow{t} - w^*}^2  + c_1 \sum_{i=1}^n \norm{z_i\pow{t} - w^*}^2 
        + \frac{c_2}{M^2} \sum_{i=1}^n \norm{g_i\pow{t} - \grad r_i(w^*)}^2 \,.
\end{align}

The first step is to analyze the effect of the update on $w\pow{t}$ as the first term of the Lyapunov function.
\begin{proposition} \label{prop:acc-lsaga:one-step}
    The iterates of \lsagaprox satisfy
    \begin{align*}
       (1+\mu \eta) & \expect_t \norm{w\pow{t+1} - w^*}^2
        \le \,
        \norm{w\pow{t} - w^*}^2
        + 2\eta^2 \sigma_n \sum_{i=1}^n \norm{g_{i}\pow{t} - \grad r_{i}(w^*)}^2  \\
        &\, 
        + \frac{2 \eta^2 \gamma_*^2 G^2}{\bar \nu^2}  \sum_{i=1}^n \norm{z_i\pow{t} - w^*}^2  \\
        & \, - \eta^2 \left( 1 + \frac{1}{M \eta} \right) \sigma_1 
         \sum_{i=1}^n \norm{\grad \Mcal_\eta[r_{i}]\big(w\pow{t} - \eta(g_i\pow{t} - \bar g\pow{t}) \big) - \grad r_{i}(w^*)}^2 \,.
    \end{align*}
\end{proposition}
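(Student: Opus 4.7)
The plan is to couple a sharp proximal contraction inequality with a variance decomposition tailored to the non-uniform sampling $i_t \sim q\pow{t}$ of \lsagaprox. The starting point is the following inequality for any $\mu$-strongly convex and $M$-smooth function $f$: writing $p = \prox_{\eta f}(u)$ and $p' = \prox_{\eta f}(u')$ along with the identities $u = p + \eta \grad f(p)$, $u' = p' + \eta \grad f(p')$, and combining the strong-convexity bound $\langle p-p', \grad f(p) - \grad f(p')\rangle \ge (\mu/2)\norm{p-p'}^2$ with the co-coercivity bound $\langle p-p', \grad f(p) - \grad f(p')\rangle \ge (1/(2M))\norm{\grad f(p) - \grad f(p')}^2$ (each taken with weight $1/2$), one obtains
\[
(1+\eta\mu)\norm{p-p'}^2 \le \norm{u-u'}^2 - \eta^2\bigl(1 + \tfrac{1}{M\eta}\bigr)\norm{\grad f(p) - \grad f(p')}^2.
\]
Instantiating with $f = r_{i_t}$, $u = u_{i_t}\pow{t}$, and $u' = w_{i_t}^\star := w^\star + \eta\grad r_{i_t}(w^\star)$, one has $\prox_{\eta r_{i_t}}(w_{i_t}^\star) = w^\star$ by first-order optimality of the inner prox problem, and the identity $\grad\Mcal_\eta[r_i](u) = \grad r_i(\prox_{\eta r_i}(u))$ then yields a pointwise bound on $(1+\eta\mu)\norm{w\pow{t+1} - w^\star}^2$.

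Next, take the conditional expectation $\expect_t$ over $i_t \sim q\pow{t}$. The gradient term becomes $\sum_i q_i\pow{t}\norm{\grad\Mcal_\eta[r_i](u_i\pow{t}) - \grad r_i(w^\star)}^2$, which is lower-bounded by $\sigma_1 \sum_i \norm{\cdot}^2$, since each coordinate of $q\pow{t}\in\mc{P}(\sigma)$ lies in $[\sigma_1, \sigma_n]$ (as a convex combination of permutations of $\sigma$). For $\expect_t\norm{u_{i_t}\pow{t} - w_{i_t}^\star}^2$, decompose
\[
u_i\pow{t} - w_i^\star = (w\pow{t} - w^\star) + \eta\bigl[(g_i\pow{t} - \bar g\pow{t}) - \grad r_i(w^\star)\bigr]
\]
and apply the variance identity $\expect\norm{X}^2 = \norm{\expect X}^2 + \expect\norm{X - \expect X}^2$. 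Using $\expect_t g_{i_t}\pow{t} = \bar g\pow{t}$ (since $\bar g\pow{t} = \sum_i q_i\pow{t} g_i\pow{t}$ and $i_t \sim q\pow{t}$) together with the first-order optimality $\grad r(w^\star)^\top q^\star = 0$, the conditional mean simplifies to $w\pow{t} - w^\star - \eta\,\grad r(w^\star)^\top(q\pow{t} - q^\star)$; the conditional variance, after discarding a nonpositive self-centering term, is at most $\eta^2\sigma_n\sum_i\norm{g_i\pow{t} - \grad r_i(w^\star)}^2$ via $q_i\pow{t} \le \sigma_n$, which accounts for the second term in the bound.

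Finally, the bias $e := \grad r(w^\star)^\top(q\pow{t} - q^\star)$ equals $\grad\ell(w^\star)^\top(q\pow{t} - q^\star)$, since the $\mu w^\star$ piece cancels via $\sum_i(q_i\pow{t} - q_i^\star) = 0$, so its norm is bounded by $\gamma_\star\norm{q\pow{t} - q^\star}$. Invoking \Cref{eq:q-smoothing} coordinate-wise at the anchor points $(z_i\pow{t})_{i=1}^n$ gives $\norm{q\pow{t} - q^\star}^2 \le (G^2/\bar\nu^2)\sum_i\norm{z_i\pow{t} - w^\star}^2$, producing the third term of the bound with the claimed $\gamma_\star^2 G^2/\bar\nu^2$ prefactor. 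The main obstacle will be the careful calibration of Young's inequality on the cross term $-2\eta\langle w\pow{t} - w^\star, e\rangle$ that appears upon expanding $\norm{w\pow{t} - w^\star - \eta e}^2$: the Young parameter must be tuned so that any inflation of the $\norm{w\pow{t} - w^\star}^2$ coefficient is absorbed by the $(1+\mu\eta)$ factor already present on the left from the prox contraction, while the complementary $\norm{e}^2$ contribution is routed into the $\sum_i\norm{z_i\pow{t} - w^\star}^2$ bucket. Assembling the pieces yields the stated one-step inequality.
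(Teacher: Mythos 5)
Your overall architecture is close to the paper's: a prox-contraction inequality, a bias--variance decomposition of $\expect_t\norm{u_{i_t}\pow{t} - w_{i_t}^\star}^2$, the bounds $\sigma_1 \le q_i\pow{t} \le \sigma_n$ from $\Pcal(\sigma)\subset[\sigma_1,\sigma_n]^n$, and the Lipschitzness of $\q$ to control the bias. Your combined strong-convexity/co-coercivity pointwise inequality $(1+\eta\mu)\norm{p-p'}^2 \le \norm{u-u'}^2 - \eta^2(1+\tfrac{1}{M\eta})\norm{\grad f(p)-\grad f(p')}^2$ is correct. But there is a genuine gap at the cross term, and the resolution you sketch does not work. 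After the variance identity, the squared mean is $\norm{w\pow{t}-w^\star - \eta e}^2$ with $e = \grad\ell(w^\star)^\top(q\pow{t}-q^\star)$, which produces $-2\eta\inp{w\pow{t}-w^\star}{e}$. Any Young split $-2\eta\inp{w\pow{t}-w^\star}{e} \le \eta a\norm{w\pow{t}-w^\star}^2 + (\eta/a)\norm{e}^2$ inflates the coefficient of $\norm{w\pow{t}-w^\star}^2$ to $1+\eta a > 1$; this inflation sits on the \emph{right-hand side}, attached to the time-$t$ iterate, so it cannot be ``absorbed by the $(1+\mu\eta)$ on the left,'' which multiplies the time-$(t+1)$ iterate. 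To bring the $\norm{e}^2$ contribution down to the $2\eta^2$ budget permitted by the third term of the proposition you would need $a \ge 1/\eta$, doubling the $\norm{w\pow{t}-w^\star}^2$ coefficient; keeping that coefficient near $1$ forces $a \lesssim \mu$ and hence an $\norm{e}^2$ coefficient of order $\eta/\mu \gg \eta^2$, which would degrade the $\sum_i\norm{z_i\pow{t}-w^\star}^2$ term by a factor of roughly $1/(\mu\eta)$ and break the stated constants (and the downstream condition on $\bar\nu$ in the accompanying theorem).

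The paper avoids this entirely by using the \emph{inner-product} form of prox co-coercivity, $(1+\mu\eta)\norm{p-p'}^2 \le \inp{u-u'}{p-p'}$, and splitting $p - p' = (w\pow{t}-w^\star) + (w\pow{t+1}-w\pow{t})$. The troublesome cross term then appears twice with opposite signs --- as $+\eta\inp{\expect_t[\grad r_{i_t}(w^\star)]}{w\pow{t}-w^\star}$ in the first piece and $-\eta\inp{w\pow{t}-w^\star}{\expect_t[\grad r_{i_t}(w^\star)]}$ in the second --- and cancels exactly. The price is only a factor of $2$ from $\norm{x+y}^2\le 2\norm{x}^2+2\norm{y}^2$ applied to $g_{i_t}\pow{t}-\bar g\pow{t}-\grad r_{i_t}(w^\star)$, which is precisely the factor $2$ on the variance and bias terms in the statement. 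To repair your argument, replace the squared-norm contraction with this inner-product split; the rest of your steps (the $\sigma_1$/$\sigma_n$ bounds, $\expect_t[g_{i_t}\pow{t}]=\bar g\pow{t}$, $\grad r(w^\star)^\top q^\star = 0$, and the $\gamma_*^2G^2/\bar\nu^2$ bound on $\norm{e}^2$) then go through as you describe.
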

\begin{proof}
We use the co-coercivity of the prox operator (\Cref{thm:prox:cocoercive}) to get 
\begin{align} \label{eq:lsaga-prox:main}
\begin{aligned}
    (1 + \mu\eta) \, \expect_t\norm{w\pow{t+1} - w^*}^2 
    &= (1 + \mu \eta) \, \expect_t \norm{\prox_{\eta r_{i_t}}(u\pow{t}) - \prox_{\eta r_{i_t}}(w_{i_t}^*)}^2 \\
    &\le \expect_t\inp{u\pow{t} - w_{i_t}^*}{\prox_{\eta r_{i_t}}(u\pow{t}) - \prox_{\eta r_{i_t}}(w_{i_t}^*)} \\
    &= \expect_t\inp{u\pow{t} - w_{i_t}^*}{w\pow{t+1} - w^*} \\
    &= \underbrace{\expect_t\inp{u\pow{t} - w_{i_t}\pow{t}}{w\pow{t} - w^*}}_{=: \Tcal_1}
        + \underbrace{\expect_t\inp{u\pow{t} - w_{i_t}^*}{w\pow{t+1} - w\pow{t}}}_{=: \Tcal_2} \,,
\end{aligned}
\end{align}
where we added and subtracted $w\pow{t}$ in the last step.

For the first term, we observe that $\expect_t[u\pow{t}] = w\pow{t}$ and 
$\expect_t[w_{i_t}^*] = w^* + \eta \, \expect_t [ \grad r_{i_t}(w^*)]$ so that 
\begin{align}
    \Tcal_1 &= \inp*{\expect_t[u\pow{t} - w_{i_t}^*]}{w\pow{t} - w^*} 
        = \norm{w\pow{t} - w^*}^2 + \eta \inp*{\expect_t[\grad r_{i_t}(w^*)]}{w\pow{t} - w^*} \,.
\label{eq:lsaga-prox:t1}
\end{align}
For $\Tcal_2$, note that 
\[
    w\pow{t+1} - w\pow{t} = -\eta\left( \grad \Mcal_\eta[r_{i_t}](u\pow{t}) - g_{i_t}\pow{t} + \bar g\pow{t}  \right) \,.
\]
We manipulate $\Tcal_2$ to set ourselves up to apply co-coercivity of prox-gradient by adding and subtracting $\grad \Mcal_\eta[r_{i_t}](w_{i_t}^*)$ as follows:
\begin{align*}
    \Tcal_2 =& \, 
    -\eta \, \expect_t\inp{u\pow{t} - w_{i_t}\pow{t}}{\grad \Mcal_\eta[r_{i_t}](u\pow{t}) - g_{i_t}\pow{t} + \bar g\pow{t}} \\
    =& 
    \underbrace{-\eta \, \expect_t \inp{u\pow{t} - w_{i_t}^*}{\grad \Mcal_\eta[r_{i_t}](u\pow{t})  - \grad \Mcal_\eta[r_{i_t}](w_{i_t}^*)}}_{=: \Tcal_2'}
    \\ & \quad
    \underbrace{-\eta\, \expect_t \inp{u\pow{t} - w_{i_t}^*}{\grad \Mcal_\eta[r_{i_t}](w_{i_t}^*) - g_{i_t}\pow{t} + \bar g\pow{t}}}_{ =: \Tcal_2''} \,.
\end{align*}
Now, co-coercivity of the prox-gradient (\Cref{thm:prox-grad:cocoercive}) of the $M$-smooth function $r_{i_t}$ gives
\begin{align} 
\label{eq:lsaga-prox:t2p}
    \Tcal_2' \le -\eta^2\left( 1 + \frac{1}{M \eta}\right) 
        \expect_t \norm{\grad \Mcal_\eta[r_{i_t}](u\pow{t})  - \grad \Mcal_\eta[r_{i_t}](w_{i_t}^*)}^2 \,.
\end{align}
Next, we use $u\pow{t} = w\pow{t} + \eta(g_{i_t}\pow{t} - \bar g\pow{t})$,
and $w_i^* = w^* + \eta \grad r_i(w^*)$ and 
$\grad \Mcal_\eta[r_i](w_i^*) = \grad r_i(w^*)$ to get 
\begin{align*}
    \Tcal_2'' &=
    -\eta \, \expect_t \inp{w\pow{t} - w^* - \eta(\grad r_{i_t}(w^*) - g_{i_t}\pow{t} + \bar g\pow{t})}{\grad r_{i_t}(w^*) - g_{i_t}\pow{t} + \bar g\pow{t}} \\
    &= -\eta \, \inp{w\pow{t} - w^*}{\expect_t[\grad r_{i_t}(w^*)]} 
        + \eta^2 \, \expect_t \norm{g_{i_t}\pow{t} - \bar g\pow{t} - \grad r_{i_t}(w^*)}^2 \,,
\end{align*}
where we used that $\expect_t[g_{i_t}\pow{t}] = \bar g\pow{t}$.
Next, we use $\norm{x+y}^2 \le 2\norm{x}^2 + 2 \norm{y}^2$ for any vectors $x, y$ and
$\expect\|X - \expect [X]\|^2 \le \expect\|X\|^2$ for any random vector $X$ to get 
\begin{align}
\label{eq:lsaga-prox:t2pp}
    \Tcal_2'' &\le 
    -\eta \, \inp{w\pow{t} - w^*}{\expect_t[\grad r_{i_t}(w^*)]} 
    + 2\eta^2 \, \expect_t \norm{g_{i_t}\pow{t} - \grad r_{i_t}(w^*)}^2 
    + 2 \eta^2 \, \norm{\expect_t[\grad r_{i_t}(w^*)]}^2 \,.
\end{align}
Plugging \eqref{eq:lsaga-prox:t2pp}, \eqref{eq:lsaga-prox:t2p}, and \eqref{eq:lsaga-prox:t2pp}
into \eqref{eq:lsaga-prox:main} gives us
\begin{align}
\label{eq:lsaga-prox:main2}
\begin{aligned}
        (1 + \mu \eta) \expect_t \norm{w\pow{t+1} - w^*}^2
        \le& \,
        \norm{w\pow{t} - w^*}^2 
        + 2\eta^2 \, \expect_t\norm{g_{i_t}\pow{t} - \grad r_{i_t}(w^*)}^2
        + 2 \eta^2 \, \norm{\expect_t \left[\grad r_{i_t}(w^*)\right]}^2 \\
        &\, -\eta^2 \left( 1 + \frac{1}{M \eta} \right) \expect_t\norm{
        \grad \Mcal_\eta[r_{i_t}](u\pow{t}) - \grad r_{i_t}(w^*)}^2 \,.
\end{aligned}
\end{align}
Next, we note that $\Pcal(\sigma) \subset [\sigma_1, \sigma_n]^n$ to get,
\begin{gather*}
    \expect_t \norm{g_{i_t} - \grad r_{i_t}(w^*)}^2
    = \sum_{i=1}^n q_i\pow{t} \norm{g_{i} - \grad r_{i}(w^*)}^2 
    \le \sigma_n \sum_{i=1}^n \norm{g_{i} - \grad r_{i}(w^*)}^2  \,,
    \quad\text{and}\\
    \begin{aligned}
    \expect_t\norm{\grad \Mcal_\eta[r_{i_t}](u\pow{t}) - \grad r_{i_t}(w^*)}^2
    &= \sum_{i=1}^n q_i\pow{t} \norm{\grad \Mcal_\eta[r_{i}]\big(w\pow{t} - \eta(g_i\pow{t} - \bar g\pow{t}) \big) - \grad r_{i}(w^*)}^2  \\
    &\ge \sigma_1 \sum_{i=1}^n \norm{\grad \Mcal_\eta[r_{i}]\big(w\pow{t} - \eta(g_i\pow{t} - \bar g\pow{t}) \big) - \grad r_{i}(w^*)}^2 \,.
    \end{aligned}
\end{gather*}
Moreover, we also have that 
\begin{align*}
    \norm{\expect_t[\grad r_{i_t}(w^*)]}^2 &= \norm{\grad \ell(w^\star)^\top(\q(l\pow{t}) - \q(\ell(w^\star)))]}^2\\
    & \gamma^2_* \norm{\q(l\pow{t}) - \q(\ell(w^\star)))}_2^2\\
    &\le
    \frac{\gamma_*^2 G^2}{\bar \nu^2} \sum_{i=1}^n \norm{z_i\pow{t} - w^*}^2 \,.
\end{align*}
Plugging these back into \eqref{eq:lsaga-prox:main2} completes the proof.
\end{proof}

Next, we analyze the other two terms of the Lyapunov function. The proof is trivial, so  we omit it.
\begin{proposition} \label{prop:acc:lsaga:prop2}
    We have,
    \begin{align*}
        \expect_t\left[ \sum_{i=1}^n \norm{z_i\pow{t+1} - w^*}^2 \right] =&\, 
        (1 - n^{-1}) \sum_{i=1}^n \norm{z_i\pow{t} - w^*}^2 + \norm{w\pow{t} - w^*}^2 \,, \\
        \expect_t\left[ \sum_{i=1}^n \norm{g_i\pow{t+1} - \grad r_i(w^*)}^2 \right] = & \,
        (1 - n^{-1}) \sum_{i=1}^n \norm{g_i\pow{t} - \grad r_i(w^*)}^2 \\
        &\,+ \frac{1}{n} \sum_{i=1}^n \norm{\grad \Mcal_\eta[r_{i}]\big(w\pow{t} - \eta(g_i\pow{t} - \bar g\pow{t}) \big) - \grad r_{i}(w^*)}^2 \,.
    \end{align*}
\end{proposition}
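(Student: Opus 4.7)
The plan is to derive both identities directly from the SAGA-style table updates of \lsagaprox by conditioning on the uniformly sampled update index $j_t \sim \Unif[n]$ and exploiting that on each iteration only the $j_t$-th entries of $\{z_i\pow{t}\}_{i=1}^n$ and $\{g_i\pow{t}\}_{i=1}^n$ are refreshed, with every other entry carried forward. This is structurally the same bookkeeping computation as the one performed for $\expect_t[S\pow{t+1}]$ and $\expect_t[T\pow{t+1}]$ in \cref{lem:St_Tt}, adapted to the new replacement rule in the gradient table that involves the Moreau envelope.

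For the first identity, I would write the pointwise decomposition
\[
\sum_{i=1}^n \norm{z_i\pow{t+1} - w^*}^2
= \norm{w\pow{t} - w^*}^2 + \sum_{i=1}^n \norm{z_i\pow{t} - w^*}^2 - \norm{z_{j_t}\pow{t} - w^*}^2 \,,
\]
using $z_{j_t}\pow{t+1} = w\pow{t}$ and $z_i\pow{t+1} = z_i\pow{t}$ for $i \ne j_t$. Taking the conditional expectation over $j_t \sim \Unif[n]$ replaces $\norm{z_{j_t}\pow{t} - w^*}^2$ by the uniform average $\tfrac{1}{n}\sum_{i=1}^n \norm{z_i\pow{t} - w^*}^2$, which rearranges to the stated expression $\norm{w\pow{t} - w^*}^2 + (1 - n^{-1}) \sum_{i=1}^n \norm{z_i\pow{t} - w^*}^2$.

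For the second identity, the same decomposition applies with the refreshed value $\grad \Mcal_\eta[r_{j_t}]\bigl(w\pow{t} - \eta(g_{j_t}\pow{t} - \bar g\pow{t})\bigr)$ substituted in place of $w\pow{t}$. Averaging over $j_t$ then yields the $(1 - n^{-1})$ contraction on the residual sum $\sum_i \|g_i\pow{t} - \grad r_i(w^*)\|^2$ plus the fresh average $\tfrac{1}{n}\sum_{i=1}^n \|\grad \Mcal_\eta[r_i](w\pow{t} - \eta(g_i\pow{t} - \bar g\pow{t})) - \grad r_i(w^*)\|^2$, matching the statement exactly.

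There is no real obstacle: the argument is entirely mechanical, amounting to conditioning on which entry is refreshed and applying the law of total expectation over the uniform index $j_t$. The only point worth double-checking is that in the $g$-update the refreshed argument depends on the index through $u_i\pow{t} = w\pow{t} - \eta(g_i\pow{t} - \bar g\pow{t})$, so after averaging over $j_t$ the second term on the right-hand side is genuinely a sum over per-index arguments rather than a single quantity evaluated at a random point.
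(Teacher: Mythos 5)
Your argument is correct and is exactly the computation the paper has in mind: the paper omits this proof as ``trivial,'' and your conditioning-on-the-uniform-index bookkeeping is the same one-entry-refresh calculation used for $S\pow{t}$ and $T\pow{t}$ in \cref{lem:St_Tt}. You also correctly flag the only subtlety, namely that the refreshed gradient-table entry depends on the sampled index through its own argument $w\pow{t} - \eta(g_{j_t}\pow{t} - \bar g\pow{t})$, so averaging over $j_t$ produces the per-index sum appearing on the right-hand side.
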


We are now ready to prove \Cref{thm:acc:lsaga}.

\begin{proof}[Proof of \Cref{thm:acc:lsaga}]
    Let $\tau > 1$ be a constant to be determined later
    and let $\Gamma := \gamma_*^2 G^2 / (M^2 \bar \nu^2)$ denote the effect of the smoothing.
    Combining \Cref{prop:acc-lsaga:one-step,prop:acc:lsaga:prop2}, we can write
    \begin{align} \label{eq:acc-saga-lya-dec}
    \begin{aligned}
        \expect_t[V\pow{t}]  &- (1-\tau^{-1}) V\pow{t} 
        \le 
        -\norm{w\pow{t} - w^*}^2 \left( \frac{\mu\eta}{1+\mu \eta} - c_1 - \tau^{-1}\right) \\
        &\, 
        - \sigma_1 \sum_{i=1}^n \norm{\grad \Mcal_\eta[r_{i}]\big(w\pow{t} - \eta(g_i\pow{t} - \bar g\pow{t}) \big) - \grad r_{i}(w^*)}^2
            \left( \frac{\eta^2(1 + (M\eta)^{-1})}{1+\mu\eta} - \frac{c_2}{n \sigma_1 M^2} \right) \\
        &\,- \sum_{i=1}^n \norm{z_i\pow{t} - w^*}^2 \left( c_1(n^{-1} - \tau^{-1}) - \frac{2 \eta^2 \gamma_*^2 G^2}{(1 + \mu\eta) \bar \nu^2} \right) \\
        &\, -\sum_{i=1}^n \norm{g_i\pow{t} - \grad r_i(w^*)}^2 \left(\frac{c_2}{M^2}(n^{-1} - \tau^{-1}) - \frac{2 \eta^2 \sigma_n}{1+ \mu\eta}\right) \,.
    \end{aligned}
    \end{align}
Let $\eta = b / M$.
Our goal is to set the constants $b, c_1, c_2, \tau > 0$ so that the right side above is non-positive and $\tau$ is as small as possible.
We will require $\tau \ge 2n$ so that $n^{-1} - \tau^{-1} \ge (2n)^{-1}$. Thus, we can have the right side nonpositive with
\begin{subequations}
\begin{gather}
\label{eq:coef:1}
    \frac{b}{b+\kappa} - c_1 - \tau^{-1} \ge 0 \\
\label{eq:coef:2}
    b(b+1) \ge \frac{c_2}{n \sigma_1} \left( 1 + \frac{b}{\kappa}\right) \\
\label{eq:coef:3}
    \frac{c_1}{2n} - \frac{2 b^2 \Gamma}{1 + b/\kappa} \ge 0 \\
\label{eq:coef:4}
    \frac{c_2}{2n} - \frac{2b^2 \sigma_n}{1 + b / \kappa} \ge 0 \,.
\end{gather}
\end{subequations}

Let us set $c_1 = \tau^{-1}$. By setting $c_2 = 4 \kappa n\sigma_n b^2 / (b+\kappa)$, we ensure that 
\eqref{eq:coef:4} is satisfied.
Next, we satisfy \eqref{eq:coef:1} with
\[
     \frac{b}{b+\kappa} = 2\tau^{-1} \quad\iff \quad
     b = \frac{2\kappa}{\tau-2} \,.
\]
Now, \eqref{eq:coef:2} is an inequality only in $\tau$. It is satisfied with
\[
    \tau \ge \tau_* := 2 + 2\kappa(4 \kappa_\sigma^* - 1) \,.
\]
This lets us fix $\tau = \max\{2n, \tau_*\}$ throughout, which leads to the value of $\eta$ as claimed in the theorem statement.
Finally, \eqref{eq:coef:3} requires 
\[
    \frac{4 n \kappa^2 \Gamma}{\tau-2} \le 1 
    \quad \iff \quad 
    \bar \nu \ge \frac{\sqrt{n} \kappa \gamma_* G}{M} \, \min \left\{ \sqrt{\frac{2}{\kappa(4\kappa_\sigma^* -1)}}, \, \frac{2}{\sqrt{n}} \right\} \,.
\]
Thus, under these conditions, the right-hand side of \eqref{eq:acc-saga-lya-dec} is non-negative. 
Iterating \eqref{eq:acc-saga-lya-dec} over $t$ updates, we get 
\[
    \expect[V\pow{t}] = (1- \tau^{-1})^t V\pow{0} \le \exp(-t / \tau) V\pow{0} \,.
\]
To complete the proof, we note that 
$c_1 \le 1/(2n)$ and 
\[
    c_2 = \frac{4\kappa n\sigma_n b^2}{b+\kappa}
    = 8 \frac{\kappa \kappa_\sigma}{\tau} b 
    \le 8 \frac{\kappa\kappa_\sigma}{\kappa(4 \kappa_\sigma^* -1)} \, \frac{1}{\kappa_\sigma^* - 1}
    \le \frac{8}{9} \,.
\]
This lets us use the fact that $\grad r_i$ is $M$-Lipschitz to bound 
\begin{align*}
    V\pow{0} &= \norm{w\pow{0} - w^*} + c_1 \sum_{i=1}^n \norm{w\pow{0} - w^*}^2
        + \frac{c_2}{M^2} \sum_{i=1}^n \norm{\grad r_i(w\pow{0}) - \grad r_i(w^*)}^2 
        \\ &\le (n + 3/2) \norm{w\pow{0} - w^*}^2 \,.
\end{align*}
\end{proof}

\clearpage

\section{Improving the Direct Saddle-Point Baseline}\label{sec:a:saddle_saga}
In \Cref{sec:experiments} we compared against the existing method of \citet{palaniappan2016stochastic}, which views our objective \eqref{eq:lsaga_obj} in its min-max form directly and applies variance reduction techniques to both the primal and dual sequences. In order to make the comparison more convincing, we also improve this method both theoretically and empirically by utilizing different learning rates for the primal and dual sequences. In this section, we provide a new convergence analysis for this improved two-hyperparameter variant, which we dub \emph{SaddleSAGA}, under the $\chi^2$-divergence penalty. 

\myparagraph{Notation}
For simplicity, we denote $\bar \nu  = 2 n  \nu$, 
and consider directly the min-max problem
\begin{equation}\label{eq:new_algo_pb}
\min_{w \in \R^d} \max_{q \in \mathcal{P}(\sigma)} \left[
\Psi(w, q) := 
q^\top \ell(w) + \frac{\mu}{2}\|w\|_2^2 - \frac{\bar \nu}{2}\|q- \ones_n/n\|_2^2
\right]
.
\end{equation}
Note that the function $\Psi$ is strongly convex in its first argument and strongly concave in its second argument.
A pair $(w^\star, q^\star)$ is called a saddle point of the convex-concave function $\Psi$ if
\[
    \max_{q \in \Pcal(\sigma)} \Psi(w^\star, q)
    \le \Psi(w^\star, q^\star)
    \le 
    \min_{w \in \reals^d} \Psi(w, q^\star) \,.
\]
In our setting, we can verify that the pair $w^\star = \argmin \primobj$ and $q^\star = \q(\ell(w^\star))$ is the unique saddle point of $\Psi$.

\myparagraph{Algorithm}
The algorithm makes use of proximal operators (as described in \Cref{sec:a:prox_saga} in addition), which is defined for a convex function $f:\R^d \rightarrow \R$, and $x\in \R^d$ as 
\[
    \prox_{f}(x) = \argmin_{y\in \R^d} \ f(y) + \frac{1}{2}\|x-y\|_2^2.
\]
The method is nearly equivalent to \Cref{algo:lsaga_conceptual}, but applies the update
\begin{align*}
    v\pow t &= n q_{i_t} \pow t \nabla \ell_{i_t}(w\pow t) 
        - (n \rho_{i_t} \pow t g_{i_t}\pow t) - \bar g \pow t)\\
    w\pow{t+1} &=  \prox_{\eta \mu \|\cdot\|_2^2} (w\pow t - \eta v \pow t)
\end{align*}
to the primal iterates and
\begin{align*}
    \pi \pow t &= n \ell_{i_t}(w\pow t) e_{i_t} -(nl_{i_t}\pow t e_{i_t} - l \pow t)\\
    q \pow {t+1} &= \prox_{\iota_{\mathcal{P}(\sigma)} + \delta \bar \nu \|\cdot - \ones_n/n\|_2^2/2}(q\pow t -\delta \pi\pow t)
\end{align*}
to the dual iterates, where $\delta > 0$ is the dual learning rate. The vector $\pi\pow{t}$ plays the role of an update direction, and the proximal update on $q\pow{t+1}$ can be solved with the PAV algorithm, as seen in \Cref{sec:a:efficient}.
Overall, the time and space complexity is identical to that of \lsaga. 

\myparagraph{Rate of Convergence}
We prove the following rate of convergence for SaddleSAGA.
\begin{theorem}\label{thm:cvg_lsaga_v2}
The iterates $(w\pow{t}, q\pow{t})$ of
\saddlesaga with learning rates 
\[
\eta  = \min\left\{\frac{1}{\mu}, \frac{1}{6(L\kappa_\sigma + 2G^2 n/\bar \nu)} \right\}, \quad 
\delta = \min\left\{\frac{1}{\bar \nu}, \frac{\mu}{8n^2 G^2}\right\}
\]
converge linearly to the saddle point of \eqref{eq:new_algo_pb}. In particular, for non-trivial regularization $\mu\bar \nu \leq 8n^2 G^2$ and $\mu \leq  6(L\kappa_\sigma + 2G^2 n/\bar \nu)$, the number of iterations $t$ to get $ \|w\pow{t} - w^\star\|_2^2 + c \|q\pow{t} - q^\star\|_2^2  \le \varepsilon$ (for some constant $c$) is at most
\[
O\left(\left(n + \kappa \kappa_\sigma + 
 \frac{n^2 G^2}{\mu \bar \nu}
\right)
\ln\frac1\varepsilon
\right) \,.
\]
\end{theorem}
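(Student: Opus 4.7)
\begin{proofsketch}
The plan is to adapt the Lyapunov-function analysis used for \lsaga (\Cref{thm:lsaga:main}) to the saddle-point setting, combining a primal distance term, a dual distance term, and SAGA-style memory terms for both the loss-gradient table $\{g_i\pow t\}$ and the loss-value table $\{l_i\pow t\}$. Concretely, I would track
\begin{align*}
V\pow t = \|w\pow t - w^\star\|_2^2 + c_1 \|q\pow t - q^\star\|_2^2 + c_2 \sum_{i=1}^n \|g_i\pow t - \nabla \ell_i(w^\star)\|_2^2 + c_3 \sum_{i=1}^n (l_i\pow t - \ell_i(w^\star))^2,
\end{align*}
with constants $c_1,c_2,c_3>0$ to be tuned as a function of $\eta, \delta, \mu, \bar\nu, L, G, n, \kappa_\sigma$. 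The scaling $c_1$ will be roughly $\eta/\delta$ so that the two distance terms can be combined with a common contraction rate despite the different learning rates.

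First I would expand $\mathbb{E}_t \|w\pow{t+1}-w^\star\|_2^2$ using firm nonexpansiveness of the prox of $(\eta\mu/2)\|\cdot\|_2^2$ (which contributes a $(1+\eta\mu)^{-1}$ contraction), together with the identity $\mathbb{E}_t[v\pow t] = \nabla_w \Psi(w\pow t,q\pow t)$. The resulting first-order term $-\eta\langle \nabla_w\Psi(w\pow t,q\pow t), w\pow t-w^\star\rangle$ is handled by strong convexity of $\Psi(\cdot,q\pow t)$, producing $-\eta\mu\|w\pow t-w^\star\|_2^2$ together with a cross term $-\eta\langle q\pow t - q^\star,\ell(w\pow t)-\ell(w^\star)\rangle$ coming from $\nabla_w\Psi(w\pow t,q^\star)$. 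The noise $\mathbb{E}_t\|v\pow t\|_2^2$ is bounded exactly as in \Cref{lem:noise_bound_generic}, by $O(1)$ times the SAGA memory term on $g$ plus a term proportional to $\sum_i (q_i\pow t)^2 \|\nabla\ell_i(w\pow t)-\nabla\ell_i(w^\star)\|_2^2$, which is absorbed using $L$-smoothness and $q_i\pow t\leq\sigma_n$ so that $\kappa_\sigma = n\sigma_n$ appears. The symmetric expansion of $\mathbb{E}_t\|q\pow{t+1}-q^\star\|_2^2$ uses firm nonexpansiveness of the prox of $\iota_{\mathcal{P}(\sigma)}+(\delta\bar\nu/2)\|\cdot-\ones_n/n\|_2^2$ (giving $(1+\delta\bar\nu)^{-1}$), strong concavity of $\Psi(w\pow t,\cdot)$, and produces the opposite-sign cross term $+\delta\langle q\pow t - q^\star,\ell(w\pow t)-\ell(w^\star)\rangle$. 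The dual noise $\mathbb{E}_t\|\pi\pow t\|_2^2$ is bounded by the $l$-memory term plus $G^2\|w\pow t-w^\star\|_2^2$ using $G$-Lipschitzness.

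Next I would combine the two expansions after scaling the dual inequality by $c_1 = \eta/\delta$; this choice makes the two cross terms cancel \emph{exactly}, which is the whole reason for tracking primal and dual distances jointly. The SAGA memory terms evolve in the standard way: $\sum_i\|g_i\pow{t+1}-\nabla\ell_i(w^\star)\|_2^2$ and $\sum_i(l_i\pow{t+1}-\ell_i(w^\star))^2$ each contract by $(1-1/n)$ and pick up fresh contributions from the current iterate, bounded by $L^2\|w\pow t-w^\star\|_2^2$ and $G^2\|w\pow t-w^\star\|_2^2$ respectively. Substituting these into $V\pow{t+1}$ and matching coefficients yields four inequalities in $(\eta,\delta,c_2,c_3)$ that must hold for a contraction rate $\tau^{-1}$ to emerge. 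Choosing $c_2=\Theta(\eta^2)$ and $c_3=\Theta(\eta\delta\cdot n)$ and picking $\eta,\delta$ as in the statement makes every bracket nonpositive with $\tau = O(n + \kappa\kappa_\sigma + n^2 G^2/(\mu\bar\nu))$; in particular, the cap $\delta\leq \mu/(8n^2G^2)$ is exactly what controls the noise term $\delta G^2\|w\pow t-w^\star\|_2^2$ against the primal strong-convexity gain.

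The main obstacle I anticipate is bookkeeping the two memory terms with asymmetric learning rates while still obtaining a clean, unified contraction: the weights $c_2, c_3$ must simultaneously (i) dominate the fresh contributions produced in the noise bounds, (ii) be small enough that their own contraction $(1-1/n)$ is not spoiled, and (iii) match the $(1+\eta\mu)^{-1}$ and $(1+\delta\bar\nu)^{-1}$ contractions that are mismatched across the primal and dual sides. The cancellation of the bilinear cross term via $c_1=\eta/\delta$ is the key structural step; once it is in place, the remaining work is a tuning exercise analogous to \Cref{sec:a:any_shift}, and the two regimes in the min of the stepsizes correspond to whether the smoothness term $L\kappa_\sigma + 2G^2 n/\bar\nu$ or the strong convexity term $\mu$ dominates.
\end{proofsketch}
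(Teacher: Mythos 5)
Your overall architecture matches the paper's: a Lyapunov function combining the primal distance, the dual distance weighted by roughly $\eta/\delta$ (the paper uses $(1+\eta\mu)^2/\eta$ and $(1+\delta\bar\nu)^2/\delta$), and SAGA-style memory terms for the gradient table and the loss table; prox contractions give the $(1+\eta\mu)^{-1}$ and $(1+\delta\bar\nu)^{-1}$ factors, and the stepsize caps play the roles you describe. However, there is a genuine gap in how you treat the bilinear cross terms and the primal noise, and it is precisely the step that separates this improved analysis from the original one of \citet{palaniappan2016stochastic}.

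You propose to let the cross terms $-\eta\langle q\pow t - q^\star,\ell(w\pow t)-\ell(w^\star)\rangle$ and $+\delta\langle q\pow t - q^\star,\ell(w\pow t)-\ell(w^\star)\rangle$ cancel exactly after rescaling, and then to absorb the leading noise contribution $\sum_i (q_i\pow t)^2\|\grad\ell_i(w\pow t)-\grad\ell_i(w^\star)\|_2^2$ into the strong-convexity gain via $L$-smoothness. That absorption forces $\eta \lesssim \mu/(L^2\kappa_\sigma)$ and hence $\tau \sim 1/(\eta\mu) \sim \kappa^2\kappa_\sigma$, which does not match the claimed $\kappa\kappa_\sigma$; this is essentially the $n\kappa^2$-type rate of the original method. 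The paper instead keeps the two cross terms together and applies a \emph{mixed co-coercivity} inequality (its \cref{lem:mixed_coco}): writing the convexity-plus-smoothness lower bound $q_i\ell_i(v) \ge q_i\ell_i(w) + q_i\grad\ell_i(w)^\top(v-w) + \tfrac{1}{2Ln^2\sigma_n}\|nq_i\grad\ell_i(w)-nq_i\grad\ell_i(v)\|_2^2$ once for $(q\pow t, w\pow t, w^\star)$ and once for $(q^\star, w^\star, w\pow t)$ and summing shows that the combined cross term is bounded by $-\tfrac{1}{2L\kappa_\sigma}\E{i}{\|nq_i\pow t\grad\ell_i(w\pow t)-nq_i^\star\grad\ell_i(w^\star)\|_2^2}$ plus $\tfrac{G^2 n}{L\kappa_\sigma}\|q\pow t-q^\star\|_2^2$. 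The negative squared-gradient term cancels the dominant part of the primal variance with only $\eta \lesssim 1/(L\kappa_\sigma + G^2 n/\bar\nu)$, yielding $\tau \sim \kappa\kappa_\sigma + n^2G^2/(\mu\bar\nu)$, while the leftover $\|q\pow t - q^\star\|_2^2$ term is absorbed by the dual strong concavity (this is where the $2G^2n/\bar\nu$ in the stepsize comes from). Without this ingredient your coefficient-matching cannot close at the claimed rate. A secondary point: your gradient-memory term should be the $\rho$-weighted quantity $\sum_i\|n\rho_i\pow t\grad\ell_i(z_i\pow t)-nq_i^\star\grad\ell_i(w^\star)\|_2^2$, since that is what the control variate $n\rho_{i_t}g_{i_t}-\bar g\pow t$ produces in the noise bound, and the dual noise carries an $n^2G^2$ (not $G^2$) factor, consistent with the cap $\delta\le\mu/(8n^2G^2)$ you cite.
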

The proof of this statement is given as \cref{cor:rate} later in this section.
To compare to the original variant, the rate obtained by \cite{palaniappan2016stochastic} in terms of our problem's constants is 
\[
    O\left(\left(n + \frac{n G^2}{\mu \bar \nu} + n \kappa^2 \right) \ln \frac1\varepsilon\right) \,.
\]
Compared to this, the rate we prove for SaddleSAGA improves $\kappa^2$ to $\kappa\kappa_\sigma$ while suffering an additional factor of $n$ in the $n^2 G^2 / (\mu \bar \nu)$ term. Empirical comparisons between SaddleSAGA and the original algorithm are given in \Cref{sec:a:additional}. As compared to \lsaga, SaddleSAGA matches the rate of \Cref{thm:lsaga:main} only when the shift cost $\bar \nu$ is large enough.

\subsection{Convergence Analysis}

In the following, we denote by $\E{t}{\cdot}$ the expecation of a quantity  according to the randomness of $i_t$ conditioned on $w \pow t, q \pow t$. Throughout the proof, we consider that the losses are $L$-smooth and $G$-Lipschitz continuous. 

\paragraph{Evolution of the distances to the optimum}
We start by using the contraction properties of the proximal operator to bound the evolution of the distances to the saddle point $(w^\star, q^\star)$. 
\begin{lemma}\label{lem:descent}
We have,
\begin{align*}
    \E{t}{\|w\pow {t+1} - w^\star\|_2^2} & \leq 
    \frac{1}{(1+\eta \mu)^2}
    \Big(
    \|w\pow t - w^\star\|_2^2  \\
    & \hspace{65pt} 
    - 2\eta ( \nabla ({q \pow t}^\top \ell)(w\pow t) -  \nabla ({q ^*}^\top \ell)(w^\star))^\top (w\pow t - w^\star)\\
    & \hspace{65pt}
    + \eta^2 \E{t}{\|v\pow t -  \nabla ({q ^*}^\top \ell)(w^\star) \|_2^2}
    \Big) \\
    \E{t}{\|q \pow {t+1} - q^\star\|_2^2} & \leq 
    \frac{1}{(1+ \delta \bar \nu)^2} 
    \Big(
    \|q \pow t - q^\star\|_2^2 \\
    & \hspace{65pt}
    + 2\delta (\ell(w\pow t) -\ell(w^\star))^\top (q \pow t - q^\star)  \\
    & \hspace{65pt}
    + \delta^2 \E{t}{\|\pi \pow t - \ell(w^\star)\|_2^2}
    \Big).
\end{align*}
\end{lemma}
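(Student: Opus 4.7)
The plan is to recognize both updates as applications of proximal operators whose underlying regularizers are strongly convex, and to identify $(w^\star,q^\star)$ as the fixed points of the idealized (full-gradient) versions of these updates. The contraction property I would invoke is that the proximal operator of an $\alpha$-strongly convex function is $\tfrac{1}{1+\alpha}$-Lipschitz, which supplies the factors $\tfrac{1}{(1+\eta\mu)^2}$ and $\tfrac{1}{(1+\delta\bar\nu)^2}$ directly.

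For the primal bound I would first rewrite the saddle-point condition $\nabla_w\Psi(w^\star,q^\star)=0$ as $\nabla({q^\star}^\top\ell)(w^\star)+\mu w^\star=0$, which yields the fixed-point identity $w^\star = \prox_{\eta(\mu/2)\|\cdot\|_2^2}\bigl(w^\star-\eta\,\nabla({q^\star}^\top\ell)(w^\star)\bigr)$. Since $(\eta\mu/2)\|\cdot\|_2^2$ is $\eta\mu$-strongly convex, its prox is $\tfrac{1}{1+\eta\mu}$-Lipschitz, so
\[
\|w\pow{t+1}-w^\star\|_2^2 \le \frac{1}{(1+\eta\mu)^2}\bigl\|w\pow t-w^\star - \eta\bigl(v\pow t-\nabla({q^\star}^\top\ell)(w^\star)\bigr)\bigr\|_2^2.
\]
Expanding the square and taking $\E{t}{\cdot}$ would yield the cross term $-2\eta\inp{w\pow t-w^\star}{\E{t}{v\pow t}-\nabla({q^\star}^\top\ell)(w^\star)}$. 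I would then use the unbiasedness of the SAGA-style estimator, $\E{t}{v\pow t}=\nabla({q\pow t}^\top\ell)(w\pow t)$, which holds because $\E{t}{n\rho_{i_t}\pow t g_{i_t}\pow t}=\bar g\pow t$ by the definition of $\bar g\pow t$, and this completes the first inequality.

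The dual bound follows exactly the same template. The maximization first-order condition at $q^\star$ reads $\ell(w^\star)-\bar\nu(q^\star-\ones_n/n)\in N_{\mathcal{P}(\sigma)}(q^\star)$, and checking the FOC of the prox objective confirms the fixed-point identity $q^\star = \prox_{\iota_{\mathcal{P}(\sigma)}+(\delta\bar\nu/2)\|\cdot-\ones_n/n\|_2^2}\bigl(q^\star+\delta\,\ell(w^\star)\bigr)$. The regularizer is $\delta\bar\nu$-strongly convex, so its prox is $\tfrac{1}{1+\delta\bar\nu}$-Lipschitz; expanding the squared distance to $q^\star$ and using $\E{t}{\pi\pow t}=\ell(w\pow t)$ (the $n l_{i_t}\pow t e_{i_t}$ and $l\pow t$ components cancel in expectation) then produces the $+2\delta$ cross term. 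The only thing that requires care is aligning the sign convention in the dual update so that $q^\star$ is genuinely a fixed point of the expected iteration; once this is pinned down, the lemma reduces to a routine expansion of a squared norm together with the unbiasedness of the two stochastic estimators, so I do not anticipate a deep technical obstacle.
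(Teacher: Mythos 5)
Your proposal is correct and follows essentially the same route as the paper's proof: identify $(w^\star,q^\star)$ as fixed points of the prox updates via the saddle-point first-order conditions, apply the $\tfrac{1}{1+c}$-contraction of the prox of a $c$-strongly convex function, then expand the squared norm and use $\E{t}{v\pow t}=\nabla({q\pow t}^\top\ell)(w\pow t)$ and $\E{t}{\pi\pow t}=\ell(w\pow t)$. Your remark about the sign convention in the dual update is well taken -- the update must be read as ascent so that $q^\star=\prox(q^\star+\delta\ell(w^\star))$ is the fixed point, which is exactly what the paper uses.
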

\begin{proof}
By considering the first-order optimality conditions of the problem one verifies that $w^\star, q^\star$ satisfy for any $\eta, \delta$,
\begin{align*}
    w^\star = \prox_{\eta \mu \|\cdot\|_2^2/2}(w^\star - \eta \nabla ({q^\star}^\top \ell)(w^\star)), \quad
    q^\star = \prox_{\iota_{\mathcal{P}(\sigma)} + \delta \bar \nu \|\cdot - \ones_n/n\|_2^2/2}(q^\star + \delta \ell(w^\star)).
\end{align*}
Recall that the proximal operator of a $c$-strongly convex function $h$ is contractive such that $\|\prox_h(z) - \prox_h(z')\|_2 \leq \frac{1}{1+c}\|z-z'\|_2$. In our case, it means that 
\begin{align*}
    \|w\pow {t+1} - w^\star\|_2 & \leq 
    \frac{1}{1 + \eta \mu}
    \|w\pow t - \eta v\pow t - (w^\star - \eta \nabla ({q^\star}^\top \ell)(w^\star))\|_2, \\
    \|q \pow {t+1} - q^\star\|_2 & \leq 
    \frac{1}{1 + \delta \bar \nu} 
    \|q \pow t + \delta \pi \pow t - (q^\star + \delta \ell(w^\star))\|_2.
\end{align*}
By taking the squared norm, the expectation, expanding the squared norms and using that $\E{t}{v\pow t} = \nabla ({q \pow t}^\top \ell)(w\pow t)$, $\E{t}{\pi \pow t} = \ell(w \pow t)$, we get the result.
\end{proof}

\paragraph{Variance term evolutions}
We consider the evolution of the additional variance term added to the dual variables. 
\begin{lemma} \label{lem:noise_bound_saddle}
We have for any $\beta_2 >0$,
\begin{align*}
    \E{t}{\|\pi \pow t - \ell(w^\star)\|_2^2} & \leq
    (n + (n-1)\beta_2) nG^2 \|w\pow t - w^\star\|_2^2 \\
    & \quad + (n-1)(1+ \beta_2^{-1})\|\ell(w^\star) - l \pow t\|_2^2.
\end{align*}
\end{lemma}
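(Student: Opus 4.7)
The plan is to exploit the structure of $\pi\pow{t}$ as a SAGA-style estimator of $\ell(w\pow{t})$, compute its second moment exactly using the variance identity, and then use Young's inequality plus $G$-Lipschitz continuity of the losses to bound the result.

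First I would decompose
\[
\pi\pow{t} - \ell(w^\star) \;=\; \underbrace{n\p{\ell_{i_t}(w\pow{t}) - l_{i_t}\pow{t}} e_{i_t}}_{=: X_t} \;+\; \underbrace{(l\pow{t} - \ell(w^\star))}_{\text{deterministic given } w\pow{t}, l\pow{t}}.
\]
Taking conditional expectations, $\E{t}{X_t} = \ell(w\pow{t}) - l\pow{t}$, and since the second summand is non-random, the identity $\E{}{\|U+v\|_2^2} = \E{}{\|U - \E{}{U}\|_2^2} + \|\E{}{U} + v\|_2^2$ yields
\[
\E{t}{\|\pi\pow{t} - \ell(w^\star)\|_2^2} \;=\; \E{t}{\|X_t\|_2^2} \;-\; \|\ell(w\pow{t}) - l\pow{t}\|_2^2 \;+\; \|\ell(w\pow{t}) - \ell(w^\star)\|_2^2.
\]
Next I would compute $\E{t}{\|X_t\|_2^2} = n^2 \cdot \frac{1}{n}\sum_{i=1}^n (\ell_i(w\pow{t}) - l_i\pow{t})^2 = n\|\ell(w\pow{t}) - l\pow{t}\|_2^2$ since $i_t$ is uniform. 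Substituting back collapses the first two terms to $(n-1)\|\ell(w\pow{t}) - l\pow{t}\|_2^2$, so
\[
\E{t}{\|\pi\pow{t} - \ell(w^\star)\|_2^2} \;=\; (n-1)\|\ell(w\pow{t}) - l\pow{t}\|_2^2 + \|\ell(w\pow{t}) - \ell(w^\star)\|_2^2.
\]

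To match the stated form, I would then apply Young's inequality with parameter $\beta_2 > 0$ on the splitting $\ell(w\pow{t}) - l\pow{t} = (\ell(w\pow{t}) - \ell(w^\star)) + (\ell(w^\star) - l\pow{t})$, giving
\[
\|\ell(w\pow{t}) - l\pow{t}\|_2^2 \leq (1+\beta_2)\|\ell(w\pow{t}) - \ell(w^\star)\|_2^2 + (1+\beta_2^{-1})\|\ell(w^\star) - l\pow{t}\|_2^2.
\]
Combining yields a coefficient of $1 + (n-1)(1+\beta_2) = n + (n-1)\beta_2$ in front of $\|\ell(w\pow{t}) - \ell(w^\star)\|_2^2$ and $(n-1)(1+\beta_2^{-1})$ in front of $\|\ell(w^\star) - l\pow{t}\|_2^2$. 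Finally, I would invoke the $G$-Lipschitz continuity of each $\ell_i$ to obtain $\|\ell(w\pow{t}) - \ell(w^\star)\|_2^2 = \sum_{i=1}^n (\ell_i(w\pow{t}) - \ell_i(w^\star))^2 \leq n G^2 \|w\pow{t} - w^\star\|_2^2$, which plugs in to match the claim exactly.

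There is no real obstacle here; the proof is a direct computation. The only subtlety worth being careful about is the cancellation of the $-\|\ell(w\pow{t}) - l\pow{t}\|_2^2$ term coming from the variance identity against the $n\|\ell(w\pow{t}) - l\pow{t}\|_2^2$ from $\E{t}{\|X_t\|_2^2}$, which is what produces the sharp factor $(n-1)$ rather than the looser $n$ one would get from a naive Young's inequality split applied directly to $\pi\pow{t} - \ell(w^\star)$.
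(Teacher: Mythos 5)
Your proof is correct and arrives at exactly the stated constants. It uses the same toolkit as the paper (variance identity, Young's inequality with parameter $\beta_2$, and $G$-Lipschitz continuity of the losses) but organizes the decomposition differently. The paper adds and subtracts $n\ell_{i_t}(w^\star)e_{i_t}$ inside the expectation, splitting $\pi\pow{t} - \ell(w^\star)$ into a term with mean $\ell(w\pow{t})-\ell(w^\star)$ and a mean-zero control-variate term, and then applies Young's inequality to the two \emph{random} pieces before taking second moments. You instead isolate the genuinely random part $X_t = n\p{\ell_{i_t}(w\pow{t}) - l_{i_t}\pow{t}}e_{i_t}$, compute its conditional second moment exactly, and obtain the clean intermediate identity $\E{t}{\|\pi\pow{t}-\ell(w^\star)\|_2^2} = (n-1)\|\ell(w\pow{t})-l\pow{t}\|_2^2 + \|\ell(w\pow{t})-\ell(w^\star)\|_2^2$, deferring Young's inequality to a purely deterministic splitting of $\ell(w\pow{t})-l\pow{t}$. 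The two routes produce identical bounds ($(n-1)(1+\beta_2)+1 = n+(n-1)\beta_2$), but yours has the minor advantage of exhibiting the exact equality before any slack is introduced, which makes the source of the $(n-1)$ factor transparent; the paper's version has the advantage of reusing verbatim the template of its generic noise lemma (\Cref{lem:noise_bound_generic}).
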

\begin{proof}
As in the proof of \cref{lem:noise_bound_generic}, we have for some $\beta_2 >0$,
\begin{align*}
    \E{t}{\|\pi \pow t - \ell(w^\star)\|_2^2}
    &  = \mathbb{E}_{i_t} \Big[\|(n \ell_{i_t}(w \pow t) - n\ell_{i_t}(w^\star))e_{i_t}  \\
    & \hspace{30pt}
    + (n\ell_{i_t}(w^\star) - n \ell_{i_t}(z_{i_t}\pow t))e_{i_t} - (\ell(w^\star) - l \pow t)\|_2^2 \Big] \\
    & \leq - \beta_2 \|\ell(w\pow t) - \ell(w^\star)\|_2^2 \\
    & \quad + (1+\beta_2)\E{t}{\|(n \ell_{i_t}(w \pow t) - n\ell_{i_t}(w^\star))e_{i_t}\|_2^2} \\
    & \quad + (1+ \beta_2^{-1}) \E{t}{\|(n\ell_{i_t}(w^\star) - n \ell_{i_t}(z_{i_t}\pow t))e_{i_t}\|_2^2 } \\
    & \quad - (1+\beta_2^{-1}) \|\ell(w^\star) - l \pow t\|_2^2 \\
    & = (n +(n-1)\beta_2)\|\ell(w\pow t) - \ell(w^\star)\|_2^2 \\
    & \quad + (n-1)(1+ \beta_2^{-1})\|\ell(w^\star) - l \pow t\|_2^2 \\
    & \leq (n + (n-1)\beta_2) nG^2 \|w\pow t - w^\star\|_2^2\\
    & \quad + (n-1)(1+ \beta_2^{-1})\|\ell(w^\star) - l \pow t\|_2^2.
\end{align*}
\end{proof}

\paragraph{Incorporating smoothness and convexity of the losses}
The improved algorithm we developed here, for the purpose of a fair comparison to our own algorithm, differs from the original one from~\cite{palaniappan2016stochastic} by \cref{cor:super_descent} stemming from \cref{lem:mixed_coco}. We exploit the smoothness and convexity of the losses 
to get a negative term $-\E{t}{\|n q_{i_1} \nabla \ell_{i_t} (w\pow t) 
- n q_{i_t}^* \nabla \ell_{i_t}(w^\star)\|_2^2}$  used to temper the variance of the primal updates at the price of an additional positive term $\|q \pow t - q^\star\|_2^2$. The sum of both being positive we can dampen the effect of the additional positive term  $\|q \pow t - q^\star\|_2^2$ at the price of getting a less negative term $-\E{t}{\|n q_{i_1} \nabla \ell_{i_t} (w\pow t) 
- n q_{i_t}^* \nabla \ell_{i_t}(w^\star)\|_2^2}$. 
\begin{lemma}\label{lem:mixed_coco}
For any $q_1, q_2 \in \mathcal{P}(\sigma)$, $w_1, w_2 \in \R^d$, we have,
\begin{align*}
    & (q_1 - q_2)^\top(\ell(w_1) - \ell(w_2))
    - (\nabla (q_1^\top \ell)(w_1) - \nabla (q_2^\top \ell)(w_2))^\top (w_1 -w_2) \\
    & \leq -\frac{1}{2L n \sigma_{\max}} \left( 
    \E{i \sim \Unif[n]}{\| n q_{1, i} \nabla \ell_i(w_1) - nq_{2, i} \nabla \ell_i(w_2)\|_2^2 + \| n q_{1, i} \nabla \ell(w_2) - nq_{2, i} \nabla \ell(w_1)\|_2^2}
    \right) \\
    & \quad + \frac{G^2}{L \sigma_{\max}} \|q_1 -q_2\|_2^2.
\end{align*}
\end{lemma}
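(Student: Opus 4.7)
The plan is to reduce the left-hand side to a weighted sum of Bregman-divergence terms, apply the individual smoothness of each $\ell_i$, and then exploit an exact algebraic identity to recover the precise constant claimed in the lemma.

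First, expanding the inner products coordinate-wise and rearranging, the left-hand side equals
\[
\sum_{i=1}^n q_{1,i}\bigl[\ell_i(w_1) - \ell_i(w_2) - \nabla \ell_i(w_1)^\top(w_1 - w_2)\bigr] + \sum_{i=1}^n q_{2,i}\bigl[\ell_i(w_2) - \ell_i(w_1) - \nabla \ell_i(w_2)^\top(w_2 - w_1)\bigr].
\]
Each bracketed quantity is the negative of a Bregman divergence of $\ell_i$, so by the standard convex/$L$-smooth inequality $\ell_i(y) \ge \ell_i(x) + \nabla\ell_i(x)^\top(y-x) + \tfrac{1}{2L}\|\nabla\ell_i(x) - \nabla\ell_i(y)\|_2^2$, each bracket is bounded above by $-\tfrac{1}{2L}\|g_{1,i}-g_{2,i}\|_2^2$, where I write $g_{k,i} := \nabla\ell_i(w_k)$. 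Collecting terms, I obtain
\[
\text{LHS} \leq -\frac{1}{2L}\sum_{i=1}^n (q_{1,i}+q_{2,i})\,\|g_{1,i} - g_{2,i}\|_2^2.
\]

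The crucial step is then the exact algebraic identity
\[
\|q_{1,i}g_{1,i} - q_{2,i}g_{2,i}\|_2^2 + \|q_{1,i}g_{2,i} - q_{2,i}g_{1,i}\|_2^2 = (q_{1,i}^2 + q_{2,i}^2)\|g_{1,i} - g_{2,i}\|_2^2 + 2(q_{1,i}-q_{2,i})^2\,g_{1,i}^\top g_{2,i},
\]
which is verified by direct expansion. Using $q_{k,i} \in [0,\sigma_{\max}]$ so that $q_{1,i}^2 + q_{2,i}^2 \le \sigma_{\max}(q_{1,i}+q_{2,i})$, together with $|g_{1,i}^\top g_{2,i}| \le G^2$ from the Lipschitz assumption, this identity yields
\[
(q_{1,i}+q_{2,i})\|g_{1,i}-g_{2,i}\|_2^2 \ge \frac{1}{\sigma_{\max}}\bigl[\|q_{1,i}g_{1,i}-q_{2,i}g_{2,i}\|_2^2 + \|q_{1,i}g_{2,i}-q_{2,i}g_{1,i}\|_2^2\bigr] - \frac{2(q_{1,i}-q_{2,i})^2 G^2}{\sigma_{\max}}.
\]
Substituting into the previous display, the bracketed squared norms assemble the desired cross-gradient quantity with coefficient $\tfrac{1}{2L\sigma_{\max}}$, while the $(q_{1,i}-q_{2,i})^2$ contributions combine to $\tfrac{G^2}{L\sigma_{\max}}\|q_1-q_2\|_2^2$. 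The final factor $\tfrac{1}{2Ln\sigma_{\max}}$ stated in the lemma then arises by rewriting $\sum_i f_i = n\,\mathbb{E}_{i \sim \Unif[n]}[f_i]$ and absorbing $n^2$ into the rescaled gradients via $\|n q_{1,i}g_{1,i} - n q_{2,i} g_{2,i}\|_2^2 = n^2 \|q_{1,i}g_{1,i}-q_{2,i}g_{2,i}\|_2^2$.

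I expect the main obstacle to be spotting the exact algebraic identity that handles the two cross-gradient squared norms simultaneously: applying Young's inequality to convert $\|q_{k,i}g_{1,i}-q_{k,i}g_{2,i}\|_2^2$ into a single cross term per summand (via $\|a-c\|_2^2 \le 2\|a-b\|_2^2 + 2\|b-c\|_2^2$) loses a factor of two and would yield only the weaker coefficient $\tfrac{1}{4Ln\sigma_{\max}}$. The symmetric pairing of both cross terms through the identity is what produces the sharp constant; the remaining manipulations are routine substitutions using $q_{k,i} \le \sigma_{\max}$, $\|\nabla\ell_i\|_2 \le G$, and the definition of the uniform expectation.
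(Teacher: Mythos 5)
Your proof is correct and follows essentially the same route as the paper's: both rest on the convex--$L$-smooth lower bound with the squared gradient-difference term, followed by the same polarization-type identity (the paper's four-vector identity $\|a-b\|_2^2+\|c-d\|_2^2=\|a-c\|_2^2+\|b-d\|_2^2-2(a-d)^\top(b-c)$ specializes exactly to the identity you expand by hand), with the $q_{k,i}\le\sigma_{\max}$ bound inserted at a slightly different point. The only cosmetic difference is that the paper weights the smoothness inequality by $q_i$ up front and bounds $1/(2Lq_i)\ge 1/(2L\sigma_{\max})$, whereas you keep $q_i\|g_{1,i}-g_{2,i}\|_2^2$ and convert via $q_{1,i}^2+q_{2,i}^2\le\sigma_{\max}(q_{1,i}+q_{2,i})$ afterwards.
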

\begin{proof}
    For any $q \in \mathcal{P}(\sigma)$ and any $w, v \in \R^d$, we have by smoothness and convexity of $q_i \ell_i$, for $q_i >0 $
\begin{align}
    q_i \ell_i(v) &
    \geq q_i \ell_i(w) 
    + q_i \nabla \ell_i(w)^\top (v-w) 
    + \frac{1}{2L q_i} \|q_i \nabla \ell_i(w) - q_i \nabla \ell_i(v)\|_2^2 \\
    & \geq q_i \ell_i(w) 
    + q_i \nabla \ell_i(w)^\top (v-w) 
    + \frac{1}{2L n^2\sigma_{\max}} \|nq_i \nabla \ell_i(w) - nq_i \nabla \ell_i(v)\|_2^2.
\end{align}
Note that the second inequality is then true even if $q_i = 0$, since in that case all terms are 0. 
Therefore, for any $q_1, q_2 \in \mathcal{P}(\sigma)$, and any $w_1, w_2$, we have
\begin{align*}
    q_1^\top \ell(w_2) 
    & \geq q_1^\top \ell(w_1) 
    + \nabla (q_1^\top \ell)(w_1)^\top (w_2-w_1) 
    + \frac{1}{2Ln\sigma_{\max}}\E{i \sim \Unif[n]}{\| n q_{1, i} \nabla \ell_i(w_1) - nq_{1, i} \nabla \ell_i(w_2)\|_2^2}, \\
    q_2^\top \ell(w_1) 
    & \geq q_2^\top \ell(w_2) 
    + \nabla (q_2^\top \ell)(w_2)^\top (w_1-w_2) 
     + \frac{1}{2Ln\sigma_{\max}}\E{i \sim \Unif[n]}{\| n q_{2, i} \nabla \ell(w_1) - nq_{2, i} \nabla \ell(w_2)\|_2^2}.
\end{align*}
Combining these inequalities, we get 
\begin{align*}
    & -(q_1 - q_2)^\top(\ell(w_1) - \ell(w_2))
    + (\nabla (q_1^\top \ell)(w_1) - \nabla (q_2^\top \ell)(w_2))^\top (w_1 -w_2) \\
    & \geq \frac{1}{2L n \sigma_{\max}} \left(
    \E{i \sim \Unif[n]}{\| n q_{1, i} \nabla \ell_i(w_1) - nq_{1, i} \nabla \ell_i(w_2)\|_2^2 + \| n q_{2, i} \nabla \ell(w_1) - nq_{2, i} \nabla \ell(w_2)\|_2^2}
    \right).
\end{align*}
For any 4 vectors $a, b, c, d$, 
\[
\|a-b\|_2^2 + \|c-d\|_2^2 = \|a - c\|_2^2 + \|b-d\|_2^2 - 2(a-d)^\top(b-c).
\]
Applying this for 
$
a = q_{1, i} \nabla \ell_i(w_1), \ 
    b = q_{i, 1} \nabla \ell_i(w_2), \
    c = q_{2, i} \nabla \ell_i(w_2), \
    d = q_{2, i} \nabla \ell_i(w_1),
$
we get 
\begin{align*}
    & -(q_1 - q_2)^\top(\ell(w_1) - \ell(w_2))
    + (\nabla (q_1^\top \ell)(w_1) - \nabla (q_2^\top \ell)(w_2))^\top (w_1 -w_2) \\
    & \geq \frac{1}{2L n \sigma_{\max}} \Big(
    \E{i \sim \Unif[n]}{\| n q_{1, i} \nabla \ell_i(w_1) - nq_{2, i} \nabla \ell_i(w_2)\|_2^2 + \| n q_{1, i} \nabla \ell(w_2) - nq_{2, i} \nabla \ell(w_1)\|_2^2} \\
    & \quad  - 2n^2\E{i \sim \Unif[n]}{ (q_{1, i} - q_{2, i})^2 \nabla \ell_i(w_1)^\top \nabla \ell_i(w_2)}
    \Big).
\end{align*}
Reorganizing the terms and bounding $\nabla \ell_i(w_1)^\top \nabla \ell_i(w_2)$ by $G^2$ we get the result. 
\end{proof}

\begin{corollary}\label{cor:super_descent}
We have for any $\alpha  \in [0, 1]$
\begin{align*}
    & \E{t}{\frac{(1+ \eta \mu)^2}{\eta} \|w\pow {t+1} - w^\star\|_2^2 +
    \frac{(1+ \delta \bar \nu)^2}{\delta} \|q \pow {t+1} - q^\star\|_2^2} \\
    & \leq \eta^{-1} \|w \pow t - w^\star\|_2^2 
    + \left(\delta^{-1} +\frac{2 \alpha G^2}{L\sigma_{\max}}\right) \|q \pow t - q^\star\|_2^2 \\
    & \quad + \eta \E{t}{\|v\pow t -  \nabla ({q ^*}^\top \ell)(w^\star) \|_2^2} + \delta \E{t}{\|\pi \pow t -\ell(w^\star)\|_2^2}  \\
    & \quad 
    -\frac{\alpha}{Ln \sigma_{\max}} 
    \E{t}{\|n q_{i_1} \nabla \ell_{i_t} (w\pow t) 
    - n q_{i_t}^* \nabla \ell_{i_t}(w^\star)\|_2^2}.
\end{align*}
\end{corollary}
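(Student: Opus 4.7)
The plan is to combine the one-step contraction bounds of \cref{lem:descent} with the mixed cocoercivity inequality \cref{lem:mixed_coco} through a convex-combination argument. First, multiply the $w$-iterate bound of \cref{lem:descent} by $(1+\eta\mu)^2/\eta$ and the $q$-iterate bound by $(1+\delta\bar\nu)^2/\delta$, then sum. The resulting left-hand side is exactly the LHS of the corollary, and the right-hand side yields $\eta^{-1}\|w\pow t - w^\star\|_2^2$, $\delta^{-1}\|q\pow t - q^\star\|_2^2$, the two noise terms $\eta\,\E{t}{\|v\pow t - \nabla({q^\star}^\top \ell)(w^\star)\|_2^2}$ and $\delta\,\E{t}{\|\pi\pow t - \ell(w^\star)\|_2^2}$, together with a cross term equal to $2\mathcal B$, where
\[
\mathcal B := (q\pow t - q^\star)^\top(\ell(w\pow t) - \ell(w^\star)) - \bigl(\nabla(q{\pow t}^\top\ell)(w\pow t) - \nabla(q^{\star\top}\ell)(w^\star)\bigr)^\top(w\pow t - w^\star)
\]
coincides with the left-hand side of \cref{lem:mixed_coco} specialized to $(q_1, q_2, w_1, w_2) = (q\pow t, q^\star, w\pow t, w^\star)$.

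The crucial intermediate claim is that $\mathcal B \le 0$. Using $\nabla(q^\top \ell)(w) = \sum_i q_i \nabla \ell_i(w)$ and regrouping by index $i$ gives
\[
\mathcal B = \sum_{i=1}^n q\pow t_i \bigl[\ell_i(w\pow t) - \ell_i(w^\star) - \nabla\ell_i(w\pow t)^\top(w\pow t - w^\star)\bigr] - \sum_{i=1}^n q^\star_i\bigl[\ell_i(w\pow t) - \ell_i(w^\star) - \nabla\ell_i(w^\star)^\top(w\pow t - w^\star)\bigr].
\]
By convexity of each $\ell_i$, the first bracket is $\le 0$ (the tangent at $w\pow t$ underestimates $\ell_i$) and the second bracket is $\ge 0$ (the tangent at $w^\star$ underestimates $\ell_i$); since $q\pow t, q^\star \in \mathcal P(\sigma) \subset \mathbb R_+^n$, both sums contribute non-positively.

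With $\mathcal B \le 0$ in hand, for any $\alpha \in [0,1]$ we split $2\mathcal B = 2\alpha \mathcal B + 2(1-\alpha)\mathcal B \le 2\alpha \mathcal B$, and bound the remaining $2\alpha \mathcal B$ by $2\alpha$ times \cref{lem:mixed_coco}. This produces $-\frac{\alpha}{Ln\sigma_{\max}}$ times a sum of two non-negative squared-norm expectations, plus $\frac{2\alpha G^2}{L\sigma_{\max}}\|q\pow t - q^\star\|_2^2$. Dropping the non-negative ``swapped'' term $\E{i \sim \Unif[n]}{\|nq\pow t_i \nabla\ell_i(w^\star) - nq^\star_i \nabla\ell_i(w\pow t)\|_2^2}$ and identifying $\E{i \sim \Unif[n]}{\cdot} = \E{t}{\cdot}$ since $i_t \sim \Unif[n]$ produces the displayed bound.

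The main obstacle is establishing $\mathcal B \le 0$: at $\alpha = 1$ the statement would follow directly from \cref{lem:mixed_coco} (after dropping one squared-norm term), but the interpolation over $\alpha \in [0,1]$ — essential downstream in order to trade the negative control on the variance proxy $\|nq\pow t_i \nabla\ell_i(w\pow t) - nq^\star_i\nabla\ell_i(w^\star)\|_2^2$ against the additional $\|q\pow t - q^\star\|_2^2$ penalty — requires the separate non-positivity argument, which leverages both the per-loss convexity and the non-negativity of the dual iterates in $\mathcal P(\sigma)$. The remaining algebra is routine bookkeeping of coefficients.
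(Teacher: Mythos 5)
Your proposal is correct and is exactly the argument the paper intends (its proof is simply ``Follows from \cref{lem:mixed_coco}''): weight the two bounds of \cref{lem:descent} by $(1+\eta\mu)^2/\eta$ and $(1+\delta\bar\nu)^2/\delta$, identify the resulting cross term with twice the left-hand side of \cref{lem:mixed_coco}, and interpolate with $\alpha$ after dropping the non-negative ``swapped'' term. The non-positivity of $\mathcal{B}$ that licenses the interpolation $2\mathcal{B}\le 2\alpha\mathcal{B}$, which you derive from per-loss convexity and $q\pow{t}, q^\star \ge 0$, is also immediate from the intermediate display in the paper's proof of \cref{lem:mixed_coco}, whose right-hand side is manifestly non-negative before the final rearrangement.
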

\begin{proof}
    Follows from \cref{lem:mixed_coco}
\end{proof}

\paragraph{Lyapunov function and overall convergence}
\cref{lem:lyap_dec} shows that an appropriately defined Lyapunov function incorporating the distances to the optima, decrease exponentially.

\begin{theorem}\label{lem:lyap_dec}
Define the Lyapunov function
\begin{align*}
    V\pow t & = \frac{(1+ \eta \mu)^2}{\eta} \|w\pow t - w^\star\|_2^2 + \frac{(1+ \delta\bar \nu)^2}{\delta} \|q \pow t - q^\star\|_2^2  \\ 
    & \quad + c_1 \sum_{i=1}^n \|n\rho_i\pow t\nabla \ell_i(z_i \pow t) - nq_i^* \nabla \ell_i (w^\star)\|_2^2 + \frac{c_2}{G^2}  \|l\pow t - \ell(w^\star)\|_2^2,
\end{align*}
with $c_1 = \frac{n}{2(L\kappa_\sigma + 2 G^2 n /\bar \nu)}$ and $c_2 = \frac{\mu}{2}$ with $\kappa_\sigma=n \sigma_{\max}$.
By taking 
\[
\eta  = \min\left\{\frac{1}{\mu}, \frac{1}{6(L\kappa_\sigma + 2G^2 n/\bar \nu)} \right\}, \quad 
\delta = \min\left\{\frac{1}{\bar \nu}, \frac{\mu}{8n^2 G^2}\right\},
\]
we have 
\[
\E{t}{V\pow {t+1}} \leq (1 - \tau^{-1}) V\pow t,
\]
for some $\tau > 1$. In particular, for small regularizations, i.e., $\mu\bar \nu \leq 8n^2 G^2$ and $\mu \leq  6(L\kappa_\sigma + 2G^2 n/\bar \nu)$, we have 
\[
\tau = \max\left\{2n, 4 + \frac{24 L\kappa_\sigma}{\mu} + \frac{48 G^2 n }{\mu \bar \nu}, 2 + \frac{16G^2n^2}{\bar \nu \mu}\right\}.
\]
\end{theorem}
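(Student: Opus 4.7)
\begin{proofsketch}
The plan is to assemble a one-step Lyapunov recursion by combining the four ingredients already established: (i)~the proximal contraction bounds of \Cref{lem:descent}, (ii)~the mixed-cocoercivity descent \Cref{cor:super_descent}, (iii)~the dual-noise bound \Cref{lem:noise_bound_saddle}, and (iv)~standard SAGA-style evolutions of the primal and dual memory tables. The analysis mirrors the one-step expansion in \Cref{sec:a:main_result} for \lsaga, but now with two decoupled step sizes $\eta$ and $\delta$ and with proximal contractions $(1{+}\eta\mu)^{-2}$ and $(1{+}\delta\bar\nu)^{-2}$ in place of the $(1-\eta\mu)$ descent.

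First I would start from \Cref{cor:super_descent}, which produces a negative ``curvature'' term of the form $-\tfrac{\alpha}{L n\sigma_{\max}}\,\E{t}{\|nq^t_{i_t}\nabla\ell_{i_t}(w^t)-nq^*_{i_t}\nabla\ell_{i_t}(w^*)\|_2^2}$ together with a positive $\tfrac{2\alpha G^2}{L\sigma_{\max}}\|q^t-q^*\|_2^2$ that will later be absorbed by the dual contraction. Next I would bound the primal variance $\E{t}{\|v^t-\nabla(q^{*\top}\ell)(w^*)\|_2^2}$ exactly as in \Cref{lem:noise_bound_generic} via a Young inequality with parameter $\beta$, splitting it into a term proportional to $\E{t}{\|nq^t_{i_t}\nabla\ell_{i_t}(w^t)-nq^*_{i_t}\nabla\ell_{i_t}(w^*)\|_2^2}$ (absorbed by the negative term above through a suitable choice of $\alpha\in[0,1]$) and a table term that feeds into the Lyapunov component with coefficient $c_1$. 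The dual variance $\E{t}{\|\pi^t-\ell(w^*)\|_2^2}$ is handled directly by \Cref{lem:noise_bound_saddle}, giving contributions of order $n^2 G^2\,\|w^t-w^*\|_2^2$ (to be dominated by the primal contraction) and $(n-1)\|l^t-\ell(w^*)\|_2^2$ (to feed the $c_2$ component).

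Then I would write the SAGA-style evolutions of the two memory tables: by sampling $i_t,j_t\sim\Unif[n]$, $\E{t}{\sum_i \|n\rho_i^{t+1}\nabla\ell_i(z_i^{t+1})-nq^*_i\nabla\ell_i(w^*)\|_2^2}=(1-n^{-1})\sum_i\|n\rho_i^t\nabla\ell_i(z_i^t)-nq^*_i\nabla\ell_i(w^*)\|_2^2+n^{-1}\sum_i\|nq^t_i\nabla\ell_i(w^t)-nq^*_i\nabla\ell_i(w^*)\|_2^2$, and analogously $\E{t}{\|l^{t+1}-\ell(w^*)\|_2^2}=(1-n^{-1})\|l^t-\ell(w^*)\|_2^2+n^{-1}\|\ell(w^t)-\ell(w^*)\|_2^2$, where the latter is further bounded by $n G^2\|w^t-w^*\|_2^2$ using $G$-Lipschitzness. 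Adding the four weighted pieces reproduces $\E{t}{V^{t+1}}-(1-\tau^{-1})V^t$ as a sum of four square-bracketed coefficients multiplying $\|w^t-w^*\|_2^2$, $\|q^t-q^*\|_2^2$, the gradient-table term, and $\|l^t-\ell(w^*)\|_2^2$, plus a residual curvature term that we force to be nonpositive.

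Finally I would tune the constants. Requiring $\tau\geq 2n$ makes $n^{-1}-\tau^{-1}\geq (2n)^{-1}$, which together with the explicit choices $c_1=n/(2(L\kappa_\sigma+2G^2n/\bar\nu))$ and $c_2=\mu/2$ annihilates the table coefficients once $\alpha$ is set so that the curvature term cancels the primal-noise contribution. The bracket for $\|w^t-w^*\|_2^2$ demands $(1+\eta\mu)^2/\eta\geq \eta^{-1}+2\delta n^2 G^2+c_2/c_1\cdot O(nG^2)+\tau^{-1}/\eta$, which dictates $\eta\leq [6(L\kappa_\sigma+2G^2n/\bar\nu)]^{-1}$ and $\delta\leq \mu/(8n^2 G^2)$; the bracket for $\|q^t-q^*\|_2^2$ gives the complementary condition $(1+\delta\bar\nu)^2/\delta\geq \delta^{-1}+2\alpha G^2/(L\sigma_{\max})+\tau^{-1}/\delta$, satisfied by $\delta\leq \bar\nu^{-1}$. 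Reading off the binding bound on $\tau$ from each of these four inequalities yields the advertised $\tau=\max\{2n,\,4+24L\kappa_\sigma/\mu+48G^2n/(\mu\bar\nu),\,2+16G^2n^2/(\bar\nu\mu)\}$ in the small-regularization regime. The hardest bookkeeping step is the third one: making sure the Young parameters $\beta,\beta_2$ and the splitting parameter $\alpha$ are chosen so that the positive $\|q^t-q^*\|_2^2$ residue of \Cref{cor:super_descent} does not inflate $\tau$ beyond what the dual contraction $(1+\delta\bar\nu)^{-2}$ can kill; this is exactly the place where using a smaller dual stepsize $\delta\ll\eta$ becomes essential, and where the single-stepsize variant of \citet{palaniappan2016stochastic} loses a factor of $\kappa$.
\end{proofsketch}
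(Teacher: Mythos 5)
Your proposal follows essentially the same route as the paper's proof: proximal contraction (\Cref{lem:descent}), the mixed-cocoercivity descent of \Cref{cor:super_descent} with the splitting parameter $\alpha$, the noise bounds of \Cref{lem:noise_bound_generic} and \Cref{lem:noise_bound_saddle}, SAGA-style table evolutions, and the same tuning ($\tau\ge 2n$, $\alpha$ chosen to cancel the curvature term, $\eta\le 1/(6(L\kappa_\sigma+2G^2n/\bar\nu))$, $\delta\le\min\{1/\bar\nu,\mu/(8n^2G^2)\}$). The only deviations are cosmetic bookkeeping (e.g., the $c_2$ contribution enters the primal bracket directly as $c_2\|w^{(t)}-w^\star\|_2^2$ rather than via a $c_2/c_1$ ratio), so the sketch is correct.
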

\begin{proof}
Let us denote
\[
T \pow t = \frac{1}{n} \sum_{i=1}^n \|n\rho_i\pow t\nabla \ell_i(z_i \pow t) - nq_i^* \nabla \ell_i (w^\star)\|_2^2, \quad 
S \pow t = \|l\pow t - \ell(w^\star)\|_2^2, 
\]
we have,
\begin{align*}
    \E{t}{T \pow {t+1}}  & \leq  \frac{1}{n^2} \sum_{i=1}^n \|n q_i\pow t \nabla \ell_i(w\pow t) - n q_i^* \nabla \ell_i(w^\star)\|_2^2 
+  \left(1 - \frac{1}{n}\right)T\pow t, \\
\E{t}{S \pow {t+1}} & \leq G^2 \|w\pow t - w^\star\|_2^2 +  \left(1 - \frac{1}{n}\right)S\pow t.
\end{align*}
By combining \cref{cor:super_descent}, \cref{lem:noise_bound_generic}, \cref{lem:noise_bound_saddle} we have,  denoting $\kappa_\sigma = n\sigma_{\max}$,
\begin{align*}
    \E{t}{V\pow {t+1}} 
    & \leq \left(\eta^{-1} + \delta (n + (n-1)\beta_2) n G^2 + c_2 \right)
    \|w \pow t - w^\star\|_2^2  \\
    & \quad + \left(\delta^{-1} +\frac{2 \alpha n G^2}{L\kappa_\sigma}\right) 
    \|q \pow t - q^\star\|_2^2 \\
    & \quad + \left(\eta(1+\beta_1) + \frac{c_1}{n} - \frac{\alpha}{Ln \sigma_{\max}}\right)
    \E{i \sim \Unif[n]}{\|n q_{i_1} \nabla \ell_{i_t} (w\pow t) 
    - n q_{i_t}^* \nabla \ell_{i_t}(w^\star)\|_2^2} \\
    & \quad + \left(\eta (1+ \beta_1^{-1}) + c_1\left(1-\frac{1}{n}\right)\right)
    \frac{1}{n} \sum_{i=1}^n \|n\rho_i\pow t\nabla \ell_i(z_i \pow t) - nq_i^* \nabla \ell_i (w^\star)\|_2^2
    \\ 
    & \quad + \left(\delta(n-1)(1+\beta_2^{-1})+\frac{c_2}{G^2}\left(1-\frac{1}{n}\right)\right)
    \|\ell(w^\star) -l\pow t\|_2^2.
\end{align*}
Therefore for some $\tau > 1$, we have 
\begin{align*}
    \E{t}{V\pow {t+1}} - (1 - \tau^{-1}) V\pow t 
    & \leq K_1 \|w\pow t - w^\star\|_2^2 
    + K_2 \|q \pow t - q^\star\|_2^2 \\
    & \quad
    + K_3 \E{i \sim \Unif[n]}{\|n q_{i_1} \nabla \ell_{i_t} (w\pow t) 
    - n q_{i_t}^* \nabla \ell_{i_t}(w^\star)\|_2^2} \\
    & \quad 
    + K_4 \frac{1}{n} \sum_{i=1}^n \|n\rho_i\pow t\nabla \ell_i(z_i \pow t) - nq_i^* \nabla \ell_i (w^\star)\|_2^2 
    + K_5 \|\ell(w^\star) -l\pow t\|_2^2,
\end{align*}
with,
\begin{align*}
    K_1 & = \frac{(1+ \eta \mu)^2}{\eta}
    \left(\frac{1  + \eta\left((n + (n-1)\beta_2 )n G^2\delta + c_2\right) }{(1+\eta \mu)^2} -(1-\tau\inv)\right) \\
    K_2 & = \frac{(1+ \delta \bar \nu)^2}{\delta}
    \left( \frac{1 + 2\delta \alpha G^2n /(L \kappa_\sigma)}{(1+ \delta \bar \nu)^2}  - (1-\tau\inv)\right) \\
    K_3 & = \eta(1+ \beta_1) + \frac{c_1}{n} - \frac{\alpha}{L \kappa_\sigma} \\
    K_4 & = c_1\left(\eta(1+ \beta_1^{-1})\frac{1}{c_1} + \left(1- \frac{1}{n}\right) - (1-\tau\inv)\right) \\
    K_5 & =  \frac{c_2}{G^2} \left(\delta(n-1)(1+\beta_2^{-1})\frac{G^2}{c_2} + \left(1-\frac{1}{n}\right) - (1-\tau\inv)\right).
\end{align*}
Fix $\beta_1 = 2, \beta_2 = 1$. Denote also $\bar \eta = \frac{\eta \mu }{1 + \eta \mu} \in (0, 1)$ and $\bar \delta  = \frac{\delta \bar \nu}{1 +\delta \bar \nu} \in (0, 1)$ with e.g. $\eta = \frac{\bar \eta}{\mu (1+ \bar\eta)} $.
We have then for $c_1/n = \alpha/(2L\kappa_\sigma)$ and $c_2 = \mu/2$,
\begin{align*}
    K_1 & \leq  \eta \mu^2 \bar \eta \left( \bar \eta^2 - \left(1-  \frac{2n^2 G^2 \delta}{\mu}\right) \bar \eta  + \tau\inv \right) \\
    K_2 & \leq  \delta \bar \nu^2 \bar \delta \left( \bar \delta^2 - 2 \left(1 - \frac{\alpha G^2 n}{L \kappa_\sigma \bar \nu}\right) \bar \delta + \tau\inv \right) \\
    K_3 & = 3 \eta - \frac{\alpha}{ 2 L \kappa_\sigma} \\
    K_4 & = c_1\left(3\eta \frac{L\kappa_\sigma}{n \alpha}  - \frac{1}{n} + \tau\inv \right) \\
    K_5 & \leq  \frac{c_2}{G^2} \left(\delta \frac{4n G^2}{\mu}  - \frac{1}{n} + \tau\inv \right).
\end{align*}
We can further take $3\eta \leq \alpha/(2L\kappa_\sigma)$ and $\delta \leq \mu/(8n^2 G^2)$. 
By imposing the constraint  $\tau \ge {2n}$, we can simplify
\begin{align*}
    K_1 & \leq  \eta \mu^2 \bar \eta \left( \bar \eta^2 - \frac{3}{4} \bar \eta    + \tau\inv \right) \\
    K_2 & \leq  \delta \bar \nu^2 \bar \delta \left( \bar \delta^2 - 2 \left(1 - \frac{\alpha G^2 n}{L \kappa_\sigma \bar \nu}\right) \bar \delta  + \tau\inv \right) \\
    K_3 & \leq 0, K_4 \leq 0, K_5 \leq 0.
\end{align*}
Recall that $\alpha$ must be chosen in $[0, 1]$. Taking then 
\[
\alpha = \frac{L\kappa_\sigma}{L \kappa_\sigma  + 2G^2n /\bar \nu} \leq \frac{L\kappa_\sigma \bar \nu}{2G^2 n},
\]
we get 
\begin{align*}
    K_1 \leq  \eta \mu^2 \bar \eta \left( \bar \eta^2 - \frac{3}{4} \bar \eta    + \tau\inv \right), \quad
    K_2 \leq  \delta \bar \nu^2 \bar \delta \left( \bar \delta^2 - \bar \delta  + \tau\inv \right).
\end{align*}
By taking $\eta \leq 1/\mu$, $\delta \leq 1/\bar \nu$, we get $\bar \eta \leq 1/2$, $\bar \delta \leq 1/2$ and so $\bar \eta^2 -\frac{3}{4}\bar \eta \leq -\frac{1}{4}\bar \eta$ and $\bar \delta^2 - \bar \delta \leq -\frac{1}{2} \bar \delta$. Therefore taking 
\[
\eta  = \min\left\{\frac{1}{\mu}, \frac{1}{6(L\kappa_\sigma + 2G^2 n/\bar \nu)} \right\}, \quad 
\delta = \min\left\{\frac{1}{\bar \nu}, \frac{\mu}{8n^2 G^2}\right\},
\]
we get $K_i \leq 0$ for all $i$ as long as $\tau \ge \max\{2n, 4 / \bar \eta,  2/ \bar \delta \}$. In our case, 
\begin{align*}
    \frac{4}{\bar \eta} &  = \begin{cases}
4\left(1 + \frac{6 L\kappa_\sigma}{\mu} + \frac{12 G^2 n }{\mu \bar \nu}\right)
& \mbox{if} \ \mu \leq  6(L\kappa_\sigma + 2G^2 n/\bar \nu) ,\\
8
& \mbox{otherwise} ,
\end{cases} \\
\frac{2}{\bar \delta} &  = \begin{cases}
2 \left(1 + \frac{8G^2n^2}{\bar \nu \mu}\right)
& \mbox{if} \ \mu\bar \nu \leq 8n^2 G^2,  \\
4 & \mbox{otherwise} .
\end{cases}
\end{align*}
The result follows.
\end{proof}

\begin{corollary}\label{cor:rate}
    Under the setting of \cref{lem:lyap_dec}, after $t$ iterations of \saddlesaga, we have
    \begin{align*}
    & \E{}{\frac{(1 + \eta \mu)^2}{\eta} \|w\pow t - w^\star\|_2^2 
    + \frac{(1+ \delta \bar \nu)^2}{\delta} \|q \pow t - q^\star\|_2^2} \\
    & \leq \exp(- t / \tau) \Bigg(
    \frac{(1 + \eta \mu)^2}{\eta} \|w\pow 0 - w^\star\|_2^2 
    + \frac{(1+ \delta \bar \nu)^2}{\delta} \|q \pow 0 - q^\star\|_2^2  \\
    & \hspace{60pt} + c_1 n^2 \sum_{i=1}^n \| n q_i\pow 0 \nabla \ell_i(w\pow 0) -q_i^* \nabla \ell_i(w^\star)\|_2^2 
    + \frac{c_2}{G^2} \|\ell(w\pow 0) - \ell(w^\star)\|_2^2 \Bigg).
    \end{align*}
\end{corollary}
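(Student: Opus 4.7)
The plan is straightforward: Corollary \ref{cor:rate} is an immediate consequence of the one-step Lyapunov contraction proved in Lemma \ref{lem:lyap_dec}, combined with an evaluation of the Lyapunov function at initialization. Iterating the bound $\expect_t[V\pow{t+1}] \le (1-\tau^{-1}) V\pow t$, taking total expectations via the tower property, and using $(1-\tau^{-1})^t \le \exp(-t/\tau)$ for $\tau > 1$, we obtain $\expect[V\pow t] \le \exp(-t/\tau) \, V\pow 0$. The left side of the corollary follows by keeping only the first two terms of $V\pow t$ and discarding the other two nonnegative terms (the gradient-table error $c_1 T\pow t$ and the loss-table error $(c_2/G^2) S\pow t$).

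The main work is then to rewrite $V\pow 0$ in terms of quantities available at initialization. Looking at the initialization step of the algorithm (mirroring Algorithm \ref{algo:lsaga_conceptual}), we have $z_i\pow 0 = w\pow 0$ and $\rho_i\pow 0 = q_i\pow 0$ for every $i$, and $l\pow 0 = \ell(w\pow 0)$. Substituting these identities into the definition
\[
V\pow 0 = \frac{(1+\eta\mu)^2}{\eta}\|w\pow 0 - w^\star\|_2^2 + \frac{(1+\delta\bar\nu)^2}{\delta}\|q\pow 0 - q^\star\|_2^2 + c_1 \sum_{i=1}^n \|n\rho_i\pow 0 \grad \ell_i(z_i\pow 0) - nq_i^\star \grad \ell_i(w^\star)\|_2^2 + \frac{c_2}{G^2}\|l\pow 0 - \ell(w^\star)\|_2^2
\]
directly yields the right-hand side of the corollary, where the third term becomes $c_1 n^2 \sum_{i=1}^n \|q_i\pow 0 \grad \ell_i(w\pow 0) - q_i^\star \grad \ell_i(w^\star)\|_2^2$ after pulling the factor $n$ out of the squared norm, and the fourth term becomes $(c_2/G^2)\|\ell(w\pow 0) - \ell(w^\star)\|_2^2$.

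There is essentially no obstacle: the only subtlety is making sure the iteration of the conditional contraction is correctly converted into a bound on the unconditional expectation. Formally, since $V\pow t$ is a function of the randomness up through iteration $t$, applying the tower property $\expect[V\pow{t+1}] = \expect[\expect_t[V\pow{t+1}]] \le (1-\tau^{-1})\expect[V\pow t]$ and a trivial induction gives $\expect[V\pow t] \le (1-\tau^{-1})^t V\pow 0$; the bound $(1-\tau^{-1})^t \le \exp(-t/\tau)$ then finishes the argument. Dropping the two non-negative table terms on the left, which requires no inequality beyond nonnegativity, completes the proof.
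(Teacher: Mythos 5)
Your proposal is correct and is exactly the argument the paper intends: iterate the one-step contraction of Theorem~\ref{lem:lyap_dec} via the tower property, bound $(1-\tau^{-1})^t \le \exp(-t/\tau)$, drop the two nonnegative table terms on the left, and evaluate $V\pow 0$ using the initialization $z_i\pow 0 = w\pow 0$, $\rho\pow 0 = q\pow 0$, $l\pow 0 = \ell(w\pow 0)$. Your rewriting of the third term as $c_1 n^2 \sum_{i=1}^n \|q_i\pow 0 \grad\ell_i(w\pow 0) - q_i^\star \grad\ell_i(w^\star)\|_2^2$ is the consistent reading (the paper's statement carries a stray factor of $n$ on only one of the two gradients, evidently a typo), so no gap remains.
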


\clearpage

\section{Technical Results from Convex Analysis}
\label{sec:a:convex}
In this section, we collect several results, mostly from \citet{Nesterov2018Lectures}, that are used throughout the manuscript. In the following, let $\norm{\cdot}$ denote an arbitrary norm on $\R^d$ and let $\norm{\cdot}_*$ denote its associated dual norm.

The first concerns $L$-smooth function, or those with $L$-Lipschitz continuous gradient.
\begin{theorem}{\citep[Theorem 2.1.5]{Nesterov2018Lectures}}
    \label{thm:smooth_cvx}
    The conditions below are considered for any $x, y \in \R^d$ and $\alpha \in [0, 1]$. The following are equivalent for a differentiable function $f: \R^d \rightarrow \R$.
    \begin{enumerate}
        \item $f$ is convex and $L$-smooth with respect to $\norm{\cdot}$.
        \item $0\leq f(y) - f(x) - \ip{\grad f(x), y - x} \leq \frac{L}{2}\normsq{x-y}$.
        \item $f(x) + \ip{\grad f(x), y-x} + \frac{1}{2L} \normsq{\grad f(x) - \grad f(y)}_* \leq f(y)$.
        \item $\frac{1}{L}\normsq{\grad f(x) - \grad f(y)}_* \leq \ip{\grad f(x) - \grad f(y), x - y}$.
        \item $0 \leq \ip{\grad f(x) - \grad f(y), x - y} \leq L\normsq{x-y}$.
    \end{enumerate}
\end{theorem}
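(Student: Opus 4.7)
The plan is to establish the cyclic chain of implications $(1) \Rightarrow (2) \Rightarrow (3) \Rightarrow (4) \Rightarrow (5) \Rightarrow (1)$, which is the standard route for Nesterov-type equivalences between first-order characterizations of smooth convex functions.

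First, I would show $(1) \Rightarrow (2)$. The lower bound is the standard first-order characterization of convexity. For the upper bound, I would apply the fundamental theorem of calculus to $t \mapsto f(x + t(y-x))$ to write $f(y) - f(x) - \langle \grad f(x), y-x \rangle = \int_0^1 \langle \grad f(x+t(y-x)) - \grad f(x), y-x \rangle \, dt$, then bound the integrand by $\|\grad f(x+t(y-x)) - \grad f(x)\|_* \|y-x\|$ using the duality pairing and apply $L$-Lipschitzness of $\grad f$ to recover the descent lemma.

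Next, for $(2) \Rightarrow (3)$, which I expect to be the main technical step, I would fix $x$ and define the auxiliary function $\phi_x(z) = f(z) - \langle \grad f(x), z \rangle$. This $\phi_x$ inherits convexity and satisfies the $(L/2)$-quadratic upper bound from $(2)$, and $x$ is its global minimizer (since $\grad \phi_x(x) = 0$). Evaluating the upper bound in $(2)$ applied to $\phi_x$ at the point $z^\star = y - L^{-1} s$, where $s$ is chosen to be a dual-norm-dualizing element of $\grad \phi_x(y) = \grad f(y) - \grad f(x)$, and then using $\phi_x(x) \leq \phi_x(z^\star)$ yields the desired inequality after rearrangement. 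In the Euclidean case one simply takes $s = \grad f(y) - \grad f(x)$; in the general norm case, care must be taken with the definition of the dualizing element, which is where the norm-vs-Hilbert subtleties live.

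The implications $(3) \Rightarrow (4)$ and $(4) \Rightarrow (5)$ are shorter. For $(3) \Rightarrow (4)$, I would write $(3)$ twice---once as stated and once with the roles of $x$ and $y$ swapped---and sum the two inequalities; the $f(x), f(y)$ terms cancel, yielding $(4)$ directly. For $(4) \Rightarrow (5)$, the nonnegativity half of $(5)$ is immediate from the nonnegativity of the squared dual norm, and the upper bound follows by first deriving $\|\grad f(x) - \grad f(y)\|_* \leq L \|x-y\|$ from $(4)$ via Cauchy--Schwarz-type bounding $\tfrac{1}{L}\|\grad f(x) - \grad f(y)\|_*^2 \leq \|\grad f(x) - \grad f(y)\|_* \|x-y\|$, and then applying Cauchy--Schwarz once more to the inner product in $(5)$.

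Finally, for $(5) \Rightarrow (1)$: convexity follows from the monotonicity statement (the lower bound in $(5)$), which is the classical characterization of convex differentiable functions. The $L$-Lipschitz continuity of $\grad f$ follows from the upper bound in $(5)$ together with the Cauchy--Schwarz argument above. The main obstacle throughout is carefully handling the interplay between $\|\cdot\|$ and its dual $\|\cdot\|_*$ in steps $(2) \Rightarrow (3)$ and $(4) \Rightarrow (5)$, since these are clean in the Hilbert setting but require an auxiliary dualizing construction in the general normed setting used by the paper.
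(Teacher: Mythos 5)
The paper does not prove this statement; it is quoted verbatim from \citet[Theorem 2.1.5]{Nesterov2018Lectures} as an imported technical tool, so the only meaningful comparison is with the textbook proof, and your cyclic chain $(1)\Rightarrow(2)\Rightarrow(3)\Rightarrow(4)\Rightarrow(5)\Rightarrow(1)$ is indeed the standard route. Steps $(1)\Rightarrow(2)$, $(2)\Rightarrow(3)$ (the auxiliary function $\phi_x$ with a norm-dualizing element in place of the Euclidean gradient step), $(3)\Rightarrow(4)$ (symmetrize and add), and $(4)\Rightarrow(5)$ are all correct as sketched.

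The gap is in your closing step $(5)\Rightarrow(1)$. You claim the $L$-Lipschitz continuity of $\nabla f$ "follows from the upper bound in $(5)$ together with the Cauchy--Schwarz argument above," but that argument only works starting from $(4)$: there the pairing is bounded \emph{below} by $\tfrac1L\|\nabla f(x)-\nabla f(y)\|_*^2$, so combining with $\langle \nabla f(x)-\nabla f(y),x-y\rangle\le\|\nabla f(x)-\nabla f(y)\|_*\,\|x-y\|$ lets you divide and conclude $\|\nabla f(x)-\nabla f(y)\|_*\le L\|x-y\|$. From $(5)$ you only have two upper bounds on the same pairing, pointing in the same direction, and no conclusion about $\|\nabla f(x)-\nabla f(y)\|_*$ follows; indeed, a merely monotone map satisfying $\langle Tx-Ty,x-y\rangle\le L\|x-y\|^2$ need not be $L$-Lipschitz (e.g.\ a large rotation in $\reals^2$ has pairing zero), so any correct argument must use that $\nabla f$ is a gradient. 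The standard fix is to prove $(5)\Rightarrow(2)$ instead: write $f(y)-f(x)-\langle\nabla f(x),y-x\rangle=\int_0^1 t^{-1}\langle\nabla f(x+t(y-x))-\nabla f(x),\,t(y-x)\rangle\,\D t$ and apply the upper bound of $(5)$ on each segment to get the $\tfrac L2\|x-y\|^2$ majorization (the lower bound of $(5)$ gives monotonicity, hence convexity); this re-enters the cycle at $(2)$, and the Lipschitz bound on $\nabla f$ is then recovered through $(2)\Rightarrow(3)\Rightarrow(4)$. Equivalently, in the Euclidean case one can observe that the upper bound of $(5)$ says $\tfrac L2\|\cdot\|_2^2-f$ has a monotone gradient, hence is convex, which is again $(2)$. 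With that repair the proof is complete.
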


Next, we detail the properties of strongly convex functions.
\begin{theorem}{\citep[Theorem 2.1.10]{Nesterov2018Lectures}}
    If $f: \R^d \rightarrow \R$ is $\mu$-strongly convex and differentiable, then for any $x, y \in \R^d$,
    \begin{itemize}
        \item $f(y) \leq f(x) + \ip{f(x), y - x} + \frac{1}{2\mu} \normsq{\grad f(x) - \grad f(y)}_*$.
        \item $\ip{\grad f(x) - \grad f(y), x - y} \leq \frac{1}{\mu} \normsq{\grad f(x) - \grad f(y)}_*$.
        \item $\mu \norm{x - y} \leq \norm{\grad f(x) - \grad f(y)}_*$.
    \end{itemize}
\end{theorem}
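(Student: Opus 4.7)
The plan is to derive the three inequalities in order of increasing subtlety, each building on the previous. Specifically, I would prove the third inequality (strong monotonicity of the gradient with respect to $\norm{\cdot}_*$) first by a direct calculation, use it to derive the second inequality via Cauchy--Schwarz, and then establish the first inequality using a separate minimizer argument combined with a Young-inequality absorption step.

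For the third inequality, I would add the two strong convexity lower bounds $f(y) \geq f(x) + \ip{\nabla f(x)}{y - x} + \frac{\mu}{2}\normsq{y - x}$ and $f(x) \geq f(y) + \ip{\nabla f(y)}{x - y} + \frac{\mu}{2}\normsq{x - y}$, yielding $\ip{\nabla f(x) - \nabla f(y)}{x - y} \geq \mu \normsq{x - y}$. Upper bounding the left-hand side by the norm/dual-norm pairing $\norm{\nabla f(x) - \nabla f(y)}_* \norm{x - y}$ and dividing by $\norm{x - y}$ (with the case $x = y$ handled trivially) gives exactly $\mu \norm{x - y} \leq \norm{\nabla f(x) - \nabla f(y)}_*$. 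For the second inequality, I would chain two bounds already in hand: Cauchy--Schwarz gives $\ip{\nabla f(x) - \nabla f(y)}{x - y} \leq \norm{\nabla f(x) - \nabla f(y)}_* \norm{x - y}$, and then the third inequality replaces $\norm{x - y}$ by $\mu^{-1}\norm{\nabla f(x) - \nabla f(y)}_*$, producing the claimed $\mu^{-1}\normsq{\nabla f(x) - \nabla f(y)}_*$ upper bound.

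For the first inequality, I would introduce the auxiliary function $\phi(z) := f(z) - \ip{\nabla f(y)}{z}$. Linear perturbations preserve strong convexity, so $\phi$ is $\mu$-strongly convex, and $\nabla \phi(y) = 0$ shows that $y$ minimizes $\phi$. The quadratic growth bound $\phi(x) \geq \phi(y) + \frac{\mu}{2}\normsq{x - y}$ can be rearranged into $f(y) \leq f(x) + \ip{\nabla f(y)}{y - x} - \frac{\mu}{2}\normsq{x - y}$. To replace $\nabla f(y)$ by $\nabla f(x)$ on the right, I would add and subtract $\ip{\nabla f(x)}{y - x}$ and then control the residual cross term $\ip{\nabla f(y) - \nabla f(x)}{y - x}$ via Cauchy--Schwarz followed by Young's inequality with parameter $\mu$, obtaining the upper bound $\frac{1}{2\mu}\normsq{\nabla f(x) - \nabla f(y)}_* + \frac{\mu}{2}\normsq{x - y}$. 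The positive quadratic term in this bound is then cancelled exactly by the $-\frac{\mu}{2}\normsq{x - y}$ already present, producing the stated inequality.

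The main obstacle is the first inequality: its shape is \emph{dual} to the defining strong convexity estimate (an upper bound on $f(y) - f(x) - \ip{\nabla f(x)}{y-x}$ expressed in terms of a gradient-difference norm rather than a primal distance), so it cannot be obtained by simply summing convexity/strong-convexity inequalities. The key insight is the minimizer characterization of $\phi$, which plays the role of $(1/\mu)$-smoothness of the Fenchel conjugate $f^*$, together with the Young parameter tuned precisely to annihilate the otherwise uncontrolled $\frac{\mu}{2}\normsq{x-y}$ term. An alternative route would be to apply Theorem~2.1.5 to $f^*$ (which is $(1/\mu)$-smooth with respect to $\norm{\cdot}_*$ by standard duality) and translate back via $\nabla f^* = (\nabla f)^{-1}$, but this requires separately establishing smoothness of $f^*$ in a general norm and is essentially equivalent work.
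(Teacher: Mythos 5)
Your proof is correct. Note that the paper itself offers no proof of this statement---it is quoted directly from Nesterov's textbook (with a small typo, $\ip{f(x), y-x}$ in place of $\ip{\grad f(x), y-x}$, which you implicitly correct)---so what you supply is a self-contained justification rather than a rederivation of an in-paper argument. Your handling of the third bullet (sum the two strong-convexity inequalities, bound the pairing by $\norm{\grad f(x)-\grad f(y)}_*\norm{x-y}$, divide) and of the second from the third are the standard steps; the second could equally be obtained by adding the first bullet evaluated at $(x,y)$ and $(y,x)$. For the first bullet your route is a mild variant of the classical one: Nesterov fixes $x$, sets $\phi(z)=f(z)-\ip{\grad f(x), z}$, and uses $\phi(x)=\min_z\phi(z)\ge\min_z\{\phi(y)+\ip{\grad\phi(y), z-y}+\tfrac{\mu}{2}\normsq{z-y}\}=\phi(y)-\tfrac{1}{2\mu}\normsq{\grad f(x)-\grad f(y)}_*$, which gives the inequality in one line (the minimization over $z$ is exactly where the dual norm enters, via the conjugate of $\tfrac{\mu}{2}\normsq{\cdot}$); your choice of centering $\phi$ at $y$ instead, invoking only quadratic growth at the minimizer, and then absorbing the cross term $\ip{\grad f(y)-\grad f(x), y-x}$ by Young's inequality with parameter $\mu$ so that the $\tfrac{\mu}{2}\normsq{x-y}$ terms cancel is equally valid and of comparable length. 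One point worth stating explicitly: since $\norm{\cdot}$ is an arbitrary norm, ``$\mu$-strongly convex'' must be read as the first-order inequality $f(y)\ge f(x)+\ip{\grad f(x), y-x}+\tfrac{\mu}{2}\normsq{y-x}$ (not as convexity of $f-\tfrac{\mu}{2}\normsq{\cdot}$, which is not equivalent outside the Euclidean case); this is precisely the form your summing step, your quadratic-growth step, and the claim that linear perturbations preserve strong convexity all rely on, so the proof is consistent, but the convention deserves a sentence.
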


Finally, functions that are both smooth and strongly convex enjoy a number of relevant primal-dual properties.
\begin{theorem}{\citep[Theorem 2.1.12]{Nesterov2018Lectures}}
    \label{thm:smooth_strongly_cvx_inequality}
    If $f$ is both $L$-smooth and $\mu$-strongly convex, then for any $x, y \in \R^d$,
    \begin{align}
         -\ip{\grad f(x), x - y} &= -\tfrac{\mu L}{\mu + L} \norm{x - y}^2 - \tfrac{1}{\mu + L} \norm{\grad f(x) - \grad f(y)}^2 - \ip{\grad f(y), x - y}.  \label{eqn:nesterov_descent}
    \end{align}
\end{theorem}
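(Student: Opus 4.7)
My plan is to use the standard ``Nesterov shift'' trick that converts strong convexity into additional smoothness of an auxiliary function. Define $\phi: \R^d \to \R$ by $\phi(x) = f(x) - \tfrac{\mu}{2}\norm{x}^2$ so that $\grad \phi(x) = \grad f(x) - \mu x$. By the assumed $\mu$-strong convexity of $f$, $\phi$ is convex, and since $\grad f$ is $L$-Lipschitz and $\grad (\tfrac{\mu}{2}\norm{\cdot}^2)$ is $\mu$-Lipschitz, $\grad \phi$ is $(L-\mu)$-Lipschitz. Thus $\phi$ is a convex, $(L-\mu)$-smooth function to which the earlier co-coercivity characterization applies.

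The key step is to invoke the fourth equivalent condition of \Cref{thm:smooth_cvx} applied to $\phi$:
\[
\frac{1}{L-\mu}\norm{\grad \phi(x) - \grad \phi(y)}^2 \le \inp{\grad \phi(x) - \grad \phi(y)}{x - y}.
\]
Substituting $\grad \phi(z) = \grad f(z) - \mu z$ on both sides and expanding the squared norm yields
\[
\norm{\grad f(x) - \grad f(y)}^2 - 2\mu \inp{\grad f(x) - \grad f(y)}{x-y} + \mu^2 \norm{x-y}^2 \le (L-\mu)\inp{\grad f(x) - \grad f(y)}{x-y} - \mu(L-\mu)\norm{x-y}^2,
\]
after multiplying through by $(L-\mu)$. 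Collecting the $\inp{\cdot}{\cdot}$ terms on one side and the $\norm{x-y}^2$ terms on the other collapses the coefficient on $\inp{\grad f(x) - \grad f(y)}{x-y}$ to $(L+\mu)$ and the coefficient on $\norm{x-y}^2$ to $\mu L$, giving
\[
\frac{1}{L+\mu}\norm{\grad f(x) - \grad f(y)}^2 + \frac{\mu L}{L+\mu}\norm{x-y}^2 \le \inp{\grad f(x) - \grad f(y)}{x - y}.
\]
Splitting the inner product as $\inp{\grad f(x)}{x-y} - \inp{\grad f(y)}{x-y}$ and rearranging to solve for $-\inp{\grad f(x)}{x-y}$ yields the stated bound \eqref{eqn:nesterov_descent} exactly.

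The only subtlety is the boundary case $L = \mu$, in which case $\phi$ is constant-gradient (so $\grad f(x) - \grad f(y) = \mu(x-y)$), and the inequality reduces to an equality that can be checked directly. There is no deep obstacle in this proof; the main work is purely algebraic bookkeeping in passing from the co-coercivity inequality for $\phi$ to the combined strong-monotonicity--co-coercivity bound for $f$, and then re-expressing it in the one-sided form used elsewhere in the manuscript (e.g.\ in the descent step \eqref{eq:expansion}).
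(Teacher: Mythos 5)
Your argument is correct, and in fact the paper gives no proof of this statement at all: it is imported verbatim from \citet[Theorem 2.1.12]{Nesterov2018Lectures}, so there is nothing internal to compare against. What you wrote is precisely the standard proof from Nesterov's book: pass to $\phi = f - \tfrac{\mu}{2}\norm{\cdot}^2$, apply the co-coercivity characterization (item 4 of \Cref{thm:smooth_cvx}) to the convex, $(L-\mu)$-smooth function $\phi$, and expand; your algebra collapsing the coefficients to $\mu L$ and $L+\mu$ checks out, and you correctly isolate the degenerate case $L=\mu$, where $\grad f(x)-\grad f(y)=\mu(x-y)$ and the bound holds with equality.

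Two small remarks. First, your justification that $\grad\phi$ is $(L-\mu)$-Lipschitz (``difference of Lipschitz constants'') is too quick as stated: a naive triangle inequality only yields $L+\mu$. The clean route is through the equivalences you are already citing: $f$ being $L$-smooth gives $\ip{\grad f(x)-\grad f(y)}{x-y}\le L\norm{x-y}^2$ (item 5 of \Cref{thm:smooth_cvx}), hence $\ip{\grad\phi(x)-\grad\phi(y)}{x-y}\le (L-\mu)\norm{x-y}^2$, and since $\phi$ is convex this is equivalent to $(L-\mu)$-smoothness of $\phi$, which is exactly what item 4 needs. Second, what you prove (and what Nesterov's theorem actually asserts) is an inequality, $-\ip{\grad f(x)}{x-y}\le -\tfrac{\mu L}{\mu+L}\norm{x-y}^2-\tfrac{1}{\mu+L}\norm{\grad f(x)-\grad f(y)}^2-\ip{\grad f(y)}{x-y}$; the ``$=$'' in \eqref{eqn:nesterov_descent} is a typo in the manuscript, as equality fails already for quadratics with an eigenvalue strictly between $\mu$ and $L$. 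The inequality is the form that is actually usable in descent arguments such as \eqref{eq:expansion}, so your one-sided statement is the right one.
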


\begin{lemma}\label{lem:smooth_strg_cvx} 
Let $f: \R^d \rightarrow \R$
be 
    $\mu$-strongly convex and $M$-smooth. Then, we have for any $w, v \in \R^d$,
    \begin{align*}
        f(v) & \geq f(w) 
        + \nabla f(w)^\top (v-w) 
        + \frac{1}{2(M+\mu)}\|\nabla f(w) - \nabla f(v)\|_2^2
        + \frac{\mu}{4}\|w-v\|_2^2.
    \end{align*}
\end{lemma}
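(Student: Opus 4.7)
My plan is to reduce the statement to the standard smooth convex inequality (\Cref{thm:smooth_cvx}) applied to a suitable shift of $f$, and then use a one-parameter Young's inequality to redistribute the resulting quadratic terms with exactly the right coefficients. Concretely, I would set
\[
    \phi(w) := f(w) - \tfrac{\mu}{2}\norm{w}_2^2,
\]
so that $\phi$ is convex and $(M-\mu)$-smooth. (If $M=\mu$ the function $f$ is purely quadratic and the inequality reduces to an equality that can be checked by hand, so I would treat this boundary case separately and assume $M>\mu$ for the main argument.) Applying the third equivalent form in \Cref{thm:smooth_cvx} to $\phi$ gives
\[
    \phi(v) \ge \phi(w) + \inp{\nabla\phi(w)}{v-w} + \tfrac{1}{2(M-\mu)}\norm{\nabla\phi(v)-\nabla\phi(w)}_2^2.
\]
Substituting $\nabla\phi = \nabla f - \mu\,\mathrm{Id}$ and collecting the quadratic terms via the identity $\tfrac{\mu}{2}(\|v\|^2-\|w\|^2) - \mu w^\top(v-w) = \tfrac{\mu}{2}\|v-w\|_2^2$, I obtain the intermediate bound
\[
    f(v) - f(w) - \inp{\nabla f(w)}{v-w}
    \ \ge\
    \tfrac{\mu}{2}\norm{v-w}_2^2 + \tfrac{1}{2(M-\mu)}\,\bigl\|\nabla f(v)-\nabla f(w)-\mu(v-w)\bigr\|_2^2.
\]

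The remaining task is to convert the cross-term $\|\nabla f(v)-\nabla f(w)-\mu(v-w)\|_2^2$ into a combination of $\|\nabla f(v)-\nabla f(w)\|_2^2$ and $\|v-w\|_2^2$ with coefficients matching the target. For this I would apply the Young-type lower bound $\|a-b\|_2^2 \ge (1-\lambda)\|a\|_2^2 - \tfrac{1-\lambda}{\lambda}\|b\|_2^2$, valid for any $\lambda\in(0,1)$, with $a=\nabla f(v)-\nabla f(w)$ and $b=\mu(v-w)$. This converts the lower bound into
\[
    \Bigl[\tfrac{\mu}{2} - \tfrac{(1-\lambda)\mu^2}{2\lambda(M-\mu)}\Bigr]\norm{v-w}_2^2 + \tfrac{1-\lambda}{2(M-\mu)}\norm{\nabla f(v)-\nabla f(w)}_2^2.
\]
The miracle is that the single choice $\lambda = \tfrac{2\mu}{M+\mu}$ makes both coefficients exactly right: $1-\lambda = \tfrac{M-\mu}{M+\mu}$ so the gradient coefficient equals $\tfrac{1}{2(M+\mu)}$, while a short calculation gives $\tfrac{(1-\lambda)\mu^2}{2\lambda(M-\mu)} = \tfrac{\mu}{4}$, leaving $\tfrac{\mu}{2}-\tfrac{\mu}{4}=\tfrac{\mu}{4}$.

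I expect the main obstacle to be purely organizational rather than technical: one must discover the precise Young's parameter $\lambda=2\mu/(M+\mu)$ that simultaneously matches the two target coefficients. Apart from that, one should double-check that the boundary cases ($\mu=M$, and the trivial case $v=w$) are handled by the direct quadratic computation noted above, so the chain of inequalities is valid for all $M\ge\mu>0$ and all $w,v\in\R^d$.
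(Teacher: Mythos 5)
Your proposal is correct and follows essentially the same route as the paper's proof: subtract the quadratic $\tfrac{\mu}{2}\norm{\cdot}_2^2$, apply the smooth-convex cocoercivity inequality to the resulting $(M-\mu)$-smooth convex function, and then use Young's inequality with the parameter $2/(M+\mu)$ (your $\lambda=2\mu/(M+\mu)$ is the same choice after absorbing $\mu$ into the second vector) to recover the stated coefficients. Your explicit treatment of the degenerate case $M=\mu$ is a small point of extra care the paper omits.
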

\begin{proof}
    The function $g = f - \mu \|\cdot\|_2^2/2$ is convex and $M-\mu$ smooth.
    Hence, we have by line 3 of \Cref{thm:smooth_cvx} for any $w, v \in \R^d$,
    \[
        g(v) \geq g(w) 
        + \nabla g(w)^\top (v - w) 
        + \frac{1}{2(M-\mu)}\|\grad g(v)-\grad g(w)\|_2^2.
    \]
    Expanding $g$ and $\nabla g$, we get 
    \begin{align*}
        f(v) & \geq f(w) 
        + \nabla f(w)^\top (v-w) 
        + \frac{1}{2(M-\mu)}\|\nabla f(w) - \nabla f(v)\|_2^2 \\
        & \quad + \frac{\mu M}{2(M-\mu)}\|w-v\|_2^2 
        - \frac{\mu}{M-\mu}(\nabla f(w) - \nabla f(v))^\top(w-v).
    \end{align*}
    Using Young's inequality, that is, $a^\top b \leq \frac{\alpha}{2} \|a\|_2^2
    + \frac{\alpha^{-1}}{2}\|b\|_2^2$, we have 
    \begin{align*}
        f(v) & \geq f(w) 
        + \nabla f(w)^\top (v-w) 
        + \frac{1 - \alpha \mu}{2(M-\mu)}\|\nabla f(w) - \nabla f(v)\|_2^2 \\
        & \quad + \frac{\mu (M-\alpha^{-1})}{2(M-\mu)}\|w-v\|_2^2.
    \end{align*}
    Taking $\alpha = \frac{2}{\mu + M}$ gives the claim. 
\end{proof}
We state a few properties of the prox operator.
\begin{theorem}[Co-coercivity of the prox]
\label{thm:prox:cocoercive}
    If $f : \reals^d \to \reals$ is $\mu$-strongly convex, then we have
    for any constant $\eta > 0$ that
    \[
    \inp{x - y}{\prox_{\eta f}(x) - \prox_{\eta f}(y)}
    \ge (1 + \eta \mu) \norm{\prox_{\eta f}(x) - \prox_{\eta f}(y)}^2 \,.
    \]
\end{theorem}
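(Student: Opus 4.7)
My plan is to establish the inequality using two standard ingredients: the first-order optimality characterization of the proximal operator, and the strong monotonicity of the subdifferential of a strongly convex function. Let $u = \prox_{\eta f}(x)$ and $v = \prox_{\eta f}(y)$ throughout; the goal then reads $\inp{x-y}{u-v} \ge (1+\eta\mu)\norm{u-v}^2$.

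First, I would write down the optimality conditions defining the prox. Since $u$ minimizes $z \mapsto f(z) + \tfrac{1}{2\eta}\norm{x-z}^2$, Fermat's rule gives $\tfrac{1}{\eta}(x-u) \in \partial f(u)$, and symmetrically $\tfrac{1}{\eta}(y-v) \in \partial f(v)$. This converts the prox difference $u-v$ into a statement about a pair of admissible subgradients of $f$ at $u$ and $v$.

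Next, I would invoke strong convexity. A $\mu$-strongly convex function has a $\mu$-strongly monotone subdifferential: for any $g_u \in \partial f(u)$ and $g_v \in \partial f(v)$, $\inp{g_u - g_v}{u - v} \ge \mu \norm{u-v}^2$. Applying this with the particular subgradients above,
\[
\frac{1}{\eta}\inp{(x-u)-(y-v)}{u-v} \;\ge\; \mu \norm{u-v}^2.
\]
Rearranging $\inp{(x-y)-(u-v)}{u-v} = \inp{x-y}{u-v} - \norm{u-v}^2$ and multiplying through by $\eta$ yields $\inp{x-y}{u-v} \ge (1+\eta\mu)\norm{u-v}^2$, which is precisely the claim.

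There is essentially no obstacle; the only step worth mentioning carefully is justifying strong monotonicity of $\partial f$ from strong convexity of $f$, which follows by adding the two subgradient inequalities $f(v) \ge f(u) + \inp{g_u}{v-u} + \tfrac{\mu}{2}\norm{u-v}^2$ and $f(u) \ge f(v) + \inp{g_v}{u-v} + \tfrac{\mu}{2}\norm{u-v}^2$. If $f$ is merely convex (rather than strongly convex), the same argument with $\mu=0$ recovers the standard firm nonexpansiveness $\inp{x-y}{u-v} \ge \norm{u-v}^2$, so the result is a clean quantitative refinement in the strongly convex regime.
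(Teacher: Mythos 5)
Your proof is correct. The paper states this result without proof (it is listed among standard facts in the convex-analysis appendix), and your argument is precisely the canonical derivation: the optimality condition $\tfrac{1}{\eta}(x-u) \in \partial f(u)$, $\tfrac{1}{\eta}(y-v) \in \partial f(v)$ combined with the $\mu$-strong monotonicity of $\partial f$, which you correctly justify by adding the two strong-convexity subgradient inequalities; the rearrangement to $\inp{x-y}{u-v} \ge (1+\eta\mu)\norm{u-v}^2$ is immediate and the firm-nonexpansiveness special case $\mu=0$ is a nice sanity check.
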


The same result applied to the convex conjugate $f^\star$ of $f$ and noting that 
$\grad \Mcal_\eta[f](x) = \prox_{f^\star / \eta}(x / \eta)$ 
gives the following result:
\begin{theorem}[Co-coercivity of the prox]
\label{thm:prox-grad:cocoercive}
    If $f : \reals^d \to \reals$ is $L$-smooth, then we have
    for any constant $\eta > 0$ that
    \[
    \inp{x - y}{\grad \Mcal_\eta[f](x) - \grad \Mcal_\eta[f](y)}
    \ge \eta \left(1 + \frac{1}{L\eta}\right) \norm{\grad \Mcal_\eta[f](x) - \grad \Mcal_\eta[f](y)}^2 \,.
    \]
\end{theorem}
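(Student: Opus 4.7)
The plan is to follow the hint given immediately before the statement: derive the inequality by applying the already-proved prox co-coercivity result (\Cref{thm:prox:cocoercive}) to the convex conjugate $f^\star$, combined with the Moreau decomposition identity $\grad \Mcal_\eta[f](x) = \prox_{f^\star/\eta}(x/\eta)$.

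The first step will be to recall the standard duality between smoothness and strong convexity: if $f$ is convex and $L$-smooth, then its Fenchel conjugate $f^\star$ is $(1/L)$-strongly convex (this is the classical Baillon--Haddad-type duality). Consequently, the rescaled function $g := f^\star/\eta$ is $1/(L\eta)$-strongly convex. Because $g$ is strongly convex, \Cref{thm:prox:cocoercive} applies to it with step-size parameter $1$, yielding
\[
\inp{u-v}{\prox_{g}(u) - \prox_{g}(v)} \;\ge\; \bigl(1 + \tfrac{1}{L\eta}\bigr)\,\norm{\prox_{g}(u) - \prox_{g}(v)}^2
\]
for all $u,v \in \reals^d$.

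Next, I would invoke the Moreau decomposition, which for any closed convex $f$ and $\eta > 0$ gives
\[
\prox_{\eta f}(x) + \eta\,\prox_{f^\star/\eta}(x/\eta) \;=\; x,
\]
and hence, using $\grad \Mcal_\eta[f](x) = \eta^{-1}(x - \prox_{\eta f}(x))$, the identity
\[
\grad \Mcal_\eta[f](x) \;=\; \prox_{f^\star/\eta}(x/\eta) \;=\; \prox_{g}(x/\eta).
\]
Substituting $u = x/\eta$, $v = y/\eta$ into the co-coercivity inequality above, bilinearity of the inner product pulls out a factor of $1/\eta$ on the left, and multiplying through by $\eta$ yields exactly
\[
\inp{x-y}{\grad \Mcal_\eta[f](x) - \grad \Mcal_\eta[f](y)} \;\ge\; \eta\bigl(1 + \tfrac{1}{L\eta}\bigr)\,\norm{\grad \Mcal_\eta[f](x) - \grad \Mcal_\eta[f](y)}^2,
\]
which is the claim.

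There is no real obstacle here since the two ingredients (strong-convexity/smoothness conjugate duality and the Moreau decomposition identity) are completely standard and the algebra is just rescaling. The only point that warrants care is bookkeeping the factor of $\eta$ when substituting $u = x/\eta$: one must verify that the strong-convexity constant $1/(L\eta)$ combined with the implicit step size $1$ in $\prox_g$ produces the multiplier $\eta(1 + 1/(L\eta))$ rather than, say, $1 + 1/(L\eta)$ or $\eta + 1/L$. Writing out the substitution explicitly avoids this pitfall.
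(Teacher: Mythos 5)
Your proposal is correct and follows exactly the route the paper itself indicates: apply the prox co-coercivity result (\Cref{thm:prox:cocoercive}) to the $(1/L)$-strongly convex conjugate $f^\star$ (rescaled by $1/\eta$) and use the Moreau-decomposition identity $\grad \Mcal_\eta[f](x) = \prox_{f^\star/\eta}(x/\eta)$, with the factor-of-$\eta$ bookkeeping handled correctly. The only implicit assumption, which you rightly flag, is convexity of $f$ (needed for $f^\star$ to be $(1/L)$-strongly convex), and this holds in the paper's setting.
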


\begin{lemma}[{\citep[Lemma 4]{blondel2020fast}}]\label{lem:cvx_ordering}
For a convex function $f: \R \rightarrow\R$, if $x_1 \geq  x_2$ and $y_2 \geq y_1$, then 
\[
f(y_1 - x_1) + f(y_2 - x_2) \geq f(y_2 - x_1) + f(y_1 - x_2).
\]
\end{lemma}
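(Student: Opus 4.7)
My plan is to reduce the claimed inequality to a single application of Jensen's inequality by exhibiting the two ``interior'' arguments $y_2 - x_1$ and $y_1 - x_2$ as complementary convex combinations of the two ``extremal'' arguments $y_1 - x_1$ and $y_2 - x_2$.

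First, I would set $a := y_1 - x_1$, $b := y_2 - x_2$, $c := y_2 - x_1$, and $d := y_1 - x_2$, so that the target inequality reads $f(a) + f(b) \geq f(c) + f(d)$. Two algebraic facts follow immediately: (i) $a + b = c + d = y_1 + y_2 - x_1 - x_2$, and (ii) using the hypotheses $x_1 \geq x_2$ and $y_2 \geq y_1$, the differences $c - a = b - d = y_2 - y_1$ and $d - a = b - c = x_1 - x_2$ are all nonnegative, so in particular $a \leq c, d \leq b$.

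Assuming without loss of generality that $a < b$ (if $a = b$, then $c = d = a = b$ and the inequality holds as an equality), I would set $\lambda := (b - c)/(b - a) \in [0, 1]$. Using (i), a one-line calculation gives $(b - d)/(b - a) = 1 - \lambda$, yielding the complementary representations
\[
    c = \lambda a + (1 - \lambda) b, \qquad d = (1 - \lambda) a + \lambda b.
\]
Applying convexity of $f$ to each and summing then gives $f(c) + f(d) \leq [\lambda + (1 - \lambda)] f(a) + [(1 - \lambda) + \lambda] f(b) = f(a) + f(b)$, which is the claim.

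There is no real obstacle here: the content of the lemma is just that a convex function evaluated on two numbers with a fixed sum grows as the two numbers spread apart. The only items requiring care are the degenerate case $a = b$, handled separately, and the observation that the two convex-combination weights used for $c$ and $d$ must be complementary, which is forced precisely by the equal-sum identity in (i).
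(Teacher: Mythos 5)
Your proof is correct. Note that the paper does not prove this lemma at all: it is imported verbatim as Lemma~4 of \citet{blondel2020fast}, so there is no in-paper argument to compare against. Your route is a clean, self-contained one: with $a = y_1 - x_1$, $b = y_2 - x_2$, $c = y_2 - x_1$, $d = y_1 - x_2$ you correctly observe $a + b = c + d$ and $a \le c, d \le b$, and the equal-sum identity does force the two convex-combination weights to be complementary, so summing the two Jensen inequalities gives exactly $f(c) + f(d) \le f(a) + f(b)$; the degenerate case $a = b$ is handled properly. An equally standard alternative (and essentially the one usually given for such results) is the increasing-increments form of convexity: writing $\delta = y_2 - y_1 \ge 0$, the map $u \mapsto f(u + \delta) - f(u)$ is nondecreasing, and applying this with $u = y_1 - x_1 \le y_1 - x_2$ rearranges directly into the claimed inequality. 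The two arguments are equivalent in content (both express that a convex function of two numbers with fixed sum increases as they spread apart); yours trades the monotone-differences observation for one explicit convex-combination computation, which is just as rigorous.
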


\begin{lemma}\label{lem:smooth_weights} Define for $l \in \R^n$,
    \[
        h(l) = \max_{q \in \mathcal{P}(\sigma)} l^\top q - \frac{\bar \nu}{2}\|q - \ones_n/n\|_2^2.
    \]
    The function $h$ is $1/\bar \nu$-smooth
    and convex such that for any $l, l' \in \R^n$,
    \[
        \bar \nu \|\nabla h(l)-\nabla h(l')\|_2^2  
        \leq (\nabla h(l) - \nabla h(l'))^\top (l -l') 
        \leq \frac{1}{\bar \nu}\|l - l'\|_2^2.
    \]
\end{lemma}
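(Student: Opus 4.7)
The plan is to recognize that $h$ is the conjugate-like smoothing of the support function of $\mathcal{P}(\sigma)$, so all three claims follow from standard results on smoothing by strongly convex penalties (as in Nesterov's 2005 framework, already invoked via \Cref{prop:primobjgradient}), combined with the equivalences in \Cref{thm:smooth_cvx}.

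First, I would establish convexity and differentiability. For each fixed $q \in \mathcal{P}(\sigma)$, the map $l \mapsto l^\top q - \tfrac{\bar\nu}{2}\|q - \ones_n/n\|_2^2$ is affine in $l$, hence convex. The function $h$ is the pointwise supremum of this family over the compact convex set $\mathcal{P}(\sigma)$, so $h$ is convex. Moreover, for each $l$, the inner maximization is of a continuous $\bar\nu$-strongly concave function over a compact convex set, so its maximizer $q^\star(l) := \argmax_{q \in \mathcal{P}(\sigma)} \{ l^\top q - \tfrac{\bar\nu}{2}\|q - \ones_n/n\|_2^2\}$ is unique. By Danskin's theorem (used in \Cref{lem:danskin}), $h$ is continuously differentiable with $\nabla h(l) = q^\star(l)$.

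Next, I would show that $\nabla h$ is $1/\bar\nu$-Lipschitz. The penalty $q \mapsto \tfrac{\bar\nu}{2}\|q - \ones_n/n\|_2^2$ is $\bar\nu$-strongly convex on $\R^n$ with respect to $\|\cdot\|_2$, so the standard smoothing result (Nesterov, 2005, Theorem 1; cf.\ the argument in \Cref{prop:primobjgradient}) yields that $l \mapsto q^\star(l) = \nabla h(l)$ is Lipschitz continuous with constant equal to the inverse of the strong convexity modulus, namely $1/\bar\nu$. Combined with convexity of $h$, this is precisely the assertion that $h$ is convex and $(1/\bar\nu)$-smooth.

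Finally, both inequalities are immediate corollaries. Applying \Cref{thm:smooth_cvx} (item 4) to the convex $(1/\bar\nu)$-smooth function $h$ gives the co-coercivity bound
\[
    \bar\nu \, \|\nabla h(l) - \nabla h(l')\|_2^2 \le \bigl(\nabla h(l) - \nabla h(l')\bigr)^\top (l - l') \,,
\]
which is the left-hand inequality. The right-hand inequality follows by Cauchy--Schwarz followed by the $(1/\bar\nu)$-Lipschitz bound on $\nabla h$:
\[
    \bigl(\nabla h(l) - \nabla h(l')\bigr)^\top (l - l') \le \|\nabla h(l) - \nabla h(l')\|_2 \cdot \|l - l'\|_2 \le \tfrac{1}{\bar\nu}\|l-l'\|_2^2 \,.
\]
There is no real obstacle here: the content is entirely bookkeeping on top of Danskin's theorem and Nesterov's smoothing lemma, both of which are already used in the paper. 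The only care needed is to confirm that the Euclidean penalty $\tfrac{1}{2}\|q - \ones_n/n\|_2^2$ is $1$-strongly convex (so its scaled version is $\bar\nu$-strongly convex) to get the right constant $1/\bar\nu$.
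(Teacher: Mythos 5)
Your proof is correct, and it follows exactly the route the paper itself relies on: the paper states this lemma without proof as a standard fact, and your argument assembles it from the same ingredients already used elsewhere — Danskin's theorem (as in \Cref{lem:danskin}) for differentiability with $\nabla h(l)$ equal to the unique maximizer, the Nesterov-type smoothing bound (as in \Cref{prop:primobjgradient}) for the $1/\bar\nu$-Lipschitz gradient, and co-coercivity from \Cref{thm:smooth_cvx} plus Cauchy--Schwarz for the two inequalities. No gaps; the constant bookkeeping ($\bar\nu$-strong convexity of the quadratic penalty giving Lipschitz constant $1/\bar\nu$) is handled correctly.
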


\clearpage

\section{Experimental Details}\label{sec:a:experiments}
\begin{table}[t]
\renewcommand{\arraystretch}{1.3}
    \centering
    \begin{tabular}{cccccc}
    \toprule
        {\bf Dataset} & $d$ & $n_{\text{train}}$ & $n_{\text{test}}$ & {\bf Task} & {\bf Source}\\
        \hline
        \texttt{yacht} & 6 & 244 & 62 & Regression & UCI\\
        \texttt{energy} & 8 & 614 & 154 & Regression& UCI\\
        \texttt{concrete} & 8 & 824 & 206 & Regression& UCI\\
        \texttt{kin8nm} & 8 & 6,553 & 1,639 & Regression& OpenML\\
        \texttt{power} & 4 & 7,654 & 1,914 & Regression& UCI\\
        \hline
        \texttt{diabetes} & 33 & 4,000 & 1,000 & Binary Classification & Fairlearn\\
        \texttt{acsincome} & 202 & 4,000 & 1,000 & Regression & Fairlearn\\
        \hline
        \texttt{amazon} & 535 & 10,000 & 10,000 & Multiclass Classification & WILDS\\
        \texttt{iwildcam} & 9420 & 20,000 & 5,000 & Multiclass Classification & WILDS\\
    \bottomrule
    \end{tabular}
    \vspace{1em}
    \caption{Dataset attributes and dimensionality $d$, train sample size $n_{\text{train}}$, and test sample size $n_{\text{test}}$.}
    \label{tab:dataset}
\end{table}

\subsection{Tasks \& Objectives}
\label{sec:a:task}

In all settings, we consider supervised learning tasks specified by losses of the form
\begin{align*}
    \ell_i(w) = h(y_i, w^\top \varphi(x_i)),
\end{align*}
where we consider an input $x_i \in \msc{X}$, a feature map $\varphi: \msc{X} \to \reals^d$, and a label $y_i \in \msc{Y}$. The function $h: \msc{Y} \times \R \rightarrow \R$ measures the error between the true label and another value which is the prediction in regression and the logit probabilities of the associated classes in classification. In the regression tasks, $\msc{Y} = \R$ and we used the squared loss
\[
    \ell_i(w) = \frac{1}{2} (y_i - w\T \phi(x_i))^2 \,.
\]
For binary classification, we have $\msc{Y} = \{-1, 1\}$, denoting a negative and positive class. We used the binary logistic loss
\[
    \ell_i(w) = - y_i x_i^\top w + \ln(1+ e^{x_i^\top w}) \,.
\]
For multiclass classification, $\msc{Y} = \{1, \ldots, C\}$ where $C$ is the number of classes. We used the multinomial logistic loss:
\[
    \ell_i(w) = -\ln p_{y_i}(x_i; w), \text{ where } p_{y_i}(x_i; w) := \frac{\exp\p{w_{\cdot y}^\top x_i}}{\sum_{y' = 1}^C \exp\p{w_{\cdot y'}^\top x_i}}, \ w \in \R^{d \times C}
\]
The design matrix $(\varphi(x_1), \ldots, \varphi(x_n)) \in \R^{n \times d}$ is standardized to have columns with zero mean and unit variance, and the estimated mean and variance from the training set is used to standardize the test sets as well.
Our final objectives are of the form 
\begin{align*}
    \primobj(w) = \max_{q \in \msc{P}(\sigma)} \sum_{i=1}^n q_i \ell_i(w) - \nu n\norm{q - \ones_n/ n}_2^2 + \frac{\mu}{2} \norm{w}_2^2
\end{align*}
for shift cost $\nu \geq 0$ and regularization constant $\mu \geq 0$.

\subsection{Datasets}
\label{sec:a:datasets}

We detail the datasets used in the experiments. If not specified below, the input space $\msc{X} = \R^d$ and $\varphi$ is the identity map. The sample sizes, dimensions, and source of the datasets are summarized in \Cref{tab:dataset}, where $d$ refers to the dimension of each $\varphi(x_i)$.
\begin{enumerate}[nosep, label=(\alph*), leftmargin=\widthof{ (a) }]
    \item \yacht:
prediction of the residuary resistance of a sailing yacht based on its physical attributes \cite{Tsanas2012AccurateQE}.
    \item \energy:
prediction of the cooling load of a building based on its physical attributes \cite{Segota2020Artificial}.
    \item \concrete:
prediction of the compressive strength of a concrete type based on its physical and chemical attributes \cite{Yeh2006Analysis}. 
    \item \kinnm:
prediction of the distance of an 8 link all-revolute robot arm to a spatial endpoint \citep{Akujuobi2017Delve}. 
    \item \power:
prediction of net hourly electrical energy output of a power plant given environmental factors \citep{Tufekci2014Prediction}.
    \item \diabetes:
prediction of readmission for diabetes patients based on 10 years worth of clinical care data at 130 US hospitals \citep{Rizvi2014Impact}.
    \item \acsincome:
prediction of income of US adults given features compiled from the American Community Survey (ACS) Public Use Microdata Sample (PUMS) \citep{Ding2021Retiring}.
    \item \amazon:
prediction of the review score of a sentence taken from Amazon products. Each input $x \in \msc{X}$ is a sentence in natural language and the feature map $\varphi(x) \in \reals^d$ is generated by the following steps:
\begin{itemize}
    \item A BERT neural network \cite{Devlin2019BERTPO} (fine-tuned on $10,000$ held-out examples) is applied to the text $x_i$, resulting in vector $x'_i$.
    \item The $x'_1, \ldots, x'_n$ are normalized to have unit norm.
    \item Principle Components Analysis (PCA) is applied, resulting in $105$ components that explain $99\%$ of the variance, resulting in vectors $x''_i \in \R^{105}$. The $d$ in \Cref{tab:dataset} refers to the total dimension of the parameter vectors for all $5$ classes.
\end{itemize}
    \item \iwildcam:
prediction of an animal or flora in an image from wilderness camera traps, with heterogeneity in illumination, camera angle, background, vegetation, color, and relative animal frequencies \cite{beery2020iwildcam}. Each input $x \in \msc{X}$ is an image the feature map $\varphi(x) \in \reals^d$ is generated by the following steps:
\begin{itemize}
    \item A ResNet50 neural network \cite{He2016DeepResidual} that is pretrained on ImageNet \cite{deng2009imagenet} is applied to the image $x_i$, resulting in vector $x'_i$.
    \item The $x'_1, \ldots, x'_n$ are normalized to have unit norm.
    \item Principle Components Analysis (PCA) is applied, resulting in $d = 157$ components that explain $99\%$ of the variance. The $d$ in \Cref{tab:dataset} refers to the total dimension of the parameter vectors for all $60$ classes.
\end{itemize}

\end{enumerate}

\subsection{Hyperparameter Selection}
\label{sec:a:hyperparam}

We fix a minibatch size of $64$ SGD and SRDA and an epoch length of $N = n$ for LSVRG. For SaddleSAGA we consider three schemes for selecting the primal and dual learning rates that reduce to searching for a single parameter $\eta > 0$, as described in \Cref{sec:a:additional}. In practice, the regularization parameter $\mu$ and shift cost $\nu$ are tuned by a statistical metric, i.e. generalization error as measured on a validation set. We study the optimization performance of the methods for multiple values of each in \Cref{sec:a:additional}.

For the tuned hyperparameters, we use the following method. Let $k \in \{1, \ldots, K\}$ be a seed that determines algorithmic randomness. This corresponds to sampling a minibatch without replacement for SGD and SRDA and a single sampled index for SaddleSAGA, LSVRG, and \lsaga. Letting $\mc{L}_k(\eta)$ denote the average value of the training loss of the last ten passes using learning rate $\eta$ and seed $k$, the quantity
$   \mc{L}(\eta) = \frac{1}{K} \sum_{k=1}^K \mc{L}_k(\eta)$ was minimized to select $\eta$.
The learning rate $\eta$ is chosen in the set $\{1\times 10^{-4}, 3\times 10^{-4}, 1\times 10^{-3}, 3\times 10^{-3}, 1\times 10^{-2}, 3\times 10^{-2}, 1\times 10^{-1}, 3\times 10^{-1}, 1\times 10^{0}, 3\times 10^{0}\}$, with two orders of magnitude lower numbers used in \acsincome due to its sparsity. We discard any learning rates that cause the optimizer to diverge for any seed.

\subsection{Compute Environment}
\label{sec:a:code}

No GPUs were used in the study; Experiments were run on a CPU workstation with an Intel i9 processor, a clock speed of 2.80GHz, 32 virtual cores, and 126G of memory. The code used in this project was written in Python 3 using the PyTorch and Numba packages for automatic differentiation and just-in-time compilation, respectively.

\clearpage

\section{Additional Experiments}\label{sec:a:additional}
\myparagraph{Varying Risk Parameters} We study the effect of varying the risk parameters, that is $(p, b, \gamma)$ for the $p$-superquantile, $b$-extremile, $\gamma$-ESRM, choosing spectral to increase the condition number $\kappa_\sigma = n\sigma_n$ compared to the experiments in the main text. We use $p = 0.25$, $b = 2.5$, and $\gamma = 1/e^{-2}$ to generate ``hard'' version of the superquantile, extremile, and ESRM. \Cref{fig:hard} plots the corresponding training curves for four datasets of varying sample sizes: \yacht, \energy, \concrete, and \iwildcam. We see that the comparison of methods is the same as the original methods, that is that \lsaga performs the best or close to best in terms of optimization trajectories. Except on \concrete, SaddleSAGA generally matches the performance of \lsaga. The trajectory of LSVRG is noticeably noisier than on the original settings; we hypothesize that the bias accrued by this epoch-based algorithm is exacerbated by the skewness in the spectrum, as mentioned in \citet[Proposition 1]{Mehta2022Stochastic}.

\begin{figure}[t]
    \centering
    \includegraphics[width=0.8\linewidth]{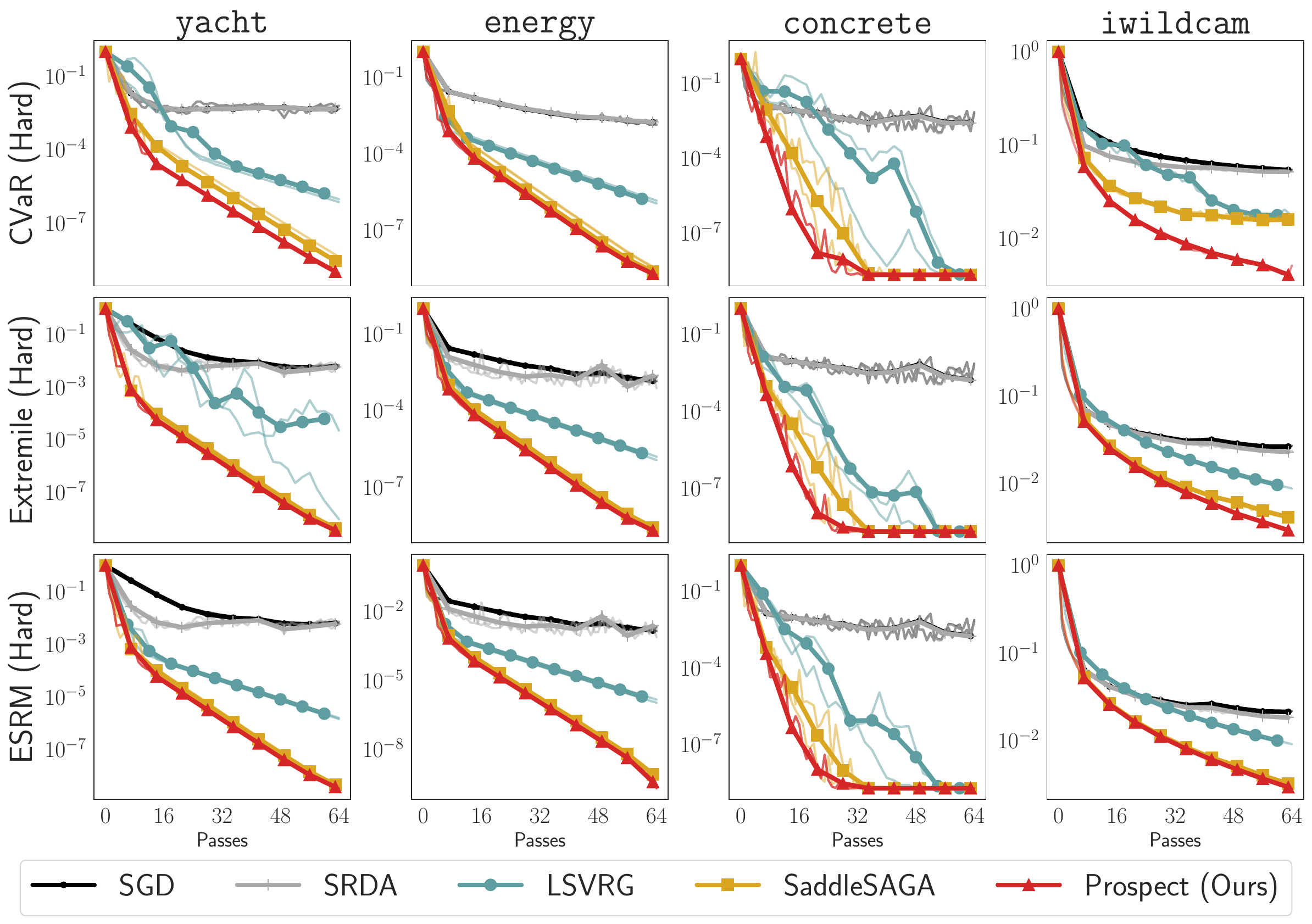}
    \caption{{\bf Harder risk parameter settings.} Each row represents a different ``hard'' variant of the superquantile, extremile, and ESRM spectra. Columns represent different datasets. Suboptimality \eqref{eqn:subopt} is measured on the $y$-axis while the $x$-axis measures the total number of gradient evaluations made divided by $n$, i.e. the number of passes through the training set.}
    \label{fig:hard}
\end{figure}

\myparagraph{Lowering or Removing Shift Cost}
A relevant setting is the low or no shift cost regime, as this allows the adversary to make arbitrary distribution shifts (while still constrained to $\Pcal(\sigma)$). These settings correspond to $\nu = 10^{-3}$ and $\nu = 0$, respectively. The low-cost experiment is displayed in \Cref{fig:low_smoothing} while \Cref{fig:no_smoothing} displays these curves for the no-cost experiment. When $\nu = 0$, the optimization problem can equivalently be written as
\begin{align*}
    \min_{w \in \R^d} \sbr{\max_{q \in \Pcal(\sigma)} q^\top \ell(w) + \frac{\mu}{2}\norm{w}_2^2 = \sum_{i=1}^n \sigma_i \ell_{(i)}(w) + \frac{\mu}{2}\norm{w}_2^2}.
\end{align*}
In this case, we always have that $\q(l) = (\sigma_{\pi^{-1}(1)}, \ldots, \sigma_{\pi^{-1}(n)})$, where $\pi$ sorts $l$.
Here, $w$ is chosen to optimize a linear combination of order statistics of the losses. In the low shift cost settings, performance trends are qualitatively similar to those seen from $\nu = 1$. Interestingly, for the no-cost setting, LSVRG, SaddleSAGA, and \lsaga seem to converge linearly empirically even without smoothness of the objective. 

\begin{figure}[t]
    \centering
    \includegraphics[width=\linewidth]{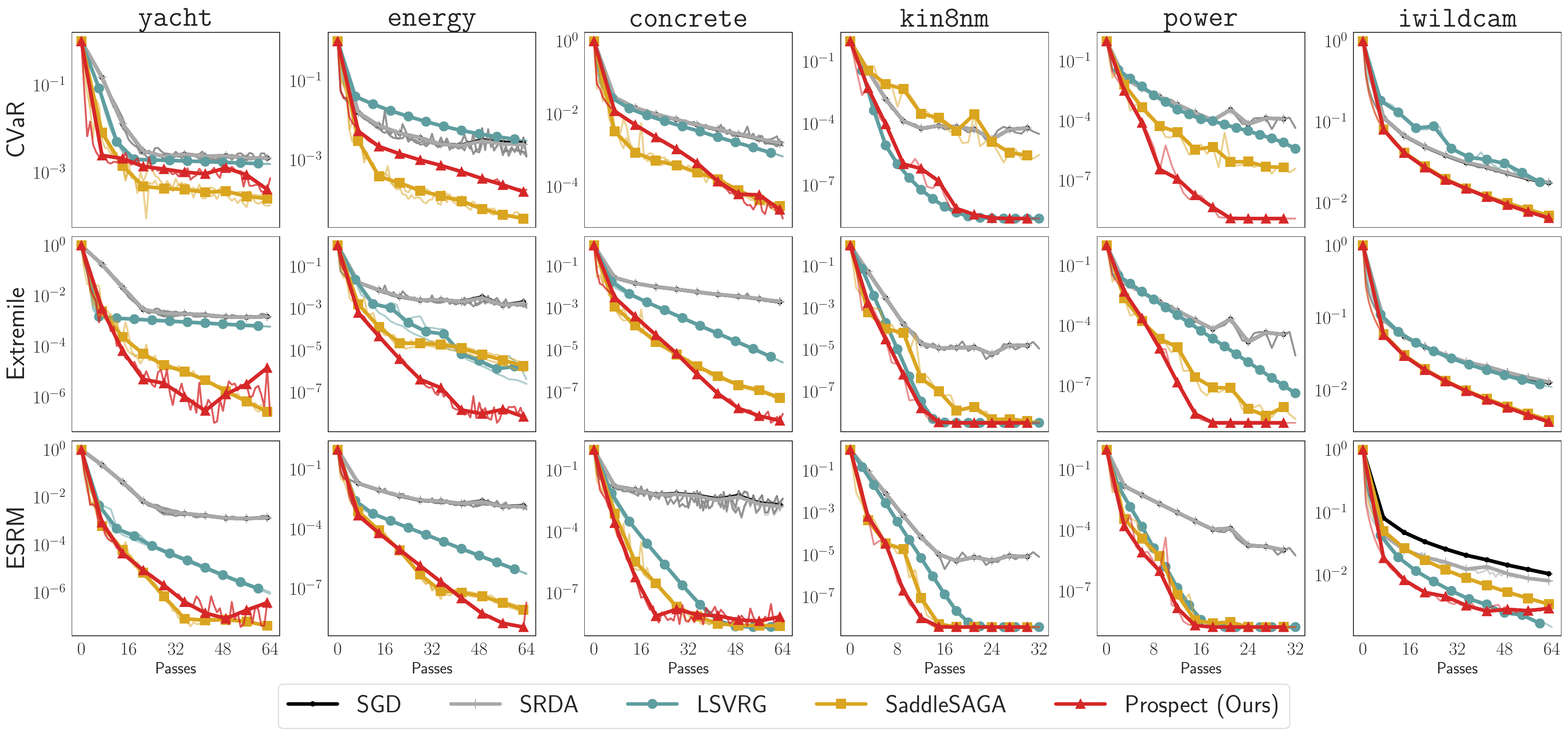}
    \caption{{\bf Low shift cost settings.} Each row represents a different spectral risk objective with $\nu = 10^{-3}$ (instead of $\nu=1$) while each column represents a different datasets. Suboptimality \eqref{eqn:subopt} is measured on the $y$-axis while the $x$-axis measures the total number of gradient evaluations made divided by $n$, i.e. the number of passes through the training set.}
    \label{fig:low_smoothing}
\end{figure}
\begin{figure}[t]
    \centering
    \includegraphics[width=\linewidth]{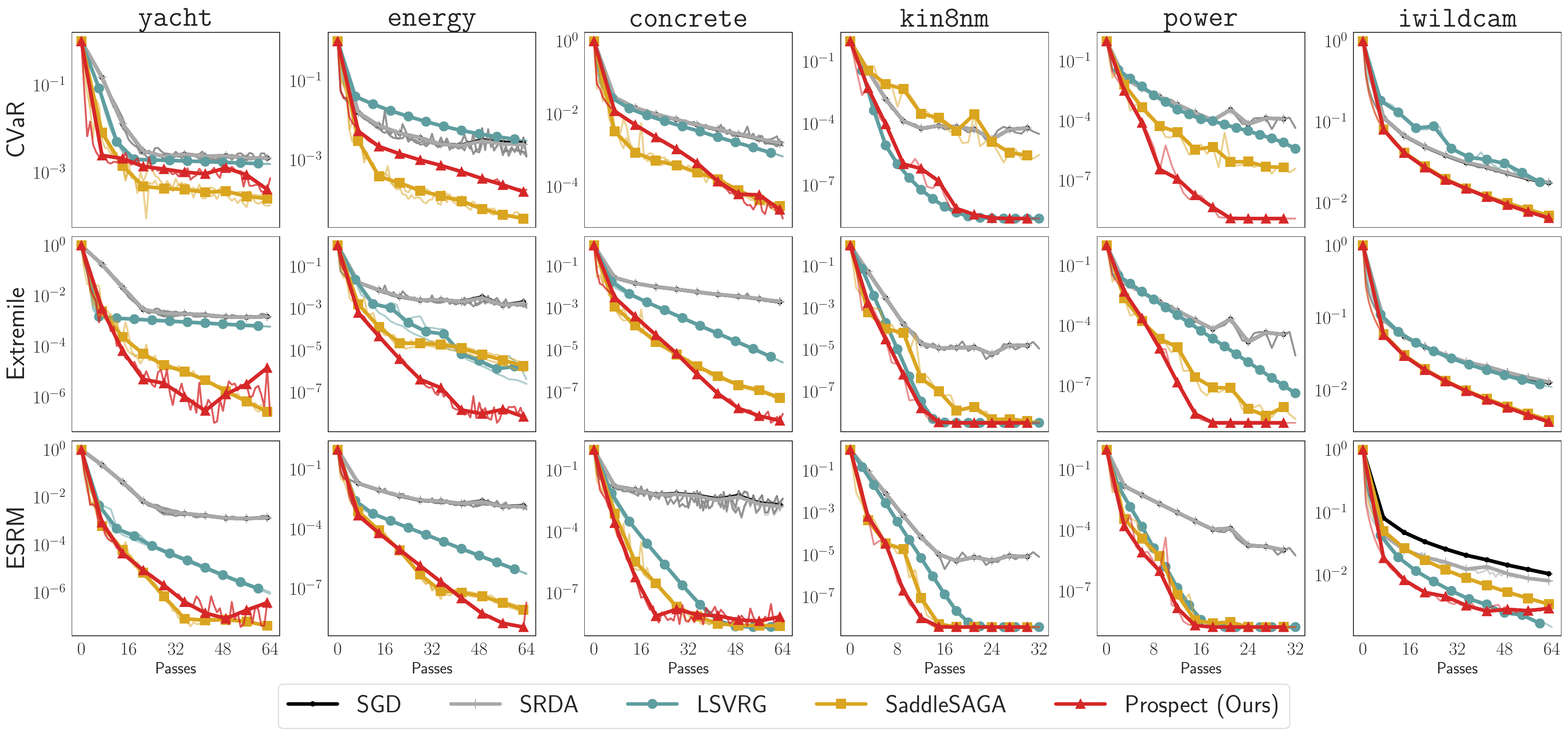}
    \caption{{\bf No shift cost settings.} Each row represents a different spectral risk objective with $\nu = 0$ (instead of $\nu=1$) while each column represents a different datasets. Suboptimality \eqref{eqn:subopt} is measured on the $y$-axis while the $x$-axis measures the total number of gradient evaluations made divided by $n$, i.e. the number of passes through the training set.}
    \label{fig:no_smoothing}
\end{figure}

\myparagraph{Lowering Regularization}
Next, we decrease the $\ell_2$-regularization from $\mu = 1/n$ to $\mu = 1/(10n)$ and $\mu =1/(100n)$. These settings are plotted in \Cref{fig:low_reg} and \Cref{fig:vlow_reg}, respectively. Performance rankings among methods reflect those of the original parameters. For five of the six datasets, that is \yacht, \energy, \concrete, \kinnm, and \power, the regression tasks involve optimizing the squared error. This function is already strongly convex, with constant depending on the smallest eigenvalue of the empirical second moment matrix. When assuming that the input data vectors are bounded, this function is also $G$-Lipschitz. Thus, if the problem is already well-conditioned, we may observe similar behavior even at negligible regularization ($\mu = 5\cdot 10^{-7}$ for \iwildcam, for example).

\begin{figure}[t]
    \centering
    \includegraphics[width=\linewidth]{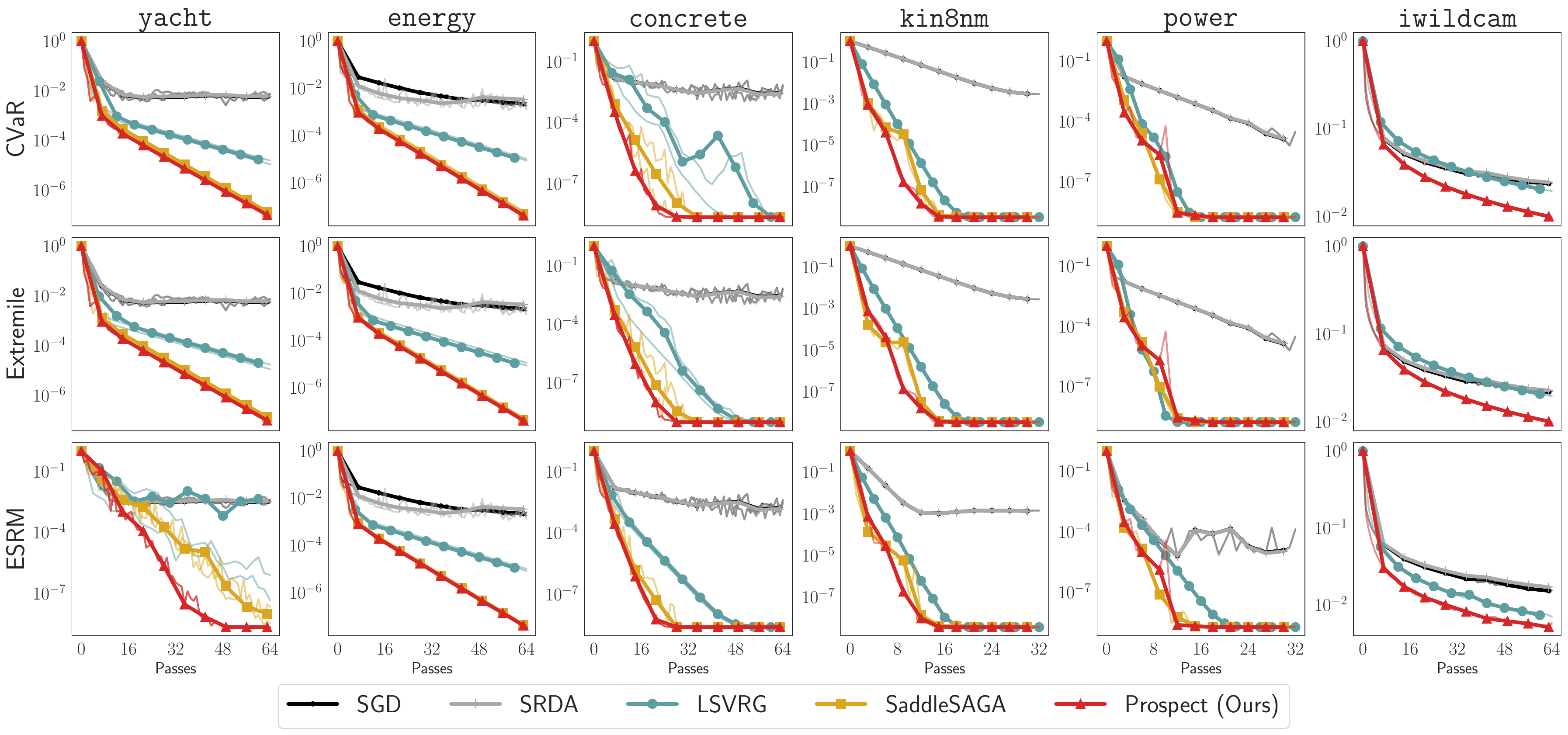}
    \caption{{\bf Reduced $\ell_2$-regularization settings ($\mu = 1/(10n)$.} Each row represents a different spectral risk objective with $\mu = 1/(10n)$ (instead of $\mu=1/n$) while each column represents a different dataset. Suboptimality \eqref{eqn:subopt} is measured on the $y$-axis while the $x$-axis measures the total number of gradient evaluations made divided by $n$, i.e. the number of passes through the training set.}
    \label{fig:low_reg}
\end{figure}

\begin{figure}[t]
    \centering
    \includegraphics[width=\linewidth]{figures/low_reg_curves_reg_0.1_sm_1.0.pdf}
    \caption{{\bf Low $\ell_2$-regularization settings ($\mu = 1/(100n)$.} Each row represents a different spectral risk objective with $\mu = 1/(100n)$ (instead of $\mu=1/n$) while each column represents a different dataset. Suboptimality \eqref{eqn:subopt} is measured on the $y$-axis while the $x$-axis measures the total number of gradient evaluations made divided by $n$, i.e. the number of passes through the training set.}
    \label{fig:vlow_reg}
\end{figure}

\myparagraph{Comparison of Saddle-Point and Moreau Variants}
Finally, observe in \Cref{fig:saddle} the comparison of SaddleSAGA variants (\Cref{sec:a:saddle_saga}), as well as the Moreau version of \lsaga using Moreau envelope-based oracles (\Cref{sec:a:prox_saga}). There are variants shown.
\begin{itemize}
    \item {\bf Primal LR = Dual LR:} The original variant of \citet{palaniappan2016stochastic}, in which the primal and dual learning rates are set to be equal and searched as a single hyperparameter.
    \item {\bf Search Dual LR:} Here, the primal learning rate is fixed as the optimal one for \lsaga, and the dual learning rate is searched as a single hyperparameter.
    \item {\bf Primal-Dual Heuristic:} In this version, used as the ``SaddleSAGA'' baseline in the main text, the dual learning rate is set to be $10n$ times smaller than the primal learning rate. 
    \item {\bf \lsaga-Moreau:} The Moreau-envelope version of \lsaga using proximal oracles.
\end{itemize}
We find that all methods besides the original variant (primal LR = dual LR) perform comparably on \yacht, \energy, \concrete, \kinnm, and \power. Notably, the ProxSAGA method performs similarly to \lsaga and the saddle point-based baselines. While using the Moreau envelope results in accelerated rates in the ERM setting~\cite{defazio2016simple}, we find  that the convergence rate is the same empirically. This phenomenon is in agreement with \Cref{thm:acc:lsaga}, which states that ProxSAGA will achieve the same linear convergence rate as \lsaga, but will require a much less stringent condition on the shift cost $\nu$ than in the case of \lsaga.

\begin{figure}[t]
    \centering
    \includegraphics[width=\linewidth]{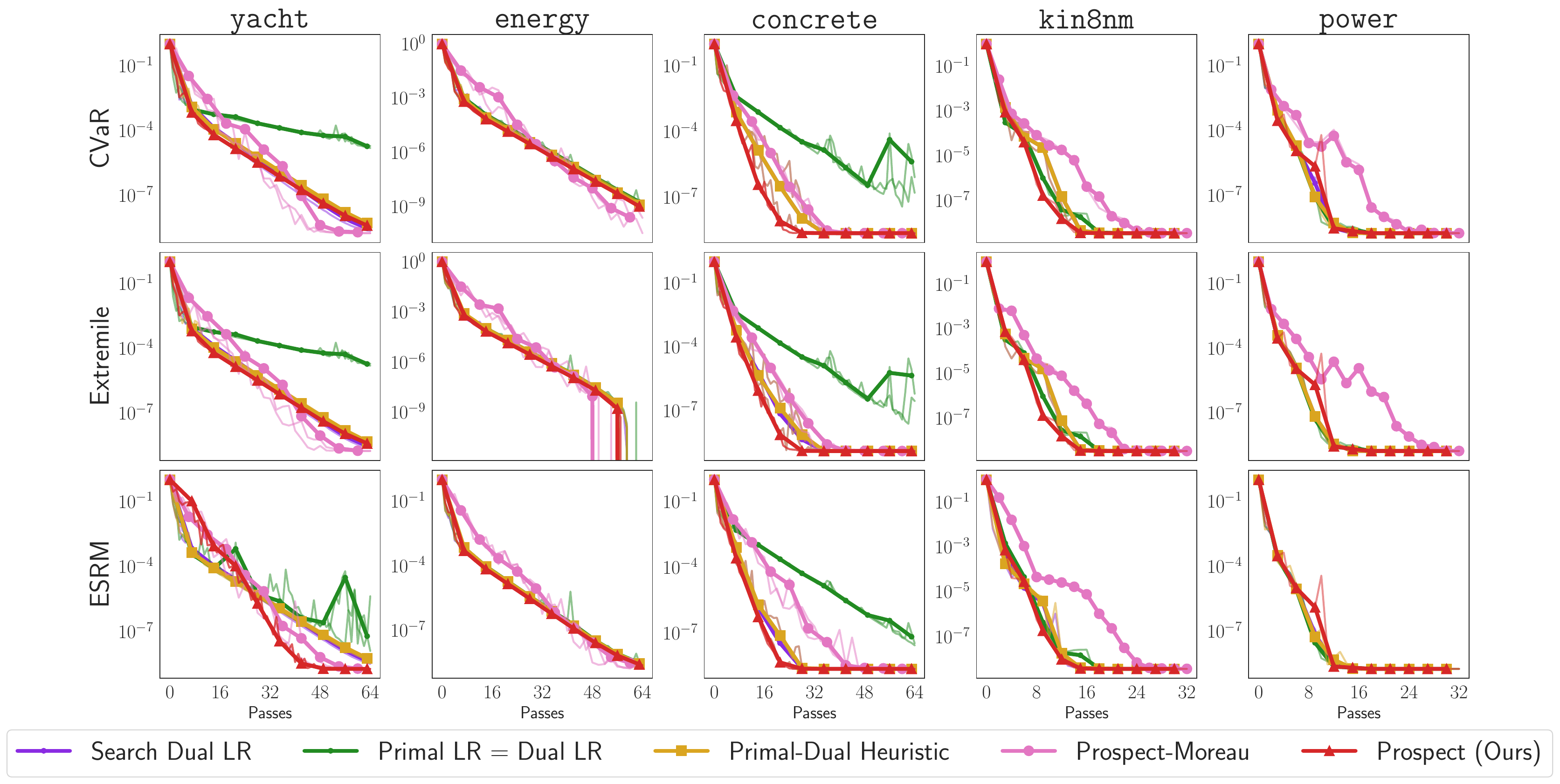}
    \caption{{\bf SaddleSAGA and \lsaga-Moreau method comparisons.} Each row represents a different spectral risk objective while each column represents a different dataset. Suboptimality \eqref{eqn:subopt} is measured on the $y$-axis while the $x$-axis measures the total number of gradient evaluations made divided by $n$, i.e. the number of passes through the training set.}
    \label{fig:saddle}
\end{figure}

\end{document}